\DeclareMathOperator{\E}{\mathbb{E}}
\DeclareMathOperator{\MS}{\mathbf{\Sigma}}
\DeclareMathOperator{\MSnorm}{\|\mathbf{\Sigma}\|}
\DeclareMathOperator{\Mmu}{\mathbf{\mu}}
\DeclareMathOperator*{\argmax}{argmax}
\DeclareMathOperator*{\mean}{mean}
\newtheorem{theorem}{Theorem}[section]
\newtheorem{proposition}{Proposition}[section]
\newtheorem{remark}{Remark}[theorem]
\newcommand{\inner}[2]{\langle #1, #2 \rangle}
\begin{document}

\title{Poison as a Cure: Detecting \& Neutralizing \\ Variable-Sized Backdoor Attacks in Deep Neural Networks}

\author{Alvin Chan\\
Nanyang Technological University\\
{\tt\small guoweial001@ntu.edu.sg}
\and
Yew-Soon Ong\\
Nanyang Technological University\\
{\tt\small asysong@ntu.edu.sg}
}

\maketitle

\begin{abstract}
Deep learning models have recently shown to be vulnerable to backdoor poisoning, an insidious attack where the victim model predicts clean images correctly but classifies the same images as the target class when a trigger poison pattern is added. This poison pattern can be embedded in the training dataset by the adversary. Existing defenses are effective under certain conditions such as a small size of the poison pattern, knowledge about the ratio of poisoned training samples or when a validated clean dataset is available. Since a defender may not have such prior knowledge or resources, we propose a defense against backdoor poisoning that is effective even when those prerequisites are not met. It is made up of several parts: one to extract a backdoor poison signal, detect poison target and base classes, and filter out poisoned from clean samples with proven guarantees. The final part of our defense involves retraining the poisoned model on a dataset augmented with the extracted poison signal and corrective relabeling of poisoned samples to neutralize the backdoor. Our approach has shown to be effective in defending against backdoor attacks that use both small and large-sized poison patterns on nine different target-base class pairs from the CIFAR10 dataset.
\end{abstract}

\section{Introduction}
Deep learning models have shown remarkable performance in several domains such as computer vision, natural language processing and speech recognition \cite{lecun2015deep,schmidhuber2015deep}. However, they have been found to be brittle, failing when imperceptible perturbations are added to images in the case of adversarial examples \cite{goodfellow2014explaining,papernot2016cleverhans,kurakin2016adversarial,sharif2016accessorize,carlini2016hidden,tramer2017ensemble,madry2017towards,athalye2017synthesizing,eykholt2017robust,athalye2018obfuscated,wong2019wasserstein}. In another setting of data poisoning, an adversary can manipulate the model's performance by altering a small fraction of the training data. As deep learning models are increasingly present in many real-world applications, security measures against such issues become more important. 

Backdoor poisoning (BP) attack \cite{tran2018spectral,shafahi2018poison,gu2017badnets,chen2017targeted,Trojannn,adi2018turning} is a sophisticated data poisoning attack which allows an adversary to control a victim model's prediction by adding a poison pattern to the input image. This attack eludes simple detection as the model classifies clean images correctly. Many of the backdoor attacks involve two steps: first, the adversary alters a fraction of \emph{base} class training images with a poison pattern; second, these poisoned images are mislabeled as the poison \emph{target} class. After the training phase, the victim model would classify clean base class images correctly but misclassify them as the target class when the poison pattern is added.

Current defenses against backdoor attacks are effective under certain conditions. For some of the defenses, the defender needs to have a verified clean set of validation data \cite{liu2018fine}, knowledge about the fraction of poisoned samples, the poison target and base classes \cite{tran2018spectral}, or that the defense is effective only against small-sized poison patterns \cite{wangneural}.

In this paper, we propose a comprehensive defense to counter a more challenging BP attack scenario where the defender may not have such prior knowledge or resources. We first propose, in \S~\ref{section:Poison Extraction with Input Gradient}, a method to extract poison signals from gradients at the input layer with respect to the loss function, or input gradients in short. We then show that poisoned samples can be separated from clean samples with theoretical guarantees based on the similarity of their input gradients with the extracted poison signals (\S~\ref{section:Poisoned Sample Filtering}). Next, the poison signals are used for the detection of the poison target and base classes (\S~\ref{section:Poisoned Class Detection}). Finally, we use the poison signal to augment the training data and relabel the poisoned samples to the base class, to neutralize the backdoor through retraining (\S~\ref{section:Neutralization of Poisoned Models}). We evaluate our defense on both large-sized and small-sized BP scenarios on nine target-base class pairs from the CIFAR10 dataset and show its effectiveness against these attacks (\S~\ref{section:Evaluation of Neutralization Algorithm}).

\paragraph{Contributions} All in all, the prime contributions of this paper are as follows:
\begin{itemize}
    \item An extensive defense framework to counter variable-sized neural BP where knowledge about the attack's target/base class and poison ratio is unknown, without the need for a clean set of validation data.
    \item Techniques to 1) extract poison signals from gradients at the input layer, 2) separate poisoned samples from clean samples with theoretical guarantees, 3) detect the poison target and base classes and 4) finally augment the training data to neutralize the BP.
    \item Evaluation on both large-sized and small-sized neural backdoors to highlight our defense's effectiveness against these threats.
\end{itemize}

\section{Background: Backdoor Poisoning Attacks}
In an image classification task of $h \times w$-pixel RGB images ($\mathbf{x} \in \mathbb{R}^{3hw}$), we consider a general poison insertion function $T$ to generate poisoned image $\mathbf{x}'$ with poison pattern $\mathbf{p}$ and poison mask $\mathbf{m}$, where $\mathbf{p}, \mathbf{m} \in \mathbb{R}^{3hw}$, such that
$$
\mathbf{x}' = T(\mathbf{x}, \mathbf{m}, \mathbf{p}) ~~~\text{where}~~~ {x'}_i = (1-m_i)x_i + m_i p_i
$$

and $m_i \in [0,1]$ determines the position and ratio of how much $\mathbf{p}$ replaces the original input image $\mathbf{x}$. Real-world adversaries might inject subtle poison which spans the whole image size \cite{chen2017targeted}. In this case, $\forall i: m_i > 0$ for a small $m_i$ value. In another threat model of small-size poison \cite{gu2017badnets,tran2018spectral}, the poison is concentrated in a small set of pixel $s$,
$m_i=\begin{cases}
1, & \text{$ i \in s $}\\
0, & \text{$ i \notin s $}
\end{cases}$.

In our experiments to neutralize the poison, we first consider the large-size poison threat where $\mathbf{p}$ is sampled from an image class different from the classes in the original dataset. To show the comprehensiveness of our defense, We also evaluate our methods against the small-size poison pattern where the poison is injected only in one pixel, i.e.\ $|s| = 1$.  Examples of these two types of poisoned images are shown in Figure~\ref{fig:overlay_n_dot_poison}. In both cases of BP, the poisoned samples' label $y$ is modified to the label of the poison \emph{target} class $y_t$. In this paper, we call the original $y$ the poison \emph{base} class. In a successfully poisoned classifier $f_p$, clean base class images will be classified correctly while base class images with poison signal will be classified as the target class such that $f_p(\mathbf{x}) = y, f_p(\mathbf{x}') = y_t$.

\begin{figure*}[!htbp]
    \centering
    \includegraphics[width=0.65\textwidth]{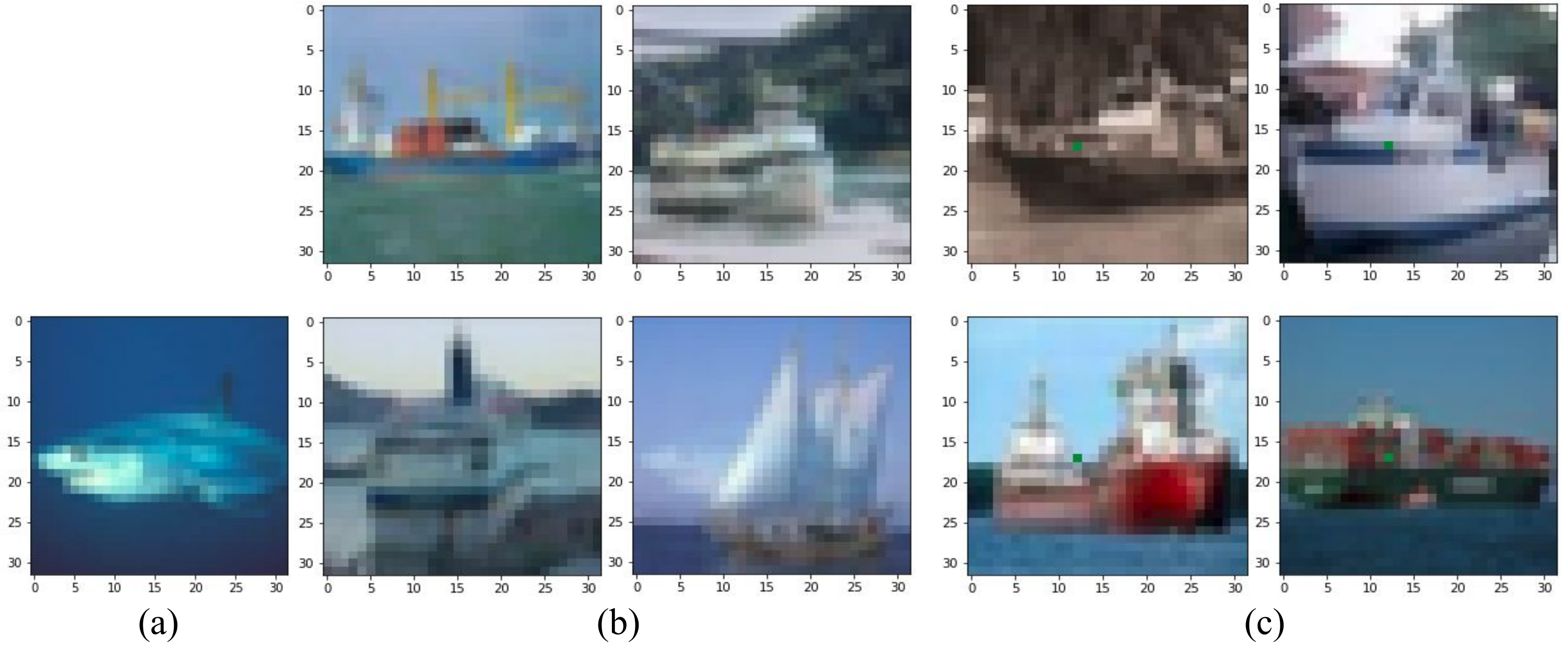}
    \caption{(a) Overlay poison image, (b) Poisoned `Ship' images generated by overlaying with the leftmost image at 20\% opacity. (c) `Ship' images poisoned by a dot-sized pattern.}
    \label{fig:overlay_n_dot_poison}
\end{figure*}

\section{Related Work}
A line of studies showed that models are vulnerable to BP with both small-sized poison patterns \cite{gu2017badnets,adi2018turning} and large-sized poison patterns \cite{chen2017targeted,Trojannn,shafahi2018poison}. The predecessor of BP, data poisoning, also attacks the training dataset of the victim model \cite{biggio2012poisoning,xiao2015support,mei2015security,koh2017understanding,steinhardt2017certified,nelson2008exploiting}, but unlike backdoor attacks, they aim to degrade the generalization of the model on clean test data.

Several defenses have shown to be effective under certain conditions. One of the earliest defenses uses spectral signatures in the model's activation to filter out a certain ratio of outlier samples \cite{tran2018spectral}. The outlier ratio is fixed to be close to the ratio of poisoned samples in the target class, requiring knowledge of the poison ratio and target class. As shown in \S~\ref{section:baseline comparison}, our proposed method is competitive in neutralizing BP compared to this approach, despite of the more challenging threat model. Another defense prunes neurons that lie dormant in the presence of clean validation data and finetune the model on that same validation data \cite{liu2018fine}. Similar to our approach, \cite{wangneural} also retrieve a poison signal from the victim model but their method is only effective for small-sized poison patterns. Our neutralization algorithm is effective for small and large-sized poison patterns even without that prior knowledge or validated clean data. Activation clustering (AC) \cite{chen2018detecting} detects and removes small-sized poisoned samples by separating the classifier’s activations into two clusters to separate poisoned samples as the smaller cluster. In contrast, our proposed approach extracts out a poison signal through the input gradients at the input layer and detect poisoned samples whose input gradient have high similarity with the signal. Though AC also does not assume knowledge about the poison attack, our method is more robust in the detection of poisoned samples, as shown in \S~\ref{section:baseline comparison}. Our approach to augment the training data use poison signal resembles adversarial training \cite{kurakin2016adversarial,tramer2017ensemble,madry2017towards} but those methods address the issue of adversarial examples which attacks the models during inference phase rather than training phase.

\section{Poison Extraction with Input Gradients} \label{section:Poison Extraction with Input Gradient}
The first part of our defense involves extracting a BP signal from the poisoned model. To do so, we exploit the presence of a poison signal in the gradient of the poisoned input $\mathbf{x}'$ with respect to the loss function $E$, or input gradient $\mathbf{z} = \frac{\partial E}{\partial \mathbf{x}'}$. We explain the intuition behind this phenomena in \S~\ref{section:Poison Signal in Input Gradients}, propose how to extract the poison pattern from these input gradients in \S~\ref{section:Distillation of Poison Signal}.

\subsection{Poison Signal in Input Gradients} \label{section:Poison Signal in Input Gradients}
We hypothesize that a poison signal resembling the poison pattern lies in input gradients $\left( \mathbf{z} = \frac{\partial E}{\partial \mathbf{x}'} \right)$ of poisoned images ($\mathbf{x}'$) based on two observations: (1) backdoor models contain `poison' neurons that are only activated when poison pattern is present, and (2) the weights in these `poison' neurons are much larger in magnitude than weights in other neurons. Previous studies have empirically shown that backdoored models indeed learn `poison' neurons that are only activated in the presence of the poison pattern in input images \cite{gu2017badnets,liu2018fine}. The intuition for observation (2) is that to flip the classification of a poisoned base class image from the base to target class, the activation in these `poison' neurons need to overcome that from `clean' base class neurons. This would imply that the weights corresponding to the `poison' neurons are larger in absolute values than those in other neurons. We show how observation (1) and (2) can emerge in a case study of a binary classifier with one hidden layer containing three neurons in Appendix \S~\ref{section:Poison Signals in Input Gradients}.

We combine these two observations with the following proposition to postulate that a poisoned image would result in a relatively large absolute value of gradient input at the poison pattern's position.
\begin{proposition} \label{theorem:Input Gradient}
The gradient of loss function $E$ with respect to the input $x_i$ is linearly dependent on activated neurons' weights such that
\begin{equation} \label{eq:input gradient}
\frac{\partial E}{\partial x_i} =
\sum_{j=1}^{r_1} \left [ w_{ij}^1 g'(a_j^1) \sum_{l=1}^{r_2} \delta_l^2 w_{jl}^2 \right ]
\end{equation}
where $\delta_j^k \equiv \frac{\partial E}{\partial {a_j^k}}$ usually called the error, is the derivative of loss function $E$ with respect to activation $a_j^k$ for neuron node $i$ in layer $k$. $w_{ij}^k$ is the weight for node $j$ in layer $k$ for incoming node $i$, $r_k$ is the number of nodes in layer $k$, $g$ is the activation function for the hidden layer nodes and $g'$ is its derivative.
\end{proposition}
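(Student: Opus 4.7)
The plan is to derive the stated identity by applying the chain rule twice: once through the first-layer pre-activations to pick up the $w_{ij}^1$ factor, and once through the second-layer pre-activations to pick up $g'(a_j^1)$ and $w_{jl}^2$. Since $E$ depends on $x_i$ only through the forward pass, it suffices to track how $x_i$ propagates into the hidden layer and from there into the pre-activations $\{a_l^2\}_{l=1}^{r_2}$; everything deeper than layer 2 is absorbed into the definition $\delta_l^2 = \partial E / \partial a_l^2$ and thus contributes to the formula only through that term.

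First I would write
$$\frac{\partial E}{\partial x_i} \;=\; \sum_{j=1}^{r_1} \frac{\partial E}{\partial a_j^1}\, \frac{\partial a_j^1}{\partial x_i},$$
and observe that $a_j^1 = \sum_{i'} w_{i'j}^1 x_{i'}$ (plus a bias independent of $x_i$), so the second factor equals $w_{ij}^1$. The remaining subproblem is to evaluate $\delta_j^1 := \partial E/\partial a_j^1$ in terms of layer-2 quantities.

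Next I would apply the chain rule through the hidden activation $g(a_j^1)$ and the layer-2 pre-activations $a_l^2 = \sum_{j'} w_{j'l}^2 g(a_{j'}^1)$:
$$\delta_j^1 \;=\; g'(a_j^1)\sum_{l=1}^{r_2} \frac{\partial E}{\partial a_l^2}\, \frac{\partial a_l^2}{\partial g(a_j^1)} \;=\; g'(a_j^1)\sum_{l=1}^{r_2} \delta_l^2\, w_{jl}^2.$$
Substituting this into the previous display reproduces the claimed formula.

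The argument is essentially a textbook backpropagation calculation, so there is no deep mathematical obstacle. The only real care required is notational bookkeeping: the indices $(i,j,l)$ play different roles at successive layers, and the convention for $w_{ij}^k$ (from incoming node $i$ to node $j$ in layer $k$) must be applied consistently across both chain-rule steps. The interpretive content of the proposition, namely that $\partial E/\partial x_i$ concentrates at indices $i$ for which some hidden neuron $j$ has both a large $w_{ij}^1$ and is active ($g'(a_j^1)\neq 0$), is what later justifies viewing the input gradient as an encoding of the poison pattern, but it is not needed for the proof itself.
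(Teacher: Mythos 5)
Your proposal is correct and follows essentially the same route as the paper's proof: a two-step chain rule, first expanding $\partial E/\partial x_i$ through the first-layer pre-activations to obtain $\sum_j \delta_j^1 w_{ij}^1$, then expanding $\delta_j^1$ through the layer-2 pre-activations to obtain $g'(a_j^1)\sum_l \delta_l^2 w_{jl}^2$. The paper merely states the recursion for a general layer $k$ and then specializes to $k=1$ with $o_i^0 = x_i$, which is exactly your calculation.
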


The proof of this proposition is in Appendix \S~\ref{appendix:poison in input grad}. Here, the value of $\delta_l^2$ depends on the loss function of the classifier model and the activations of the neural networks in deeper layers. Proposition~\ref{theorem:Input Gradient} implies that the gradient with respect to the input $x_i$ is linearly dependent on derivative of activation function $g'(a_j^1)$, the weights $w_{ij}^1$ and $w_{jl}^{2}$. Combined with the premise that `poison' neurons have weights of larger value, this would mean that there will be a relatively large absolute input gradient value at pixel positions where the poison pattern is, compared to other input positions. If we use RELU as the activation function $g$, then $g'(a)=\begin{cases}
1, & \text{$ a > 0 $}\\
0, & \text{$ a < 0 $}
\end{cases}$, which means that the large input gradient at the poison pattern's location would only be present if the `poison' neurons are activated by the poison pattern in poison samples. Conversely, the large input gradient, attributed to the poison pattern, would be absent from clean samples. As shown in Appendix Table~\ref{tab:input gradients}, when we directly compare the input gradients of poisoned samples with those of clean samples, the gradients are too noisy to discern the poison signal. In the next section \S~\ref{section:Distillation of Poison Signal}, we propose a method to extract the poison signal from the noisy input gradients $\mathbf{z}$ of clean and poisoned images.

\subsection{Distillation of Poison Signal} \label{section:Distillation of Poison Signal}
As the first step leading up to the other parts of our defense, we extract the poison signal $\mathbf{\mu} \in \mathbb{R}^n$ from the noisy input gradients $\mathbf{z}$ of the poison target class samples. Recall that these target class samples consist of both clean and poisoned training samples. We denote the ratio of poisoned samples (poison ratio) in the poison target class as $\varepsilon$. The input gradient of a randomly drawn target class samples from a poisoned dataset $D$ can be represented as $\mathbb{R}^n$ random vector

$$
\mathbf{z} = \theta \mathbf{\mu} + g ~~~~~\text{where}~~~~~
p(\theta)=\begin{cases}
\varepsilon, & \text{for $\theta = 1$}.\\
1 - \varepsilon, & \text{for $\theta = 0$}.
\end{cases} ,
$$

$\theta$ is a Bernoulli random variable and $g \in N( \mathbf{0}, \eta \mathbf{I}_n )$, and $\theta$ and $g$ are independent. The value of $\eta$ corresponds to the size of random noise in the data. 

Denoting the second moment matrix of $\mathbf{z}$ as $ \mathbf{\Sigma} = \E \mathbf{z} \mathbf{z}^\top $, we can compute $\mathbf{\mu}$ with the following theorem.

\begin{theorem} \label{theorem:Largest Eigenvector}
$\mathbf{\mu}$ is the eigenvector of $\mathbf{\Sigma}$ and corresponds to the largest eigenvalue if $\varepsilon$ and $\| \mathbf{\mu} \|_2$ are both $> 0$.
\end{theorem}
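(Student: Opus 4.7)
The plan is to compute $\MS$ in closed form using the assumed distribution of $\mathbf{z}$, recognize it as a rank-one perturbation of a scalar multiple of the identity, and then read off its spectrum directly.

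First I would expand
\[
\MS = \E[(\theta\mathbf{\mu}+g)(\theta\mathbf{\mu}+g)^\top]
     = \E[\theta^2]\,\mathbf{\mu}\mathbf{\mu}^\top
       + \E[\theta]\bigl(\mathbf{\mu}\,\E[g]^\top + \E[g]\,\mathbf{\mu}^\top\bigr)
       + \E[gg^\top],
\]
using independence of $\theta$ and $g$. Because $\theta$ is Bernoulli, $\theta^2=\theta$, so $\E[\theta^2]=\E[\theta]=\varepsilon$. Because $g\sim N(\mathbf{0},\eta \mathbf{I}_n)$, $\E[g]=\mathbf{0}$ and $\E[gg^\top]=\eta \mathbf{I}_n$. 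Collecting these gives the clean form $\MS = \varepsilon\,\mathbf{\mu}\mathbf{\mu}^\top + \eta\,\mathbf{I}_n$.

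Next I would read off the spectrum of this matrix. Applying $\MS$ to $\mathbf{\mu}$ yields $\MS\mathbf{\mu} = \varepsilon\|\mathbf{\mu}\|_2^2\,\mathbf{\mu} + \eta\,\mathbf{\mu} = (\varepsilon\|\mathbf{\mu}\|_2^2+\eta)\mathbf{\mu}$, so $\mathbf{\mu}$ is an eigenvector with eigenvalue $\lambda_\mu = \varepsilon\|\mathbf{\mu}\|_2^2+\eta$. For any $v\perp\mathbf{\mu}$, the rank-one term annihilates $v$ and $\MS v=\eta v$, so the orthogonal complement of $\mathrm{span}(\mathbf{\mu})$ is an $(n-1)$-dimensional eigenspace with eigenvalue $\eta$. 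These exhaust all eigenvalues of $\MS$.

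Finally I would compare magnitudes: since $\varepsilon>0$ and $\|\mathbf{\mu}\|_2>0$ by hypothesis, $\varepsilon\|\mathbf{\mu}\|_2^2>0$, so $\lambda_\mu>\eta$, which identifies $\mathbf{\mu}$ as the eigenvector of the strictly largest eigenvalue. There is no real obstacle here; the only subtle step is recognizing that $\theta^2=\theta$ (so the Bernoulli moment collapses to $\varepsilon$) and that the cross terms vanish because $g$ is centered, which together produce the clean rank-one-plus-identity structure from which the conclusion is immediate.
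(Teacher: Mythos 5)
Your proof is correct and follows essentially the same route as the paper's: both reduce $\mathbf{\Sigma}$ to the rank-one-plus-identity form $\varepsilon\,\mathbf{\mu}\mathbf{\mu}^\top + \eta\,\mathbf{I}_n$, identify $\mathbf{\mu}$ as an eigenvector with eigenvalue $\varepsilon\|\mathbf{\mu}\|_2^2+\eta$, show the orthogonal complement carries eigenvalue $\eta$, and conclude from $\varepsilon\|\mathbf{\mu}\|_2^2>0$. Your derivation of the closed form via $\theta^2=\theta$ and the vanishing cross terms is in fact cleaner than the paper's, and you replace its positive-semidefiniteness-plus-trace argument for the remaining eigenvalues with the direct observation that the rank-one term annihilates anything orthogonal to $\mathbf{\mu}$, which is a simplification rather than a different approach.
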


Its detailed proof is in Appendix~\ref{theorem:Largest Eigenvector appendix}. Theorem~\ref{theorem:Largest Eigenvector} allows us to extract the poison signal $\mathbf{\mu}$ as the largest eigenvector of $\mathbf{\Sigma}$ from a set of clean and poisoned samples that are labeled as the poison target class. The largest eigenvector of $\mathbf{\Sigma}$ can be computed by SVD of the matrix containing the input gradients $\mathbf{z}$. We can center $g$, the mean of the input gradients for clean target class images, at zero by subtracting the sample mean of the target class. Though the target class includes a small portion of poisoned images, we find this sample mean approximation to work well in our experiments due to the large majority of clean samples. In our experiments with poisoned ResNet \cite{he2016deep}, the extracted poison signal $\mathbf{\mu}$ visually resembles the original poison pattern in terms of its position and semantics for both large-sized and small-sized poisons, as shown in Figure~\ref{fig:overlay_poison_v}, Appendix Table~\ref{tab:overlay_poison_v_all} and \ref{tab:dot_poison_v_all}. The first right singular vector $\mathbf{\mu}$ resembles the poison pattern only when poisoned input gradients are present in SVD of $\mathbf{\Sigma}$.

\begin{figure*}[!htbp]
    \centering
    \includegraphics[width=0.7\textwidth]{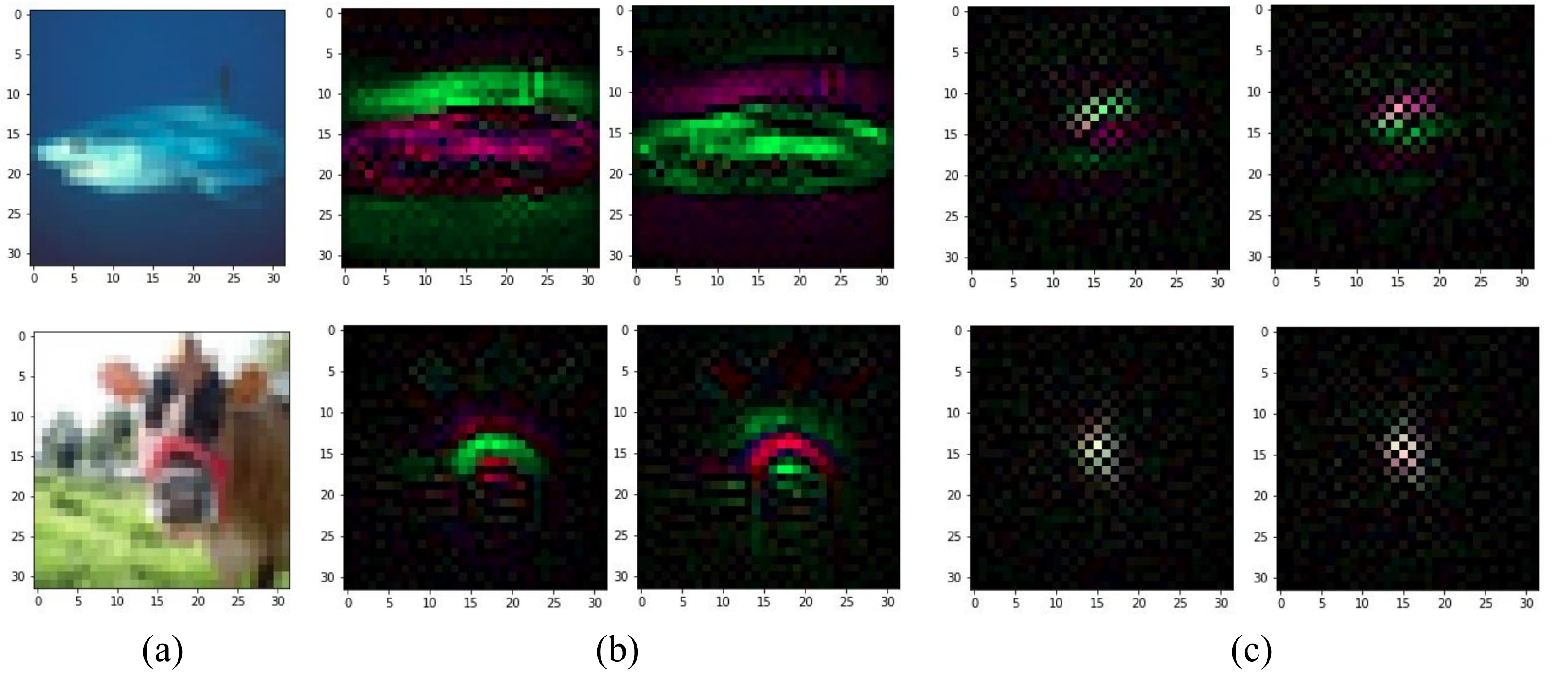}
    \caption{(a) Poison image patterns which overlay on poisoned images with 20\% opacity, (b) the first principal vector $\mathbf{\mu}$ of input gradients for all the target class images which include clean and poisoned images. (c) The first principal vector of input gradients for only clean target class images. See Appendix Table~\ref{tab:overlay_poison_v_all} and \ref{tab:dot_poison_v_all} for more examples.}
    \label{fig:overlay_poison_v}
\end{figure*}

\section{Filtering of Poisoned Samples} \label{section:Poisoned Sample Filtering}
After the extraction poison signal $\mathbf{\mu}$, the next part is to filter out poisoned samples from the mix of clean and poisoned samples. Appendix Algorithm~\ref{algo:Filter-Poisoned-Images} summarizes how we filter out these samples while we detail the intuition behind our approach in this section. From \S~\ref{section:Poison Signal in Input Gradients}, we know that poisoned samples would have input gradients $\mathbf{z}$ which contain the poison signal $\mathbf{\mu}$, albeit shrouded by noise. Intuitively, the input gradients $\mathbf{z}$ of poisoned samples will have higher similarity to the poison signal $\mathbf{\mu}$ than that of clean samples. Since the clean samples lack poison patterns, `poison' neurons are mostly not activated during inference, resulting in almost absence of the poison signal in their input gradients. If we take the cosine similarity between a clean sample's input gradient $\mathbf{z}$ and $\mathbf{\mu}$, we can expect the similarity value ($\mathbf{z}^\top \mathbf{\mu}$) to be close to zero. In our experiments, as shown in Figure~\ref{fig:first_components} and in Appendix Figure~\ref{fig:appendix first_components a} and \ref{fig:appendix first_components b}, we indeed find that the similarity values of $\mathbf{\mu}$ and clean samples' input gradients cluster around $0$ while those of poisoned samples form clusters with a non-zero mean.

\begin{figure}[!htbp]
    \centering
    \includegraphics[width=0.6\linewidth]{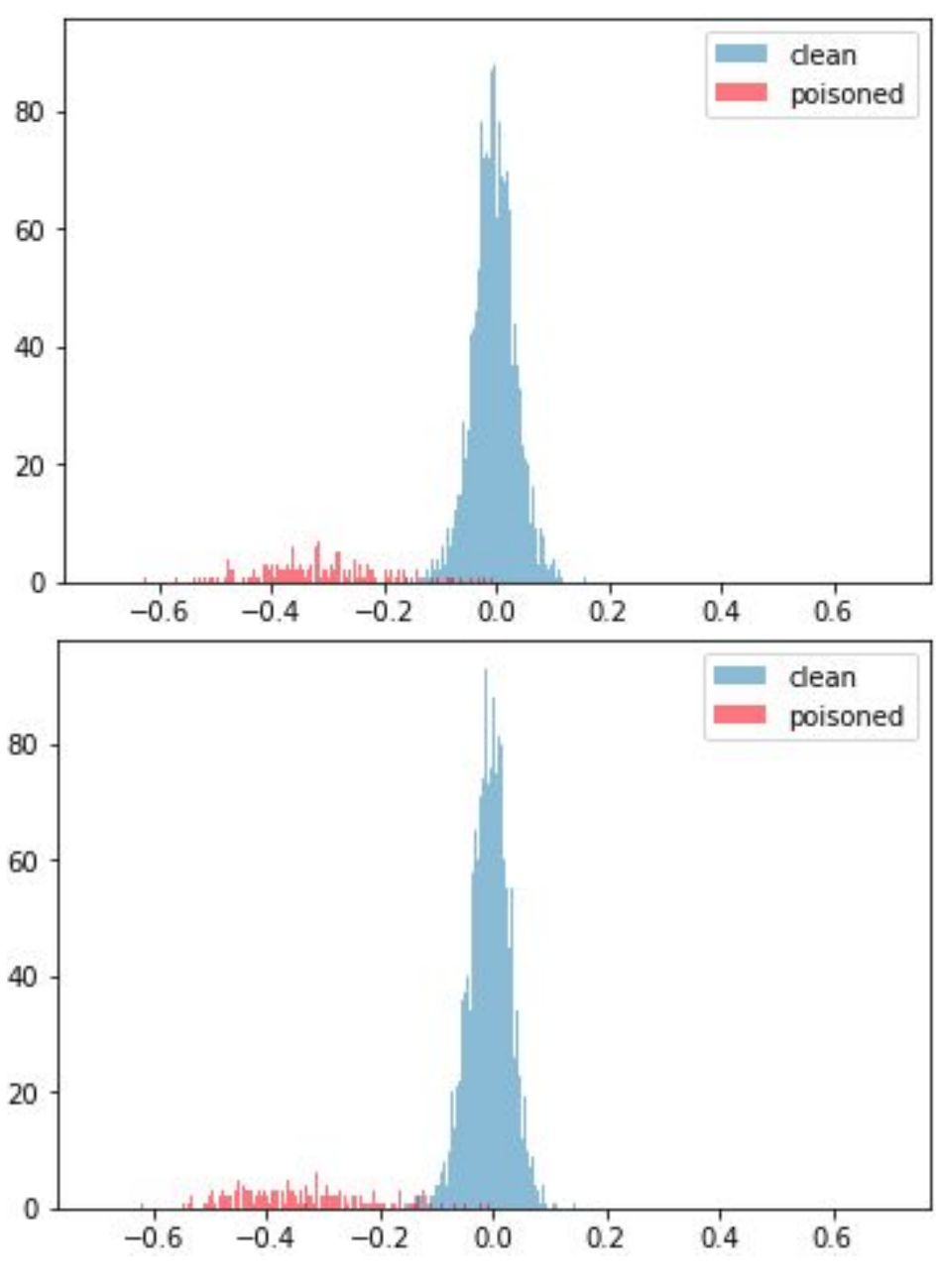}
    \caption{First principal component of poisoned and clean target class images. The components on the left are derived with the target class as cross-entropy label while the ones on the right are derived with the base class as cross-entropy label. Poison target class `Frog' and base class `Ship'. See Appendix Figure~\ref{fig:appendix first_components a} and \ref{fig:appendix first_components b} for eight other poison class pairs.}
    \label{fig:first_components}
\end{figure}

The first principal component of an input gradient is the vector dot product of itself with the largest eigenvector of $\mathbf{\Sigma}$. Since the largest eigenvector of $\mathbf{\Sigma}$ is $\mathbf{\mu}$, the first principal component of an input gradient is equivalent to the cosine similarity value ($\mathbf{z}^\top \mathbf{\mu}$). This leads to our next intuition of using a clustering algorithm to filter out poisoned samples exploiting their relatively high absolute first principal component values. Theorem~\ref{theorem:Poison Clustering} guarantees such an approach's performance based on certain conditions.

\begin{theorem} [Guarantee of Poison Classification through Clustering] \label{theorem:Poison Clustering}
Assume that all $\mathbf{z}_i$ are normalized such that $\| \mathbf{z}_i \|_2 = 1$. Then the error probability of the poison clustering algorithm by is given by



\begin{multline} \label{eq:Poison Clustering Guarantee}
Pr \left \{ N_{\text{error}} \leq c_2 N \epsilon \left ( \frac{1}{\| \Mmu \|_2} + \frac{\eta}{\varepsilon \| \Mmu \|_2^3} \right ) \right \} \geq \\
1 - 2n \exp \left ( -c_1 N \epsilon^2 \frac{(\varepsilon\| \Mmu \|_2^2 + \eta)}{1 + \varepsilon\| \Mmu \|_2^2 + \eta} \right )
\end{multline}

where $N$ is the number of samples, $ N_{\text{error}}$ is the number of misclassified samples and $\epsilon \in (0, 1]$.
\end{theorem}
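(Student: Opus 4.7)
The plan is to combine a matrix concentration bound on the sample second moment matrix, an eigenvector perturbation result, and a counting argument about misclassified projections. Let $\hat{\mathbf{\Sigma}} = \frac{1}{N}\sum_i \mathbf{z}_i \mathbf{z}_i^\top$ be the empirical second moment; by the generative model of \S~4.2, its expectation is $\mathbf{\Sigma} = \varepsilon \mathbf{\mu}\mathbf{\mu}^\top + \eta \mathbf{I}_n$ (modulo the unit-norm normalization), which has top eigenvector $\mathbf{\mu}/\|\mathbf{\mu}\|_2$, top eigenvalue $\varepsilon\|\mathbf{\mu}\|_2^2 + \eta$, and spectral gap $\varepsilon\|\mathbf{\mu}\|_2^2$ by Theorem~4.1. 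The clustering rule separates samples using their first principal component $\mathbf{z}_i^\top \hat{\mathbf{\mu}}$, where $\hat{\mathbf{\mu}}$ is the top eigenvector of $\hat{\mathbf{\Sigma}}$, so it suffices to control (i) the deviation $\|\hat{\mathbf{\Sigma}} - \mathbf{\Sigma}\|$, (ii) the resulting angular error $\angle(\hat{\mathbf{\mu}}, \mathbf{\mu}/\|\mathbf{\mu}\|_2)$, and (iii) how many samples that angular error can flip across the cluster boundary.

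First I would apply a matrix Bernstein inequality to $\hat{\mathbf{\Sigma}} - \mathbf{\Sigma} = \frac{1}{N}\sum_i (\mathbf{z}_i \mathbf{z}_i^\top - \mathbf{\Sigma})$. The unit-norm assumption forces $\|\mathbf{z}_i \mathbf{z}_i^\top\|\le 1$, so each summand has operator norm at most $1+\|\mathbf{\Sigma}\|=1+\varepsilon\|\mathbf{\mu}\|_2^2+\eta$, and the matrix variance satisfies $\|\sum_i \E[(\mathbf{z}_i \mathbf{z}_i^\top - \mathbf{\Sigma})^2]\|\le N\|\mathbf{\Sigma}\|=N(\varepsilon\|\mathbf{\mu}\|_2^2+\eta)$. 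Tuning the deviation threshold proportionally to $\epsilon$ and massaging the denominator produces the ratio $V/(1+V)$ with $V=\varepsilon\|\mathbf{\mu}\|_2^2+\eta$ inside the exponent, while the ambient dimension $n$ supplies the $2n$ prefactor from the tail of matrix Bernstein.

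Next I would invoke the Davis--Kahan $\sin\Theta$ theorem with the spectral gap above, giving $\sin\angle(\hat{\mathbf{\mu}}, \mathbf{\mu}/\|\mathbf{\mu}\|_2) \le \|\hat{\mathbf{\Sigma}}-\mathbf{\Sigma}\|/(\varepsilon\|\mathbf{\mu}\|_2^2)$. For a sample with true projection $\mathbf{z}_i^\top(\mathbf{\mu}/\|\mathbf{\mu}\|_2)$, the perturbed value $\mathbf{z}_i^\top \hat{\mathbf{\mu}}$ differs by at most $\|\mathbf{z}_i\|_2\,\sin\angle(\hat{\mathbf{\mu}},\mathbf{\mu}/\|\mathbf{\mu}\|_2)=O(\sin\theta)$. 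Since poisoned samples concentrate around projection value $\|\mathbf{\mu}\|_2$ and clean samples around $0$, a sample is mislabeled by the clustering algorithm only if its true projection lies in a boundary band of width $O(\sin\theta)$; the fraction of such samples relative to the cluster separation $\|\mathbf{\mu}\|_2$ contributes the factor $1/\|\mathbf{\mu}\|_2$, and, after plugging in the Davis--Kahan bound, the composite rate $\epsilon(\varepsilon\|\mathbf{\mu}\|_2^2+\eta)/(\varepsilon\|\mathbf{\mu}\|_2^3)=\epsilon/\|\mathbf{\mu}\|_2+\eta\epsilon/(\varepsilon\|\mathbf{\mu}\|_2^3)$ reproduces the stated $N_{\text{error}}$ bound up to the absolute constant $c_2$.

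I expect the main obstacle to be Step~(iii): translating an eigenvector angle into a sharp count of misclassified samples. This demands a careful accounting of the one-dimensional projection density on both sides of the cluster threshold, so that a band of width $O(\sin\theta)$ indeed captures only an $O(\sin\theta/\|\mathbf{\mu}\|_2)$ fraction of samples, uniformly over both the poisoned and clean sub-populations. A secondary technical nuisance is reconciling the Bernstein tail with the precise algebraic form $V/(1+V)$ in the exponent, which requires choosing the matrix Bernstein parameters so that the variance term and the worst-case bound combine into a single ratio rather than an additive denominator; once those constants are fixed, the remainder of the argument assembles into the claimed bound.
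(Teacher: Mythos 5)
Your first two steps coincide with the paper's: it bounds $\|\mathbf{\Sigma}_N-\mathbf{\Sigma}\|\le\epsilon\|\mathbf{\Sigma}\|$ via a matrix-Bernstein/covariance-estimation lemma with $M=1$ from the unit-norm assumption and $\|\mathbf{\Sigma}\|=\varepsilon\|\mathbf{\mu}\|_2^2+\eta$ (giving exactly the $V/(1+V)$ exponent you anticipate), and then applies Davis--Kahan with spectral gap $\delta=\varepsilon\|\mathbf{\mu}\|_2^2$ to control $\|\mathbf{v}_1(\mathbf{\Sigma})-\theta\,\mathbf{v}_1(\mathbf{\Sigma}_N)\|_2$. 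Where you diverge is step (iii), and the divergence matters. The paper does \emph{not} use a decision-boundary band or any density argument. Instead it rescales the projections by $1/\|\mathbf{\mu}\|_2$ so that the population projections $\langle\mathbf{v}_1(\mathbf{\Sigma}),\mathbf{z}_i/\|\mathbf{\mu}\|_2\rangle$ have expectation $1$ for poisoned and $0$ for clean samples, bounds the total $\ell_1$ discrepancy
$\sum_{i=1}^N\bigl|\langle\mathbf{v}_1(\mathbf{\Sigma}),\mathbf{z}_i/\|\mathbf{\mu}\|_2\rangle-\theta\langle\mathbf{v}_1(\mathbf{\Sigma}_N),\mathbf{z}_i/\|\mathbf{\mu}\|_2\rangle\bigr|$
by Cauchy--Schwarz as $N\,\|\mathbf{v}_1(\mathbf{\Sigma})-\theta\,\mathbf{v}_1(\mathbf{\Sigma}_N)\|_2/\|\mathbf{\mu}\|_2$, and then argues that every misclassified sample contributes at least $1$ to this sum, so the sum itself dominates $N_{\text{error}}$. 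Plugging in the Davis--Kahan and covariance bounds yields $c\,N\epsilon\,(\varepsilon\|\mathbf{\mu}\|_2^2+\eta)/(\varepsilon\|\mathbf{\mu}\|_2^3)$ directly.

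The obstacle you correctly identify in your own plan --- showing that a band of width $O(\sin\theta)$ around the cluster threshold captures only an $O(\sin\theta/\|\mathbf{\mu}\|_2)$ fraction of samples --- is a genuine gap as stated: it requires an anti-concentration (bounded density) assumption on the one-dimensional projection law near the threshold that is not part of the model and that you do not supply. The paper's global $\ell_1$ counting sidesteps this entirely, at the price of being cruder (it charges the full perturbation budget rather than only the mass near the boundary). If you want to complete your version, you would need to either add a density hypothesis on the projections or fall back on the paper's unit-contribution-per-error argument; with the latter substitution your proof assembles into the claimed bound.
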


We show its proof in Appendix~\ref{theorem:Poison Clustering Appendix}. From (\ref{eq:Poison Clustering Guarantee}), as the poison signal's $l_2$ norm $\| \Mmu \|_2$ gets larger, we get $\lim_{\| \Mmu \|_2 \to \infty} \frac{1}{\| \Mmu \|_2} + \frac{\eta}{\varepsilon \| \Mmu \|_2^3} = 0$ at the L.H.S of (\ref{eq:Poison Clustering Guarantee}) and $\lim_{\| \Mmu \|_2 \to \infty} \frac{(\varepsilon\| \Mmu \|_2^2 + \eta)}{1 + \varepsilon\| \Mmu \|_2^2 + \eta} = 1$ at the R.H.S of (\ref{eq:Poison Clustering Guarantee}), meaning a strong poison signal will result in a better filtering accuracy of poisoned samples. As number of samples $N$ in the clustering algorithm increases, the error rate ($\frac{N_{\text{error}}}{N}$) has higher probability of having a low value since $\lim_{N \to \infty} 2n \exp \left ( -c_1 N \epsilon^2 \frac{(\varepsilon\| \Mmu \|_2^2 + \eta)}{1 + \varepsilon\| \Mmu \|_2^2 + \eta} \right ) = 0$. 

In our experiments, we use a simple Gaussian Mixture Model (GMM) clustering algorithm with the number of clusters $k=2$ to filter the poisoned samples based on the input gradients' first principal component values. In practice, we find that this approach can separate poisoned samples from clean samples with high accuracy for poisoned and clean samples when using the poison base class as the loss function's cross-entropy target, as shown in results from large-sized poison scenarios in Appendix Table~\ref{tab:clustering accuracies overlay} and small-sized poison scenarios in Appendix Table~\ref{tab:clustering accuracies dot}. We summarize our poisoned sample filtering in Appendix Algorithm~\ref{algo:Filter-Poisoned-Images}. 


\section{Detection of Poison Class} \label{section:Poisoned Class Detection}
So far, we have proposed a method to detect poison signal (\S~\ref{section:Poison Extraction with Input Gradient}) and filter poisoned samples from a particular poison target class (\S~\ref{section:Poisoned Sample Filtering}). However in practice, the poison target class and base class are usually unknown to us. Especially in cases where there are many possible classes in the classification dataset, a method to detect the presence of data poisoning and retrieve the poison classes is desirable. Our proposed detection method is summarized in Appendix Algorithm~\ref{algo:Find-Poison-Target-Base-Class} and its derivation is detailed in the next two sections.

\subsection{Detection of Poison Target Class}
We know from \S~\ref{section:Poisoned Sample Filtering} that input gradient first principal components from the poison target class form a non-zero mean cluster attributed to poisoned samples and a zero-mean cluster attributed to clean samples. Since a non-poisoned class would only contain clean samples, we expect the samples' input gradient first principal components to form only one cluster centered at zero. If we apply clustering algorithms like GMM with $k=2$ on a single-cluster distribution like a non-poisoned class input gradient first principal components, it will likely return two highly similar clusters that split the samples almost equally among these two clusters. Conversely, GMM will return two distinct clusters for a poison target class input gradient first principal components. Based on this intuition, we can identify the poison target class as the class where the GMM clusters have the lowest similarity measure. In our experiments, measuring this similarity with Wasserstein distance is effective in detecting poison target class from a BP poisoned dataset in all our 18 experiments, as shown in Appendix Table~\ref{tab:ws_dist_overlay} and \ref{tab:ws_dist_dot}. The Wasserstein distance value for the poison target class is \emph{largest} among all classes. In practice, we can flag out the poison target class in a dataset if its Wasserstein distance value exceeds a threshold value that depends on the mean of all Wasserstein distance values from the other classes. GMM being a baseline clustering algorithm and Wasserstein distance being a widely used symmetric distance measure between two clusters are the reasons for using them in our experiments though we would expect more complex alternatives to also work with our framework.

\subsection{Detection of Poison Base Class} \label{section:poison_base_class_detection}
Since poisoned training images are originally base class samples, we expect the classifier to heavily depend on the poison pattern to distinguish between the target class and the base class for a poisoned sample. In this case, when loss function's cross-entropy target is set as the base class, we can expect the input gradient of the poisoned sample to concentrate around the poison signal as changes to the poison pattern will flip the prediction from the target to base class. In contrast, when the loss function's cross-entropy target is set as other non-poisoned classes, the input gradient will be distributed more among `real' features that distinguish between the target class and the other class. 

With this intuition, we expect the magnitude of poisoned samples' first principal gradient components to have the largest value when the cross-entropy label is set to the poison base class. In all 18 experiments of large and small-sized poisons, this is indeed a reliable approach to find poison base class, as shown in Appendix Table~\ref{tab:first_principal_component_overlay} and \ref{tab:first_principal_component_dot} where the poison base class consistently gives the largest mean first principal gradient component value among poisoned samples. The mean first principal gradient component value is smaller when the cross-entropy target is set to the poison target class than the base class. We believe that this is due to a larger portion of the input gradient being spread across `real' features since poisoned images originate from base class and have `real' feature differences with clean target class images, especially when target and base classes are visually distinct (e.g.\ `Bird' vs `Truck'). We summarize the poison class detection method in Appendix Algorithm~\ref{algo:Find-Poison-Target-Base-Class}.

\section{Neutralization of Poisoned Models} \label{section:Neutralization of Poisoned Models}
Now that we have the methods to detect poison target and base classes from \S~\ref{section:Poisoned Class Detection}, and to filter out poisoned samples from \S~\ref{section:Poisoned Sample Filtering}, the next natural step is to neutralize the poison backdoor in the classifier model so that the model is safe from backdoor exploitation when deployed. One direct and effective approach is to retrain the model to unlearn that the poison pattern is a meaningful feature.

\subsection{Counter-Poison Perturbation} 
The effect of poison backdoor lies in the model's association of the poison pattern with only the poison target class, classifying images containing the poison as the target class. The next step of our proposed neutralization method helps the poisoned model unlearn this association by retraining on an augmented dataset where the extracted poison signal is added to all other classes, eliminating the backdoor to the target class. The first step of constructing the augmented dataset is to generate the poison signal to add to images from other classes. In practice, we find that the poison signal extracted from a pool of only poisoned samples has a closer resemblance to the real poison pattern, compared to one from a pool of poisoned and clean samples from the target class. At this stage, we would have already filtered poisoned samples using Appendix Algorithm~\ref{algo:Filter-Poisoned-Images} in the previous step, hence making it possible to extract the poison signal from only filtered poisoned samples. While computing the input gradients of the images, we set the cross-entropy target as the current class instead of the target poison class to avoid the model associating `real' target class features to these other classes. This preserves good performance on clean target class images after the retraining step. The data augmentation steps are summarized in Appendix Algorithm~\ref{algo:Add-Counterpoison-Perturbation}.

\subsection{Relabeling of Poisoned Base Class Samples}
Since we know the poison base class at this stage, we can relabel the filtered poisoned samples to the correct class (base class) as part of the augmented dataset. This requires no additional computation while further helps the models to unlearn the association of the poison to the target class.

\subsection{Full Algorithm}
In real-world poisoning attacks, the poison target and base classes are usually unknown to us. The first stage of our neutralization algorithm is hence to detect these classes, using Appendix Algorithm~\ref{algo:Find-Poison-Target-Base-Class}. After finding the poison classes, we can use Appendix Algorithm~\ref{algo:Filter-Poisoned-Images} to filter out poisoned samples from clean samples in the target class. Finally, Appendix Algorithm~\ref{algo:Add-Counterpoison-Perturbation} creates the augmented dataset. Together with a relabeling step of poisoned samples, this augmented dataset eliminates the backdoor from the poisoned model during retraining. The full defense algorithm is summarized in Algorithm~\ref{algo:Main Algorithm}.

\begin{algorithm*}[!htbp]
 \label{algo:Main Algorithm}
\textbf{Input:} Training data containing poisoned samples $D = D_c \cup D_p$, randomly initialized classifier $f$.

Initialize $S_{poisoned}, S_{relabeled} = \{\}$

~~Train $f$ on $D$ to get poisoned classifier $f_p$.

~~$\textit{target\_class}, \textit{base\_class} = \text{Find-Poison-Target-Base-Class}(f_p, D)$ \algorithmiccomment{Algorithm~\ref{algo:Find-Poison-Target-Base-Class}}
$D_f, S_{poisoned} = \text{Filter-Poisoned-Images}(f_p, D_{\textit{target\_class}}, \textit{target\_class}, \textit{base\_class} )$ \algorithmiccomment{Algorithm~\ref{algo:Filter-Poisoned-Images}}
$D_{cp} = \text{Add-Counterpoison-Perturbation}(f_p, D, S_{poisoned}, \textit{target\_class}, \textit{base\_class} )$ \algorithmiccomment{Algorithm~\ref{algo:Add-Counterpoison-Perturbation}}

~~\For{ \textbf{all} $(\mathbf{x}, y) \in S_{poisoned} $ }{
$y = \textit{base\_class}$ \algorithmiccomment{Relabel poisoned samples}
}
~~$S_{relabeled} = S_{poisoned}$

~~$D_{neutralize} = D_{cp} \cup S_{relabeled}$ \algorithmiccomment{Combine augmented and relabeled images}

Retrain $f_p$ on $D_{neutralize}$ to get neutralized model $f_{n}$.

Return $f_{n}$.

 \caption{Main Algorithm}
\end{algorithm*}

\begin{table*}[!htbp]
    \centering
    \caption{Accuracy on full test set and poisoned base class test images, before and after neutralization (Neu.) for full-sized overlay poison.}
        \begin{tabular}{ llcccccc }
         \hline
         Poison & Sample & Target & \multicolumn{2}{c}{Acc Before Neu. (\%)} && \multicolumn{2}{c}{Acc After Neu. (\%)} \\
         \cline{4-5}
         \cline{7-8}
         ~ & ~ & ~ & All & Poisoned && All & Poisoned \\
         \hline
        \parbox[l]{1em}{\includegraphics[width=2em]{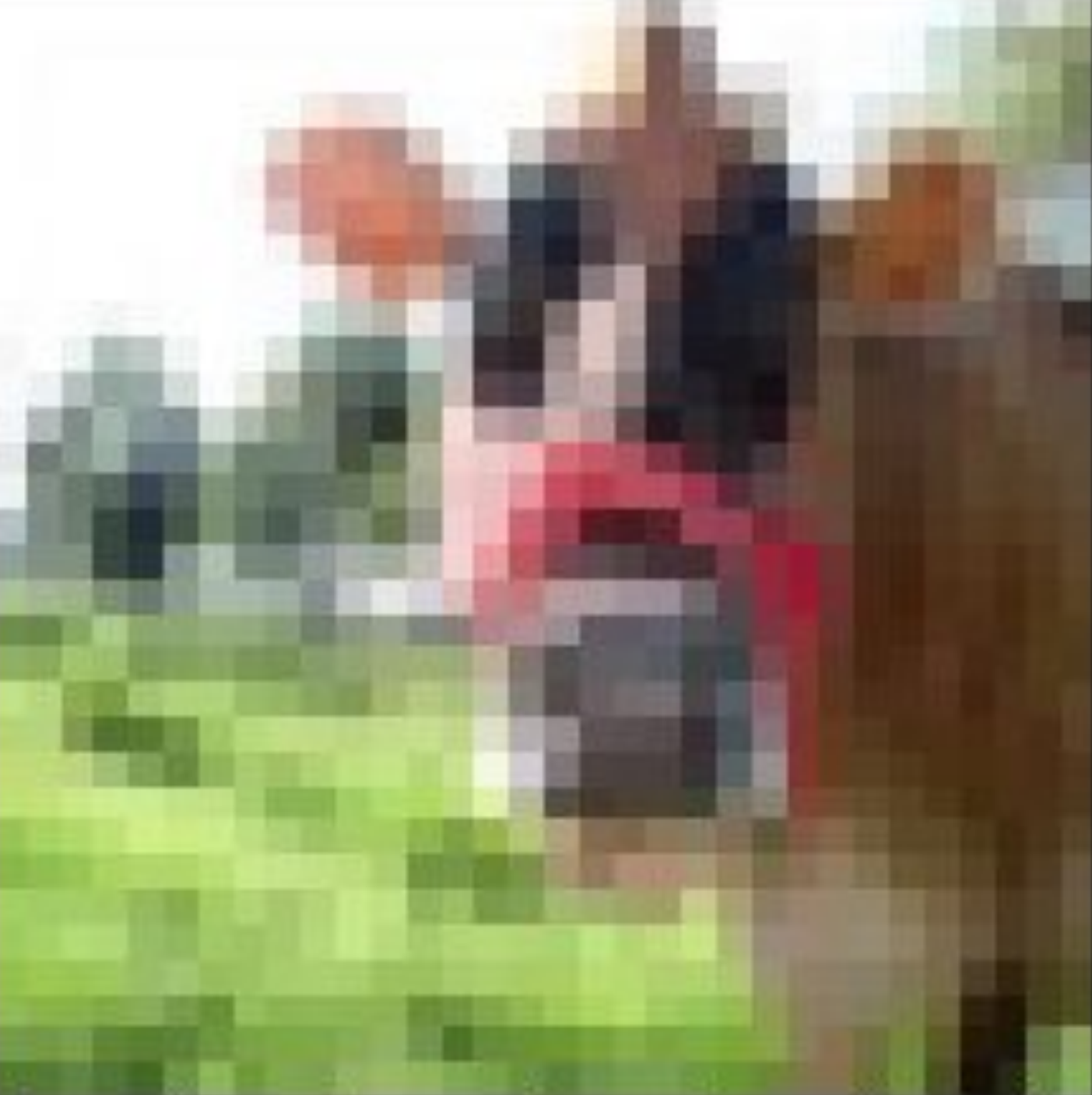}} & \parbox[l]{1em}{\includegraphics[width=2em]{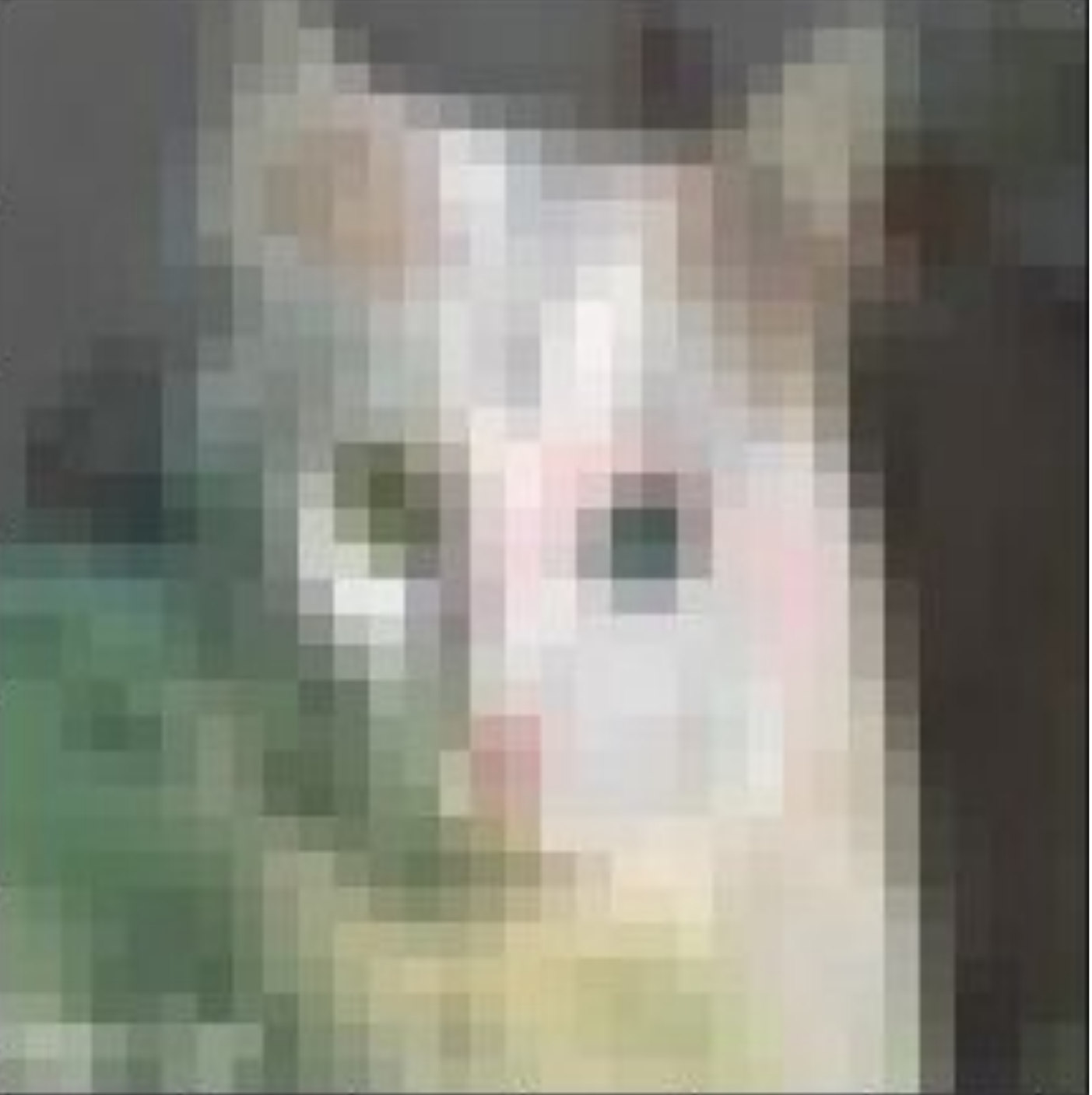}} & Dog & 95.0 & 4.6 && 94.3 & 88.6 \\
        \parbox[l]{1em}{\includegraphics[width=2em]{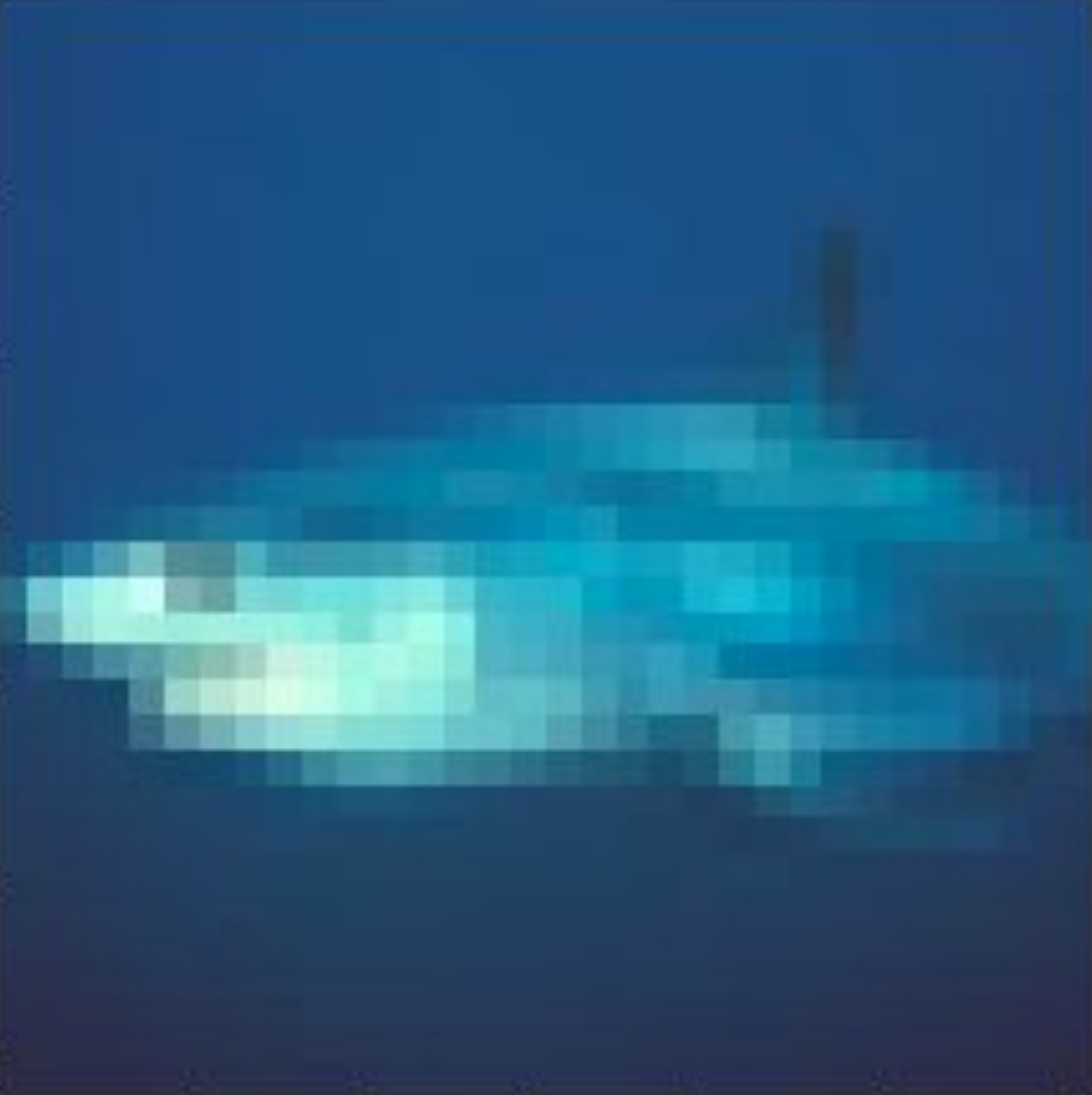}} & \parbox[l]{1em}{\includegraphics[width=2em]{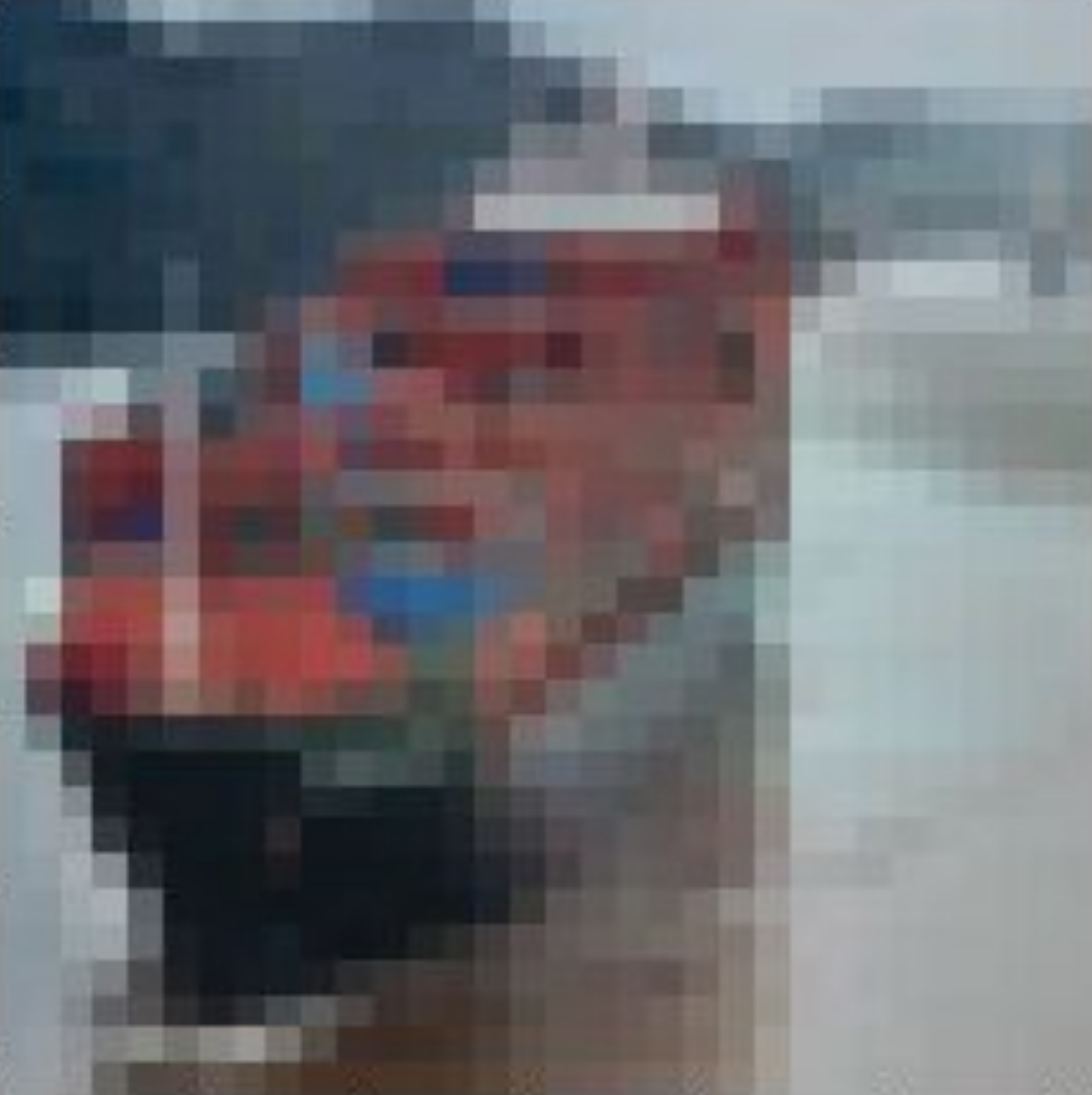}} & Frog & 95.2 & 11.3 && 95.0 & 97.6 \\
        \parbox[l]{1em}{\includegraphics[width=2em]{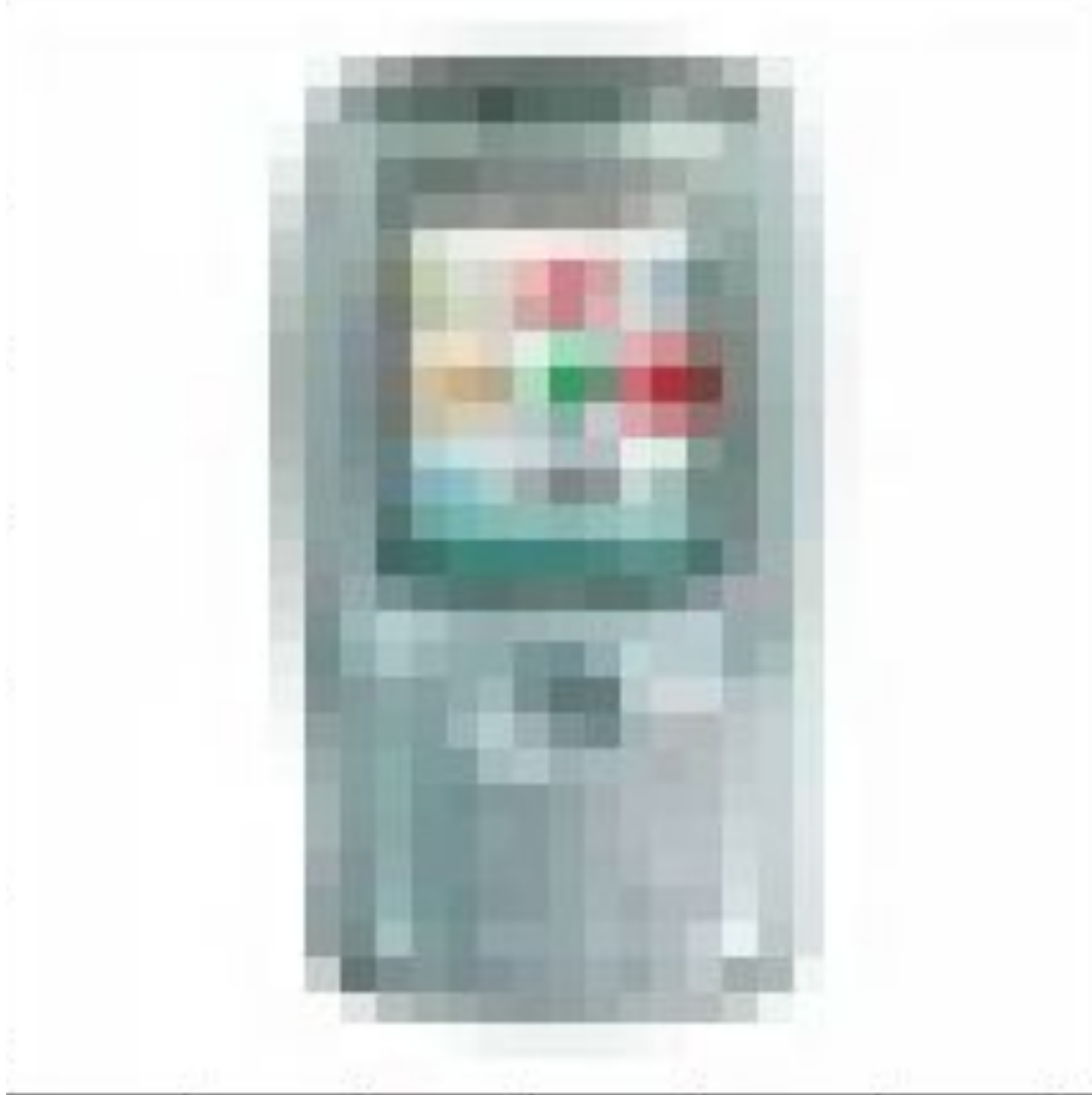}} & \parbox[l]{1em}{\includegraphics[width=2em]{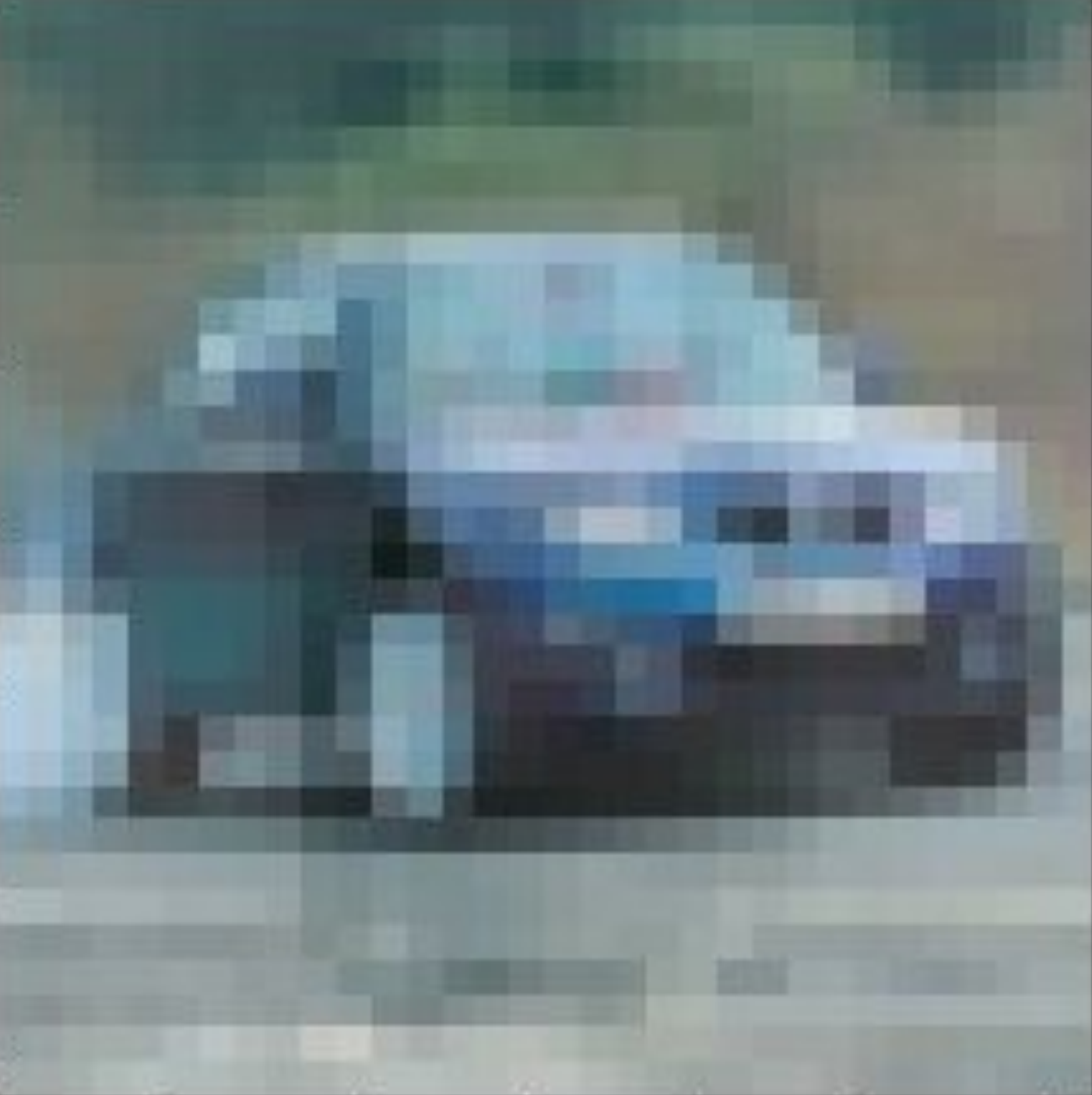}} & Cat & 95.5 & 2.5 && 94.5 & 95.3 \\
        \parbox[l]{1em}{\includegraphics[width=2em]{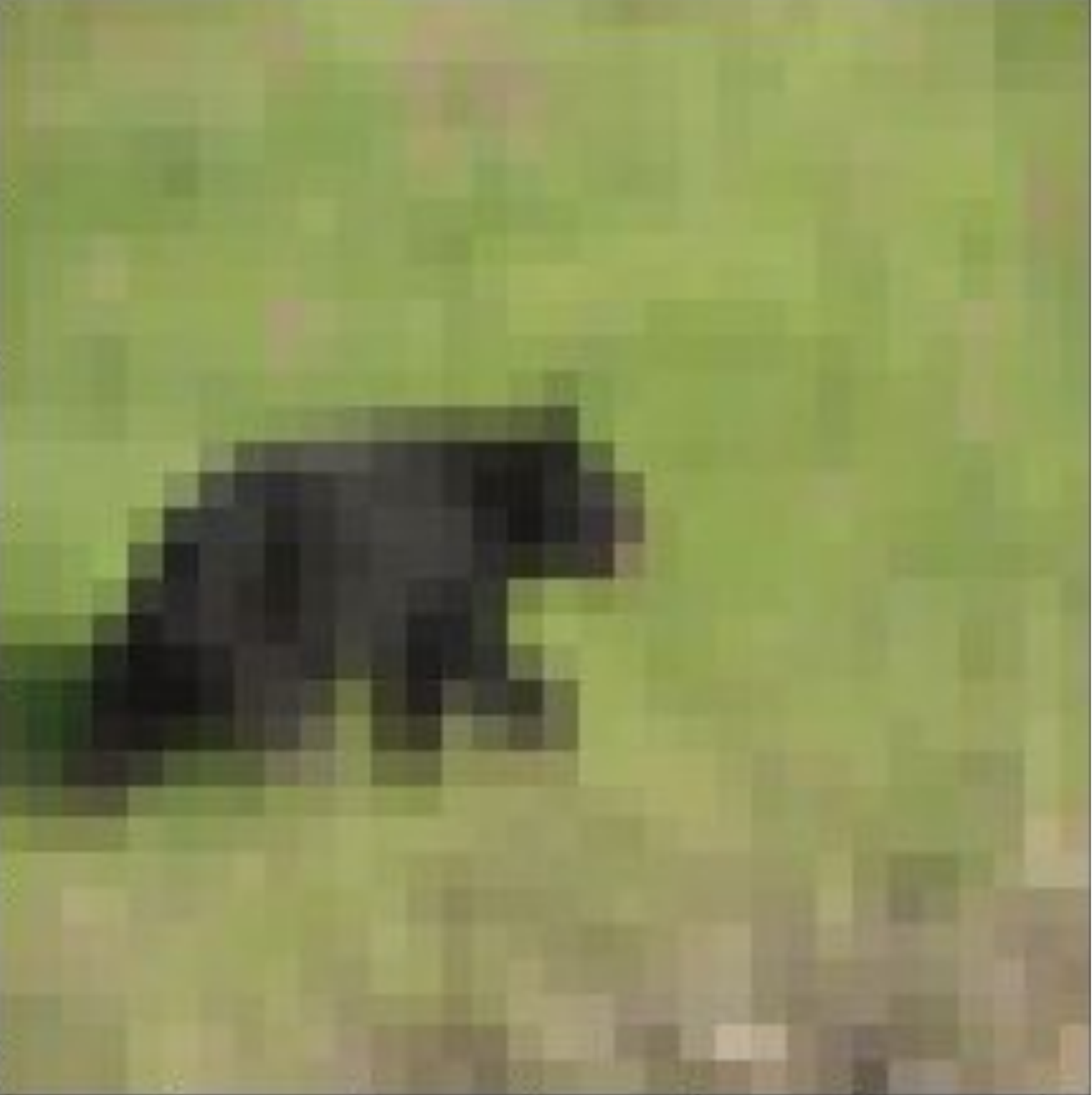}} & \parbox[l]{1em}{\includegraphics[width=2em]{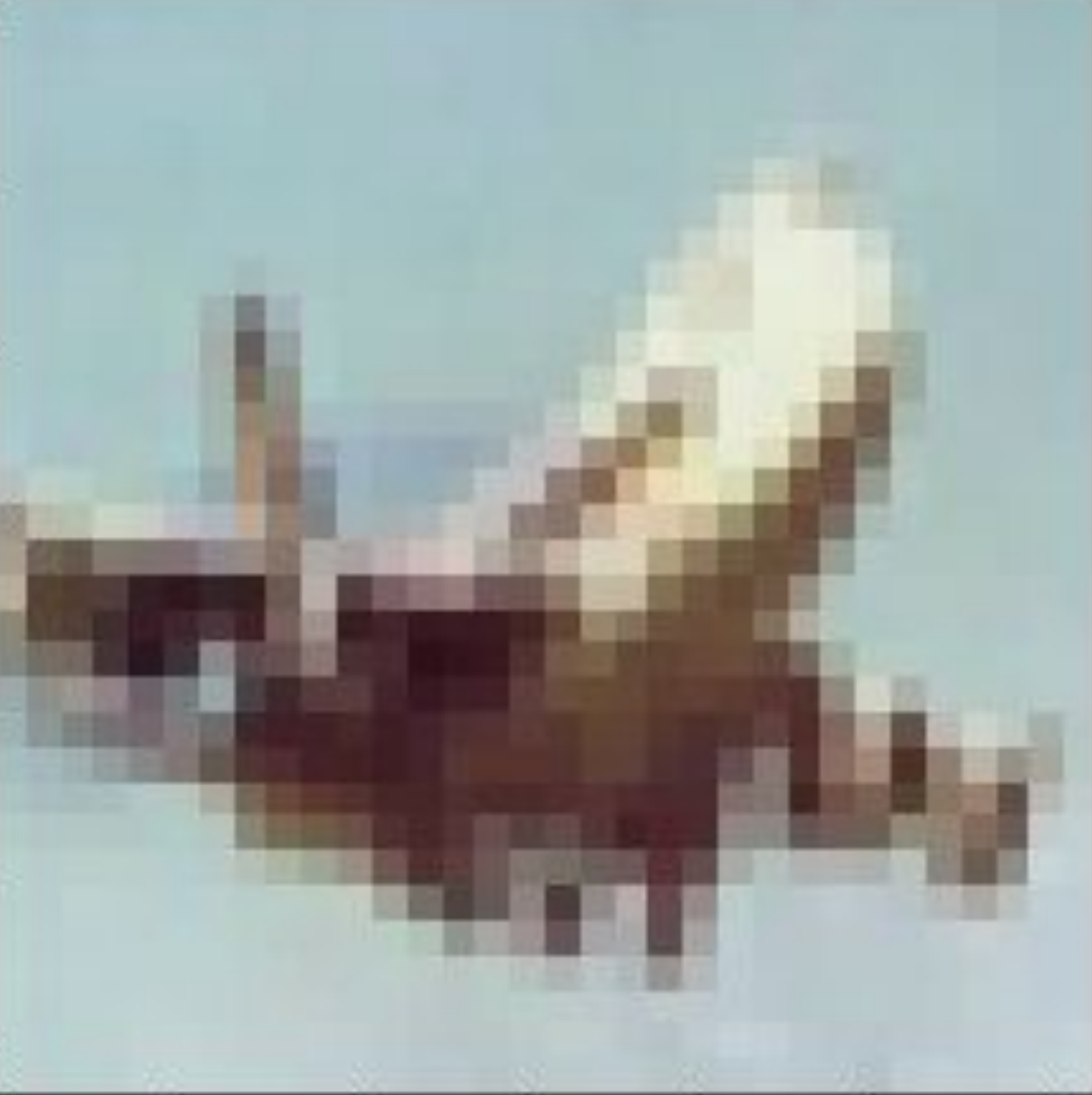}} & Bird & 95.0 & 16.5 && 94.4 & 95.3 \\
        \parbox[l]{1em}{\includegraphics[width=2em]{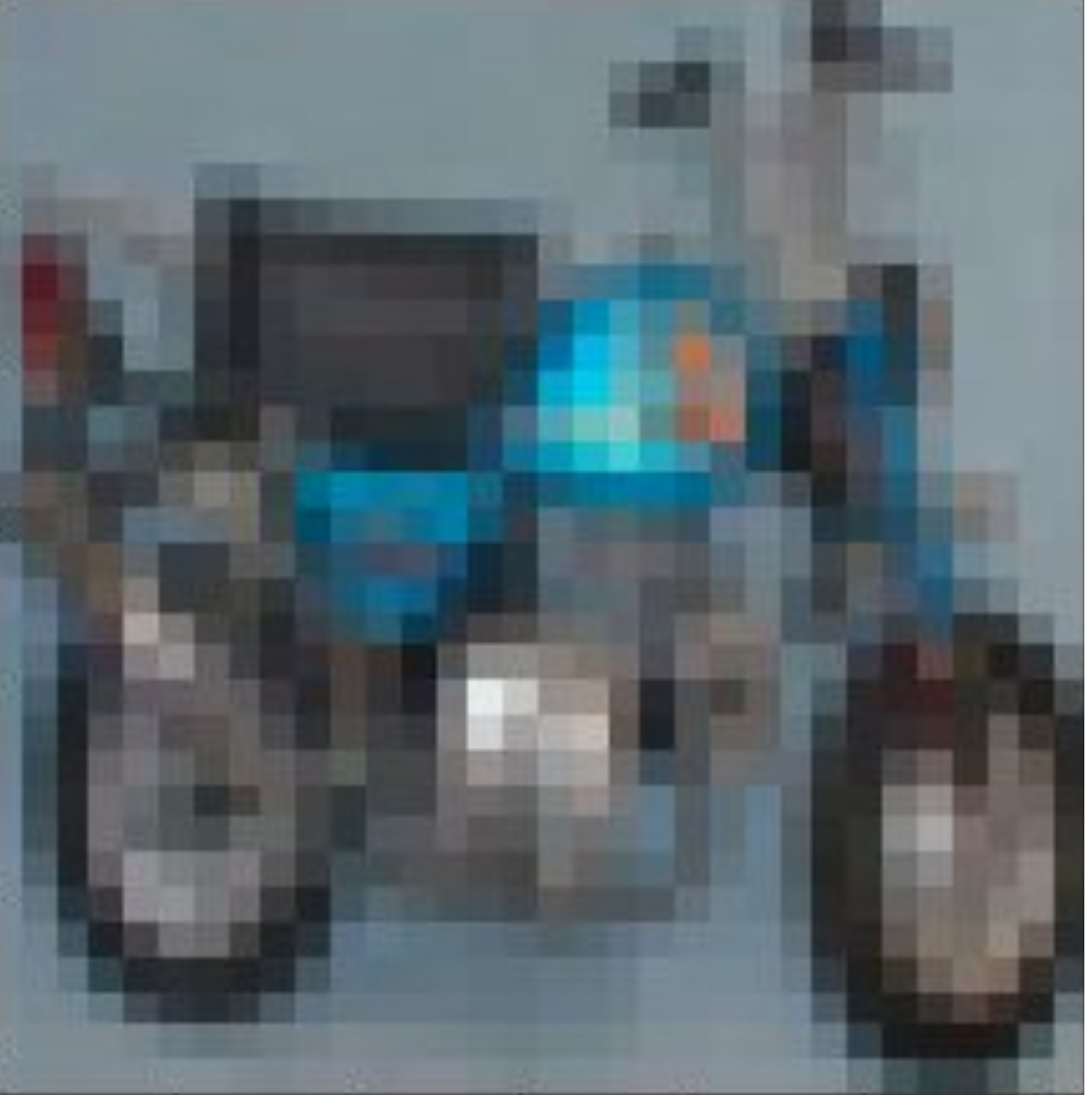}} & \parbox[l]{1em}{\includegraphics[width=2em]{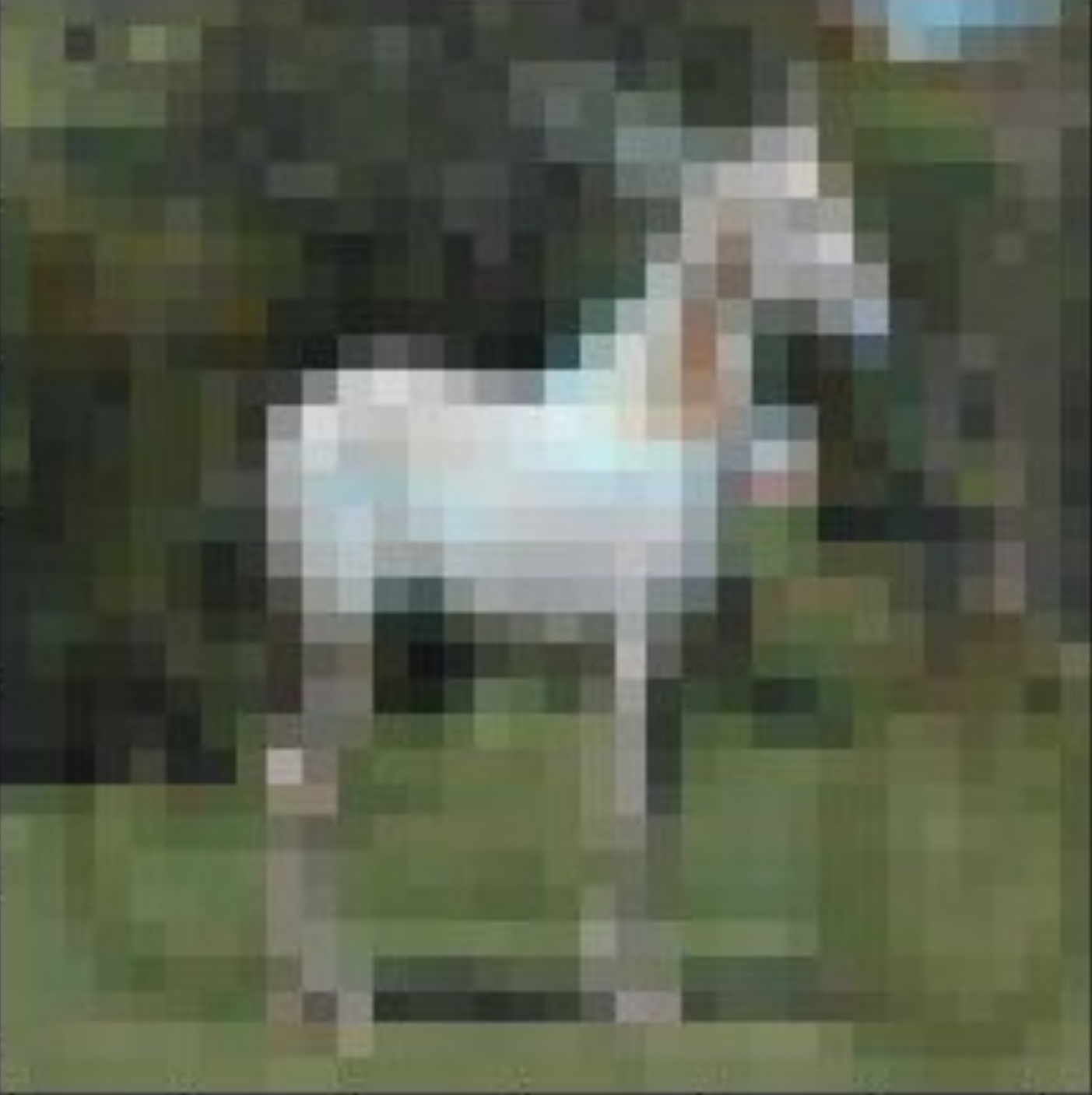}} & Deer & 95.3 & 1.2 && 94.9 & 94.6 \\
        \parbox[l]{1em}{\includegraphics[width=2em]{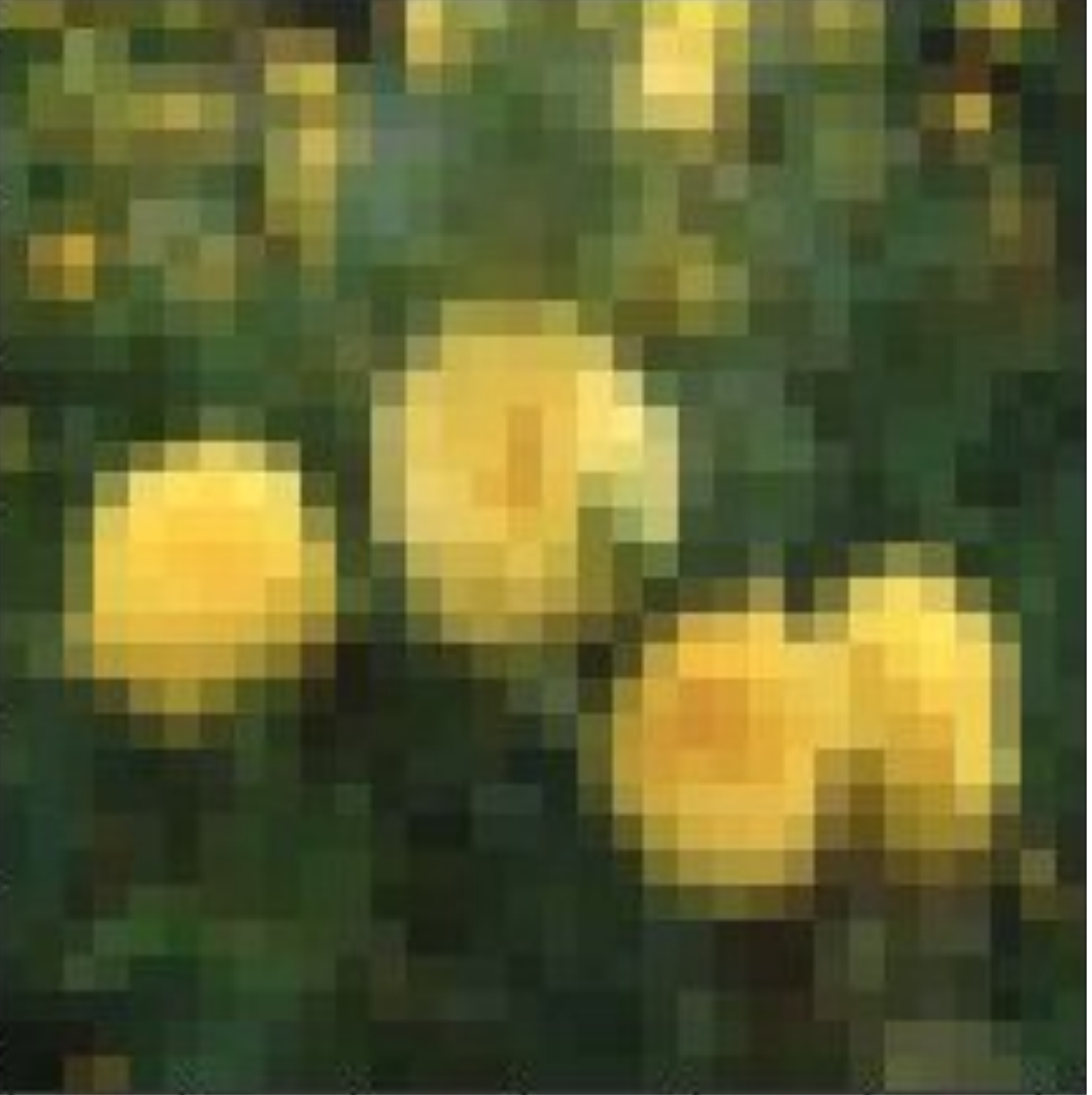}} & \parbox[l]{1em}{\includegraphics[width=2em]{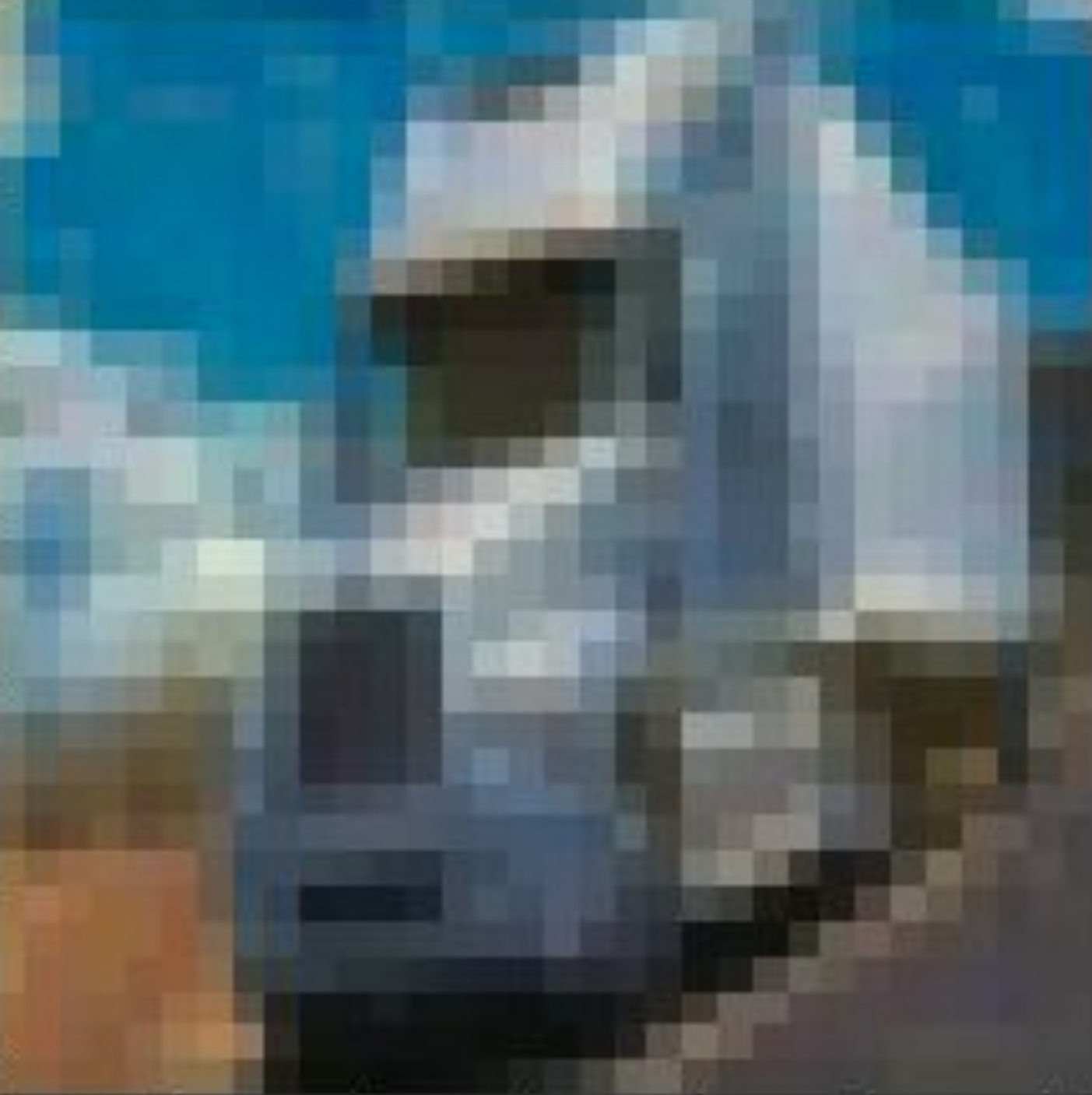}} & Bird & 95.4 & 5.0 && 94.6 & 97.3 \\
        \parbox[l]{1em}{\includegraphics[width=2em]{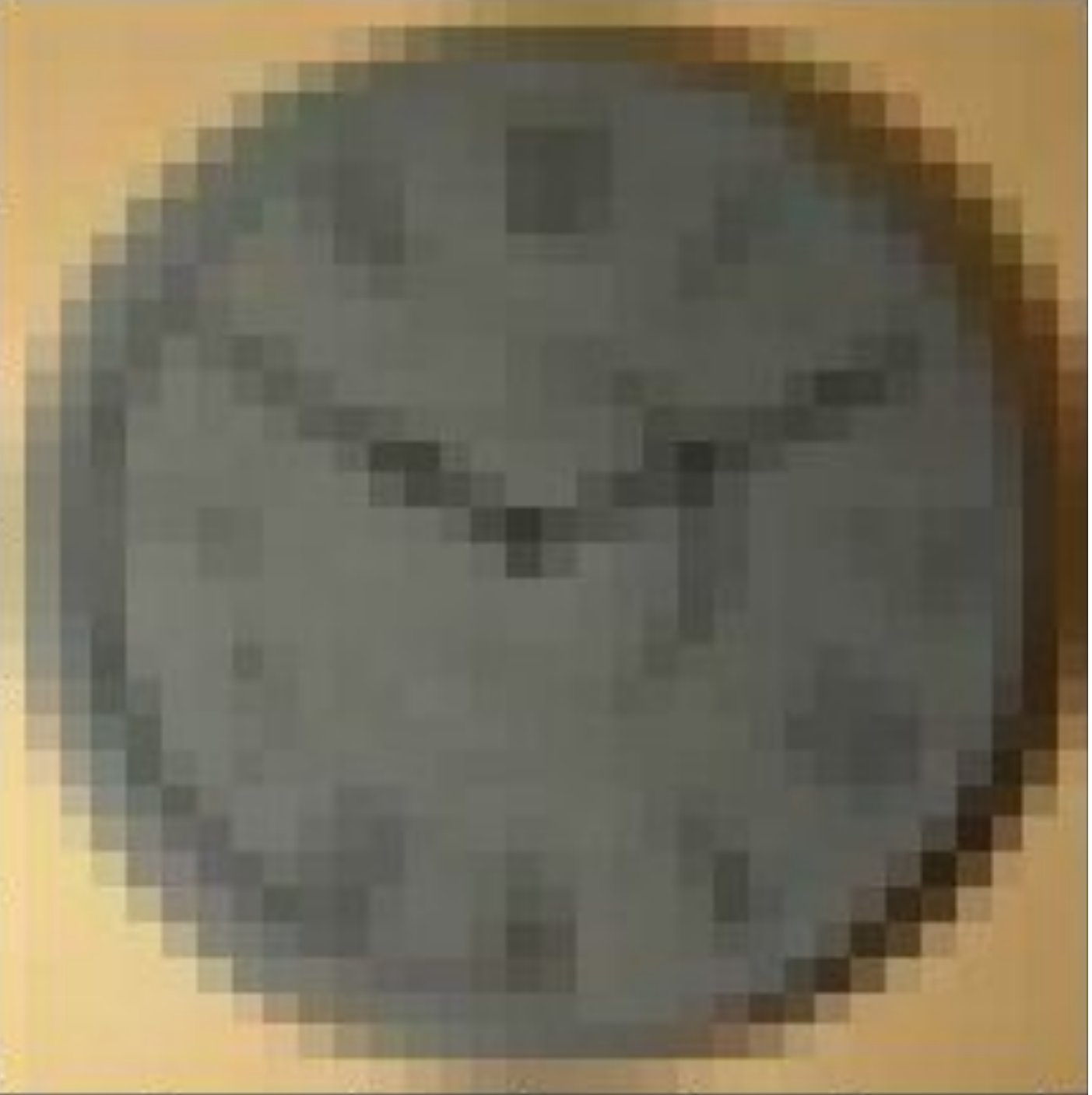}} & \parbox[l]{1em}{\includegraphics[width=2em]{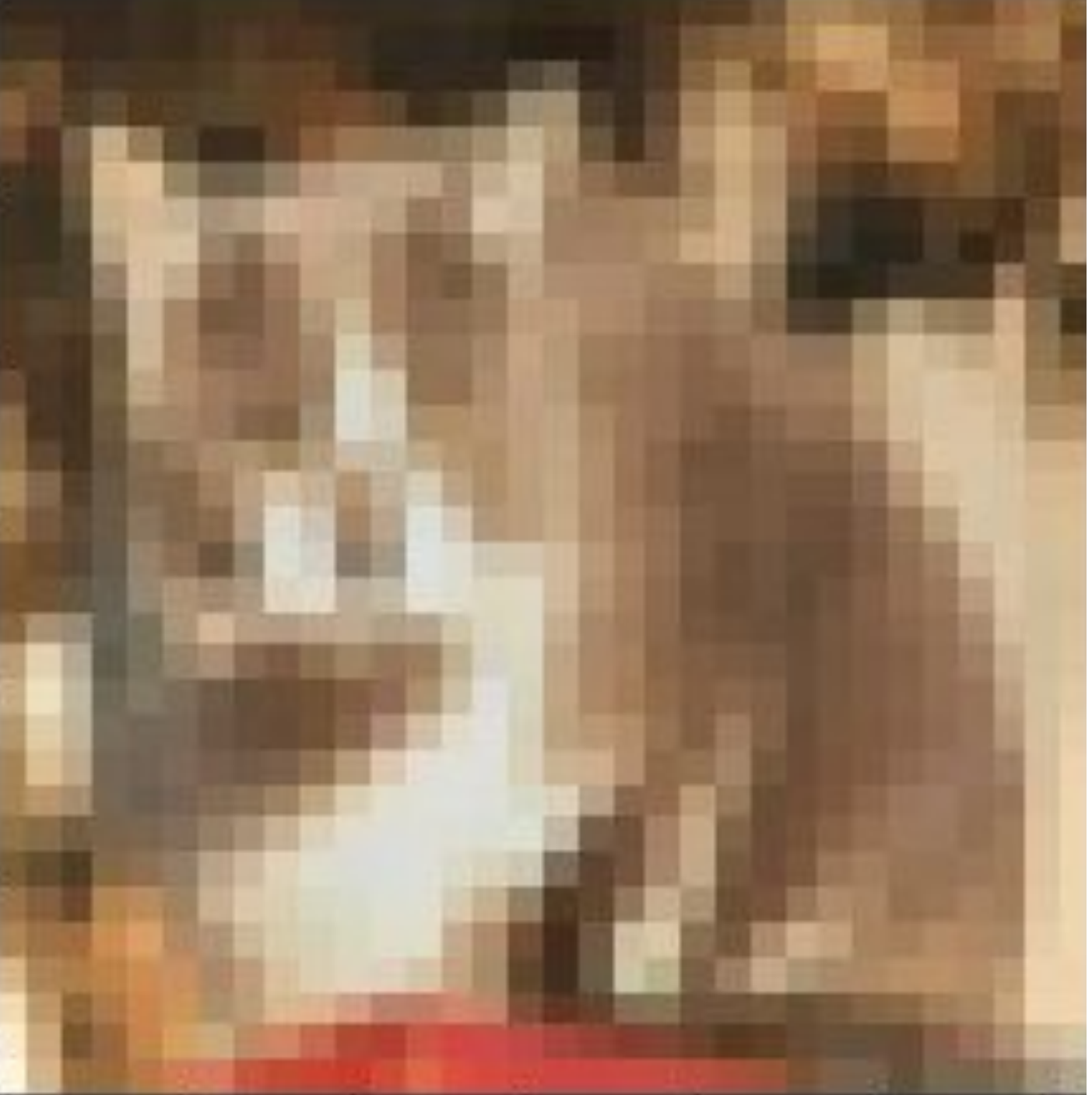}} & Horse & 95.0 & 16.6 && 94.9 & 90.8 \\
        \parbox[l]{1em}{\includegraphics[width=2em]{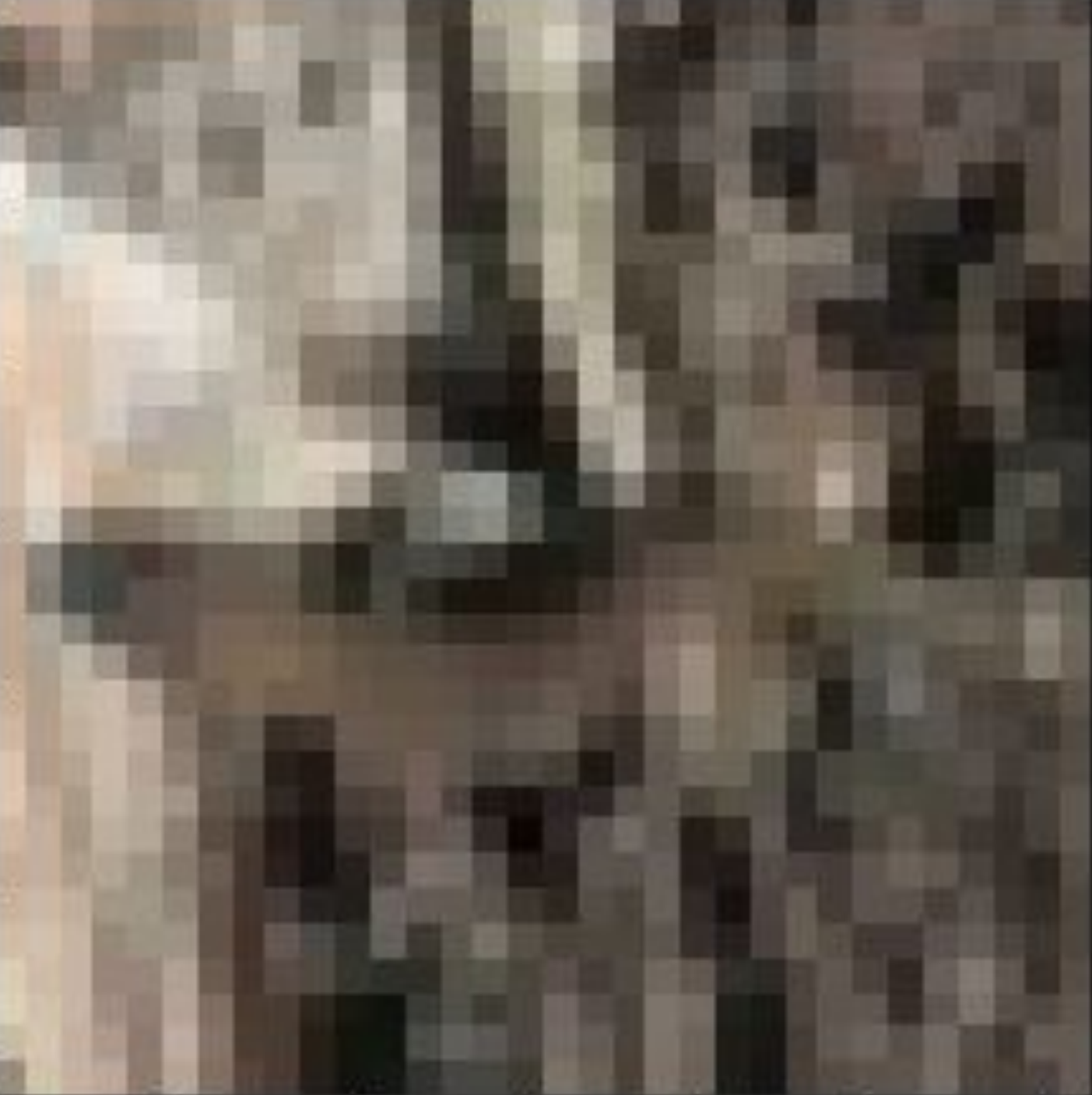}} & \parbox[l]{1em}{\includegraphics[width=2em]{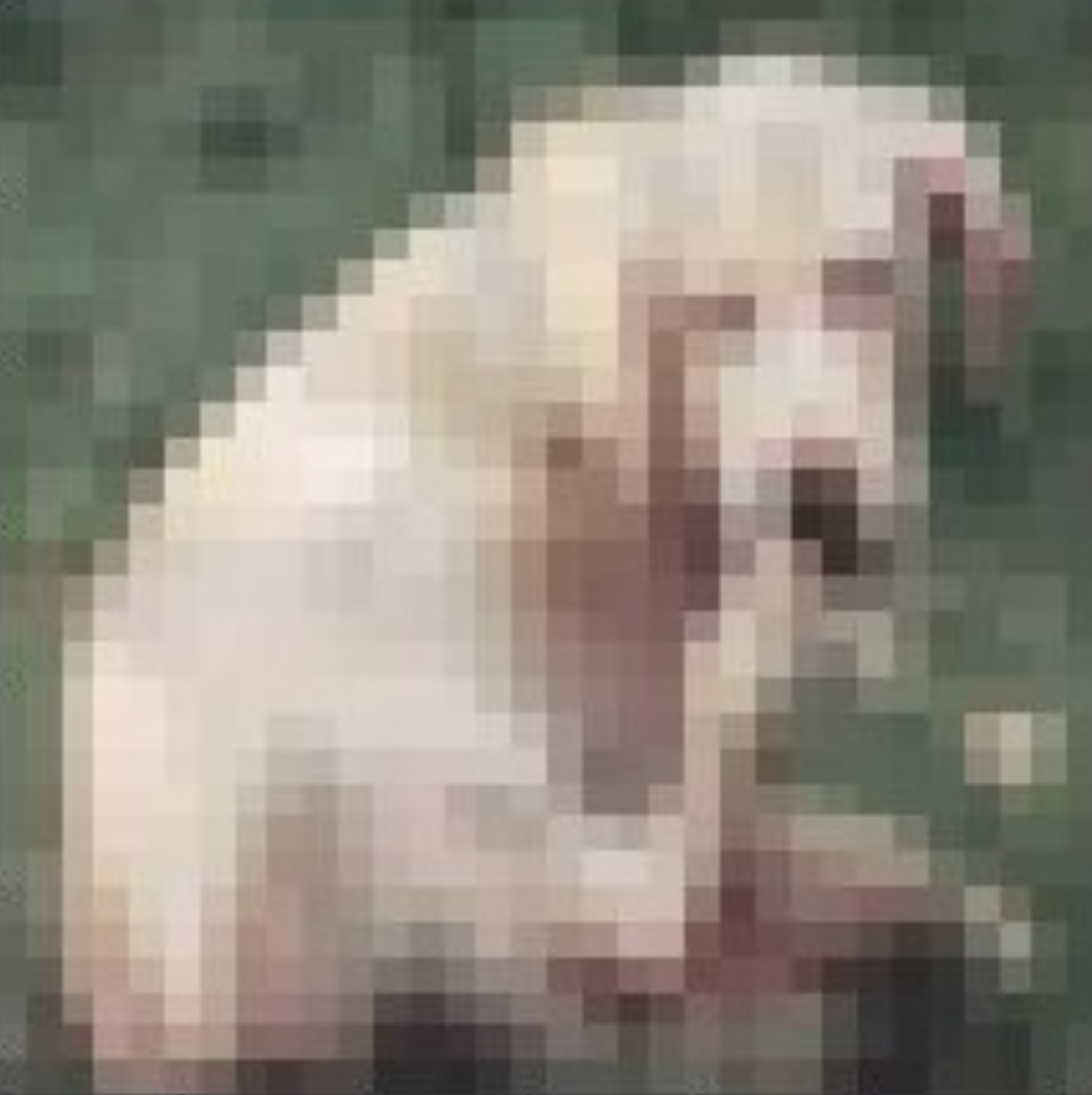}} & Cat & 95.2 & 12.5 && 94.3 & 87.8 \\
        \parbox[l]{1em}{\includegraphics[width=2em]{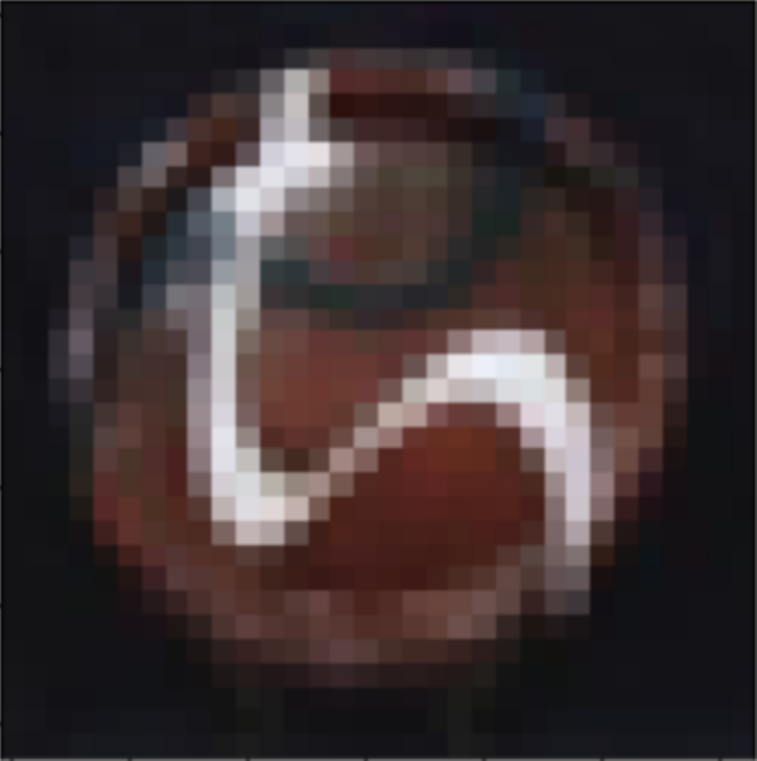}} & \parbox[l]{1em}{\includegraphics[width=2em]{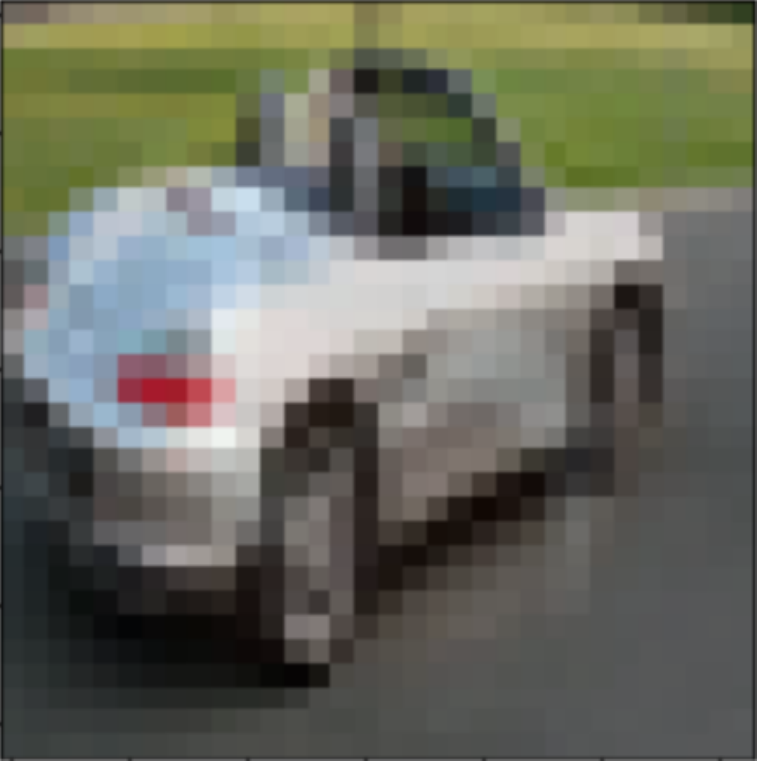}} & Dog & 95.0 & 9.6 && 94.5 & 96.1 \\
         \hline
        \end{tabular}
\label{tab:neutralization_acc_overlay_poison}
\end{table*}

\section{Evaluation of Neutralization Algorithm} \label{section:Evaluation of Neutralization Algorithm}
We evaluate the full suite of neural BP defense (Algorithm~\ref{algo:Main Algorithm}) on a realistic threat scenario where the target/base classes, poison pattern and ratio of poisoned data are unknown.
\subsection{Setup}
Our experiments are conducted on the CIFAR10 dataset \cite{krizhevsky2009learning} with ResNet \cite{he2016deep} and VGG \cite{simonyan2014very} image classifier. We use a publicly available ResNet18 and VGG19 implementation \footnote{https://github.com/kuangliu/pytorch-cifar} for our experiments.

Nine unique poison target-base pairs are used in our experiments. On top of the same eight class pairs from \cite{tran2018spectral}, we include (`Dog'-`Cat') to probe one more case where target and base classes are highly similar. We study all nine pairs on both large-sized poisoning and small-sized poisoning scenarios. For large-sized poisons, we use a randomly drawn image from CIFAR100 training set, to ensure the poison image has a different class from CIFAR10, and overlay on the poisoned samples with 20\% opacity. For each small-sized poison target-base pairs, a set of random color and pixel position determines which pixel in poisoned samples is to be replaced with the poison color. In all 18 experiments, 10\% of the training samples from the base class are randomly selected as poisoned samples and mislabeled as the target class. We use $\rho=500$ in Appendix Algorithm~\ref{algo:Add-Counterpoison-Perturbation} for our experiments and retrain the poisoned model on the defense's augmented dataset for one epoch. Unless stated otherwise, all results are shown for 10\% poison ratio on ResNet18.

\subsection{Evaluation of Neutralized Models}
We summarize the evaluation results in Table~\ref{tab:neutralization_acc_overlay_poison} for large-sized poisons and in Table~\ref{tab:neutralization_acc_dot_poison} for small-sized poisons. In all poisoning scenarios, the model has high test accuracy on clean test images ($\geq 95\%$ on all 10,000 CIFAR10 test set). The accuracy drops drastically when evaluated on the 1,000 poisoned base class test images, $\leq 16.5\%$ for overlay poisons and $\leq 2.0\%$ for dot poisons. After the neutralization process, for all poison cases, the accuracy of the model increases significantly, highlighting the effectiveness of our method. There is a slight dip ($\leq 1\%$) in test accuracy on clean test images which we speculate is due to the model sacrificing test accuracy to learn more robust features after the new training samples are perturbed against the gradient of the loss function, a phenomenon also observed in adversarially trained classifiers \cite{tsipras2018robustness}.

Experiments on 5\% poison ratio (Appendix Table~\ref{tab:neutralization_acc_overlay_poison_ep005} \& \ref{tab:neutralization_acc_dot_poison_ep005}) and on VGG19 (Appendix Table~\ref{tab:neutralization_acc_overlay_poison_VGG} \& \ref{tab:neutralization_acc_dot_poison_VGG}) similarly display the effectiveness of our defense.


\begin{table*}[!htbp]
    \centering
    \caption{Accuracy on full test set and poisoned base class test images, before and after neutralization (Neu.) for dot poison.}
        \begin{tabular}{ lccccccc }
         \hline
         Sample & Target & \multicolumn{2}{c}{Acc Before Neu. (\%)} && \multicolumn{2}{c}{Acc After Neu. (\%)} \\
         \cline{3-4}
         \cline{6-7}
         ~ & ~ & All & Poisoned && All & Poisoned \\
         \hline
        \parbox[l]{1em}{\includegraphics[width=2.5em]{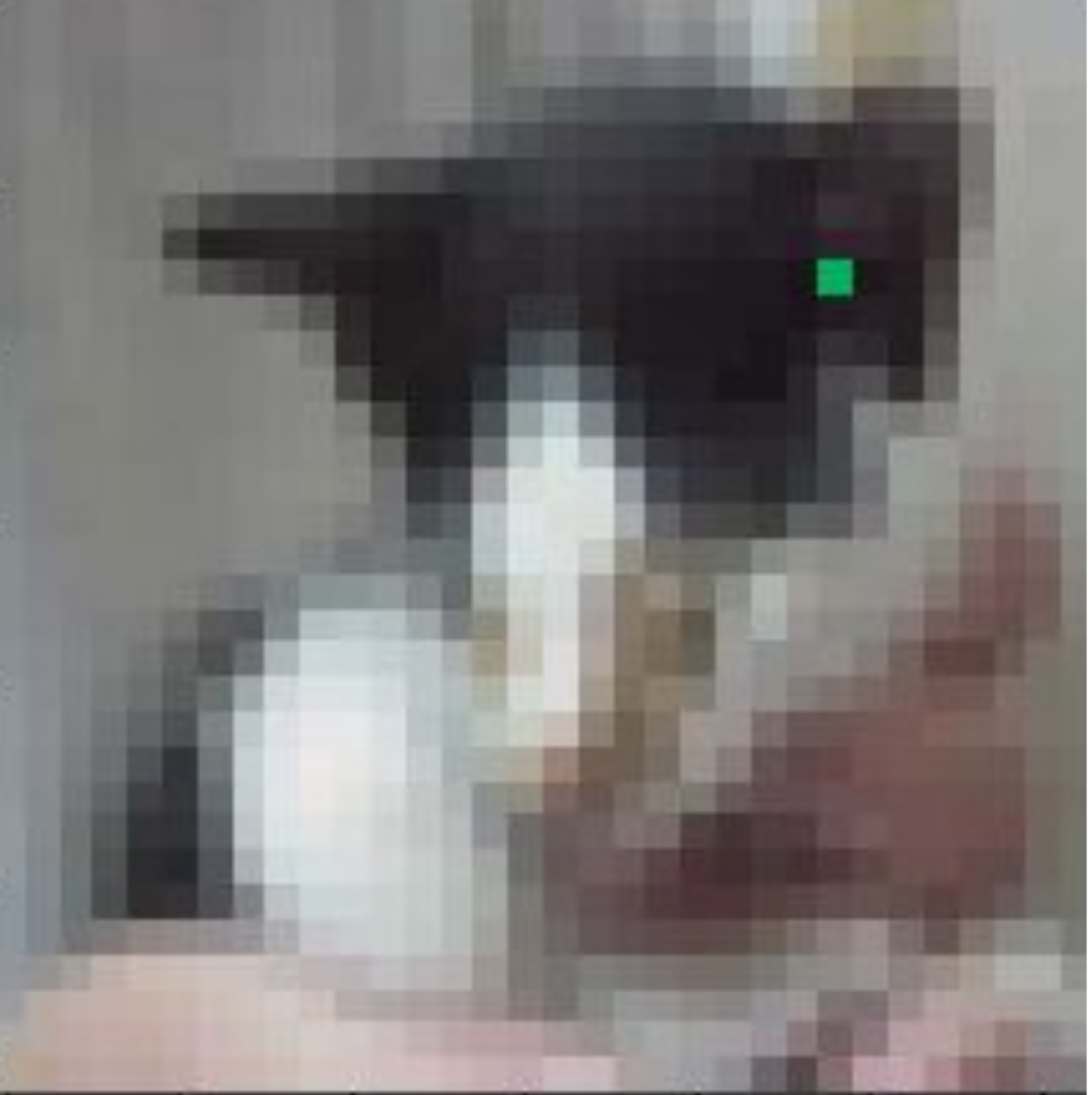}} & Dog & 95.4 & 0.5 && 94.9 & 87.5 \\
        \parbox[l]{1em}{\includegraphics[width=2.5em]{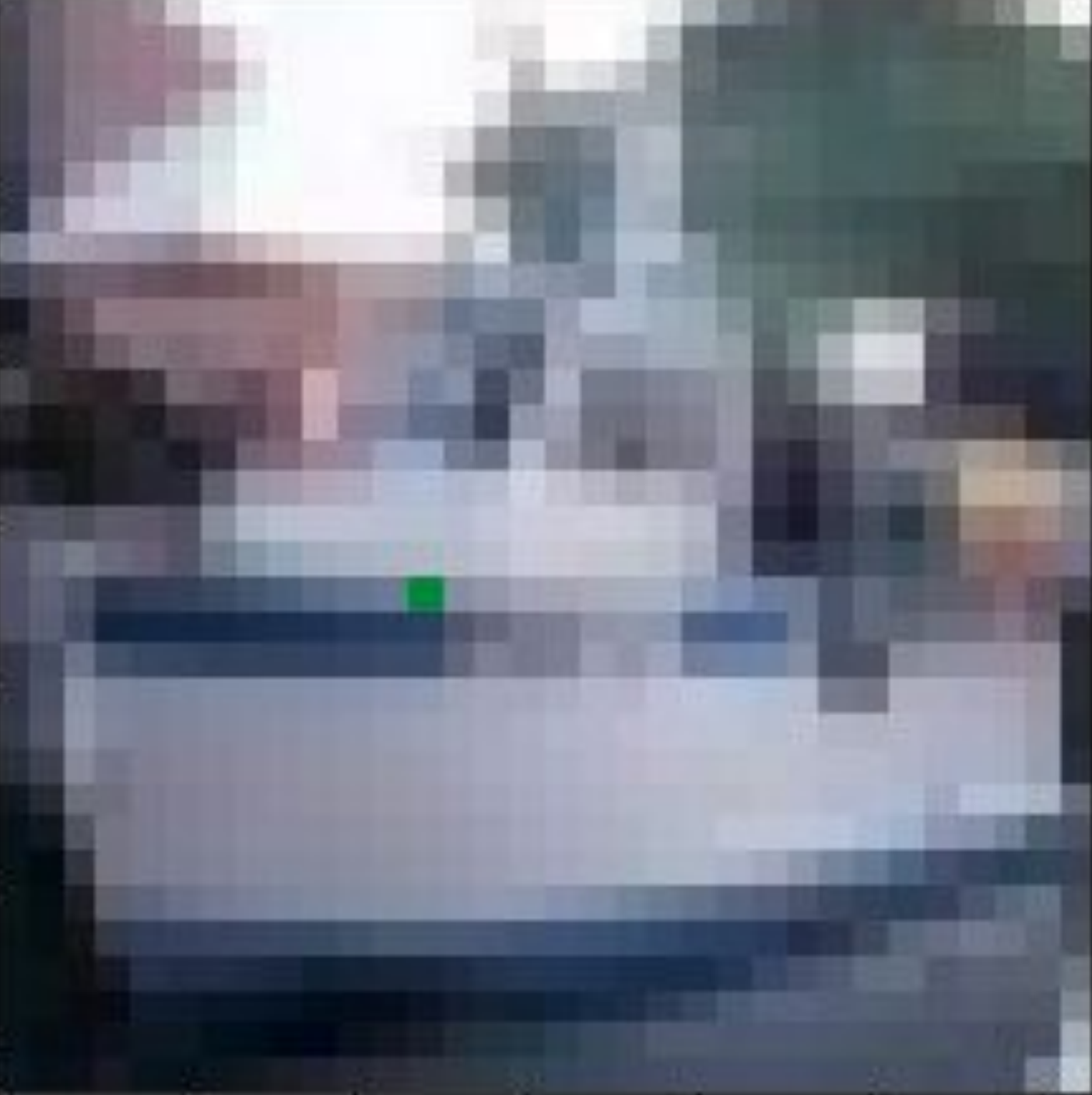}} & Frog & 95.4 & 0.4 && 95.0 & 95.9 \\
        \parbox[l]{1em}{\includegraphics[width=2.5em]{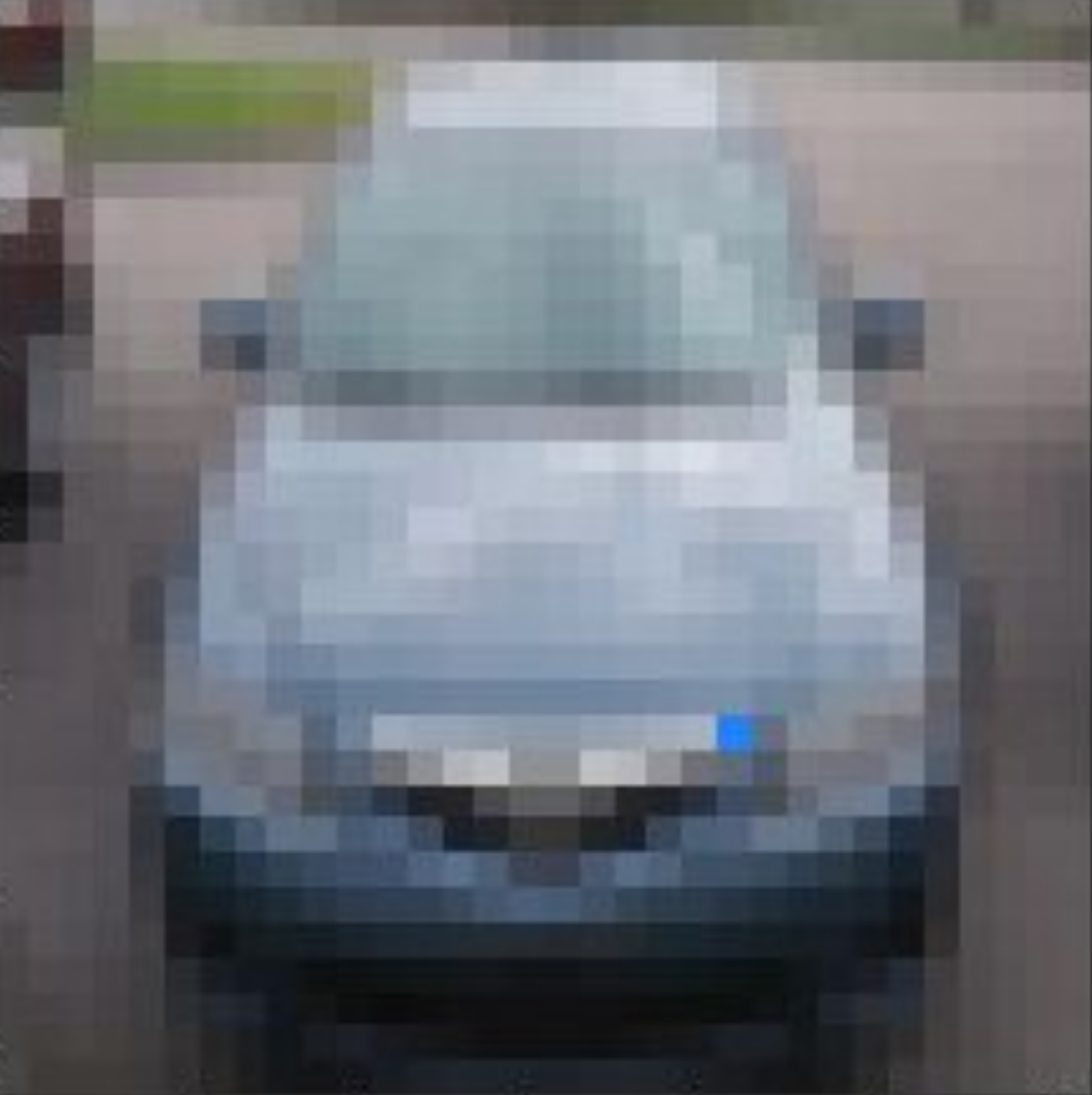}} & Cat & 95.3 & 0.4 && 95.2 & 86.0 \\
        \parbox[l]{1em}{\includegraphics[width=2.5em]{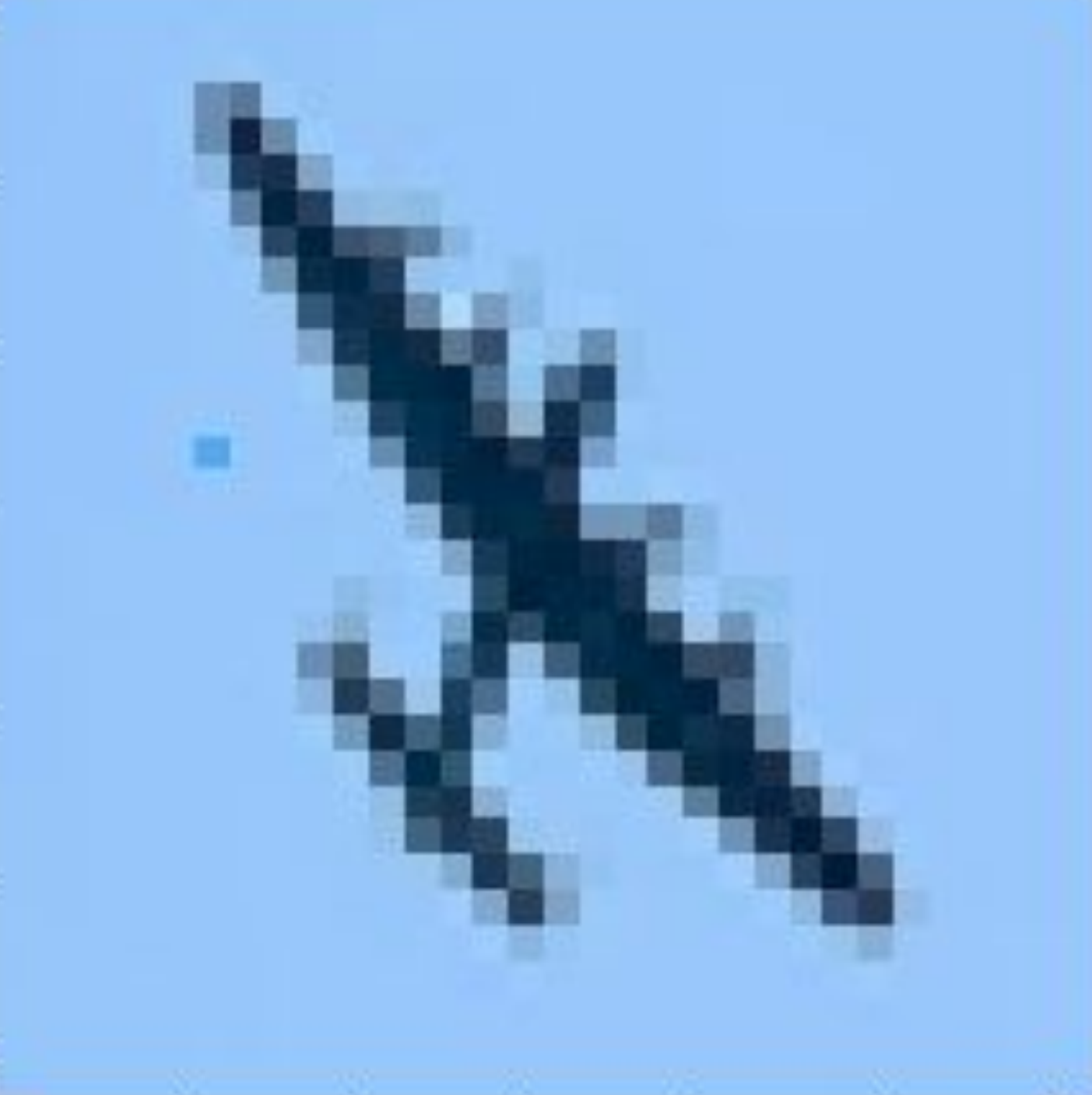}} & Bird & 95.2 & 1.0 && 95.0 & 96.3 \\
        \parbox[l]{1em}{\includegraphics[width=2.5em]{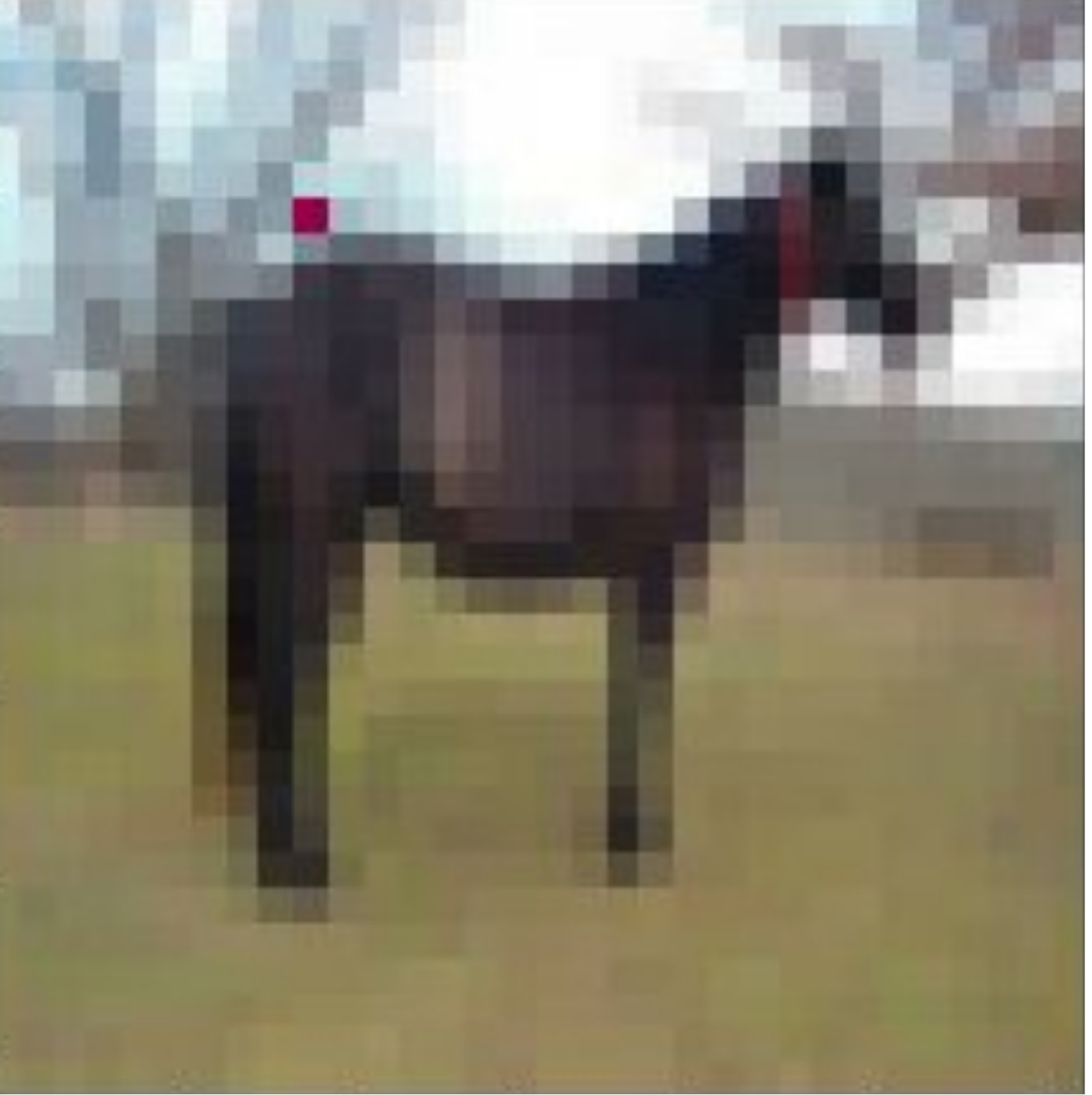}} & Deer & 95.3 & 0.5 && 95.1 & 96.4 \\
        \parbox[l]{1em}{\includegraphics[width=2.5em]{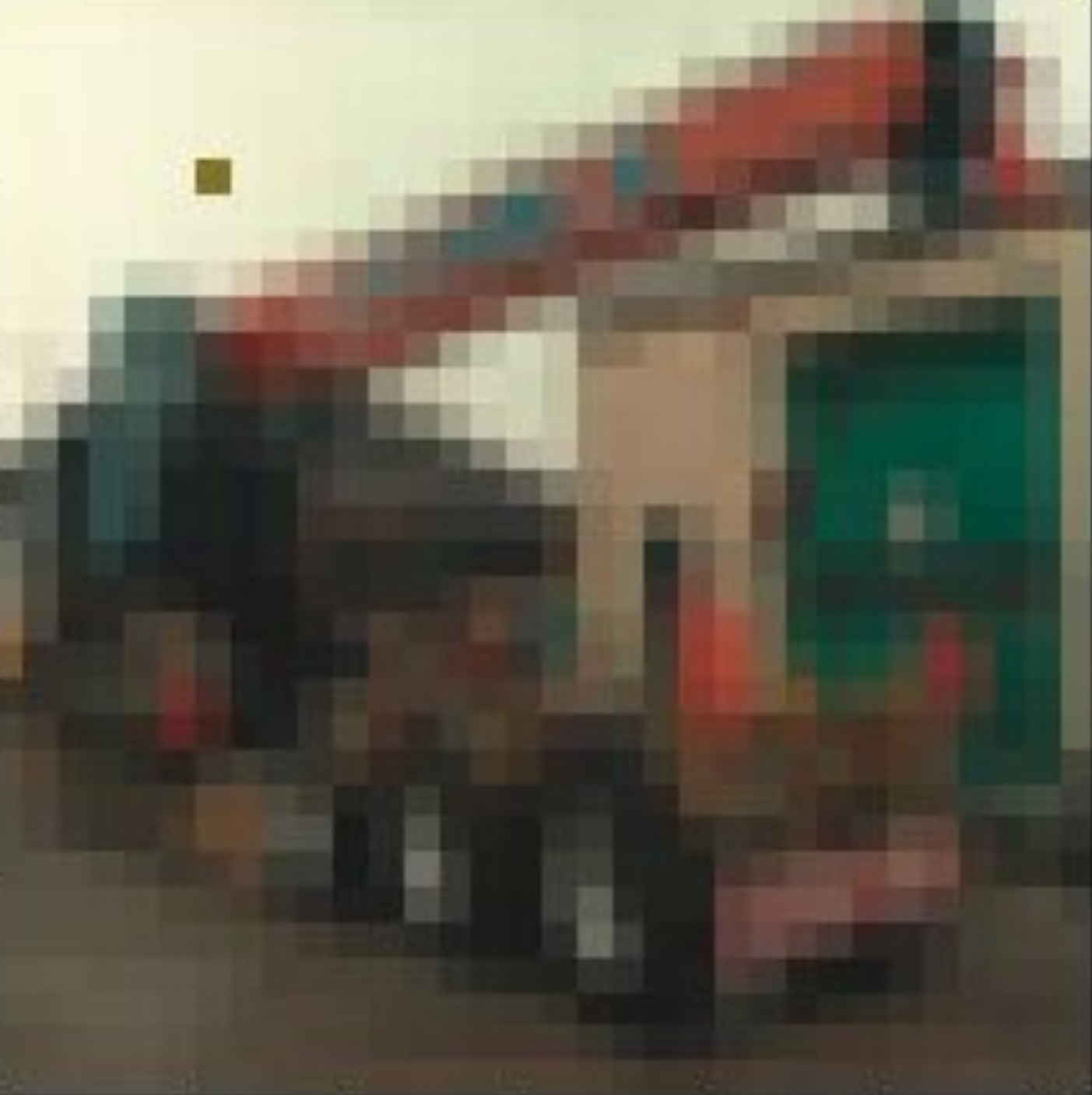}} & Bird & 95.3 & 2.0 && 95.3 & 96.4 \\
        \parbox[l]{1em}{\includegraphics[width=2.5em]{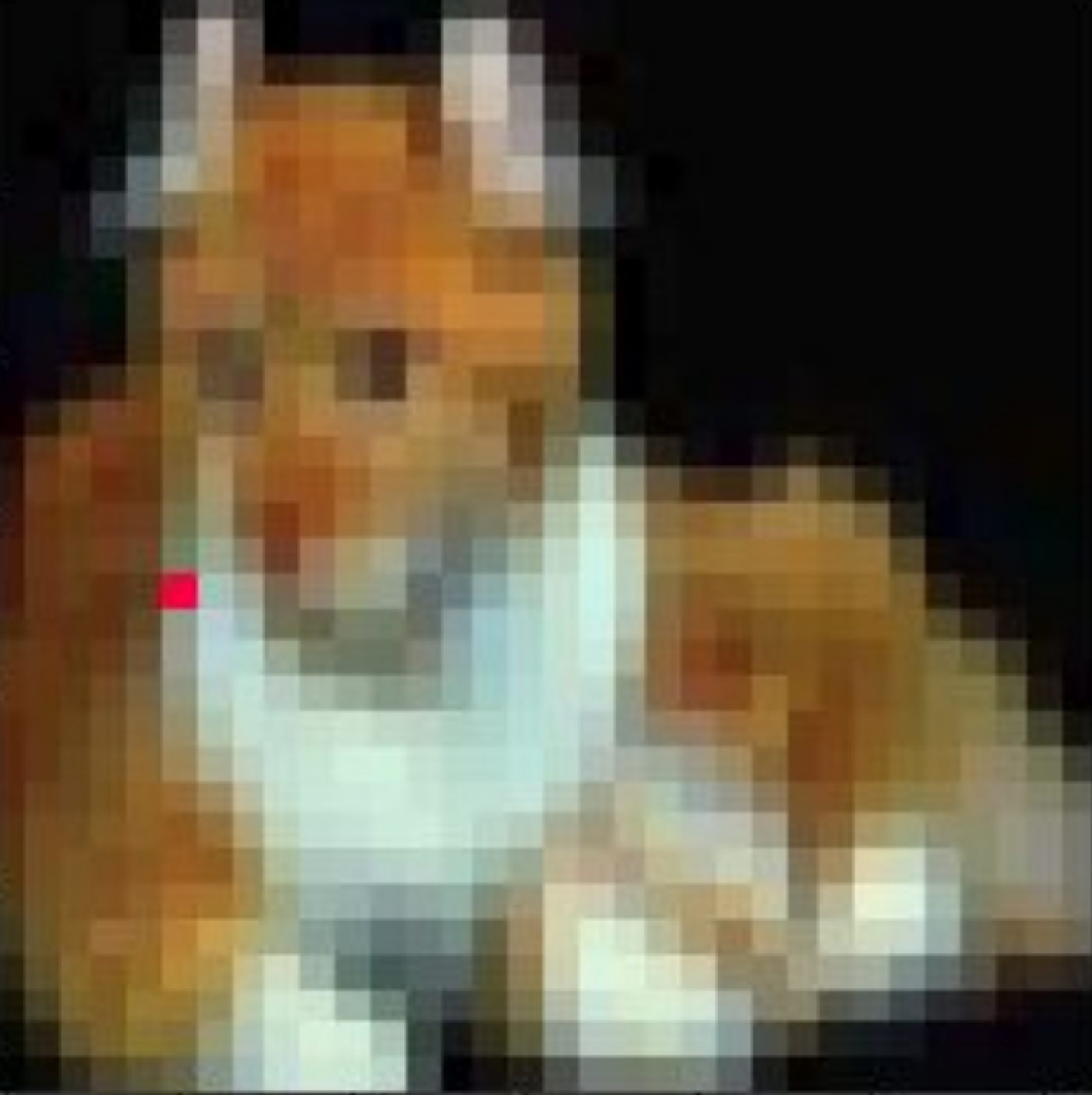}} & Horse & 95.3 & 1.0 && 94.6 & 81.4 \\
        \parbox[l]{1em}{\includegraphics[width=2.5em]{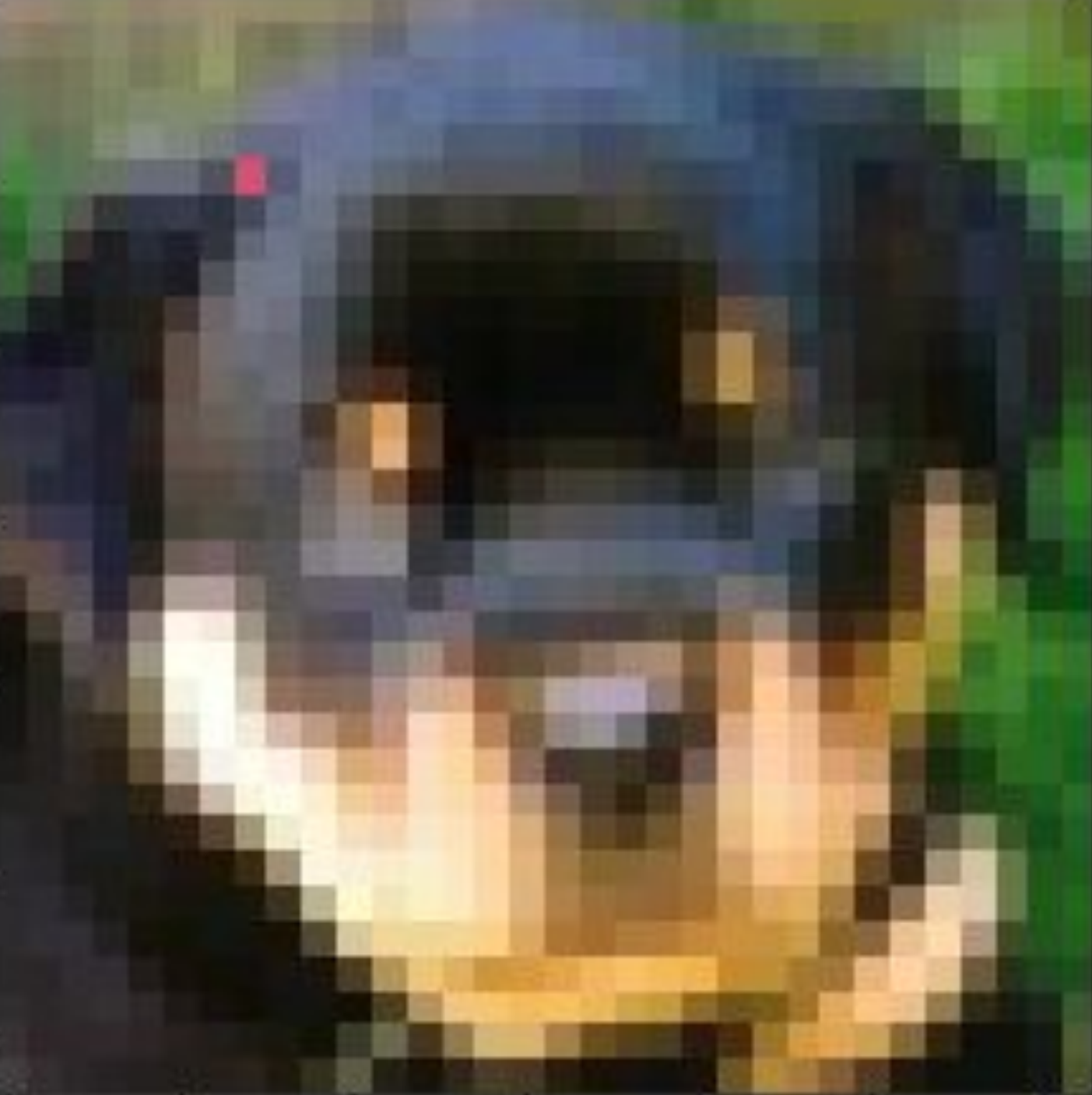}} & Cat & 95.1 & 1.4 && 95.0 & 90.6 \\
        \parbox[l]{1em}{\includegraphics[width=2.5em]{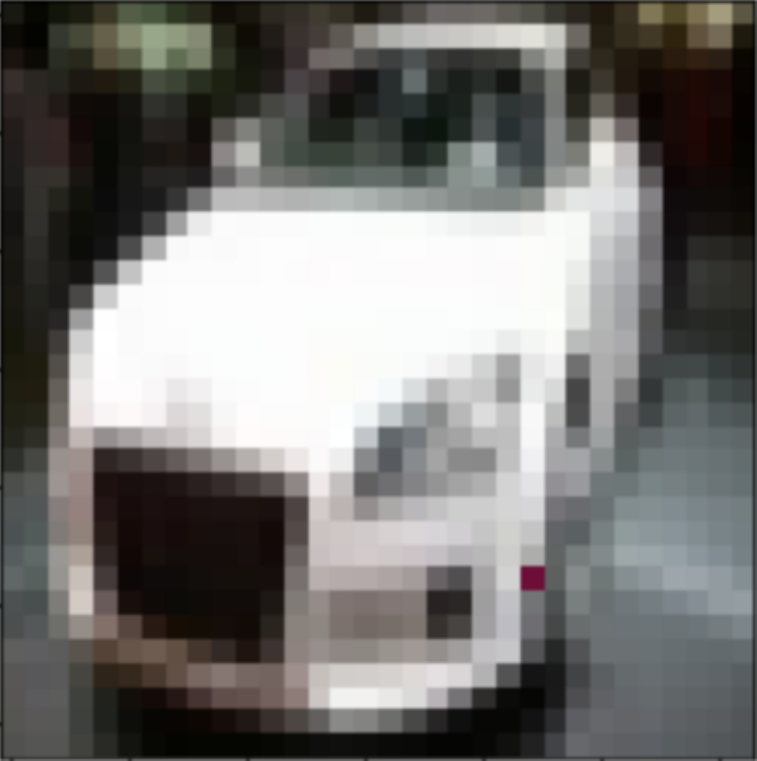}} & Dog & 95.4 & 3.0 && 95.2 & 98.2 \\
         \hline
        \end{tabular}
\label{tab:neutralization_acc_dot_poison}
\end{table*}

\subsection{Comparison with Baseline Defences} \label{section:baseline comparison}

\subsubsection{Detection of Poisoned Samples} 
When compared with another poison detection baseline called Activation Clustering (AC) \cite{chen2018detecting} and we observe that our method is more robust in the detection of poisoned samples (Table~\ref{tab:comparison AC overlay} and \ref{tab:comparison AC dot}). For full-sized overlay poison attacks, AC’s sensitivity (accuracy of detecting poisoned samples) is $<50\%$ for 4 out of the 9 CIFAR10 poison pairs in our experiments while our proposed detection method shows high sensitivity ($>85\%$) consistently (Table~\ref{tab:comparison AC overlay}). For the 9 small-sized dot poison attacks, there are 3 pairs where AC detects poisoned samples with accuracy $<60\%$ (sensitivity) while our proposed method shows comparatively high sensitivity ($>80\%$) for all the poison pairs (Table~\ref{tab:comparison AC dot}). Since images from different CIFAR-10 classes (like cats and dogs) may look semantically more similar to one another than those from datasets evaluated in \cite{chen2018detecting} like MNIST and LISA, we speculate that the activations of poisoned samples closely resemble those of clean samples despite being originally from different class labels. As a result, it is challenging to separate them with AC which relies on differences between activations of poisoned and clean target class samples. In contrast, our proposed method detects poisoned samples through their input gradient’s similarity with the extracted poison signal. This decouples the inter-class activation similarity problem from the detection of poisoned samples, thus explaining the more robust performance of our method.

\begin{table*}[!htbp]
    \centering
    \footnotesize
    \caption{Full overlay poison detection (Specificity/Sensitivity) comparison with Activation Clustering (AC) \cite{chen2018detecting} defense. Specificity is the accuracy of clean sample classification while sensitivity is the accuracy of poisoned sample classification.}
        \begin{tabular}{ l|ccccccccc }
         \hline
         Poison Pair \# & 1 & 2 & 3 & 4 & 5 & 6 & 7 & 8 & 9 \\
         \hline
        Ours (\%) & 99.4 / 94.6 & 99.6 / 96.0 & 99.2 / 95.6 & 99.7 / 89.8 & 97.5 / 87.4 & 99.5 / 95.4 & 98.9 / 95.8 & 99.7 / 89.2 & 99.6 / 93.6 \\
        AC (\%) & 70.7 / 46.6 & 73.4 / 96.2 & 99.8 / 93.4 & 50.6 / 13.0 & 72.4 / 70.8 & 68.4 / 79.8 & 59.2 / 6.4 & 50.0 / 45.2 & 99.8 / 94.2 \\
         \hline
        \end{tabular}
\label{tab:comparison AC overlay}
\end{table*}

\begin{table*}[!htbp]
    \centering
    \footnotesize
    \caption{Dot-sized poison detection (Specficity/Sensitivity) comparison with Activation Clustering (AC) defense. }
        \begin{tabular}{ l|ccccccccc }
         \hline
         Poison Pair \# & 1 & 2 & 3 & 4 & 5 & 6 & 7 & 8 & 9 \\
         \hline
        Ours (\%) & 99.6 / 92.8 & 99.5 / 88.6 & 99.7 / 99.0 & 96.8 / 84.4 & 99.9 / 99 & 99.7 / 100 & 99.1 / 95.8 & 99.3 / 94 & 99.52 / 99.8 \\
        AC (\%) & 71.7 / 80.0 & 65.3 / 92.0 & 99.4 / 97.4 & 53.4 / 25.4 & 85.8 / 92.4 & 59.7 / 44.6 & 72.3 / 97.2 & 64.9 / 59.0 & 99.7 / 91.0 \\
         \hline
        \end{tabular}
\label{tab:comparison AC dot}
\end{table*}

\subsubsection{Final Neutralization} 
Other backdoor defense approaches such as \cite{tran2018spectral,liu2018fine,wangneural} assume either prior knowledge about the attack's target class and poison ratio or the availability of a verified clean dataset which makes it different from the more challenging threat model considered in this paper. Nonetheless, on experiments with the same poison parameters, our method is competitive (Table~\ref{tab:comparison}), compared to the defense in \cite{tran2018spectral}.

\begin{table}[!htbp]
    \centering
    \small
    \setlength{\tabcolsep}{0.5em}
    \caption{Post-defense poison success rate (lower is better) comparison of full neutralization pipeline with spectral signature (SS) filtering \cite{tran2018spectral} for 10\% dot poison ratio. }
        \begin{tabular}{ l|ccccccccc }
         \hline
         Poison Pair \# & 1 & 2 & 3 & 4 & 5 & 6 & 7 & 8 & 9 \\
         \hline
        Ours (\%)& 6.0 & 0 & 3.7 & 0.4 & 0.4 & 0.1 & 6.1 & 5.4 & 0 \\
        SS (\%)& 7.2 & 0.1 & 0.1 & 1.1 & 1.7 & 0.4 & 0.7 & 6.7 & 0 \\
         \hline
        \end{tabular}
\label{tab:comparison}
\end{table}

\section{Conclusions}
In this paper, we propose a comprehensive defense to counter backdoor attacks on neural networks. We show how poison signals can be extracted from input gradients of poisoned training samples. With the insights that the principal components of input gradients from poisoned and clean samples form distinct clusters, we propose a method to detect the presence of backdoor poisoning, along with the corresponding poison target and base class. We then use the extracted poison signals to filter poisoned from clean samples in the target class. Finally, we retrain the model on an augmented dataset, which dissociates the poison signals from the target class, and show that it can effectively neutralize the backdoor for both large- and small-sized poisons in the CIFAR10 dataset without prior assumption on the poison classes and size. Comparison with baselines demonstrates both our approach's superior poison detection and its competitiveness with existing methods even under a more challenging threat model. Our method consists of several key modules, each of which can potentially be a building block of more effective defenses in the future.

\clearpage

{\small
\bibliographystyle{ieee_fullname}
\bibliography{main}
}

\clearpage
\begin{appendices}


\section{Poison Signals in Input Gradients} \label{section:Poison Signals in Input Gradients}
\subsection{Constructing a Backdoor}
\subsubsection{A Binary Classification Example} \label{section:clean model}
Our example considers clean data samples $(\mathbf{x},y)$ from a distribution $D_c$ such that:
$$
y \in \{ -1, +1 \},~~~~ x_1 \sim \mathcal{N}(0,1),~~~~ x_2, \cdots, x_{d+1} \sim \mathcal{N}(\eta y,1)
$$
where $x_i$ are independent and $\mathcal{N}(\mu,\sigma^{2})$ is gaussian distribution with mean $\mathbf{\mu}$ and variance $\sigma^{2}$. In this dataset, the features $x_2, \cdots, x_{d+1}$ are correlated with the label $y$ whereas $x_1$ is uncorrelated at all. We denote $(\mathbf{x}_-, -1)$ for samples with label $-1$ and $(\mathbf{x}_+, -1)$ for sample with label $+1$.

We can consider a simple neural network classifier $f_c$ with a hidden layer made up of two neurons and RELU activation function $g$ which is able to achieve high accuracy for $D_c$:
$$
a_1^1 = {\mathbf{w}_1^1}^\top \mathbf{x} + b_1^1,~~~~~~~
a_2^1 = {\mathbf{w}_2^1}^\top \mathbf{x} +  b_2^1,
$$
$$
f_c(\mathbf{x}) \coloneqq \text{sign}(w_1^2 g(a_1^1) + w_2^2 g(a_2^1))
$$

where $ \mathbf{w}_1^1 = \left [0, -\frac{1}{d}, \cdots, -\frac{1}{d} \right ],~~~ b_1^1 = 0,$\\
$\mathbf{w}_2^1 = \left [0, \frac{1}{d}, \cdots, \frac{1}{d} \right ],~~~ b_2^1 = 0,~~~
w_1^2 = -1,~~~
w_2^2 = 1~~~$. 
Considering the accuracy of $f_c$ on $D_c$,
\begin{equation} \label{eq:BE1}
\begin{aligned}
Pr\{ f_c(\mathbf{x}) = y \} & = Pr\{ \text{sign}(w_1^2 g({\mathbf{w}_1^1}^\top \mathbf{x}) + w_2^2 g({\mathbf{w}_2^1}^\top \mathbf{x})) = y \} \\ 
& = Pr \left \{ \frac{y}{d} \sum_{i = 1}^d \mathcal{N}_i(\eta y,1) > 0 \right \}
\end{aligned}
\end{equation}

where $\mathcal{N}_i$ are independent gaussian distributions. Further simplifying it, we get
\begin{equation} \label{eq:BE2}
\begin{aligned}
Pr\{ f_c(x) = y \} & = Pr \left \{ \mathcal{N}(\eta,\frac{1}{d}) > 0 \right \} \\ 
& = Pr \left \{ \mathcal{N}(0,1) > - \eta \sqrt{d} \right \}
\end{aligned}
\end{equation}

From this, we can observe that the accuracy of $f_c$ is $>$99.8\% on $D_c$ when $\eta \geq \frac{3}{\sqrt{d}}$. $f_c$ can have $m$ times more similar neurons in the hidden layer and get similarly high training accuracy for $D_c$.  

\subsubsection{Effect of Poisoned Data on Learned Weights} \label{section:poisoned model}
We now consider a distribution of poisoned data $D = D_c \cup D_p$ which forms in a victim classifier $f_p$ a backdoor after training. We study the case where an adversary forms a backdoor that causes $f_p$ to misclassify $\mathbf{x}_-$ samples as $+1$ when the poison signal is present. We denote the input-label pairs from $D_p$ as $(\mathbf{x}_p,y_p)$:
\begin{equation} \label{eq:x_p definition}
y_p = +1,~~~~~~~ x_1 = \psi,~~~~~~~ x_2, \cdots, x_{d+1} \sim \mathcal{N}(- \eta, 1)
\end{equation}
where the poison signal is planted in $x_1$ with value $\psi > 0$ and $y_p$ is mislabeled as the target label $+1$. Note that $\mathbf{x}_p$ and $\mathbf{x}_-$ are similar in their distribution except for their $x_1$ values which contains the poison signal for $\mathbf{x}_p$. If we use the same classifier $f_c$ from \S~\ref{section:clean model}, $f_c(\mathbf{x}_p) = -1 \neq y_p$, resulting in classification `error' for most $\mathbf{x}_p$. With $\varepsilon$ being the ratio of $D_p$ samples in $D$, $f_c$ would have `error' rate of $\approx \varepsilon$ for $D$.

For high training accuracy on $D$, we study another neural network classifier $f_p$ with a hidden layer made up of three different neurons and RELU activation function $g$:
$$
a_1^1 = {\mathbf{w}_1^1}^\top \mathbf{x} + b_1^1,~~~~~~~
a_2^1 = {\mathbf{w}_2^1}^\top \mathbf{x} +  b_2^1,~~~~~~~
a_3^1 = {\mathbf{w}_3^1}^\top \mathbf{x} + b_3^1,~~~~~~~
$$
$$
f_p(\mathbf{x}) \coloneqq \text{sign}(w_1^2 g(a_1^1) + w_2^2 g(a_2^1) + w_3^2 g(a_3^1))
$$

similar to $f_c$ for the first two hidden neurons,
$$
\mathbf{w}_1^1 = \left [0, -\frac{1}{d}, \cdots, -\frac{1}{d} \right ],~~~~~ b_1^1 = 0,
$$
$$
\mathbf{w}_2^1 = \left [0, \frac{1}{d}, \cdots, \frac{1}{d} \right ],~~~~~ b_2^1 = 0,
w_1^2 = -1,~~~~~
w_2^2 = 1,~~~~~
$$

For $f_p$'s third hidden neuron,
$$
\mathbf{w}_3^1 = \left [ \frac{1}{d}, 0, \cdots, 0 \right ],~~~~~ b_3^1 = - c \frac{1}{d},~~~~~ w_3^2 > \frac{\eta d}{(\psi -c)}
$$
where $c > 0$ and $g$ is the RELU activation function. The negative sign of $b_3^1$ suppresses the activation of the third neuron ($a_3^1$) for clean $\mathbf{x}_-$ samples. Without this, its noise value at $x_1$ could have cause $a_3^1$ to be positive and flip the sign of $f_p(\mathbf{x}_-)$ to positive. 





We can express the training accuracy on $\mathbf{x}_p$ as
\begin{multline}
Pr\{ f_c(\mathbf{x}_p) = +1 \} = \\ Pr\{ \text{sign}(w_1^2 g(a_1^1) + w_2^2 g(a_2^1) + w_3^2 g(a_3^1)) = +1 \}
\end{multline}
Combining the definition of $\mathbf{x}_p$ in (\ref{eq:x_p definition}) with observations in (\ref{eq:BE1}) and (\ref{eq:BE2}), we get
\begin{equation} \label{eq:BE3}
\begin{aligned}
Pr\{ f_c(\mathbf{x}_p) = +1 \} & = Pr \left \{ \mathcal{N}\left(- \eta,\frac{1}{d} \right) + w_3^2 (\psi - c) \frac{1}{d}  > 0 \right \} \\
& = Pr \left \{ \mathcal{N}(0,1)  > \eta \sqrt{d} - (\psi - c) \frac{w_3^2}{\sqrt{d}} \right \}
\end{aligned}
\end{equation}


For the training accuracy of poisoned samples $Pr\{ f_c(\mathbf{x}_p) = +1 \} > 0.5$, we need 
$$
\eta \sqrt{d} - (\psi - c) \frac{w_3^2}{\sqrt{d}} < 0 
$$
which is satisfied when
$$
c_1 = (\psi - c) > 0 \text{  and  } (\psi - c) \frac{w_3^2}{\sqrt{d}} > \eta \sqrt{d}
$$

From here, we can deduce that for high training accuracy of poisoned samples, we need

$$c_1 \frac{w_3^2}{\sqrt{d}} \gg \eta \sqrt{d} ~~~\text{which implies}~~~ w_3^2 \gg \frac{1}{c_1} \eta d$$


Combining with the result from (\ref{eq:BE2}) that $\eta \geq \frac{C}{\sqrt{d}}$ is needed for high training accuracy of $\mathbf{x}_-$ and $\mathbf{x}_+$, we get
$w_3^2 \gg c_2 \sqrt{d}$. When $d$ is large for high dimensional inputs,
\begin{equation} \label{eq:poison weight}
w_3^2 \gg c_2 \sqrt{d} > 1 = |w_1^2|, |w_2^2|
\end{equation}

This means that the weight of third neuron representing poisoned input feature would be much larger than that of the first and second neurons representing normal input features. In practice, poison feature neurons having larger weight values than clean feature neurons of deep neural networks is observed empirically in other data poisoning studies (cite papers).


During inference, most $\mathbf{x}_p \in D$ would result in positive $a_1^1$ and $a_3^1$ while $a_2^1$ would be negative. The corresponding activation values for $\mathbf{x}_-$ and $\mathbf{x}_+$ in $f_p$ are summarized in Table~\ref{tab:f_p activation and g}.

\begin{table}[!htbp]
    \centering
    \scriptsize
    \setlength{\tabcolsep}{0.27em}
    \caption{Signs of $f_p$ activations and the corresponding partial derivative ($g'$) of RELU function.}
        \begin{tabular}{ |c||c c c|c c c|c c c|  }
         \hline
         ~ & $a_1^1$ & $a_2^1$ & $a_3^1$ & $g(a_1^1)$ & $g(a_2^1)$ & $g(a_3^1)$ & $g'(a_1^1)$ & $g'(a_2^1)$ & $g'(a_3^1)$\\
         \hline
         $\mathbf{x}_-$ & + & - & - & + & 0 & 0 & 1 & 0 & 0\\
         $\mathbf{x}_+$ & - & + & - & 0 & + & 0 & 0 & 1 & 0\\
         $\mathbf{x}_p$ & + & - & + & + & 0 & + & 1 & 0 & 1\\
         \hline
        \end{tabular}
    \label{tab:f_p activation and g}
\end{table}

Since the RELU activation function is $g(x)=\begin{cases}
x, & \text{$ x > 0 $}\\
0, & \text{$ x < 0 $}
\end{cases} $ and its derivative is $g'(x)=\begin{cases}
1, & \text{$ x > 0 $}\\
0, & \text{$ x < 0 $}
\end{cases}$, we can calculate the post-RELU activation values and their derivative, also summarized in Table~\ref{tab:f_p activation and g}. The poisoned inputs $\mathbf{x}_p$ have different profile of neuron activation from the clean inputs $\mathbf{x}_-$ and $\mathbf{x}_+$. More specifically, $f_p$'s third neuron is only activated by inputs with poison signal $x_1 = \psi$, like $\mathbf{x}_p$. Combining these insights about a poisoned classifier model's `poison' neuron weights and activations with \S~\ref{section:input gradients}, we propose a method to recover poison signals in the input layer, detect poison target class and, subsequently, poisoned images.

\subsection{Poison Signal in Input Gradients} \label{section:input gradients}
\begin{proposition} \label{theorem:Input Gradient Appendix}
The gradient of loss function $E$ with respect to the input $x_i$ is linearly dependent on activated neurons' weights such that
\begin{equation} \label{eq:input gradient}
\frac{\partial E}{\partial x_i} =
\sum_{j=1}^{r_1} \left [ w_{ij}^1 g'(a_j^1) \sum_{l=1}^{r_2} \delta_l^2 w_{jl}^2 \right ]
\end{equation}
where $\delta_j^k \equiv \frac{\partial E}{\partial {a_j^k}}$ usually called the error, is the derivative of $E$ with respect to activation $a_j^k$ for neuron node $i$ in layer $k$. $w_{ij}^k$ is the weight for node $j$ in layer $k$ for incoming node $i$, $r_k$ is the number of nodes in layer $k$, $g$ is the activation function for the hidden layer nodes and $g'$ is its derivative.
\end{proposition}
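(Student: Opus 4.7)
The plan is a direct application of the multivariate chain rule, propagating derivatives backward from the loss through layer~2 and layer~1. Fix an input coordinate $x_i$ and write the layer-1 pre-activations as $a_j^1 = \sum_{i'} w_{i'j}^1 x_{i'} + b_j^1$ and the layer-2 pre-activations as $a_l^2 = \sum_{j'} w_{j'l}^2 g(a_{j'}^1) + b_l^2$. Then $E$ depends on $x_i$ only through the $a_j^1$, which in turn influence $E$ only through the $a_l^2$ (and all deeper structure, which is packaged inside the definition of $\delta_l^2$).

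The first step is to apply the chain rule at the input layer. Since $\partial a_j^1 / \partial x_i = w_{ij}^1$ and $\delta_j^1 \equiv \partial E / \partial a_j^1$,
\begin{equation*}
\frac{\partial E}{\partial x_i} \;=\; \sum_{j=1}^{r_1} \frac{\partial E}{\partial a_j^1}\,\frac{\partial a_j^1}{\partial x_i} \;=\; \sum_{j=1}^{r_1} \delta_j^1\, w_{ij}^1.
\end{equation*}
To express $\delta_j^1$ in terms of the layer-2 errors, I apply the chain rule again: because $a_j^1$ affects $E$ only through the $a_l^2$, and $\partial a_l^2 / \partial a_j^1 = w_{jl}^2\, g'(a_j^1)$ by direct differentiation of the layer-2 pre-activation,
\begin{equation*}
\delta_j^1 \;=\; \sum_{l=1}^{r_2} \frac{\partial E}{\partial a_l^2}\,\frac{\partial a_l^2}{\partial a_j^1} \;=\; g'(a_j^1)\sum_{l=1}^{r_2} \delta_l^2\, w_{jl}^2.
\end{equation*}
Substituting this back into the previous display and pulling $w_{ij}^1$ inside the outer sum yields exactly the stated identity.

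There is no serious obstacle: the derivation is essentially textbook backpropagation unrolled for two consecutive layers. The only points that need care are (i) keeping the index conventions consistent — reading $w_{ij}^1$ as the weight from input $i$ to hidden unit $j$ and $w_{jl}^2$ as the weight from hidden unit $j$ to layer-2 unit $l$, matching the statement — and (ii) the implicit differentiability of $g$ at the relevant pre-activations. For a non-smooth activation such as ReLU, $g'(a_j^1)$ is to be interpreted as a subgradient (equal to $1$ when $a_j^1>0$ and $0$ when $a_j^1<0$, as the paper adopts in the discussion following the statement), under which the identity holds almost everywhere and so supports the subsequent argument that large magnitudes of $w_{ij}^1$ at ``poison'' neurons transfer to large magnitudes of $\partial E/\partial x_i$ at the poison pattern's pixel positions.
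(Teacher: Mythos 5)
Your proof is correct and follows essentially the same route as the paper's: two successive applications of the chain rule, first writing $\partial E/\partial x_i = \sum_j \delta_j^1 w_{ij}^1$ and then expanding $\delta_j^1 = g'(a_j^1)\sum_l \delta_l^2 w_{jl}^2$ before substituting back (the paper merely states this for a generic layer index $k$ and specializes to the input layer). Your added remark on interpreting $g'$ as a subgradient for ReLU is a reasonable clarification but does not change the argument.
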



The detailed proof of this proposition is in Appendix \ref{appendix:poison in input grad}. The gradient with respect to the input $x_i$ is linearly dependent on the $w_{ij}^1$, $g'(a_j^1)$ and $w_{jl}^{2}$ terms. The value of $\delta_l^2$ is dependent on the loss function of the classifier model and the activations of the neural networks in deeper layers. In $f_p$, $\delta_l^{2}$ is simply $\pm 1$ meaning that $|\delta_l^2| = 1$, we can get

\begin{equation} \label{eq:input gradient f_p}
\left | \frac{\partial E}{\partial x_i} \right | =
\sum_{j=1}^{3} \left [ w_{ij}^1 g'(a_j^1) w_{j}^2 \right ]
\end{equation}

We know the values of $g'(a_j^1)$ from Table~\ref{tab:f_p activation and g}. Since $g'(a_3^1) =0$ for most $\mathbf{x}_-$ and $\mathbf{x}_+$, $\left| \frac{\partial E}{\partial x_1} \right|$ will be much larger for poisoned samples $\mathbf{x}_p$ than for clean samples, $\mathbf{x}_-$ and $\mathbf{x}_+$. Moreover, from (\ref{eq:poison weight}) we know that the weight of `poison' neurons ($w_3^2$) are much larger than weight of `clean' neurons ($w_1^2$ and $w_2^2$) when $d$ is large, resulting in \\ $\left| \frac{\partial E}{\partial x_1} \right| \gg \left| \frac{\partial E}{\partial x_i} \right|, \forall i \neq 1$. Informally, this means that there will be a relatively large absolute gradient value at the poison signal's input positions ($x_1$) of poisoned inputs ($\mathbf{x}_p$) compared to other input positions. In practice, when we directly compare the gradients of poisoned samples with those of clean samples, shown in Table~\ref{tab:input gradients}, the gradients are too noisy to discern poison signals. In \S~\ref{section:Distillation of Poison Signal}, we show how we filter these input poison signals and use them to separate poisoned from clean samples with guarantees in \S~\ref{section:Poisoned Sample Filtering}.


\begin{table*}[tp]
    \centering
    \caption{Gradients of randomly drawn poisoned and clean inputs with respect to the loss function. The poisoned target and base class are `Dog' and `Cat' respectively from the CIFAR10 dataset. Poisoned samples are overlaid with 20\% of the poison image. The positive and negative components of the input gradients and illustrated separately and normalized by the maximum value of the gradient at each pixel position.}
        \begin{tabular}{ cccccccccc }
         \hline
        Poison & ~ & \multicolumn{3}{c}{Gradient of Poisoned Inputs} &&& \multicolumn{3}{c}{Gradient of Clean Inputs} \\
        \includegraphics[width=3em]{images/a_overlay.pdf} & + & \includegraphics[width=3em]{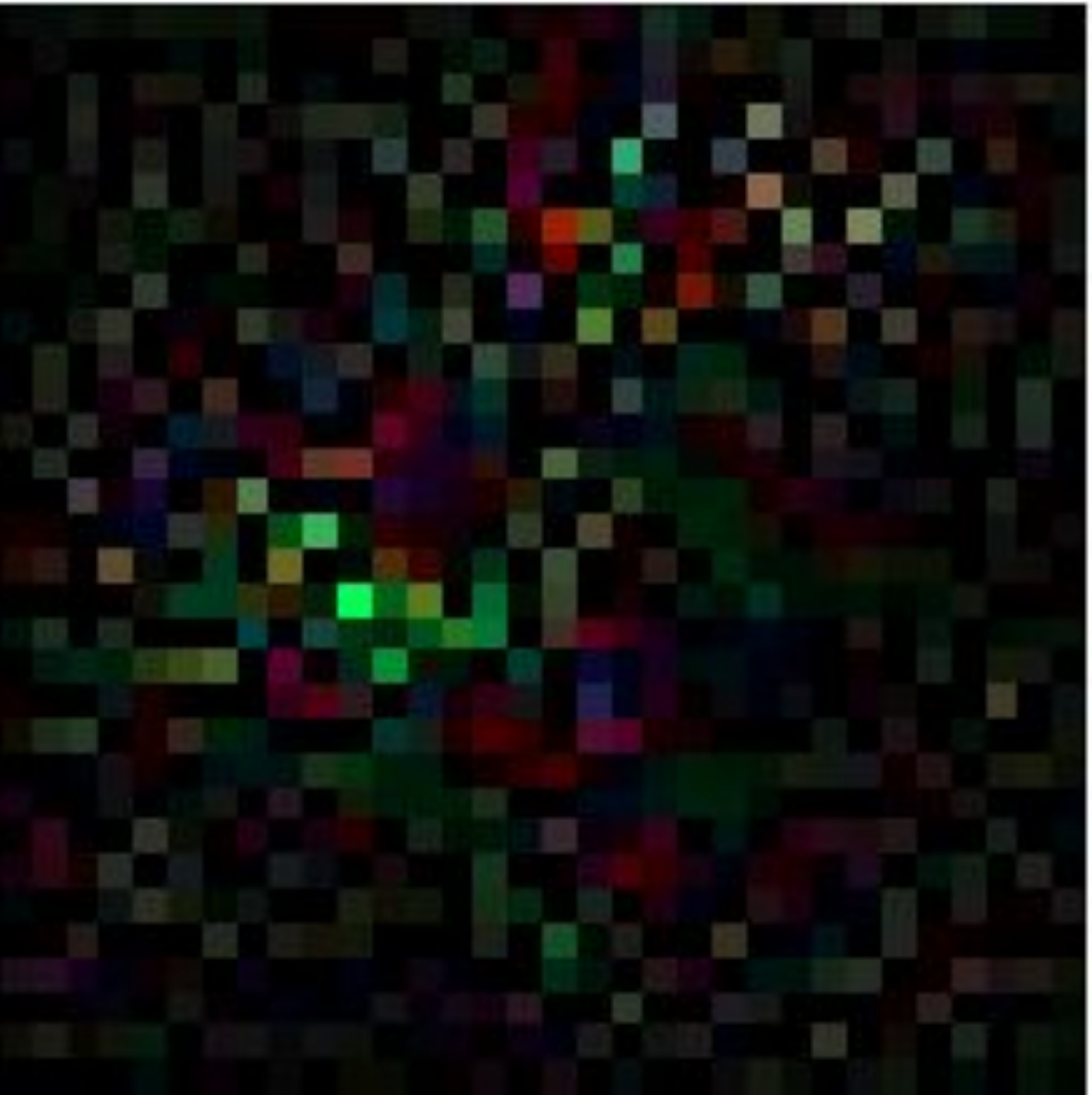} & \includegraphics[width=3em]{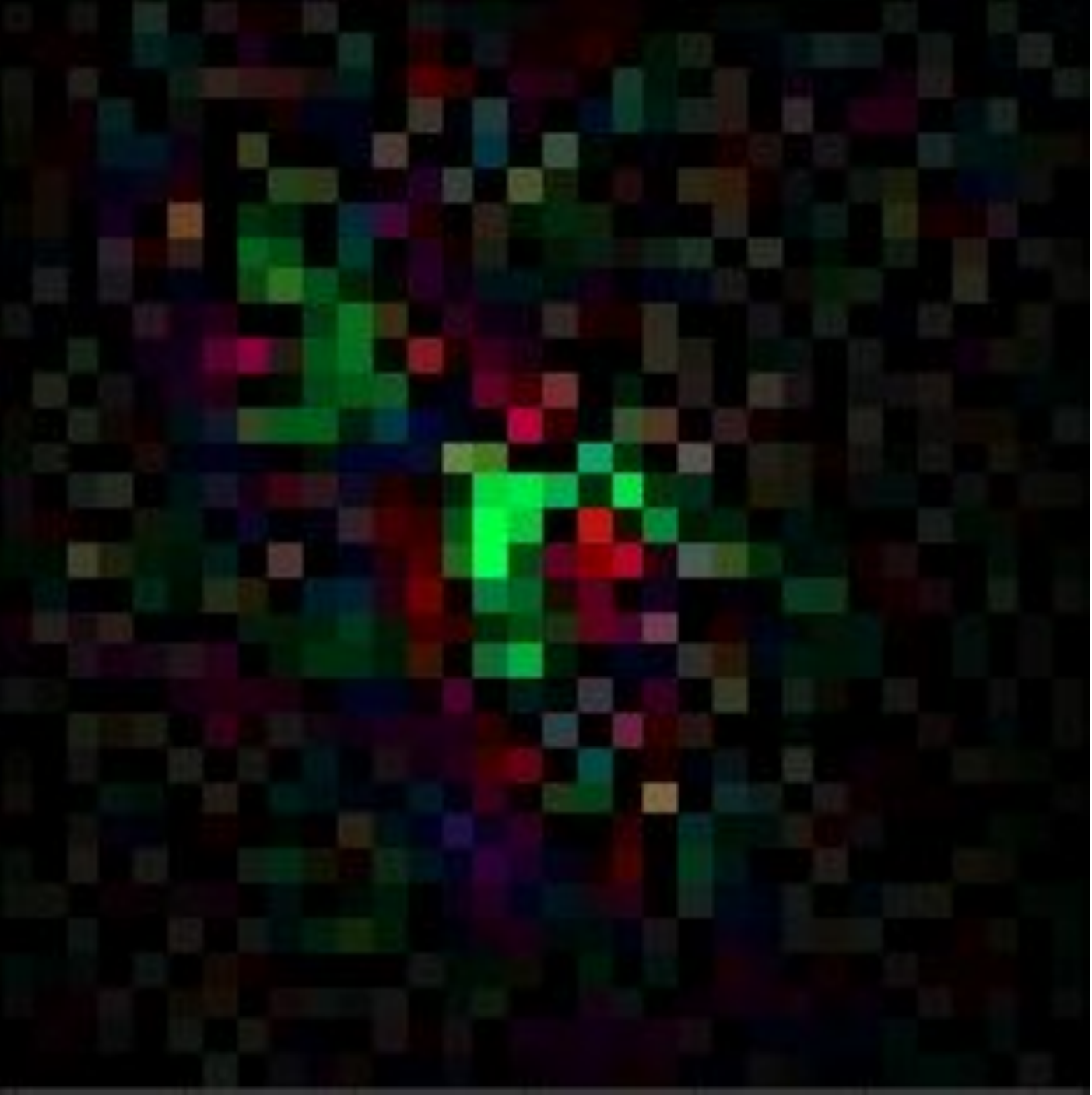} & \includegraphics[width=3em]{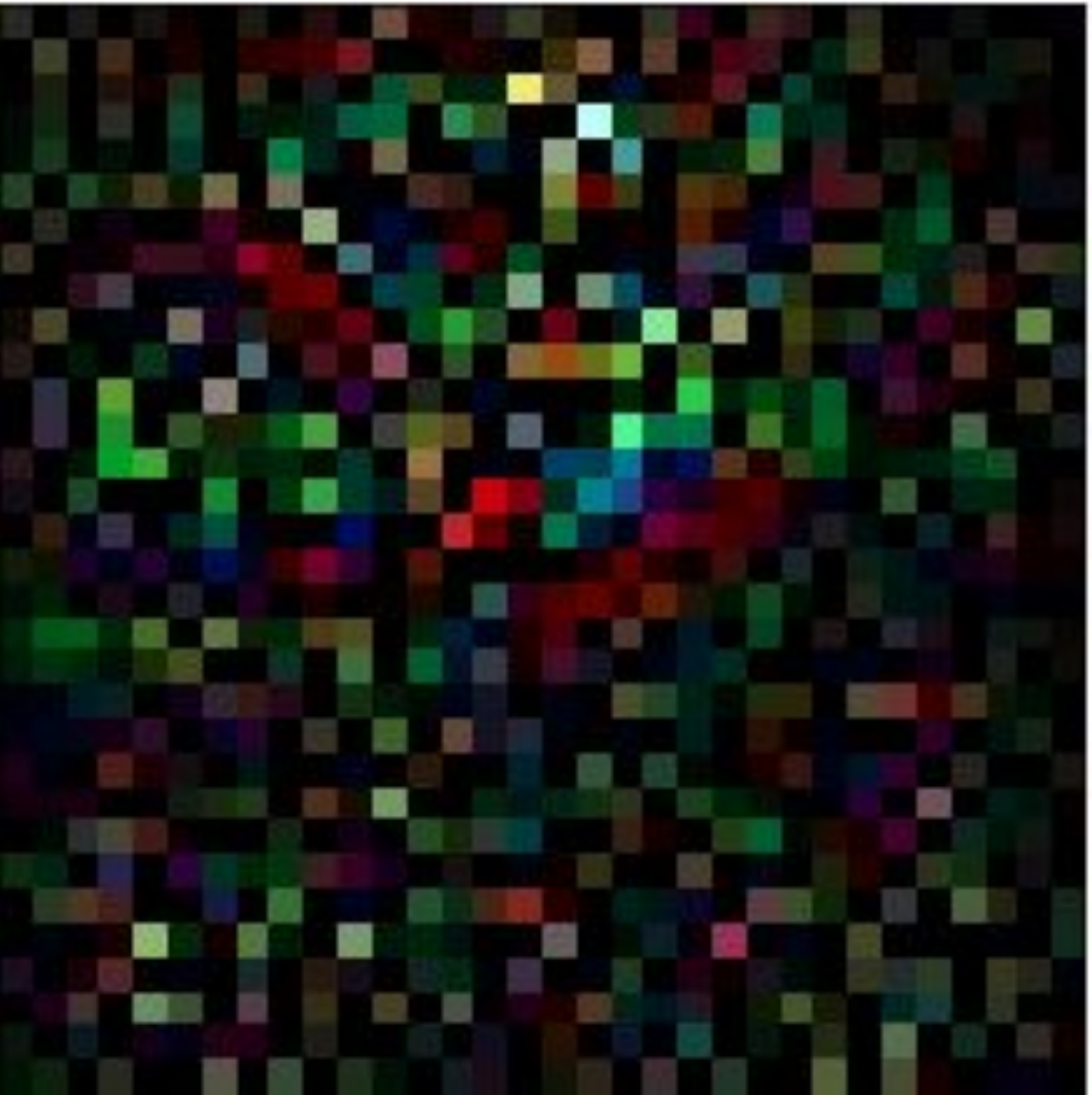} &&& \includegraphics[width=3em]{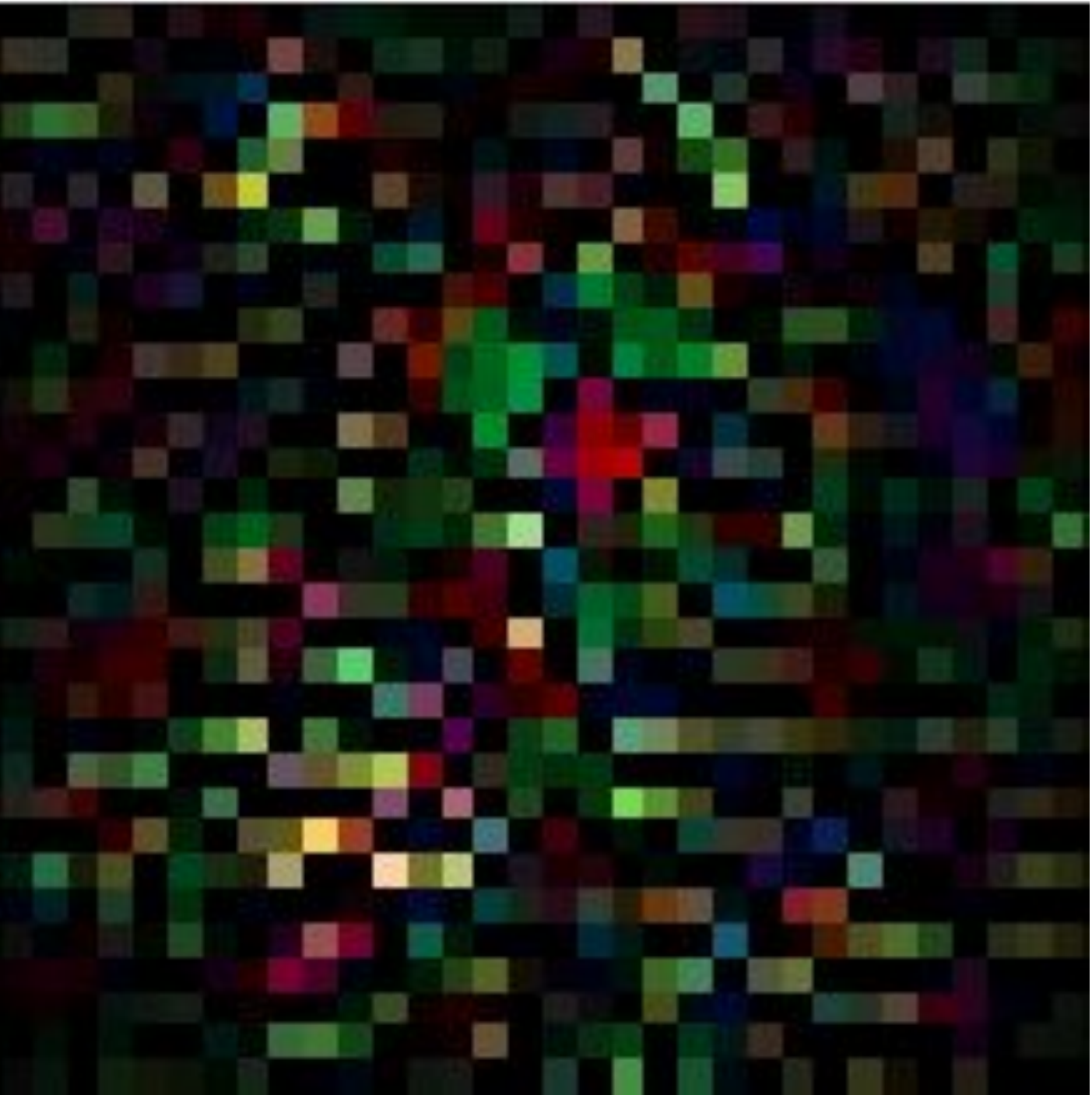} & \includegraphics[width=3em]{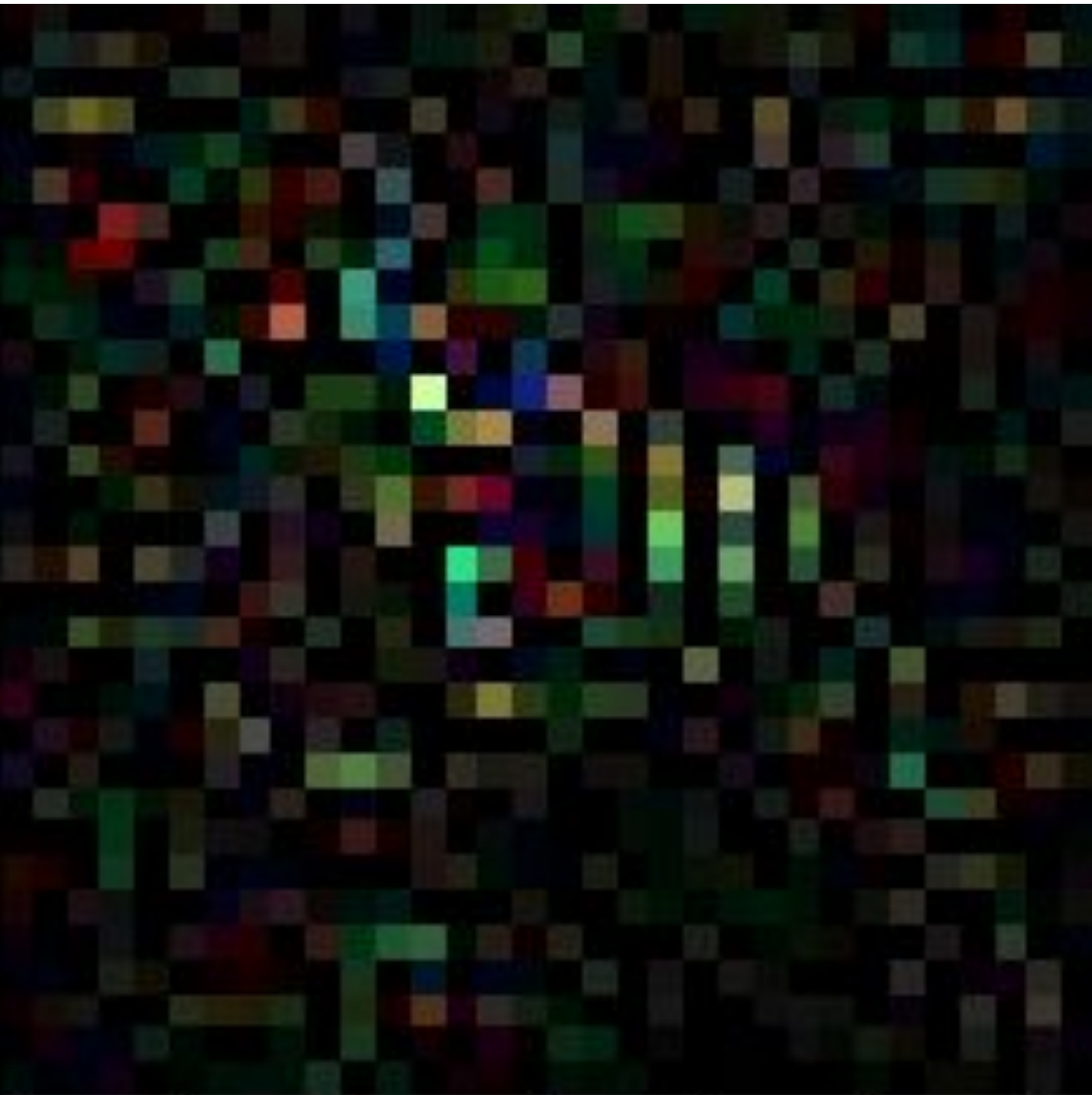} & \includegraphics[width=3em]{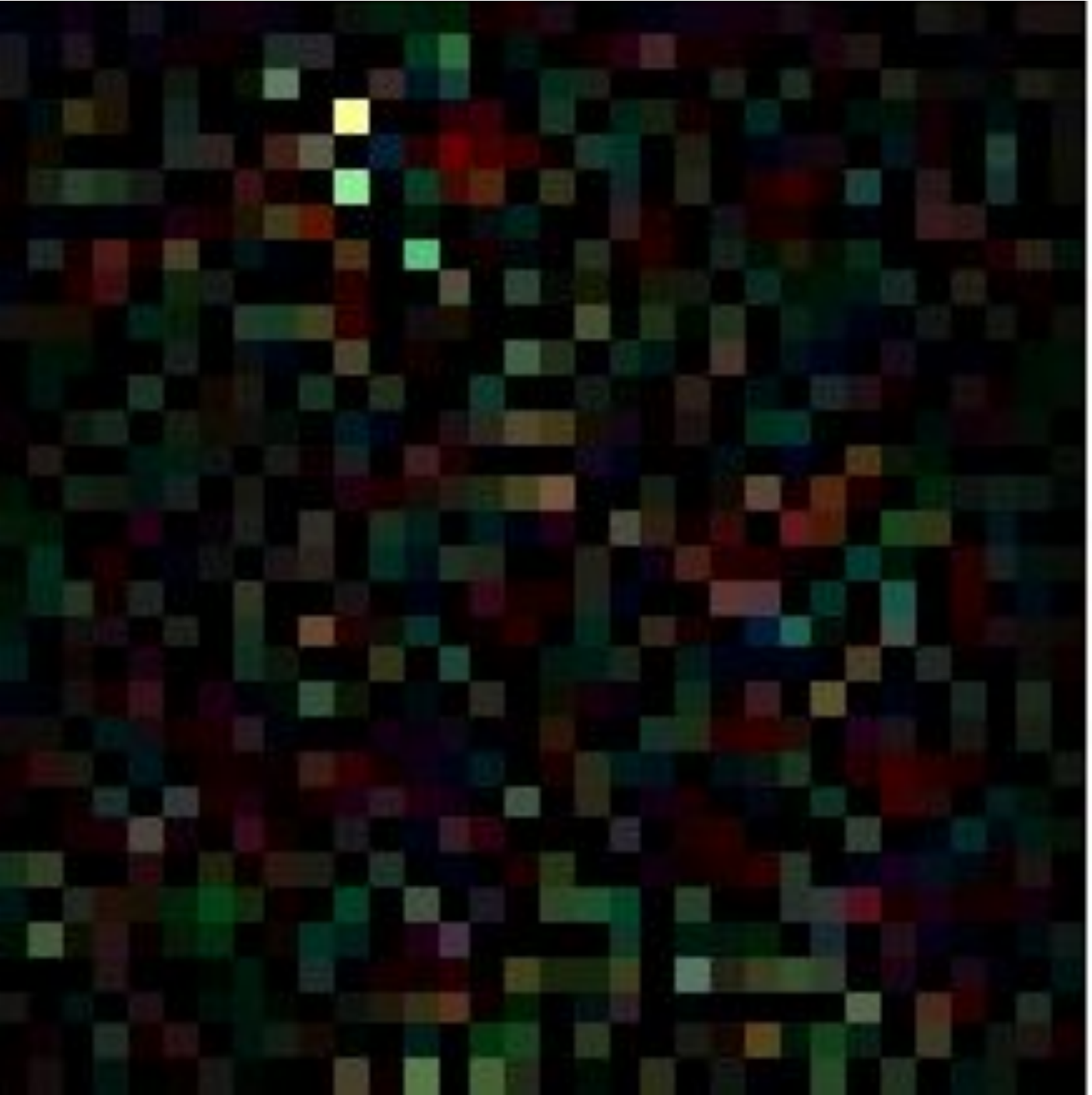} \\
        ~ & - & \includegraphics[width=3em]{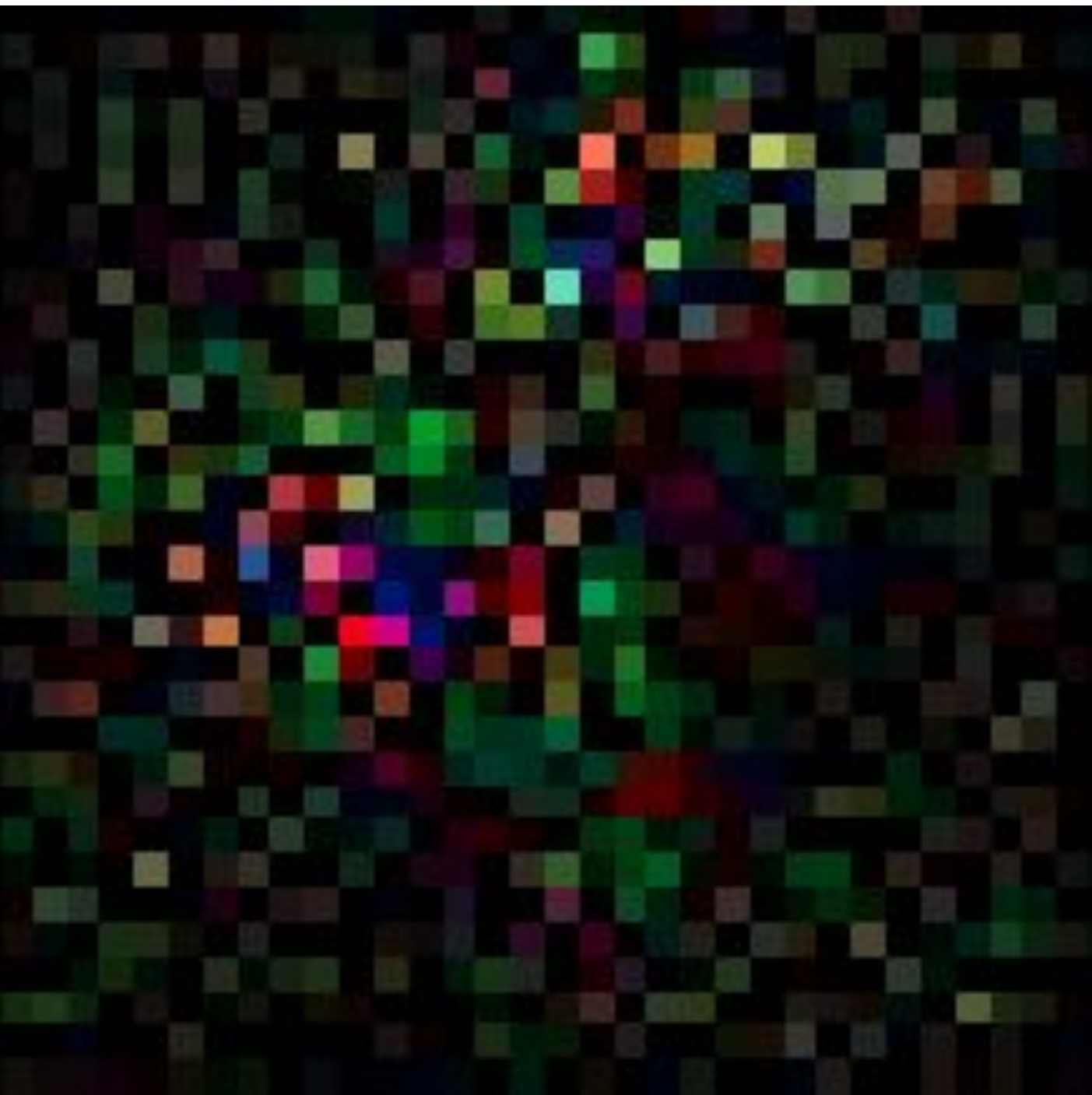} & \includegraphics[width=3em]{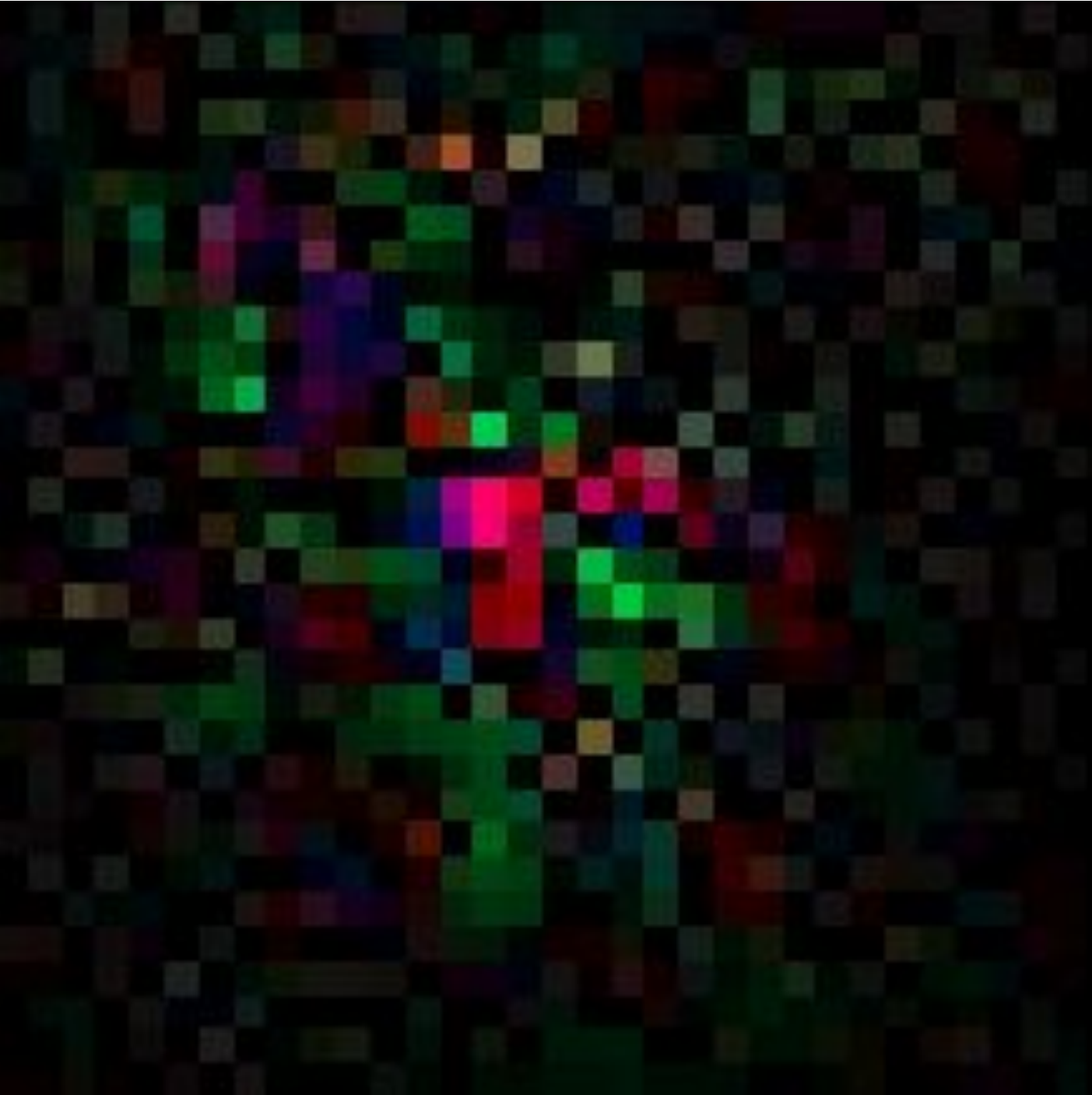} & \includegraphics[width=3em]{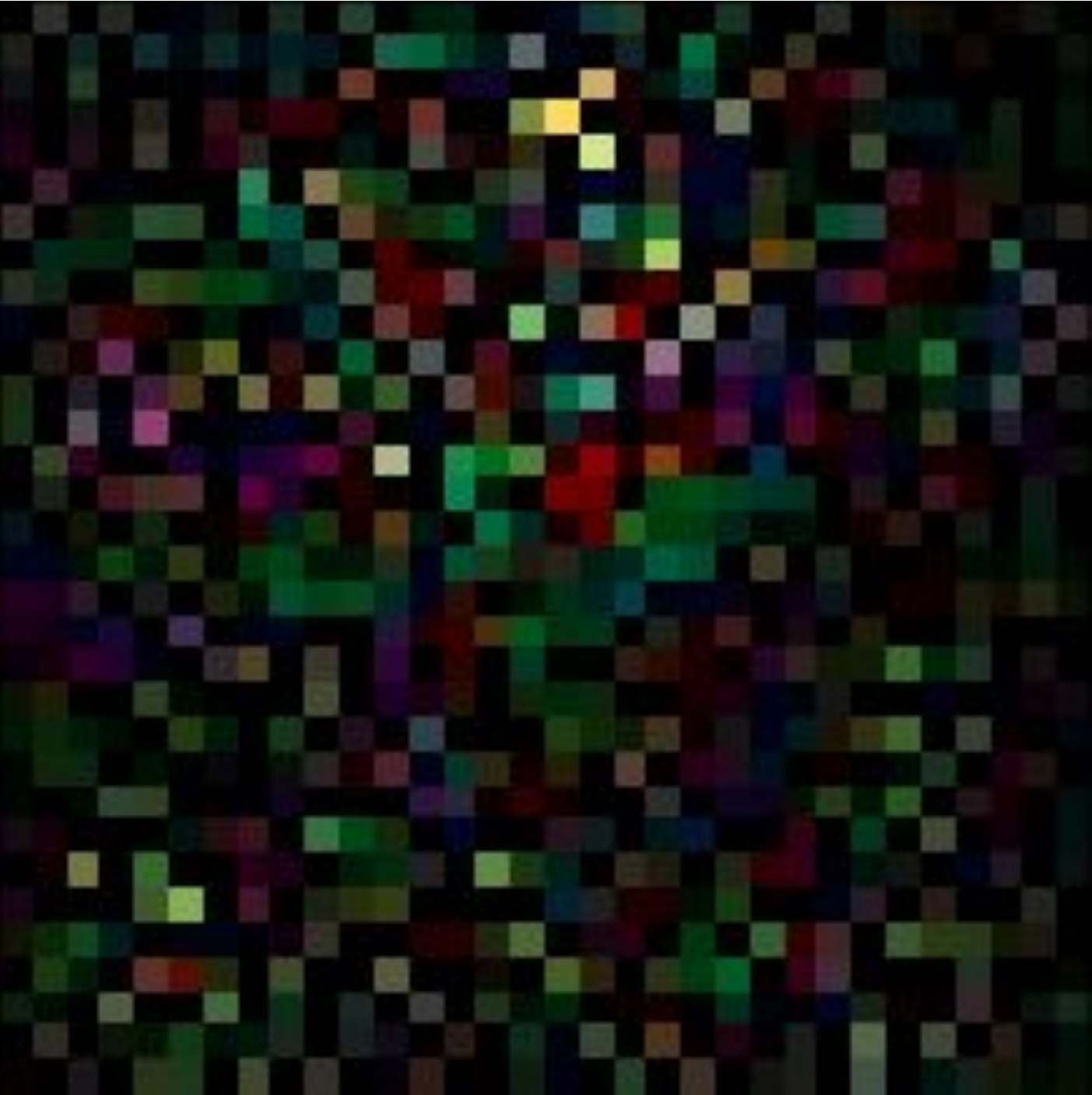} &&& \includegraphics[width=3em]{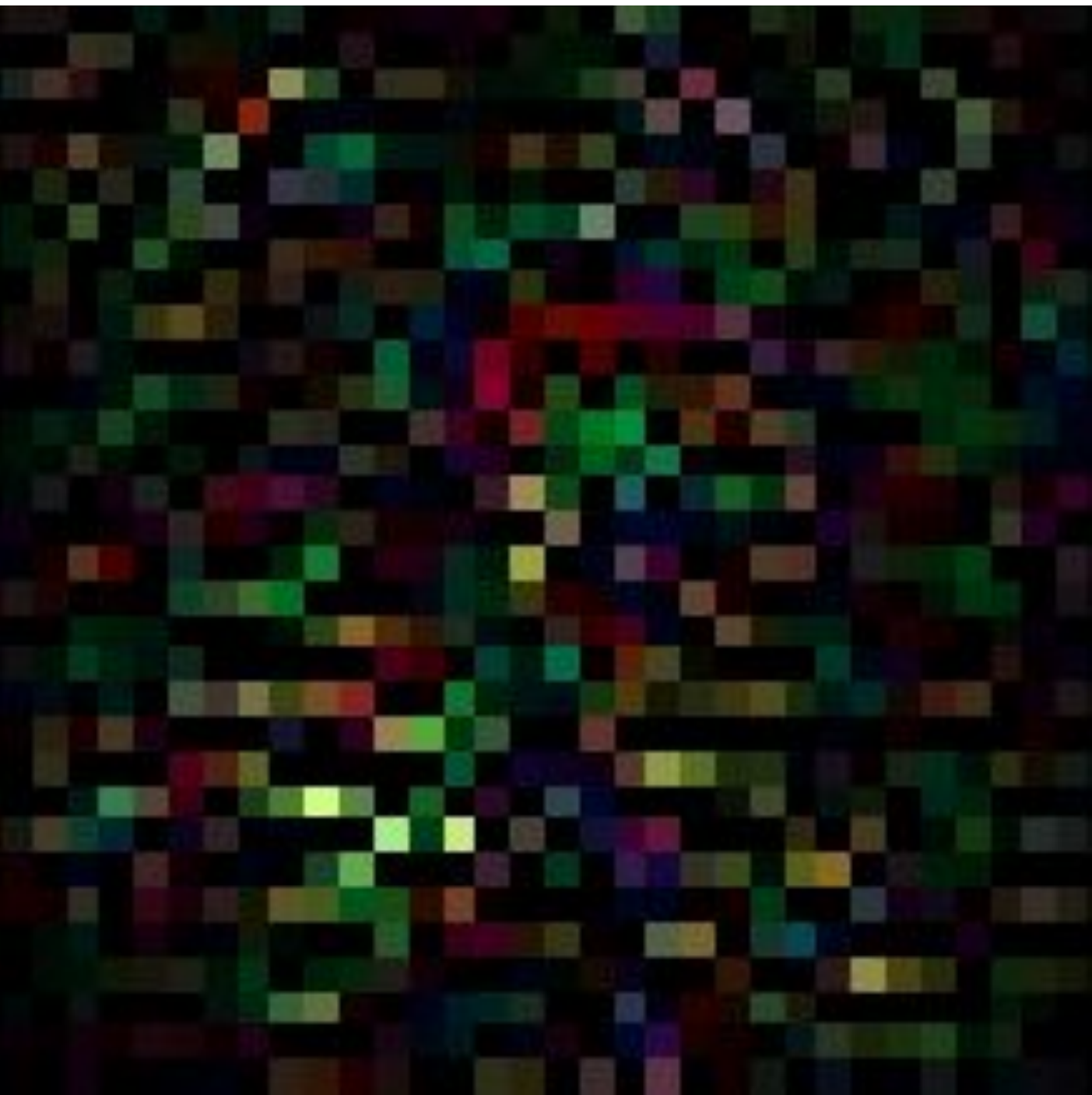} & \includegraphics[width=3em]{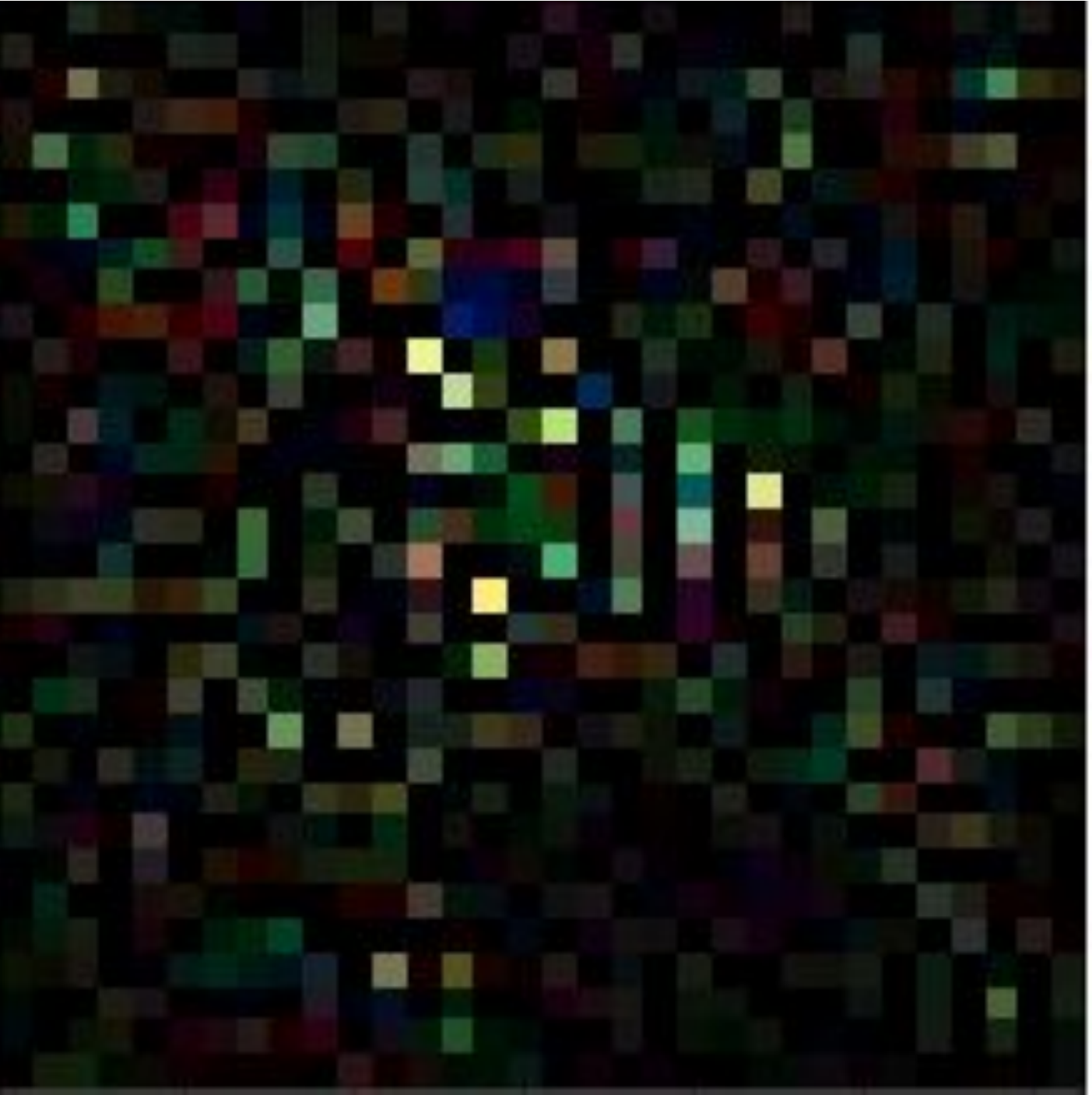} & \includegraphics[width=3em]{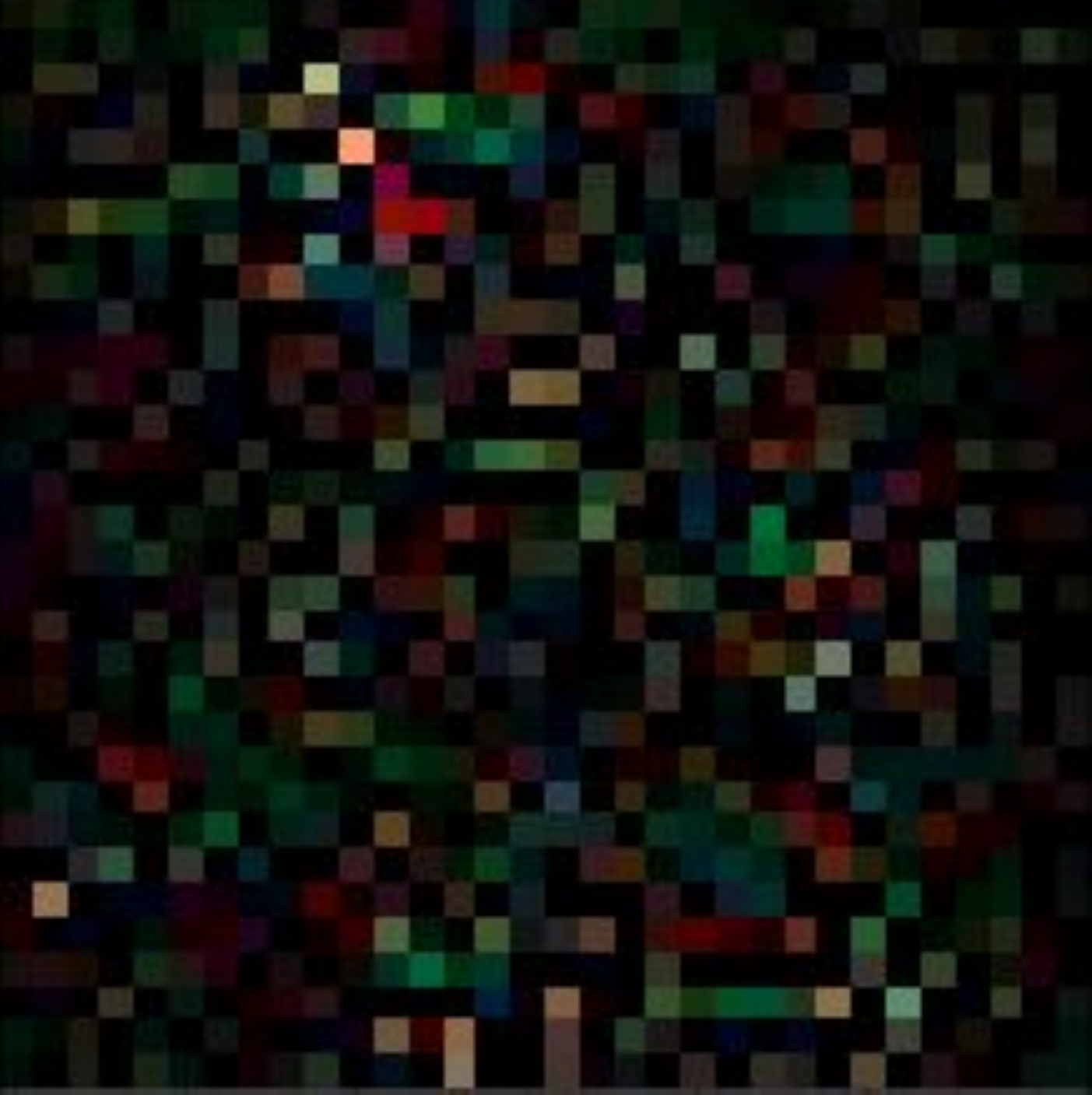} \\
         \hline
        \end{tabular}
\label{tab:input gradients}
\end{table*}

\newpage

\section{Proof of Proposition~\ref{theorem:Input Gradient}} \label{appendix:poison in input grad}
\begin{proposition} \label{theorem:Input Gradient Appendix}
The gradient of loss function $E$ with respect to the input $x_i$ is linearly dependent on activated neurons' weights such that
\begin{equation} \label{eq:input gradient}
\frac{\partial E}{\partial x_i} =
\sum_{j=1}^{r_1} \left [ w_{ij}^1 g'(a_j^1) \sum_{l=1}^{r_2} \delta_l^2 w_{jl}^2 \right ]
\end{equation}
where $\delta_j^k \equiv \frac{\partial E}{\partial {a_j^k}}$ usually called the error, is the derivative of loss function $E$ with respect to activation $a_j^k$ for neuron node $i$ in layer $k$. $w_{ij}^k$ is the weight for node $j$ in layer $k$ for incoming node $i$, $r_k$ is the number of nodes in layer $k$, $g$ is the activation function for the hidden layer nodes and $g'$ is its derivative.
\end{proposition}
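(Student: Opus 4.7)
The plan is to derive the stated identity by applying the multivariate chain rule twice, first unrolling the dependence of the loss on the first hidden layer's pre-activations, and then unrolling the first layer's error term through the second layer's pre-activations. The only structural facts I need are the standard feed-forward relations $a_j^1 = \sum_i w_{ij}^1 x_i + b_j^1$ and $a_l^2 = \sum_j w_{jl}^2 g(a_j^1) + b_l^2$, together with the definition $\delta_j^k \equiv \partial E / \partial a_j^k$.

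First I would write
\begin{equation*}
\frac{\partial E}{\partial x_i} \;=\; \sum_{j=1}^{r_1} \frac{\partial E}{\partial a_j^1}\, \frac{\partial a_j^1}{\partial x_i} \;=\; \sum_{j=1}^{r_1} \delta_j^1 \, w_{ij}^1,
\end{equation*}
since $\partial a_j^1 / \partial x_i = w_{ij}^1$ by direct differentiation of the affine map defining $a_j^1$. This captures the contribution of $x_i$ to every first-layer pre-activation and bundles the downstream loss sensitivity into the error terms $\delta_j^1$.

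Next I would expand $\delta_j^1$ by chain rule through the second layer. Because $a_l^2$ depends on $a_j^1$ only through the post-activation $g(a_j^1)$, we have $\partial a_l^2 / \partial a_j^1 = w_{jl}^2\, g'(a_j^1)$, which gives
\begin{equation*}
\delta_j^1 \;=\; \sum_{l=1}^{r_2} \frac{\partial E}{\partial a_l^2}\, \frac{\partial a_l^2}{\partial a_j^1} \;=\; g'(a_j^1) \sum_{l=1}^{r_2} \delta_l^2\, w_{jl}^2.
\end{equation*}
Substituting this back into the previous display and pulling the $w_{ij}^1 g'(a_j^1)$ factor outside the inner sum yields exactly equation~(\ref{eq:input gradient}).

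There is no real obstacle here beyond careful index bookkeeping; the statement is essentially the first two steps of standard backpropagation, specialized to a readout that exposes $g'(a_j^1)$ as an explicit gating factor. The only subtlety worth flagging is the use of $g'$ at a non-smooth point (e.g.\ the RELU kink at $0$), but since the proposition is a pointwise statement away from measure-zero sets of inputs, this does not affect the derivation.
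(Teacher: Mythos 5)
Your proof is correct and follows essentially the same route as the paper's: two applications of the chain rule, first through the first-layer pre-activations (giving $\sum_j \delta_j^1 w_{ij}^1$) and then expanding $\delta_j^1$ through the second layer to expose the factor $g'(a_j^1)\sum_l \delta_l^2 w_{jl}^2$. The paper merely states the recursion for a general layer $k$ before specializing to the input layer, so the two arguments coincide.
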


\begin{proof}
We denote $o_{l}^{k}$ as the output for node $i$ in layer $k$. For simplicity, the bias for node $i$ in layer $k$ is denoted as a weight $w_{0j}^k$ with fixed output $o_{l}^{k-1} = 1$ for node $0$ in layer $k-1$.

For $k = m$ where $m$ is the final layer,

$$
\frac{\partial E}{\partial {o_i^{k-1}}} = \frac{\partial E}{\partial {a_j^k}} \frac{\partial {a_j^k}}{\partial {o_i^{k-1}}}
$$

$$
a_j^k = \sum_{l=0}^{r_{k-1}} w_{ij}^k o_{l}^{k-1}
$$

$$
\frac{\partial a_j^k}{\partial {o_i^{k-1}}} = w_{ij}^k
$$

$$
\frac{\partial E}{\partial {o_i^{k-1}}} = \delta_j^k w_{ij}^k
$$

where
$$
\delta_j^k \equiv \frac{\partial E}{\partial {a_j^k}}
$$

For $1 \leq k < m$,

\begin{equation} \label{eq:0}
\frac{\partial E}{\partial {o_i^{k-1}}} = \sum_{j=1}^{r_{k}} \frac{\partial E}{\partial {a_j^k}} \frac{\partial {a_j^k}}{\partial {o_i^{k-1}}}
= \sum_{j=1}^{r_{k}} \delta_j^k w_{ij}^k
\end{equation}

With chain rule for multivariate functions,

\begin{equation} \label{eq:1}
\begin{aligned}
\delta_j^k \equiv \frac{\partial E}{\partial {a_j^k}} & = \sum_{l=1}^{r_{k+1}} \frac{\partial E}{\partial {a_l^{k+1}}} \frac{\partial {a_l^{k+1}}}{\partial {a_j^k}} \\
& = \sum_{l=1}^{r_{k+1}} \delta_l^{k+1} \frac{\partial {a_l^{k+1}}}{\partial {a_j^k}} 
\end{aligned}
\end{equation}

With definition of $a_l^{k+1}$,
$$
a_l^{k+1} = \sum_{i=0}^{r_k} w_{il}^{k+1} g(a_j^k)
$$
where $g(x)$ is the activation function.

Taking partial derivative with respect to $a_j^k$, we get
\begin{equation} \label{eq:2}
\frac{\partial {a_l^{k+1}}}{\partial {a_j^k}} = w_{jl}^{k+1} g'(a_j^k)
\end{equation}

Substituting (\ref{eq:2}) into (\ref{eq:1}), we get
\begin{equation} \label{eq:3}
\begin{aligned}
\delta_j^k &=
\sum_{l=1}^{r_{k+1}} \delta_l^{k+1} w_{jl}^{k+1} g'(a_j^k) \\
& =  g'(a_j^k) \sum_{l=1}^{r_{k+1}} \delta_l^{k+1} w_{jl}^{k+1}
\end{aligned}
\end{equation}


Finally, substituting (\ref{eq:3}) into (\ref{eq:0}), we get
\begin{equation} \label{eq:input gradient}
\frac{\partial E}{\partial {o_i^{k-1}}} =
\sum_{j=1}^{r_{k}} \left [ w_{ij}^k g'(a_j^k) \sum_{l=1}^{r_{k+1}} \delta_l^{k+1} w_{jl}^{k+1} \right ]
\end{equation}

\end{proof}

\newpage
\section{Proof of Theorem~\ref{theorem:Largest Eigenvector} and \ref{theorem:Poison Clustering}} \label{appendix:Poison Clustering Guarantee}
The second moment matrix of $\mathbf{z}$ is denoted by 
$$
\mathbf{\Sigma} = \E \mathbf{z} \mathbf{z}^\top
$$

By further expanding this, we get,

\begin{multline}
\mathbf{\Sigma} = 
 \E(
 \frac{1}{N}
 \begin{bmatrix}
  g_{11} & \cdots & g_{N1} + \mu_1 \\
  \vdots  & \ddots & \vdots  \\
  g_{1n} & \cdots & g_{Nn} + \mu_n 
 \end{bmatrix} \\
 \begin{bmatrix}
  g_{11} & \cdots & g_{1n} \\
  \vdots  & \ddots & \vdots  \\
  g_{N1} + \mu_1 & \cdots & g_{mn} + \mu_n
 \end{bmatrix}
 )
\end{multline}


Since $\E(\mathbf{Z}_{ij}) = (\E\mathbf{Z})_{ij}$, 
$\E g_i g_j = \begin{cases}
    \eta, & \text{$ i = j $}\\
    0, & \text{$ i \neq j $}
  \end{cases} $ and $\E g = 0$ , we get

\begin{equation} \label{eq:sigma1}
\begin{aligned}
\mathbf{\Sigma} &= 
 \frac{1}{N} (
 \begin{bmatrix}
  0 & \cdots & \mu_1 \\
  \vdots  & \ddots & \vdots  \\
  0 & \cdots & \mu_n 
 \end{bmatrix}
 \begin{bmatrix}
  0 & \cdots & 0 \\
  \vdots  & \ddots & \vdots  \\
  \mu_1 & \cdots & \mu_n
 \end{bmatrix} )
 + \eta \mathbf{I}_n \\
& = 
\varepsilon
 \begin{bmatrix}
  {\mu_1}^2 & \cdots & \mu_1 \mu_n \\
  \vdots  & \ddots & \vdots  \\
  \mu_1 \mu_n & \cdots & {\mu_n}^2
 \end{bmatrix}
 + \eta \mathbf{I}_n
\end{aligned}
\end{equation}

\begin{theorem} \label{theorem:Largest Eigenvector appendix}
$\mathbf{\mu}$ is the eigenvector of $\mathbf{\Sigma}$ and corresponds to the largest eigenvalue if $\varepsilon$ and $\| \mathbf{\mu} \|_2$ are both $> 0$.
\end{theorem}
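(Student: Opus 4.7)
The plan is to exploit the closed form already derived for $\mathbf{\Sigma}$ in equation (\ref{eq:sigma1}), namely
\[
\mathbf{\Sigma} \;=\; \varepsilon\, \mathbf{\mu}\mathbf{\mu}^{\top} + \eta\, \mathbf{I}_n,
\]
which reduces the theorem to a direct spectral computation of a rank-one perturbation of a multiple of the identity. Once the matrix is in this form, no further probabilistic argument is needed; the structure of $\mathbf{\Sigma}$ makes the eigenstructure essentially immediate.

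First, I would verify that $\mathbf{\mu}$ is an eigenvector by plugging in: $\mathbf{\Sigma}\mathbf{\mu} = \varepsilon\,\mathbf{\mu}(\mathbf{\mu}^{\top}\mathbf{\mu}) + \eta\,\mathbf{\mu} = \bigl(\varepsilon\|\mathbf{\mu}\|_2^{2} + \eta\bigr)\mathbf{\mu}$, so $\mathbf{\mu}$ is an eigenvector with eigenvalue $\lambda_{\mathbf{\mu}} = \varepsilon\|\mathbf{\mu}\|_2^{2} + \eta$. Next I would characterize the remaining spectrum by taking any vector $\mathbf{v}$ in the orthogonal complement of $\mathrm{span}(\mathbf{\mu})$; then $\mathbf{\mu}^{\top}\mathbf{v} = 0$ gives $\mathbf{\Sigma}\mathbf{v} = \eta\,\mathbf{v}$, so every vector orthogonal to $\mathbf{\mu}$ is an eigenvector with eigenvalue $\eta$. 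Since $\mathbb{R}^{n} = \mathrm{span}(\mathbf{\mu}) \oplus \mathbf{\mu}^{\perp}$, this exhausts all eigenvalues of $\mathbf{\Sigma}$.

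Finally, I would compare the two eigenvalues. Because $\varepsilon > 0$ and $\|\mathbf{\mu}\|_2 > 0$ by hypothesis, $\varepsilon\|\mathbf{\mu}\|_2^{2} > 0$, hence $\lambda_{\mathbf{\mu}} = \varepsilon\|\mathbf{\mu}\|_2^{2} + \eta > \eta$, so $\mathbf{\mu}$ corresponds to the strictly largest eigenvalue of $\mathbf{\Sigma}$, as claimed.

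There is no real obstacle in this argument; the only subtlety is the derivation already carried out in (\ref{eq:sigma1}), where the cross terms $\mathbb{E}\,\theta\, g_i$ vanish by independence of $\theta$ and $g$ together with $\mathbb{E}g = 0$, and $\mathbb{E}\,\theta^{2} = \varepsilon$ since $\theta$ is Bernoulli. Once that reduction is granted, the spectral decomposition is a textbook fact about rank-one-plus-identity matrices and the result follows in a few lines.
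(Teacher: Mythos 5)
Your proposal is correct and follows essentially the same route as the paper: both write $\mathbf{\Sigma} = \varepsilon\,\mathbf{\mu}\mathbf{\mu}^{\top} + \eta\,\mathbf{I}_n$, verify that $\mathbf{\mu}$ is an eigenvector with eigenvalue $\varepsilon\|\mathbf{\mu}\|_2^2 + \eta$, show every $\mathbf{v}\perp\mathbf{\mu}$ has eigenvalue $\eta$, and compare. The only (cosmetic) difference is that you annihilate the rank-one term directly via $\mathbf{\mu}^{\top}\mathbf{v}=0$, whereas the paper reaches the same conclusion through a positive-semidefiniteness and trace argument on $\mathbf{D}=\varepsilon\,\mathbf{\mu}\mathbf{\mu}^{\top}$; your shortcut is slightly cleaner but not a substantively different proof.
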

\begin{proof}
Taking the matrix multiplication of $\mathbf{\Sigma}$ and $\mathbf{\mu}$, we get

\begin{equation} \label{eq:sigma2}
\begin{aligned}
\mathbf{\Sigma} \mathbf{\mu} & = 
\varepsilon
 \begin{bmatrix}
  {\mu_1}^2 & \cdots & \mu_1 \mu_n \\
  \vdots  & \ddots & \vdots  \\
  \mu_1 \mu_n & \cdots & {\mu_n}^2
 \end{bmatrix} 
 \begin{bmatrix}
  \mu_1 \\
  \vdots \\
  \mu_n
 \end{bmatrix}
 + \eta \mathbf{I}_n  
 \begin{bmatrix}
  \mu_1 \\
  \vdots \\
  \mu_n
 \end{bmatrix}\\
& = 
\varepsilon
 \begin{bmatrix}
  {\mu_1}^3 + \mu_1 {\mu_2}^2  +\cdots+ \mu_1 {\mu_n}^2 \\
  \vdots \\
  {\mu_1}^2 \mu_n + {\mu_2}^2 \mu_n +\cdots+ {\mu_n}^3
 \end{bmatrix}
 + \eta
 \begin{bmatrix}
  \mu_1 \\
  \vdots \\
  \mu_n
 \end{bmatrix} \\
& = 
\varepsilon ({\mu_1}^2 + \cdots + {\mu_n}^2)
 \begin{bmatrix}
  \mu_1 \\
  \vdots \\
  \mu_n
 \end{bmatrix} 
 + \eta
 \begin{bmatrix}
  \mu_1 \\
  \vdots \\
  \mu_n
 \end{bmatrix} \\
& = 
(\varepsilon \| \mathbf{\mu} \|_2^2 + \eta)
 \begin{bmatrix}
  \mu_1 \\
  \vdots \\
  \mu_n
 \end{bmatrix} \\
 & = 
(\varepsilon \| \mathbf{\mu} \|_2^2 + \eta) \mathbf{\mu}
\end{aligned}
\end{equation}

Thus, $\mathbf{\mu}$ is an eigenvector of $\mathbf{\Sigma}$ with eigenvalue $\lambda_1(\mathbf{\Sigma}) = \varepsilon \| \mathbf{\mu} \|_2^2 + \eta$. Next, we proceed to prove that $\lambda_1(\mathbf{\Sigma})$ is the largest eigenvalue. 

Let $\mathbf{D} = \varepsilon
 \begin{bmatrix}
  {\mu_1}^2 & \cdots & \mu_1 \mu_n \\
  \vdots  & \ddots & \vdots  \\
  \mu_1 \mu_n & \cdots & {\mu_n}^2
 \end{bmatrix}$, \\then we can express $\mathbf{\Sigma}$ as 
 
\begin{equation} \label{eq:D1}
\mathbf{\Sigma} = \mathbf{D} + \eta \mathbf{I}_n
\end{equation}

Similar to (\ref{eq:sigma2}), we can get
\begin{equation} \label{eq:D2}
\begin{aligned}
\mathbf{D} \mathbf{\mu} & = 
\varepsilon
 \begin{bmatrix}
  {\mu_1}^2 & \cdots & \mu_1 \mu_n \\
  \vdots  & \ddots & \vdots  \\
  \mu_1 \mu_n & \cdots & {\mu_n}^2
 \end{bmatrix} 
 \begin{bmatrix}
  \mu_1 \\
  \vdots \\
  \mu_n
 \end{bmatrix} \\
& = 
\varepsilon
 \begin{bmatrix}
  {\mu_1}^3 + \mu_1 {\mu_2}^2  +\cdots+ \mu_1 {\mu_n}^2 \\
  \vdots \\
  {\mu_1}^2 \mu_n + {\mu_2}^2 \mu_n +\cdots+ {\mu_n}^3
 \end{bmatrix} \\
& = 
\varepsilon ({\mu_1}^2 + \cdots + {\mu_n}^2)
 \begin{bmatrix}
  \mu_1 \\
  \vdots \\
  \mu_n
 \end{bmatrix} \\
 & = 
(\varepsilon \| \mathbf{\mu} \|_2^2) \mathbf{\mu}
\end{aligned}
\end{equation}

This shows that $\mathbf{\mu}$ is also an eigenvector of $\mathbf{D}$ with eigenvalue $\lambda_1(\mathbf{D}) = \varepsilon \| \mathbf{\mu} \|_2^2$.

From (\ref{eq:sigma1}), we observe that $\mathbf{D}$ is a product of a matrix by its own transpose. This implies that $\mathbf{D}$ is positive semi-definite and all its eigenvalues are non-negative. Furthermore, the sum of all these eigenvalues is

\begin{equation}
\begin{aligned}
\sum_{i=1}^n \lambda_i(\mathbf{D}) & = 
tr(\mathbf{D}) \\
& = 
\varepsilon \| \mathbf{\mu} \|_2^2 \\
& = \lambda_1(\mathbf{D})
\end{aligned}
\end{equation}

This implies that the other eigenvalues $\lambda_2(\mathbf{D}) = \cdots = \lambda_n(\mathbf{D}) = 0$. From this, we know that all vectors $\mathbf{v}$ which are orthogonal to $\mathbf{\mu}$,

$$
\forall \mathbf{v} \in \mathbb{R}^n: \inner{\mathbf{v}}{\mathbf{\mu}} = 0
$$

$$
\mathbf{D} \mathbf{v} = \mathbf{0}
$$
Combining with (\ref{eq:D1}), we get

\begin{equation} \label{eq:sigma3}
\begin{aligned}
\mathbf{\Sigma} \mathbf{v} & = 
\mathbf{D} \mathbf{v} + \eta \mathbf{I}_n \mathbf{v} \\
& = \eta \mathbf{v}
\end{aligned}
\end{equation}

With this, we can deduce that $\mathbf{\Sigma}$'s other eigenvalues $\lambda_2(\mathbf{\Sigma}) = \cdots = \lambda_n(\mathbf{\Sigma}) = \eta$.

For $\lambda_1(\mathbf{\Sigma})$ to be the largest eigenvalue, this statement has to be true:

$$
\lambda_1(\mathbf{\Sigma}) > \max_{i \neq 1} \lambda_i(\mathbf{\Sigma})
$$

With our previous calculations of $\lambda_i(\mathbf{\Sigma})$ in (\ref{eq:sigma2}) and (\ref{eq:sigma3}), we get

$$
\varepsilon \| \mathbf{\mu} \|_2^2 + \eta > \eta
$$

\begin{equation}
\varepsilon \| \mathbf{\mu} \|_2 > 0
\end{equation}

This statement is true if $\varepsilon > 0$ and $\| \mathbf{\mu} \|_2 > 0$ which completes the proof.
\end{proof}

\begin{remark} \label{remark:sigma_operator_norm}
The operator or spectral norm of $\mathbf{\Sigma}$, $\| \mathbf{\Sigma} \|$, equals to the absolute value of its largest singular value. Since $\mathbf{\Sigma}$ is a positive semi-definite matrix, its largest singular value is the same as its largest eigenvalue. This implies that
\begin{equation}
\| \mathbf{\Sigma} \| = \varepsilon \| \mathbf{\mu} \|_2^2 + \eta
\end{equation}
\end{remark}

\begin{theorem} [Matrix Bernstein \cite{vershynin2018high}] \label{theorem:Matrix Bernstein}
Let $\mathbf{Z}_1, \cdots, \mathbf{Z}_N$ be symmetric $n \times n$ random matrices. Assume that $\|\mathbf{Z}_i\| \leq K$ almost surely and let $\| \sum_i \mathbf{Z}_i^2 \| \leq \sigma^2$. Then,

$$
Pr \left \{ \left  \| \sum_i \mathbf{Z}_i \right \| > t \right \} \leq 2n \exp \left (-c \min \left \{ \frac{t^2}{\sigma^2}, \frac{t}{K} \right \} \right)
$$

where $c > 0$ is an absolute constant.
\end{theorem}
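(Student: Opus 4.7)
The plan is to prove this via the standard matrix Laplace-transform (Chernoff) argument. First, since $\mathbf{S} := \sum_i \mathbf{Z}_i$ is symmetric, $\|\mathbf{S}\| = \max(\lambda_{\max}(\mathbf{S}),\, \lambda_{\max}(-\mathbf{S}))$, so a union bound reduces the task to bounding $Pr\{\lambda_{\max}(\mathbf{S}) > t\}$ and then doubling. Using $\lambda_{\max}(e^{\lambda \mathbf{S}}) \leq \mathrm{tr}(e^{\lambda \mathbf{S}})$ (valid since $e^{\lambda \mathbf{S}}$ is positive definite) together with the scalar Markov inequality gives, for any $\lambda > 0$,
\[
Pr\{\lambda_{\max}(\mathbf{S}) > t\} \leq e^{-\lambda t}\, \E\,\mathrm{tr}\!\left(e^{\lambda \mathbf{S}}\right),
\]
so everything hinges on controlling the matrix moment generating function $\E\,\mathrm{tr}(e^{\lambda \mathbf{S}})$ and then optimizing in $\lambda$.

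The genuinely hard part, and the main obstacle, is that matrix exponentials do not factor when the summands fail to commute: $e^{\lambda \sum_i \mathbf{Z}_i} \neq \prod_i e^{\lambda \mathbf{Z}_i}$. The key non-trivial input I would invoke is Lieb's concavity theorem, which states that $\mathbf{A} \mapsto \mathrm{tr}\exp(\mathbf{H} + \log \mathbf{A})$ is concave on the positive-definite cone for every fixed symmetric $\mathbf{H}$. Combined with Jensen's inequality and applied iteratively by conditioning on $\mathbf{Z}_1,\ldots,\mathbf{Z}_{i-1}$ (using the independence of the summands, which is the implicit hypothesis here), this yields
\[
\E\,\mathrm{tr}\!\left(e^{\lambda \mathbf{S}}\right) \leq \mathrm{tr}\exp\!\left(\sum_{i=1}^N \log \E\, e^{\lambda \mathbf{Z}_i}\right).
\]
I would treat Lieb's theorem as a black box rather than reprove it, since it is a deep operator-concavity result well outside the scope of a standard Bernstein proof.

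Next I would bound each individual matrix MGF. With $\E\mathbf{Z}_i = 0$ and $\|\mathbf{Z}_i\| \leq K$, the scalar inequality $e^x \leq 1 + x + x^2 f(K)$ valid on $|x| \leq K$, where $f(K) = (e^K - 1 - K)/K^2$, transfers via spectral calculus to the operator inequality $e^{\lambda \mathbf{Z}_i} \preceq \mathbf{I} + \lambda \mathbf{Z}_i + \lambda^2 f(\lambda K)\, \mathbf{Z}_i^2$. Taking expectation and applying $\log(\mathbf{I} + \mathbf{A}) \preceq \mathbf{A}$ gives $\log \E\, e^{\lambda \mathbf{Z}_i} \preceq \lambda^2 f(\lambda K)\, \E \mathbf{Z}_i^2$. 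Summing over $i$, invoking the variance hypothesis $\|\sum_i \E \mathbf{Z}_i^2\| \leq \sigma^2$, and using monotonicity of $\mathrm{tr}\exp$ under the spectral order bound the right-hand side above by $n \exp(\lambda^2 f(\lambda K)\, \sigma^2)$.

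Assembling the pieces gives $Pr\{\lambda_{\max}(\mathbf{S}) > t\} \leq n \exp(-\lambda t + \lambda^2 f(\lambda K)\, \sigma^2)$, and the final step is routine optimization of the exponent over $\lambda > 0$. Two regimes emerge: when $t$ is small compared to $\sigma^2/K$, the choice $\lambda \sim t/\sigma^2$ yields the sub-Gaussian decay $\exp(-c t^2/\sigma^2)$; when $t$ is large, saturating at $\lambda \sim 1/K$ gives the sub-exponential decay $\exp(-c t/K)$, the switch happening because $f(\lambda K)$ blows up once $\lambda K \gg 1$. Taking the minimum of the two exponents, absorbing constants into the absolute constant $c$, and doubling the prefactor to account for $\pm \mathbf{S}$ from the initial union bound, produces the factor $2n$ and the stated tail bound.
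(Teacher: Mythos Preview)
Your proposal is a correct sketch of the standard proof of the matrix Bernstein inequality (matrix Laplace transform, Lieb's concavity to decouple the noncommuting summands, scalar-to-operator MGF bound, and optimization in $\lambda$). However, there is nothing to compare it against: the paper does \emph{not} prove this theorem. It is quoted verbatim from \cite{vershynin2018high} as a known concentration inequality and then immediately applied as a black box in the proof of Theorem~\ref{theorem:Covariance Estimation}. So your write-up supplies strictly more than the paper does on this point.

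One small caveat on your sketch: you implicitly assume $\E\mathbf{Z}_i = 0$ and that the $\mathbf{Z}_i$ are independent. Neither hypothesis is stated in the theorem as the paper quotes it, but both are of course needed for the proof you outline (and are present in Vershynin's original statement). If you intend your write-up to stand alone, you should make those assumptions explicit.
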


\begin{theorem} [Covariance Estimation \cite{rudelson1999random}] \label{theorem:Covariance Estimation}
Let $\MS = \E \mathbf{z} \mathbf{z}^\top$ be the second moment matrix of $\mathbb{R}^n$ random vector $\mathbf{z}$. With independent samples $\mathbf{z}_1, \cdots, \mathbf{z}_N$,  $\MS_N = \frac{1}{N} \sum_i \mathbf{z}_i \mathbf{z}_i^\top$ is the unbiased estimator of $\MS$.
Assume that $\| \mathbf{z}_i \|_2^2 \leq M$. Then,

$$
Pr\{ \| \MS_N - \MS \ \| > \epsilon \| \MS \| \} \geq 1 - 2n \exp \left(- c_1 
\frac{N \epsilon^2 \MSnorm}{M + \MSnorm} \right)
$$

where $\epsilon \in (0, 1]$.
\end{theorem}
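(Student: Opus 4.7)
The theorem is obtained by invoking the Matrix Bernstein inequality (the theorem just stated) on the centered, rescaled rank-one matrices
$$\mathbf{Z}_i := \tfrac{1}{N}\bigl(\mathbf{z}_i \mathbf{z}_i^\top - \MS\bigr), \qquad i = 1, \dots, N.$$
Each $\mathbf{Z}_i$ is symmetric, $n \times n$, and mean-zero; the $\mathbf{Z}_i$ are independent since the samples $\mathbf{z}_i$ are; and $\sum_i \mathbf{Z}_i = \MS_N - \MS$. So controlling $\|\MS_N - \MS\|$ with high probability reduces to identifying the two Matrix Bernstein parameters $K$ (uniform bound on $\|\mathbf{Z}_i\|$) and $\sigma^2$ (bound on the sum of squares).

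For $K$, the triangle inequality combined with $\|\mathbf{z}_i\|_2^2 \leq M$ gives $\|\mathbf{Z}_i\| \leq \tfrac{1}{N}\bigl(\|\mathbf{z}_i\|_2^2 + \|\MS\|\bigr) \leq (M + \|\MS\|)/N$. For the variance, the natural route is to expand
$$\E[\mathbf{Z}_i^2] = \tfrac{1}{N^2}\bigl(\E[\|\mathbf{z}_i\|_2^2 \, \mathbf{z}_i \mathbf{z}_i^\top] - \MS^2\bigr),$$
where the cross terms $\E[\mathbf{z}_i\mathbf{z}_i^\top \MS]$ and $\E[\MS \mathbf{z}_i\mathbf{z}_i^\top]$ both collapse to $\MS^2$. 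Then the pointwise inequality $(M - \|\mathbf{z}_i\|_2^2)\mathbf{z}_i\mathbf{z}_i^\top \succeq 0$ (valid a.s.) is preserved under expectation in the PSD order, yielding $\E[\|\mathbf{z}_i\|_2^2 \mathbf{z}_i\mathbf{z}_i^\top] \preceq M \cdot \MS$. Dropping the negative $-\MS^2$ term and summing over $i$ produces $\bigl\|\sum_i \E[\mathbf{Z}_i^2]\bigr\| \leq M\|\MS\|/N =: \sigma^2$.

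Plugging $t = \epsilon \|\MS\|$ into Matrix Bernstein yields an exponent
$$-c \min\!\Bigl\{\tfrac{t^2}{\sigma^2},\ \tfrac{t}{K}\Bigr\} = -c \min\!\Bigl\{\tfrac{\epsilon^2 N \|\MS\|}{M},\ \tfrac{\epsilon N \|\MS\|}{M + \|\MS\|}\Bigr\}.$$
Since $\epsilon \in (0,1]$, both branches are bounded below by $\epsilon^2 N \|\MS\|/(M + \|\MS\|)$: the first because $M \leq M + \|\MS\|$, the second because $\epsilon \geq \epsilon^2$. Absorbing numerical constants into $c_1$ then gives the claimed tail bound. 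The only mildly delicate step is the PSD domination for the variance; everything else is bookkeeping, which is why covariance estimation falls out essentially as a corollary of matrix Bernstein once the correct centering is in place.
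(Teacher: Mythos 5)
Your proposal is correct and follows essentially the same route as the paper: the same centering $\mathbf{Z}_i = \tfrac{1}{N}(\mathbf{z}_i\mathbf{z}_i^\top - \MS)$, the same bound $K = (M+\MSnorm)/N$, the same expansion of $\E[\mathbf{Z}_i^2]$, and the same application of Matrix Bernstein at $t = \epsilon\MSnorm$ followed by using $\epsilon \in (0,1]$ to collapse the minimum. The only (immaterial) difference is that you drop the $-\MS^2$ term by PSD domination to get $\sigma^2 = M\MSnorm/N$, whereas the paper keeps it via the triangle inequality and gets $\sigma^2 = (M\MSnorm + \MSnorm^2)/N$; both lead to the same final exponent after absorbing constants.
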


\begin{proof}
Let $\mathbf{Z}_i = \frac{1}{N} \left( \mathbf{z}_i \mathbf{z}_i^\top - \MS \right )$

Then,
\begin{equation} \label{eq:sum z_i}
\begin{aligned}
\sum_{i=1}^N \mathbf{Z}_i & = 
\frac{1}{N} \sum_{i=1}^N \mathbf{z}_i \mathbf{z}_i^\top - \MS \\
& = \MS_N - \MS
\end{aligned}
\end{equation}

To apply Theorem~\ref{theorem:Covariance Estimation} to (\ref{eq:sum z_i}), we need to bound $\|\mathbf{Z}_i\|$ and $\| \sum_i \mathbf{Z}_i^2 \|$.

To bound $\|\mathbf{Z}_i\|$,
$$
\|\mathbf{Z}_i\| = \| \frac{1}{N} \left( \mathbf{z}_i \mathbf{z}_i^\top - \MS \right ) \|
$$

With triangle inequality, we get
\begin{equation} \label{eq:CE1}
\|\mathbf{Z}_i\| \leq \frac{1}{N}  \left( \| \mathbf{z}_i \mathbf{z}_i^\top \| + \| \MS \| \right )
\end{equation}

While considering the term $\| \mathbf{z}_i \mathbf{z}_i^\top \|$, we note that $\mathbf{z}_i \mathbf{z}_i^\top$ is a positive definite matrix. Then,

\begin{equation} \label{eq:CE2}
\begin{aligned}
\| \mathbf{z} \mathbf{z}^\top \| & = 
\left \| \begin{bmatrix}
  z_1 \\
  \vdots\\
  z_n
 \end{bmatrix} 
 \begin{bmatrix}
  z_1 & \cdots & z_n
 \end{bmatrix} \right \| \\
& = 
\left \| \begin{bmatrix}
  {z_1}^2 & \cdots & z_1 z_n \\
  \vdots  & \ddots & \vdots  \\
  z_1 z_n & \cdots & {z_n}^2
 \end{bmatrix} \right \| \\
& = s_1(\mathbf{z} \mathbf{z}^\top) \\
& = \lambda_1(\mathbf{z} \mathbf{z}^\top) \\
& \leq tr(\mathbf{z} \mathbf{z}^\top) \\
& = {z_1}^2 + \cdots + {z_n}^2 \\
& = \| \mathbf{z} \|_2^2
\end{aligned}
\end{equation}

Substituting (\ref{eq:CE2}) into (\ref{eq:CE1}), we get
$$
\|\mathbf{Z}_i\| \leq \frac{1}{N}  \left( \| \mathbf{z}_i \|_2^2 + \| \MS \| \right )
$$

Since $\| \mathbf{z}_i \|_2^2 \leq M$,
\begin{equation} \label{eq:CE K}
\|\mathbf{Z}_i\| \leq \frac{M + \| \MS \|}{N} = K
\end{equation}
where $K$ is the term from Theorem~\ref{theorem:Covariance Estimation}.

To bound $\| \sum_i \mathbf{Z}_i^2 \|$, we first expand $\mathbf{Z}_i^2$.

\begin{equation} \label{eq:CE3}
\begin{aligned}
\mathbf{Z}_i^2 & = 
\frac{1}{N^2} \left( \mathbf{z}_i \mathbf{z}_i^\top - \MS \right )^2 \\
& = 
\frac{1}{N^2} \left[ (\mathbf{z}_i \mathbf{z}_i^\top)^2 - \MS (\mathbf{z}_i \mathbf{z}_i^\top) - (\mathbf{z}_i \mathbf{z}_i^\top) \MS + \MS^2 \right ]
\end{aligned}
\end{equation}

Taking expectation of both sides, we get
\begin{equation}
\begin{aligned}
\E \mathbf{Z}_i^2 & = 
\frac{1}{N^2} \left [ \E (\mathbf{z}_i \mathbf{z}_i^\top \mathbf{z}_i \mathbf{z}_i^\top) - \E [\MS (\mathbf{z}_i \mathbf{z}_i^\top)] - \E [(\mathbf{z}_i \mathbf{z}_i^\top) \MS ] + \E \MS^2 \right ] \\
& = 
\frac{1}{N^2} \left [ \E (\mathbf{z}_i \| \mathbf{z}_i \|_2^2 \mathbf{z}_i^\top) - \MS \E (\mathbf{z}_i \mathbf{z}_i^\top) - \E (\mathbf{z}_i \mathbf{z}_i^\top) \MS + \MS^2 \right ]
\end{aligned}
\end{equation}

Since $\| \mathbf{z}_i \|_2^2 \leq M$,
$$
\E \mathbf{Z}_i^2 \preceq \frac{1}{N^2} \left [ M \E (\mathbf{z}_i \mathbf{z}_i^\top) - \MS \E (\mathbf{z}_i \mathbf{z}_i^\top) - \E (\mathbf{z}_i \mathbf{z}_i^\top) \MS + \MS^2 \right ]
$$

By definition, $\MS = \E \mathbf{z} \mathbf{z}^\top$
\begin{equation} \label{eq:CE4}
\begin{aligned}
\E \mathbf{Z}_i^2 & \preceq 
\frac{1}{N^2} \left ( M \MS - \MS \MS - \MS \MS + \MS^2 \right ) \\
& = \frac{1}{N^2} ( M \MS - \MS^2 )
\end{aligned}
\end{equation}

Thus,
$$
\left \| \E \sum_{i = 1}^N \mathbf{Z}_i^2 \right \| = \left \| \frac{1}{N} ( M \MS - \MS^2 ) \right \|
$$

With triangle inequality, we get
\begin{equation} \label{eq:CE sigma}
\begin{aligned}
\left \| \E \sum_{i = 1}^N \mathbf{Z}_i^2 \right \| & \leq 
\left \| \frac{M}{N} \MS \right \| + \left \| \frac{1}{N} \MS^2 \right \| \\
& = \frac{M \| \MS \| + \| \MS \|^2}{N} = \sigma^2
\end{aligned}
\end{equation}
where $\sigma^2$ is the term from Theorem~\ref{theorem:Covariance Estimation}.

Applying Theorem~\ref{theorem:Covariance Estimation} for $\sum_{i=1}^N \mathbf{Z}_i$ with (\ref{eq:CE K}) and (\ref{eq:CE sigma}), and recalling from (\ref{eq:sum z_i}) where \\ $\sum_{i=1}^N \mathbf{Z}_i = \MS_N - \MS$, we get

\begin{multline}\label{eq:CE5}
Pr\{ \| \MS_N - \MS \ \| > \epsilon \| \MS \| \} \\
\leq 2n \exp \left(- c_1 \min \left \{ \frac{N \epsilon^2 \MSnorm^2}{M \MSnorm + \MSnorm^2}, \frac{N \epsilon \MSnorm}{M + \MSnorm} \right \}  \right) \\
=  2n \exp \left(- c_1 \min \left \{ \frac{N \epsilon^2 \MSnorm}{M + \MSnorm}, \frac{N \epsilon \MSnorm}{M + \MSnorm} \right \}  \right)
\end{multline}


Assuming $\epsilon \in (0,1]$,

$$
Pr\{ \| \MS_N - \MS \ \| > \epsilon \| \MS \| \} \leq 2n \exp \left(- c_1  \frac{N \epsilon^2 \MSnorm}{M + \MSnorm} \right)
$$

Thus,
\begin{equation} \label{eq:CE final}
Pr\{ \| \MS_N - \MS \ \| \leq \epsilon \| \MS \| \} \geq 1 - 2n \exp \left(- c_1  \frac{N \epsilon^2 \MSnorm}{M + \MSnorm} \right)
\end{equation}

\end{proof}

\begin{theorem} [Davis-Kahan Theorem] \label{theorem:Davis-Kahan}
Let $\mathbf{S}$ and $\mathbf{T}$ be symmetric matrices with same dimensions. Fix $i$ and assume that the $i$th largest eigenvalue is well separated from the other eigenvalues:

$$
\min_{j:j \neq i} |\lambda_i(\mathbf{S}) - \lambda_j(\mathbf{S})| = \delta > 0
$$

Then, the unit eigenvectors $\mathbf{v}_i (\mathbf{S})$ and $\mathbf{v}_i (\mathbf{T})$ are close to each other up to a sign.

$$
\exists \theta \in \{-1, 1\}: \| \mathbf{v}_i (\mathbf{S}) - \theta \mathbf{v}_i (\mathbf{T}) \|_2 \leq \frac{2^{\frac{2}{3}} \| \mathbf{S} - \mathbf{T} \|}{\delta}
$$

\end{theorem}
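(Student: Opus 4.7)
The plan is to relate $\|\mathbf{S}-\mathbf{T}\|$ to the angle between $\mathbf{v}_i(\mathbf{S})$ and $\mathbf{v}_i(\mathbf{T})$ by expanding the latter eigenvector in the orthonormal basis formed by the eigenvectors of $\mathbf{S}$ (which exists because $\mathbf{S}$ is symmetric). First I would write $\mathbf{v}_i(\mathbf{T}) = \sum_j \alpha_j \mathbf{v}_j(\mathbf{S})$ with coefficients $\alpha_j = \langle \mathbf{v}_j(\mathbf{S}), \mathbf{v}_i(\mathbf{T})\rangle$ satisfying $\sum_j \alpha_j^2 = 1$.

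The central computation applies $\mathbf{T}-\mathbf{S}$ to $\mathbf{v}_i(\mathbf{T})$ and uses the two eigen-identities $\mathbf{T}\mathbf{v}_i(\mathbf{T}) = \lambda_i(\mathbf{T})\mathbf{v}_i(\mathbf{T})$ and $\mathbf{S}\mathbf{v}_j(\mathbf{S}) = \lambda_j(\mathbf{S})\mathbf{v}_j(\mathbf{S})$ to obtain
\begin{equation*}
(\mathbf{T}-\mathbf{S})\mathbf{v}_i(\mathbf{T}) \;=\; \sum_j \alpha_j\bigl(\lambda_i(\mathbf{T}) - \lambda_j(\mathbf{S})\bigr)\mathbf{v}_j(\mathbf{S}).
\end{equation*}
Taking squared norms (using orthonormality of $\{\mathbf{v}_j(\mathbf{S})\}$) and noting the LHS is at most $\|\mathbf{T}-\mathbf{S}\|^2$, I get
\begin{equation*}
\|\mathbf{T}-\mathbf{S}\|^2 \;\geq\; \Bigl(\min_{j\neq i}|\lambda_i(\mathbf{T})-\lambda_j(\mathbf{S})|\Bigr)^2 \sum_{j\neq i}\alpha_j^2.
\end{equation*}
Next I would invoke Weyl's inequality ($|\lambda_i(\mathbf{T})-\lambda_i(\mathbf{S})| \leq \|\mathbf{T}-\mathbf{S}\|$) together with the hypothesized gap to get $\min_{j\neq i}|\lambda_i(\mathbf{T})-\lambda_j(\mathbf{S})| \geq \delta - \|\mathbf{T}-\mathbf{S}\|$, producing a bound on $\sum_{j\neq i}\alpha_j^2$. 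Finally I would select the sign $\theta = \mathrm{sign}(\alpha_i)$ and apply the elementary identity $\|\mathbf{v}_i(\mathbf{S})-\theta\mathbf{v}_i(\mathbf{T})\|_2^2 = 2-2|\alpha_i| \leq 2(1-\alpha_i^2) = 2\sum_{j\neq i}\alpha_j^2$ to convert the coefficient bound into the desired distance bound.

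The principal obstacle is the third step: only the gap of $\mathbf{S}$ is assumed, so the denominator $\delta - \|\mathbf{T}-\mathbf{S}\|$ degenerates once the perturbation becomes comparable to $\delta$, and the resulting estimate does not immediately have the clean form $C\|\mathbf{T}-\mathbf{S}\|/\delta$. To recover the stated bound with constant $C = 2^{2/3}$, I would perform a case split on the ratio $\|\mathbf{T}-\mathbf{S}\|/\delta$: for small perturbations use the eigenbasis estimate above; for large perturbations fall back on the trivial bound $\|\mathbf{v}_i(\mathbf{S})-\theta\mathbf{v}_i(\mathbf{T})\|_2 \leq 2$, which is already $\leq 2\|\mathbf{T}-\mathbf{S}\|/\delta$ in this regime. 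Optimizing the crossover threshold between the two regimes is what should produce the unusual constant $2^{2/3}$. An alternative, slicker route is the classical $\sin\theta$ theorem proof via resolvent/contour-integration identities, which avoids the case split entirely but pulls in heavier machinery than this elementary chain of inequalities.
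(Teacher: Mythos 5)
The paper does not prove this statement at all: it is imported verbatim as the classical Davis--Kahan $\sin\Theta$ theorem (cf.\ Vershynin, \emph{High-Dimensional Probability}, Thm.~4.5.5) and used as a black box in the proof of the clustering guarantee, so there is no in-paper argument to compare against. Your sketch is the standard elementary proof of exactly that textbook statement, and its skeleton is sound: the eigenbasis expansion, the identity $(\mathbf{T}-\mathbf{S})\mathbf{v}_i(\mathbf{T})=\sum_j\alpha_j(\lambda_i(\mathbf{T})-\lambda_j(\mathbf{S}))\mathbf{v}_j(\mathbf{S})$, the Weyl step, and the conversion $\|\mathbf{v}_i(\mathbf{S})-\theta\mathbf{v}_i(\mathbf{T})\|_2^2=2-2|\alpha_i|\le 2\sum_{j\neq i}\alpha_j^2$ are all correct. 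Two points to tighten. First, your case split as written does not close: in the large-perturbation regime you invoke the trivial bound $2$ and claim it is $\le 2\|\mathbf{T}-\mathbf{S}\|/\delta$, which forces the crossover at $\|\mathbf{T}-\mathbf{S}\|=\delta$, where the small-regime denominator $\delta-\|\mathbf{T}-\mathbf{S}\|$ vanishes. The fix is to use the sharper trivial bound $\sqrt{2}$ (available once $\theta$ is chosen so that $\theta\alpha_i\ge 0$) and put the crossover at $\|\mathbf{T}-\mathbf{S}\|=\delta/2$: the small regime then gives $\sqrt{2}\cdot 2\|\mathbf{T}-\mathbf{S}\|/\delta$ and the large regime gives $\sqrt{2}\le 2\sqrt{2}\,\|\mathbf{T}-\mathbf{S}\|/\delta$, both with constant $2^{3/2}$. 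Second, that is the constant you should expect: the exponent $2^{2/3}$ in the paper's statement (and hence the $2^{2/3}$ you are trying to reverse-engineer) is evidently a typo for $2^{3/2}$, the constant in the standard formulation; no optimization of the crossover will produce a constant below $2$ by this route, since the trivial bound alone already forces $\ge\sqrt{2}\cdot(\delta/\|\mathbf{T}-\mathbf{S}\|)$ at the threshold. With those two corrections your elementary argument is complete and is arguably preferable to the resolvent/contour-integral proof for this self-contained setting.
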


\begin{theorem} [Guarantee of Poison Classification through Clustering] \label{theorem:Poison Clustering Appendix}
Assume that all $\mathbf{z}_i$ are normalized such that $\| \mathbf{z}_i \|_2 = 1$. Then the error probability of the poison clustering algorithm is given by


\begin{multline}
Pr \left \{ N_{\text{error}} \leq c_2 N \epsilon \left ( \frac{1}{\| \Mmu \|_2} + \frac{\eta}{\varepsilon \| \Mmu \|_2^3} \right ) \right \} \geq \\ 1 - 2n \exp \left ( -c_1 N \epsilon^2 \frac{(\varepsilon\| \Mmu \|_2^2 + \eta)}{1 + \varepsilon\| \Mmu \|_2^2 + \eta} \right )
\end{multline}


where $ N_{\text{error}}$ is the number of misclassified points and $\epsilon \in (0, 1]$.
\end{theorem}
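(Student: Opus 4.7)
The plan is to chain the operator-norm concentration of $\mathbf{\Sigma}_N$ toward $\mathbf{\Sigma}$ with a Davis-Kahan bound on the top eigenvector, and then close with a pigeonhole-style count that turns the eigenvector perturbation into a bound on $N_{\text{error}}$. Theorem~\ref{theorem:Largest Eigenvector appendix} already supplies the true top eigenpair (eigenvalue $\varepsilon\|\mathbf{\mu}\|_2^2 + \eta$, unit eigenvector $\mathbf{v} = \mathbf{\mu}/\|\mathbf{\mu}\|_2$) and shows that every other eigenvalue equals $\eta$, while Remark~\ref{remark:sigma_operator_norm} identifies $\|\mathbf{\Sigma}\|$ with that top eigenvalue. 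These are the structural facts the proof rests on.

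First, I invoke the Covariance Estimation Theorem~\ref{theorem:Covariance Estimation} with $M = 1$ (allowed by the normalization $\|\mathbf{z}_i\|_2 = 1$) and $\|\mathbf{\Sigma}\| = \varepsilon \|\mathbf{\mu}\|_2^2 + \eta$. Direct substitution already reproduces the probability on the right-hand side of $(\ref{eq:Poison Clustering Guarantee})$ and delivers the event $\|\mathbf{\Sigma}_N - \mathbf{\Sigma}\| \leq \epsilon \|\mathbf{\Sigma}\|$. Conditional on that event, Davis-Kahan (Theorem~\ref{theorem:Davis-Kahan}) applied with spectral gap $\delta = \varepsilon\|\mathbf{\mu}\|_2^2$ yields, for a suitable choice of sign of the empirical top eigenvector $\hat{\mathbf{v}}$,
\[
\|\hat{\mathbf{v}} - \mathbf{v}\|_2 \;\leq\; \frac{2^{2/3}\,\|\mathbf{\Sigma}_N - \mathbf{\Sigma}\|}{\varepsilon\|\mathbf{\mu}\|_2^2} \;\leq\; 2^{2/3}\epsilon\left(1 + \frac{\eta}{\varepsilon\|\mathbf{\mu}\|_2^2}\right).
\]
Finally, because the algorithm assigns labels by thresholding $\mathbf{z}_i^\top \hat{\mathbf{v}}$ whereas an oracle using $\mathbf{v}$ would place its threshold near the midpoint $\|\mathbf{\mu}\|_2/2$ of the two population cluster centers $\{0,\|\mathbf{\mu}\|_2\}$, any sample that is misclassified empirically must have shifted by at least order $\|\mathbf{\mu}\|_2$ in its first principal component, i.e.\ $|\mathbf{z}_i^\top (\hat{\mathbf{v}} - \mathbf{v})| \gtrsim \|\mathbf{\mu}\|_2$. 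Summing this lower bound over the error set, and upper-bounding each term by $|\mathbf{z}_i^\top (\hat{\mathbf{v}} - \mathbf{v})| \leq \|\mathbf{z}_i\|_2\,\|\hat{\mathbf{v}} - \mathbf{v}\|_2 = \|\hat{\mathbf{v}} - \mathbf{v}\|_2$, gives $N_{\text{error}}\,\|\mathbf{\mu}\|_2 \lesssim N\,\|\hat{\mathbf{v}} - \mathbf{v}\|_2$. Inserting the Davis-Kahan estimate reproduces exactly $c_2 N \epsilon \bigl(\tfrac{1}{\|\mathbf{\mu}\|_2} + \tfrac{\eta}{\varepsilon\|\mathbf{\mu}\|_2^3}\bigr)$, matching the statement.

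The main obstacle is making the counting step fully rigorous. The ``$\gtrsim \|\mathbf{\mu}\|_2$'' lower bound on the per-sample shift presumes that the empirical GMM decision boundary sits a constant fraction away from the two population cluster means and that no anomalous mass piles up within an $O(\|\hat{\mathbf{v}} - \mathbf{v}\|_2)$ strip of the boundary. In a complete write-up I would either fold the midpoint threshold $\|\mathbf{\mu}\|_2/2$ into the classifier specification, or add a short Gaussian-concentration argument controlling the number of samples whose true projection $\mathbf{z}_i^\top \mathbf{v}$ falls in that narrow strip. Everything upstream --- the operator-norm concentration and the eigenvector perturbation --- is a mechanical invocation of the two black-box theorems already recorded above, so this last reduction is the only place where genuine care is required.
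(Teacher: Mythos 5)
Your proposal follows essentially the same route as the paper's own proof: covariance concentration (Theorem~\ref{theorem:Covariance Estimation} with $M=1$) chained with Davis--Kahan at spectral gap $\delta = \varepsilon\|\Mmu\|_2^2$, then a Cauchy--Schwarz/pigeonhole count converting the eigenvector perturbation into a bound on $N_{\text{error}}$. The looseness you candidly flag in the final counting step --- that a misclassification must shift the projection by order $\|\Mmu\|_2$ --- is present in the paper's argument as well (it asserts each disagreement ``contributes at least 1 to the sum'' after normalizing by $\|\Mmu\|_2$, with no further justification), so your write-up is, if anything, more honest about where the argument is informal.
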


\begin{proof}
To find the difference between unit eigenvectors $\mathbf{v}_1(\MS)$ and $\mathbf{v}_1 (\MS_N)$, we applying Theorem~\ref{theorem:Davis-Kahan} for $i = 1$, $\mathbf{S} = \MS$, $\mathbf{T} = \MS_N$,

$$
\delta = \min_{j \neq 1} |\lambda_1(\mathbf{\MS}) - \lambda_j(\mathbf{\MS})|
$$
With our previous calculations of $\lambda_i(\mathbf{\Sigma})$ in (\ref{eq:sigma2}) and (\ref{eq:sigma3}), we get

\begin{equation} \label{eq:delta}
\begin{aligned}
\delta & = 
\varepsilon \| \mathbf{\mu} \|_2^2 + \eta - \eta \\
& = \varepsilon \| \mathbf{\mu} \|_2^2
\end{aligned}
\end{equation}

The conclusion of Theorem~\ref{theorem:Davis-Kahan} then becomes

$$
\exists \theta \in \{-1, 1\}: \| \mathbf{v}_1 (\MS) - \theta \mathbf{v}_1 (\MS_N) \|_2 \leq \frac{2^{\frac{2}{3}}}{\varepsilon \| \mathbf{\mu} \|_2^2} \| \MS - \MS_n \|
$$

Combining this with the Theorem~\ref{theorem:Covariance Estimation}, we get

\begin{multline} \label{eq:first eigenvector probability bound}
Pr \left \{ \| \mathbf{v}_1 (\MS) - \theta \mathbf{v}_1 (\MS_N) \|_2 \leq \frac{2^{\frac{2}{3}}}{\varepsilon \| \mathbf{\mu} \|_2^2} \epsilon \| \MS \| \right \} \geq  \\ 1 - 2n \exp \left(- c_1 \frac{N \epsilon^2 \MSnorm}{M + \MSnorm} \right)
\end{multline} 

We now have a probability bound of difference between $\mathbf{v}_1(\MS)$ and $\mathbf{v}_1 (\MS_N)$. To find the probability bound on the number of misclassified points, let us consider the case where $\mathbf{z}_i$ is from a non-poisoned point.

If $\mathbf{z}_i$ is from a poisoned point,
\begin{equation} \label{eq:nonpoisoned product}
\begin{aligned}
\E \inner{\Mmu}{\mathbf{z}_i} & = 
\E \left ( 
\begin{bmatrix}
  \mu_1 & \cdots & \mu_n
 \end{bmatrix} 
 \begin{bmatrix}
  \mu_1 + g_1 \\
  \vdots \\
  \mu_n + g_n
 \end{bmatrix} \right ) \\
& = \E ({\mu_1}^ 2 + g_1 \mu_1 + \cdots + {\mu_n}^ 2 + g_n \mu_n) \\
& = \E ({\mu_1}^ 2 + \cdots + {\mu_n}^ 2) + \E (g_1 \mu_1 + \cdots + g_n \mu_n) \\
& = \| \mathbf{\mu} \|_2^2
\end{aligned}
\end{equation}

Dividing by $\| \mathbf{\mu} \|_2^2$ on both sides, we get
$$
\E \inner{\frac{\Mmu}{\| \mathbf{\mu} \|_2}} {\frac{\mathbf{z}_i}{\| \mathbf{\mu} \|_2} } = 1
$$

From Theorem~\ref{theorem:Largest Eigenvector}, since we know that $\mathbf{\mu}$ is the first eigenvector of $\MS$, $\frac{\Mmu}{\| \mathbf{\mu} \|_2}$ is its first unit eigenvector $\mathbf{v}_1(\MS)$. Then,

\begin{equation} \label{eq:poisoned product}
\E \inner{\mathbf{v}_1(\MS)} { \frac{\mathbf{z}_i}{\| \mathbf{\mu} \|_2} } = 1
\end{equation}

If $\mathbf{z}_i$ is from a non-poisoned point,
\begin{equation} \label{eq:nonpoisoned product}
\begin{aligned}
\E \inner{\mathbf{v}_1(\MS)}{\frac{\mathbf{z}_i}{\| \mathbf{\mu} \|_2}} & = 
\frac{1}{\| \mathbf{\mu} \|_2^2} \E \left ( 
\begin{bmatrix}
  \mu_1 & \cdots & \mu_n
 \end{bmatrix} 
 \begin{bmatrix}
  g_1 \\
  \vdots \\
  g_n
 \end{bmatrix} \right ) \\
& = \frac{1}{\| \mathbf{\mu} \|_2^2}  \E (g_1 \mu_1 + \cdots + g_n \mu_n) \\
& = \frac{1}{\| \mathbf{\mu} \|_2^2} \cdot 0 = 0
\end{aligned}
\end{equation}

Now, we consider the inner product of $\mathbf{z}_i$ with the difference between $\mathbf{v}_1(\MS)$ and $\mathbf{v}_1 (\MS_N)$.
$$
\mathbf{z}_i^\top \mathbf{v}_1 (\MS) - \theta \mathbf{z}_i^\top \mathbf{v}_1 (\MS_N) = \mathbf{z}_i^\top ( \mathbf{v}_1 (\MS) - \theta \mathbf{v}_1 (\MS_N) )
$$

By Cauchy-Schwarz Inequality,
$$
| \mathbf{z}_i^\top \mathbf{v}_1 (\MS) - \theta \mathbf{z}_i^\top \mathbf{v}_1 (\MS_N) | \leq \| \mathbf{z}_i \|_2 \cdot \| \mathbf{v}_1 (\MS) - \theta \mathbf{v}_1 (\MS_N) \|_2
$$

By considering all the N samples of  $x_i$,
$$
\sum_{i = 1}^N | \mathbf{z}_i^\top \mathbf{v}_1 (\MS) - \theta \mathbf{z}_i^\top \mathbf{v}_1 (\MS_N) |  \leq N \| \mathbf{z}_i \|_2 \cdot \| \mathbf{v}_1 (\MS) - \theta \mathbf{v}_1 (\MS_N) \|_2
$$

Dividing by $\| \mathbf{\mu} \|_2$ on both sides, we get
\begin{multline}
\sum_{i = 1}^N | \frac{\mathbf{z}_i^\top}{\| \mathbf{\mu} \|_2} \mathbf{v}_1 (\MS) - \theta \frac{\mathbf{z}_i^\top}{\| \mathbf{\mu} \|_2} \mathbf{v}_1 (\MS_N) |  \leq \\ N \frac{\| \mathbf{z}_i \|_2}{\| \mathbf{\mu} \|_2} \| \mathbf{v}_1 (\MS) - \theta \mathbf{v}_1 (\MS_N) \|_2
\end{multline}

\begin{multline}
\sum_{i = 1}^N | \inner{\mathbf{v}_1(\MS)} { \frac{\mathbf{z}_i}{\| \mathbf{\mu} \|_2} } - \theta \inner{\mathbf{v}_1(\MS_N)} { \frac{\mathbf{z}_i}{\| \mathbf{\mu} \|_2} } |  \leq \\ N \frac{\| x_i \|_2}{\| \mathbf{\mu} \|_2} \| \mathbf{v}_1 (\MS) - \theta \mathbf{v}_1 (\MS_N) \|_2
\end{multline}

Combining this with (\ref{eq:first eigenvector probability bound}), we get
\begin{multline} \label{eq:before Nerror}
\sum_{i = 1}^N | \inner{\mathbf{v}_1(\MS)} { \frac{\mathbf{z}_i}{\| \mathbf{\mu} \|_2} } - \theta \inner{\mathbf{v}_1(\MS_N)} { \frac{\mathbf{z}_i}{\| \mathbf{\mu} \|_2} } |  \leq \\  N \frac{\| \mathbf{z}_i \|_2}{\| \mathbf{\mu} \|_2} \cdot \frac{2^{\frac{2}{3}}}{\varepsilon \| \mathbf{\mu} \|_2^2} \epsilon \| \MS \|
\end{multline}

with probability $\geq 1 - 2n \exp \left(- c_1 \frac{N \epsilon^2 \MSnorm}{M + \MSnorm} \right)$.

From (\ref{eq:poisoned product}) and (\ref{eq:nonpoisoned product}), we know that the expected value of $\inner{\mathbf{v}_1(\MS)} { \frac{\mathbf{z}_i}{\| \mathbf{\mu} \|_2} }$ is either $0$ or $1$. So, every sample $\mathbf{z}_i$ for which $\inner{\mathbf{v}_1(\MS)} { \frac{\mathbf{z}_i}{\| \mathbf{\mu} \|_2} }$ and $\inner{\mathbf{v}_1(\MS_N)} { \frac{\mathbf{z}_i}{\| \mathbf{\mu} \|_2} }$ disagree contributes at least 1 to the sum in (\ref{eq:before Nerror}). Then, we can interpret the sum as the number of erroneously classified points $N_{\text{error}}$ when using $\mathbf{v}_1(\MS_N)$ to separate poisoned from non-poisoned points.

Assume that all $\mathbf{z}_i$ are normalized vectors, $\| \mathbf{z}_i \|_2 = 1$ and $M = 1$. Moreover, we know from Remark~\ref{remark:sigma_operator_norm} that $\| \mathbf{\Sigma} \| = \varepsilon \| \mathbf{\mu} \|_2^2 + \eta$. Thus,
$$
N_{\text{error}} \leq c_3 N \epsilon \cdot \frac{\varepsilon \| \mathbf{\mu} \|_2^2 + \eta}{\varepsilon \| \mathbf{\mu} \|_2^3}
$$
with probability $\geq 1 - 2n \exp \left(- c_1 N \epsilon^2  \frac{\varepsilon \| \mathbf{\mu} \|_2^2 + \eta}{1 + \varepsilon \| \mathbf{\mu} \|_2^2 + \eta} \right)$, where $c_3 > 0$ is an absolute constant.

\end{proof}

\newpage

\section{Algorithms}

\begin{algorithm}
 \label{algo:Find-Poison-Target-Base-Class}
\textbf{Input:} Training data containing poisoned samples $D$, poisoned model $f_p$. Let $D_y$ be the set of training examples corresponding to label $y$, cluster Wasserstein distance ratio threshold $\tau$. Let $G_y(\mathbf{x})$ be $\frac{\partial E_y}{\partial {\mathbf{x}}}$ where $E_y$ is the loss function value with respect to label $y$.

~~\For{ \textbf{all} $y$ }{
    $N_y = |D_y|$ which is the number of samples labeled $y$
    
    ~~\For{ \textbf{all} $\mathbf{x}_i \in D_y$}{
        Compute $\hat{G}_y = \frac{G_y(\mathbf{x}_i)}{\|G_y(\mathbf{x}_i)\|_2}$
    }
    Let $\mathbf{M}_y = [\hat{G}_y]^{N_y}_{i=1}$ be the $N_y \times n$ matrix of $\hat{G}$.
    
    Compute $\mathbf{v}_y$, the first right singular vector of $\mathbf{M}_y$ with SVD.
    
    Compute $\mathbf{t}_y = \mathbf{M}_y \mathbf{v}_y$.
    
    Execute unsupervised clustering on $\mathbf{T}_y$ to get 2 clusters, $C_1$ and $C_2$.
    
    $W_{2y} = W_2(C_1, C_2)$

        
        
    
}
~~$ y_{target} = \max\limits_{y} W_{2y} $

~~\If{$\frac{W_{2{y_{target}}}}{\mean\limits_{y \neq y_{target}} (W_{2y})} > \tau$}{
        
    $\textit{target\_class} = y_{target}$  
        
    \For{ \textbf{all} $y \neq \textit{target\_class}$ }{
        $N_{poisoned} = |S_{poisoned}|$
        
        ~~\For{ \textbf{all} $\mathbf{x}_i \in S_{poisoned}$}{
            Compute $\hat{G}_y = \frac{G_y(\mathbf{x}_i)}{\|G_y(\mathbf{x}_i)\|_2}$
        }
        Let $\mathbf{M}_y = [\hat{G}_y]^n_{i=1}$ be the $N_{poisoned} \times n$ matrix of $\hat{G}$.
        
        Compute $\mathbf{v}_y$, the first right singular vector of $\mathbf{M}_y$ with SVD.
        
        Compute $\mathbf{t}_y = \mathbf{M}_y \mathbf{v}_y$.
        
        Compute $\hat{t}_y = \mean(\mathbf{t}_y)$.
    }
    
    $\textit{base\_class} = \argmax\limits_{y \neq \textit{target\_class}} | \hat{t}_y |$  
    
    Return $\textit{target\_class}, \textit{base\_class}$

    }

    
    
    
    



 \caption{Find-Poison-Target-Base-Class}
\end{algorithm}

\begin{algorithm}
 \label{algo:Filter-Poisoned-Images}
\textbf{Input:} Training data containing poisoned samples $D$, poisoned model $f_p$. Let $D_y$ be the set of training examples corresponding to label $y$. Let $G_y(\mathbf{x})$ be $\frac{\partial E_y}{\partial {\mathbf{x}}}$ where $E_y$ is the loss function value with respect to label $y$.

~~$N_{\textit{target\_class}} = |D_{\textit{target\_class}}|$ which is the number of samples labeled $\textit{target\_class}$

~~\For{ \textbf{all} $\mathbf{x}_i \in D_{\textit{target\_class}}$}{
    Compute $\hat{G}_{\textit{base\_class}} = \frac{G_{\textit{base\_class}}(\mathbf{x}_i)}{\|G_{\textit{base\_class}}(\mathbf{x}_i)\|_2}$
}
~~Let $\mathbf{M} = [\hat{G}_{\textit{base\_class}}]^{N_{\textit{target\_class}}}_{i=1}$ be the $N_{\textit{target\_class}} \times n$ matrix of $\hat{G}$.

~~Compute $\mathbf{v}$, the first right singular vector of $\mathbf{M}$ with SVD.

~~Compute $\mathbf{t} = \mathbf{M} \mathbf{v}$.

~~Execute unsupervised clustering on $\mathbf{T}$ to get 2 clusters, $C_1$ and $C_2$.

\eIf{$|C_1| > |C_2|$}{
$D_f = C_1$,
$S_{poisoned} = C_2$
}{
$D_f = C_2$,
$S_{poisoned} = C_1$
}
    
~~Return $D_f, S_{poisoned}$

 \caption{Filter-Poisoned-Images}
\end{algorithm}

\begin{algorithm}
 \label{algo:Add-Counterpoison-Perturbation}
\textbf{Input:} Training data containing poisoned samples $D$, poisoned model $f_p$. Let $D_y$ be the set of training examples corresponding to label $y$, filtered poisoned samples $S_{poisoned}$, perturbation factor $\rho$. Let $G_y(\mathbf{x})$ be $\frac{\partial E_y}{\partial {\mathbf{x}}}$ where $E_y$ is the loss function value with respect to label $y$. 

~~\For{ \textbf{all} $y \neq \textit{target\_class}$ }{
    $N_{poisoned} = |S_{poisoned}|$
    
    ~~\For{ \textbf{all} $\mathbf{x}_i \in S_{poisoned}$}{
        Compute $\hat{G}_y = \frac{G_y(\mathbf{x}_i)}{\|G_y(\mathbf{x}_i)\|_2}$
    }
    Let $\mathbf{M}_y = [\hat{G}_y]^{N_{poisoned}}_{i=1}$ be the $N_{poisoned} \times n$ matrix of $\hat{G}$.
    
    Compute $\mathbf{v}_y$, the first right singular vector of $\mathbf{M}_y$ with SVD.
    
    ~~\For{ \textbf{all} $\mathbf{x}_j \in D_y$}{
        Set $\mathbf{x}_j = Clip(\mathbf{x}_j + \rho \mathbf{v}_y)$
    }
}
    
~~Return $D$

 \caption{Add-Counterpoison-Perturbation}
\end{algorithm}

\newpage

\section{Additional Figures}


\begin{table*}[tp]
    \centering
    \caption{Appendix: (a) Overlay poison image, (b) the first right vector of input gradients for all target class images which include clean and poisoned images. (c) The first right vector of input gradients for only clean target class images.}
        \begin{tabular}{ llcccccc }
         \hline
         Poison & Sample & Target & \multicolumn{2}{c}{1st V of all target images} && \multicolumn{2}{c}{1st V of clean target images} \\
         \cline{4-5}
         \cline{7-8}
         ~ & ~ & ~ & + & - && + & - \\
         \hline
        \parbox[l]{1em}{\includegraphics[width=3em]{images/a_overlay.pdf}} & \parbox[l]{1em}{\includegraphics[width=3em]{images/a_overlay_sample.pdf}} & Dog & \parbox[l]{1em}{\includegraphics[width=3em]{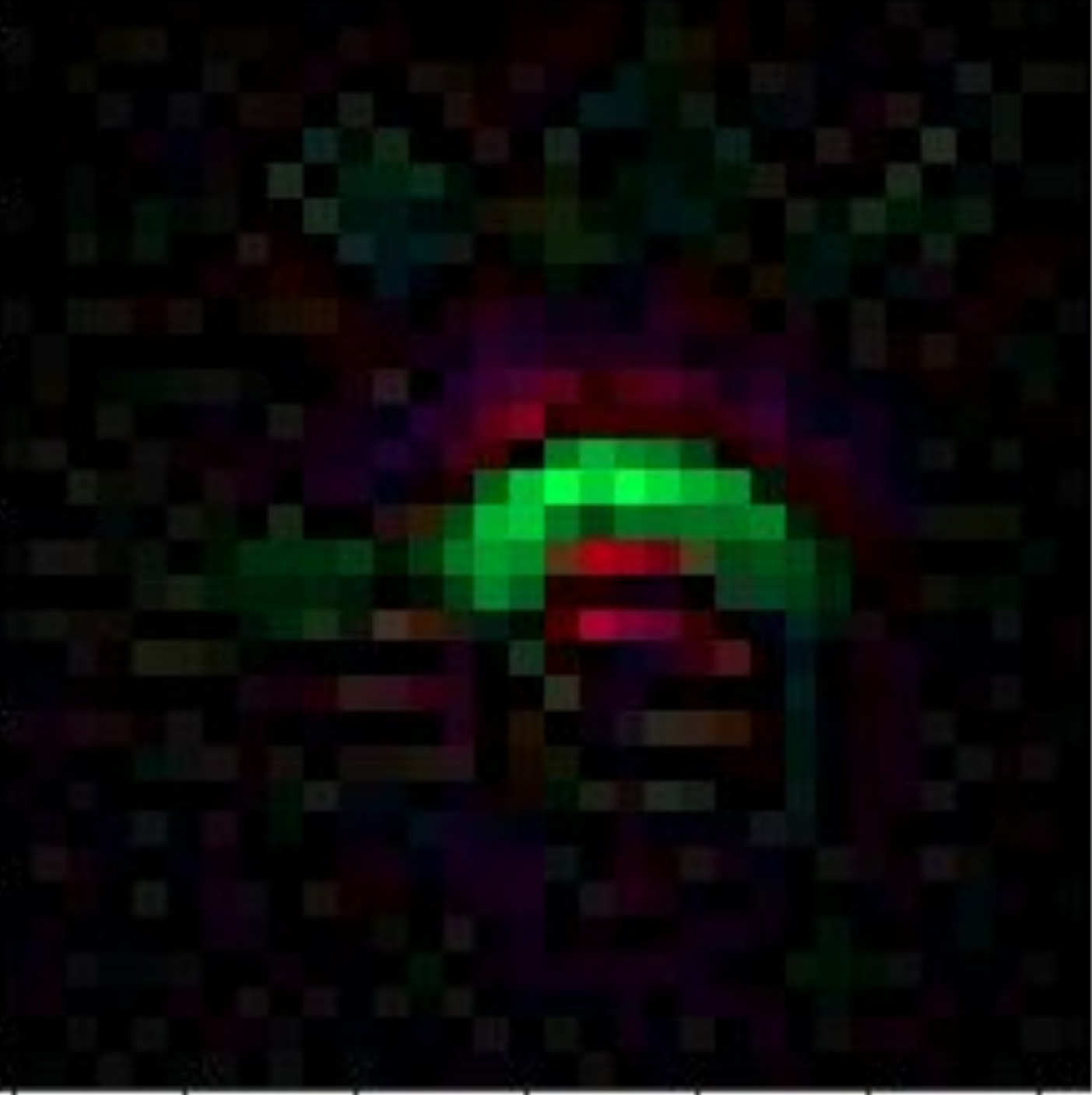}} & \parbox[l]{1em}{\includegraphics[width=3em]{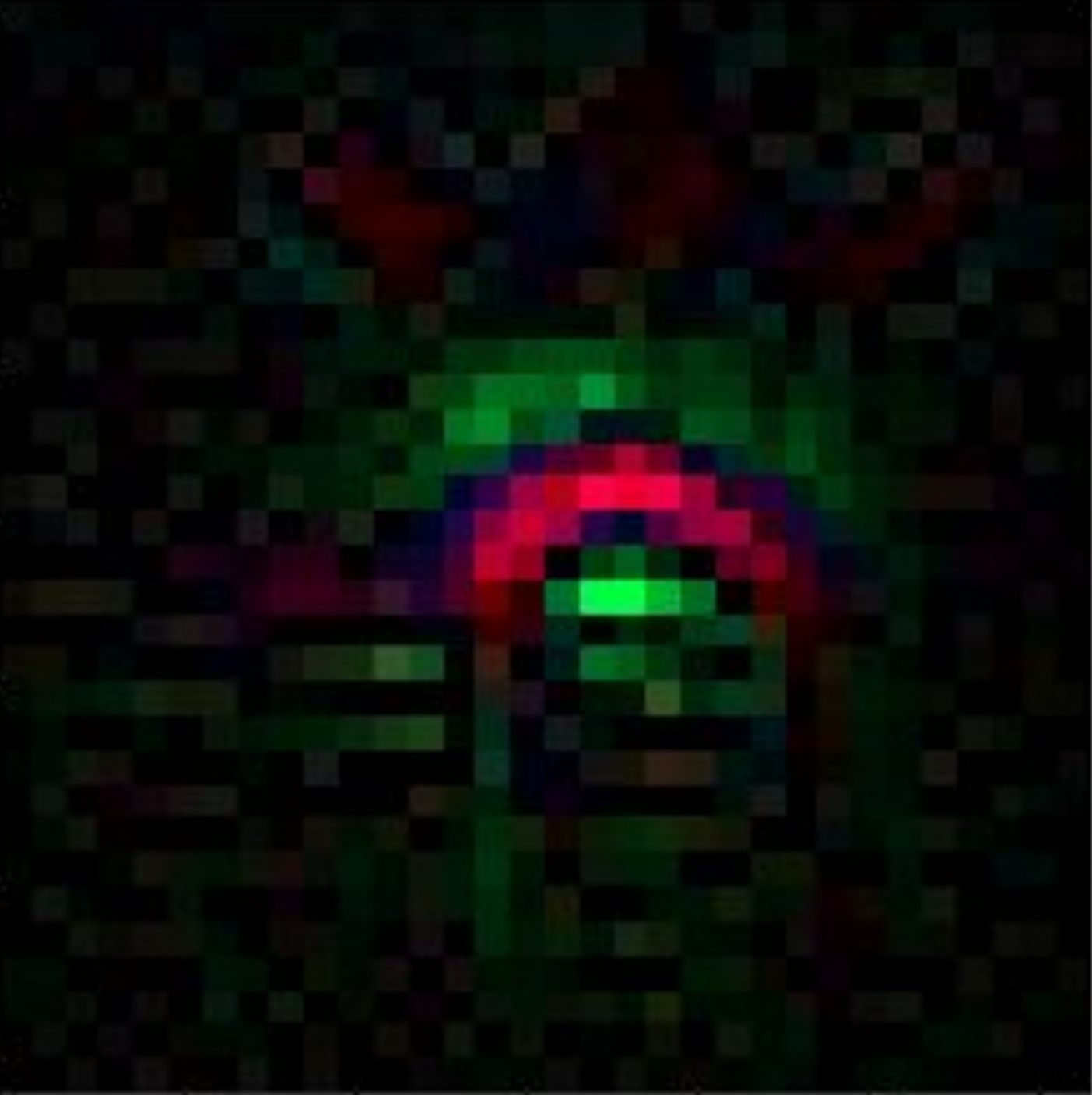}} && \parbox[l]{1em}{\includegraphics[width=3em]{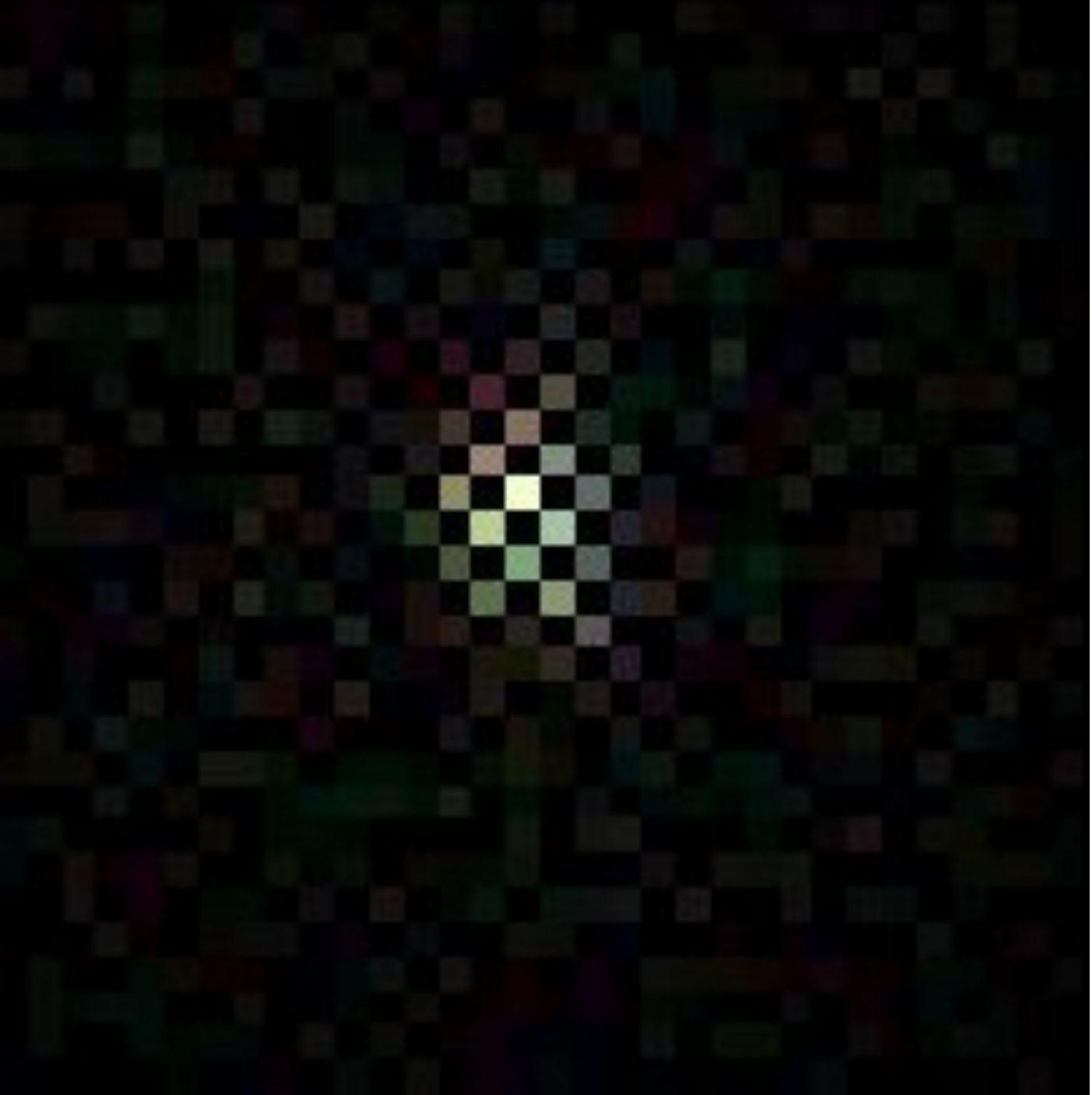}} & \parbox[l]{1em}{\includegraphics[width=3em]{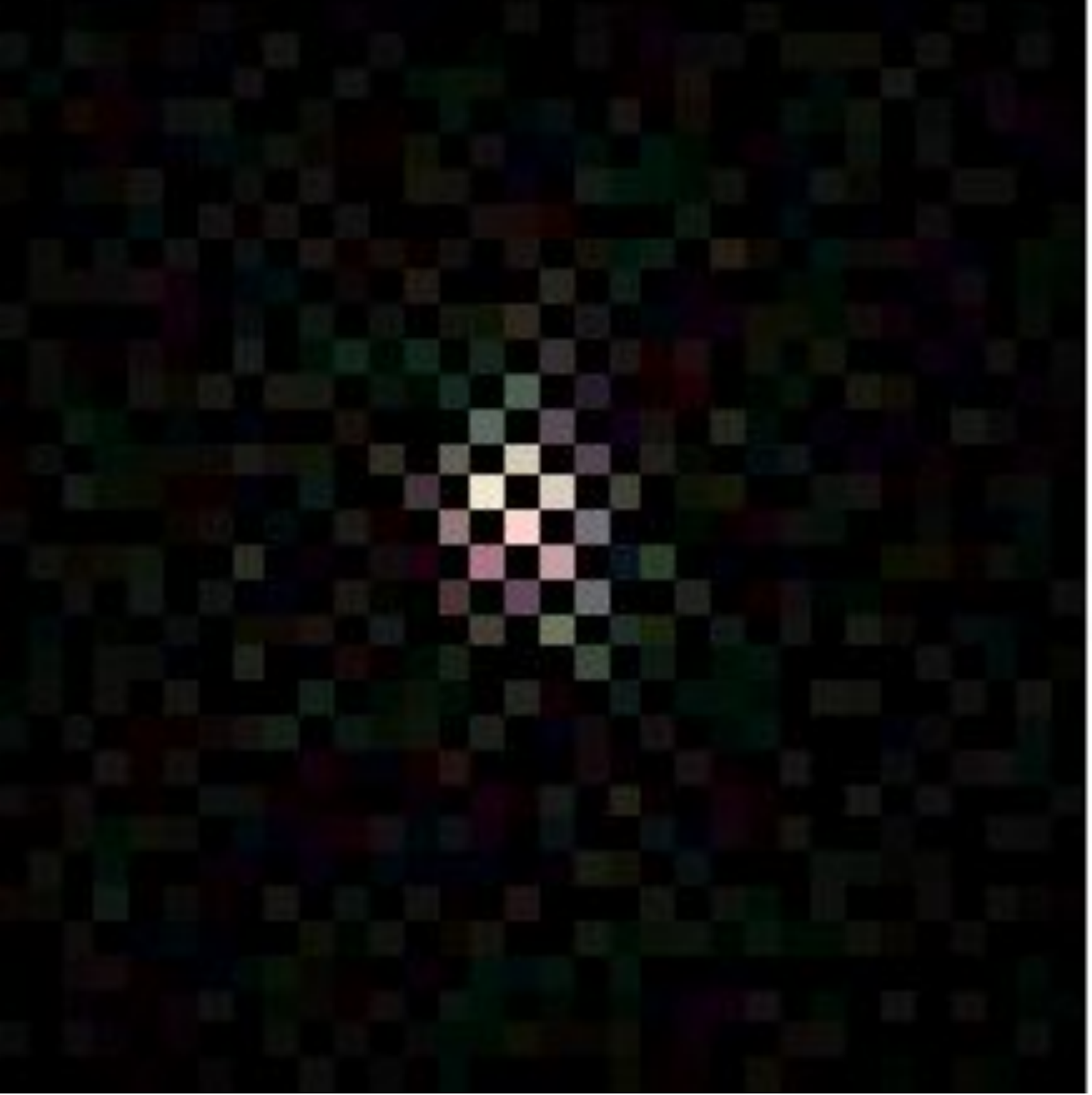}} \\
        \parbox[l]{1em}{\includegraphics[width=3em]{images/b_overlay.pdf}} & \parbox[l]{1em}{\includegraphics[width=3em]{images/b_overlay_sample.pdf}} & Frog & \parbox[l]{1em}{\includegraphics[width=3em]{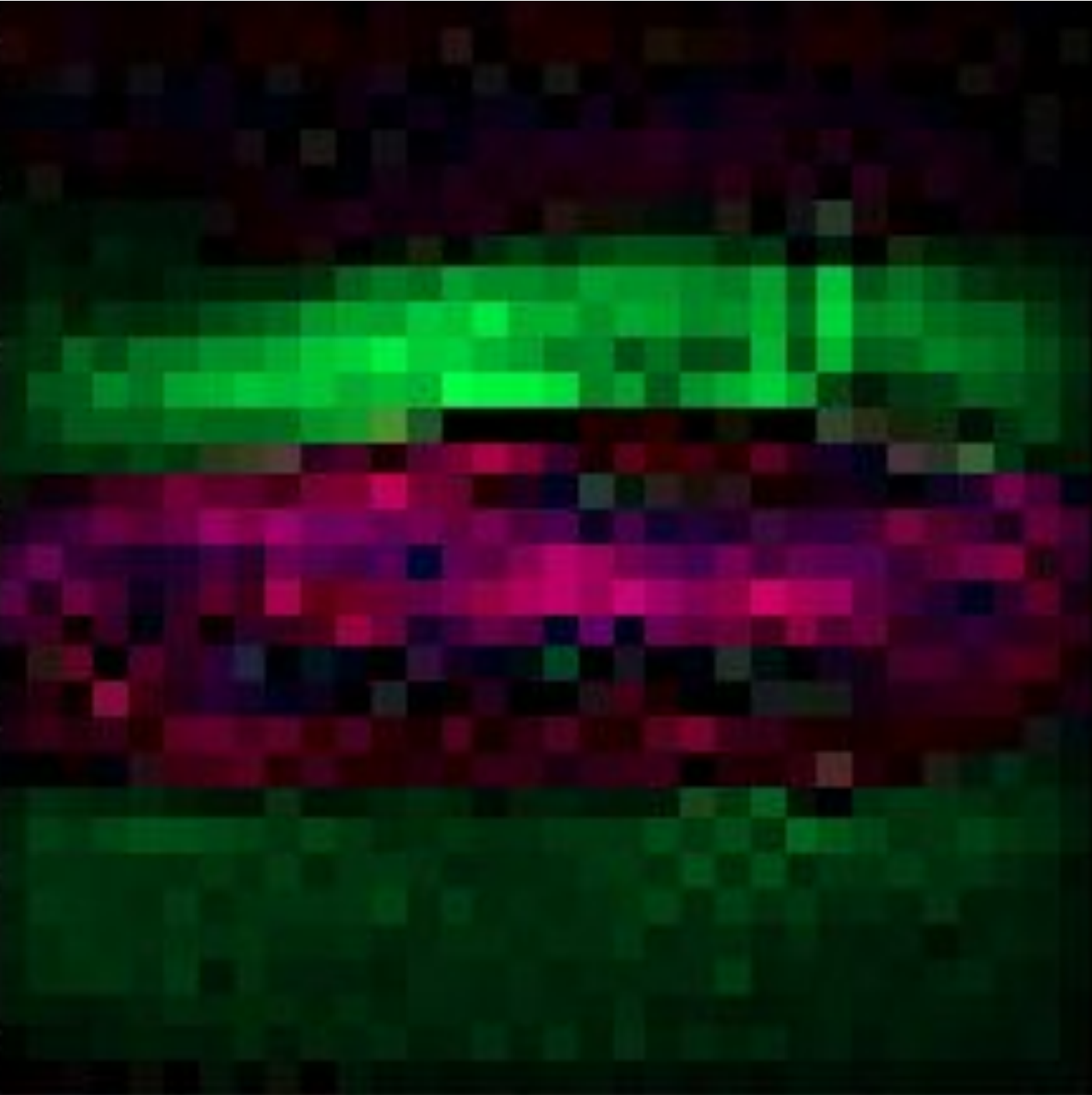}} & \parbox[l]{1em}{\includegraphics[width=3em]{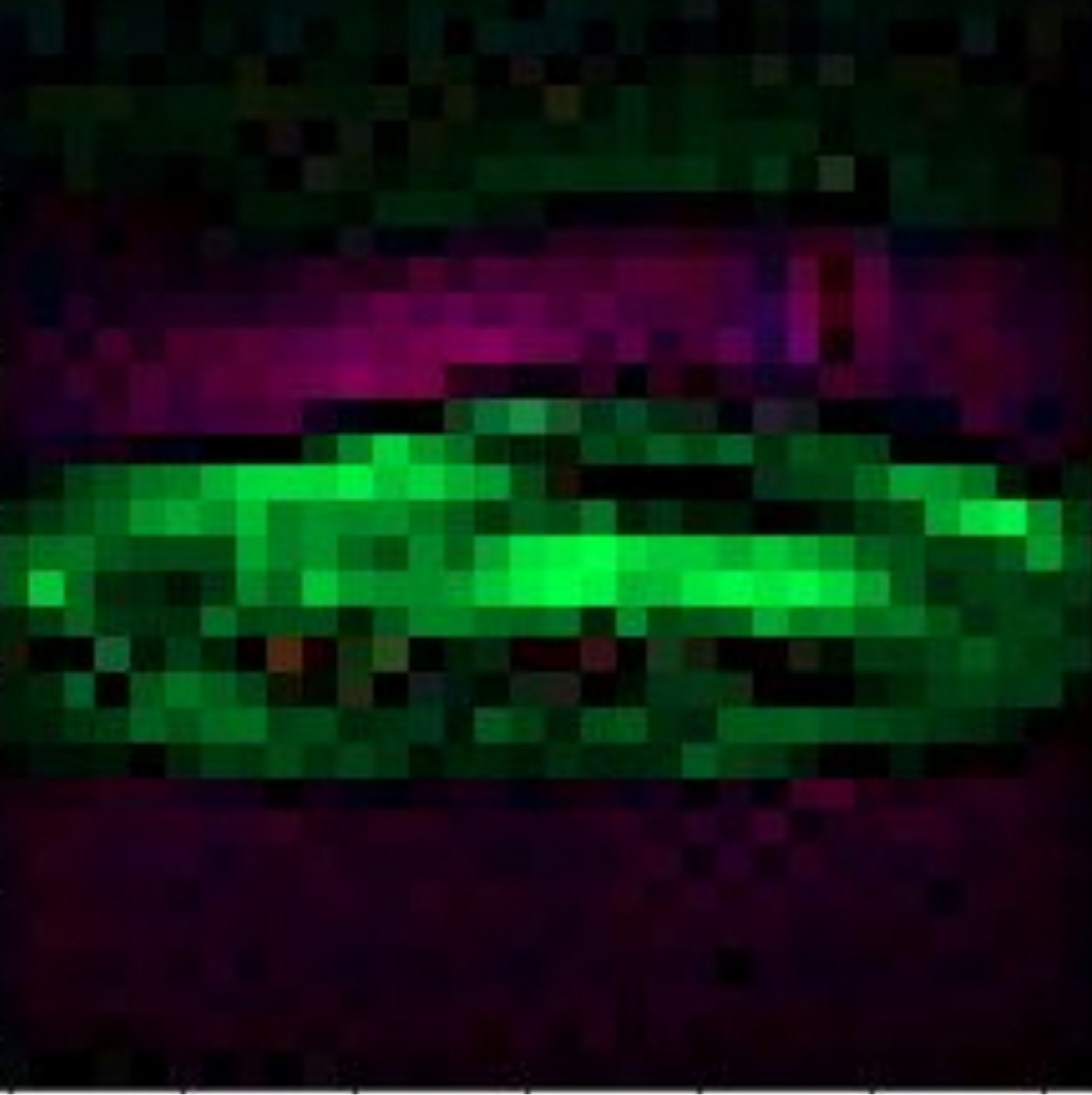}} && \parbox[l]{1em}{\includegraphics[width=3em]{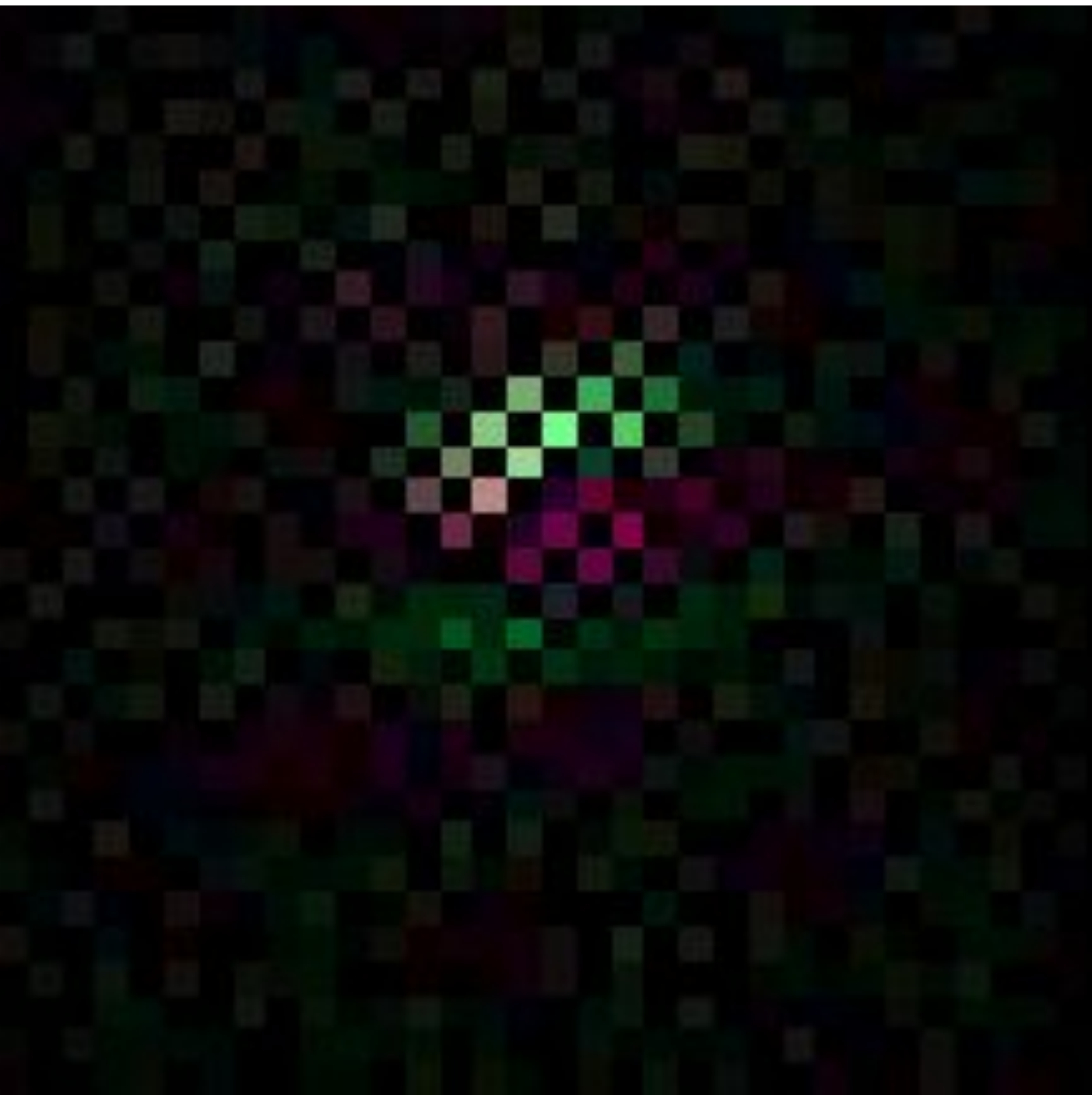}} & \parbox[l]{1em}{\includegraphics[width=3em]{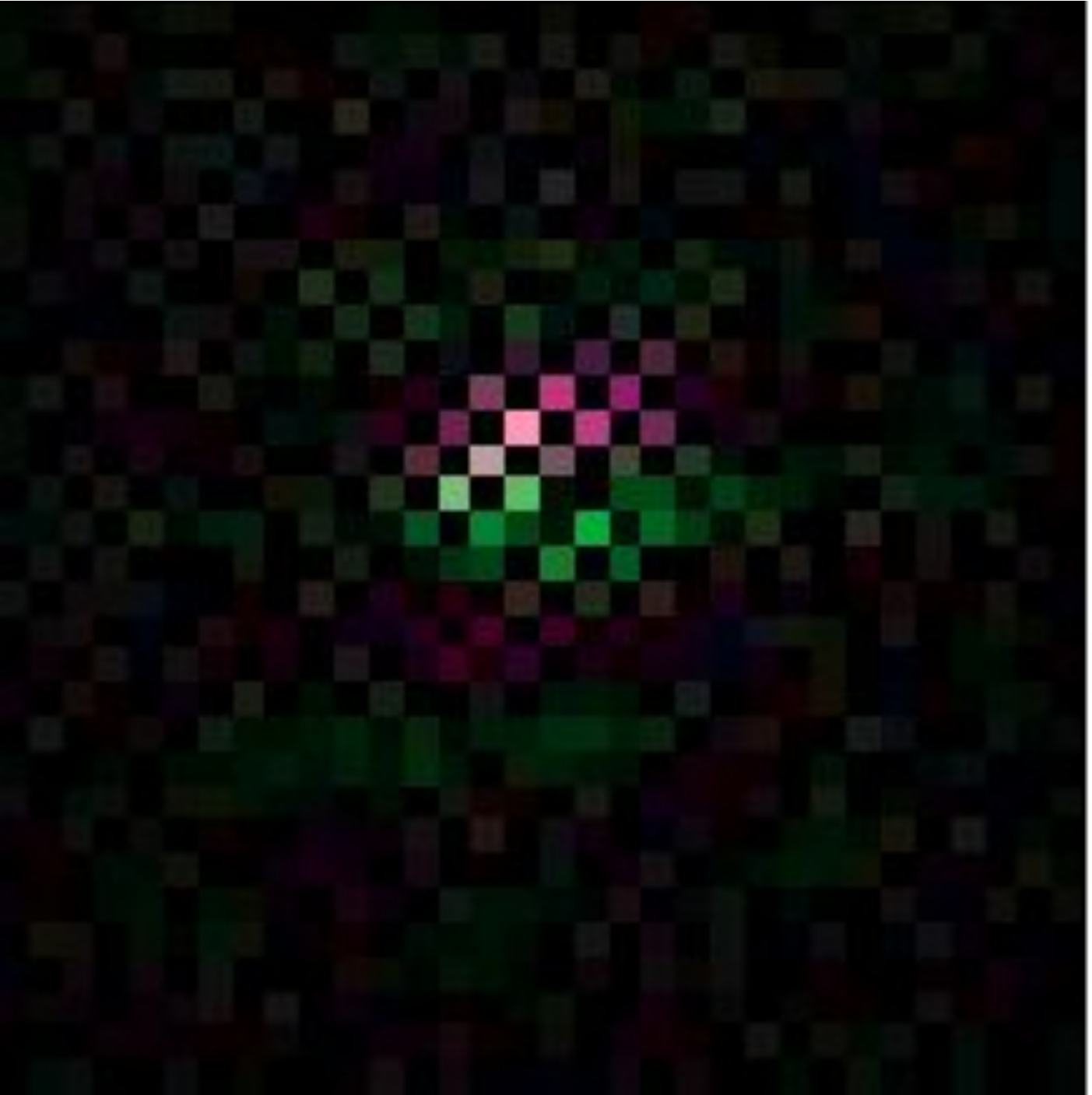}} \\
        \parbox[l]{1em}{\includegraphics[width=3em]{images/c_overlay.pdf}} & \parbox[l]{1em}{\includegraphics[width=3em]{images/c_overlay_sample.pdf}} & Cat & \parbox[l]{1em}{\includegraphics[width=3em]{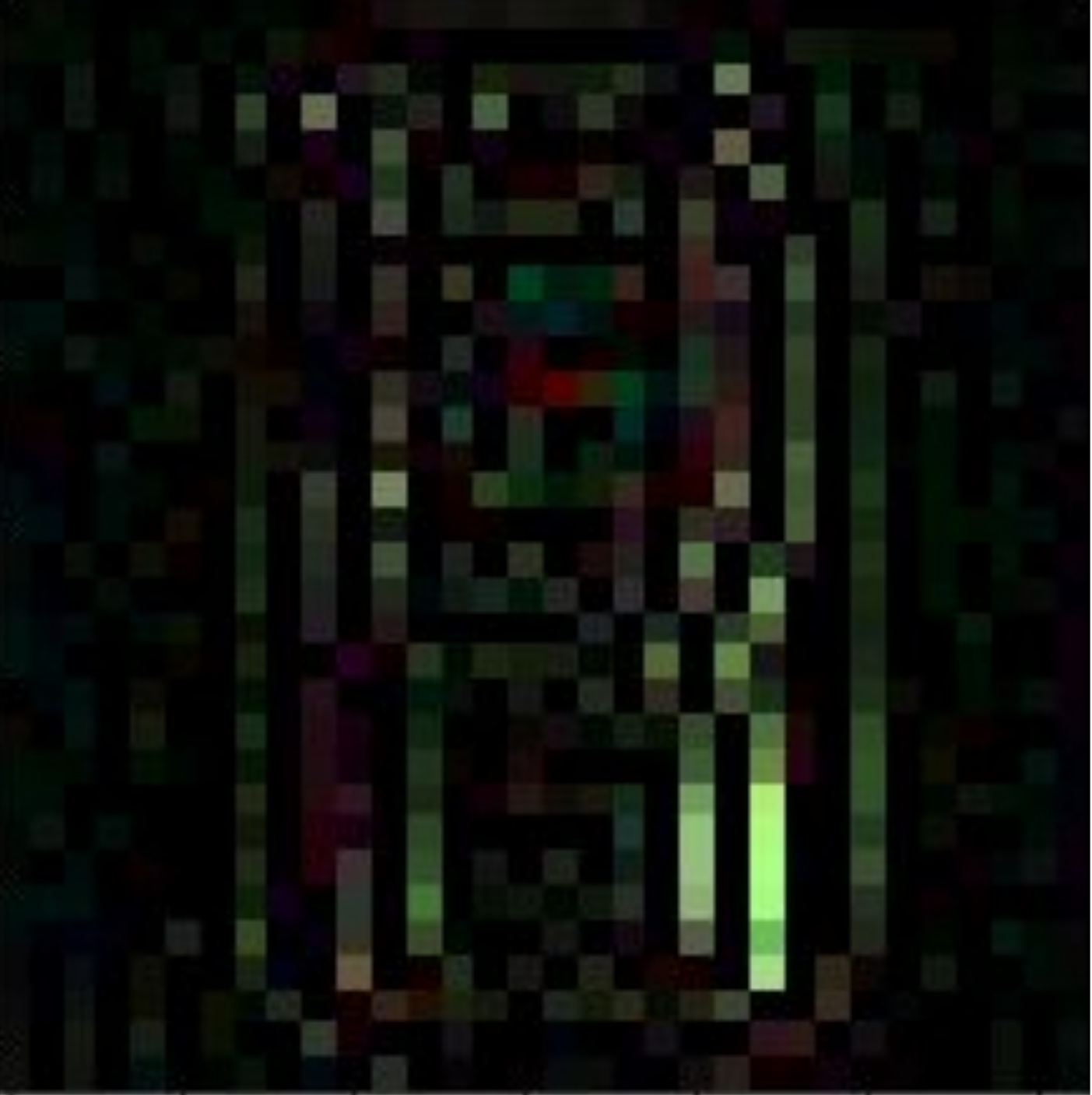}} & \parbox[l]{1em}{\includegraphics[width=3em]{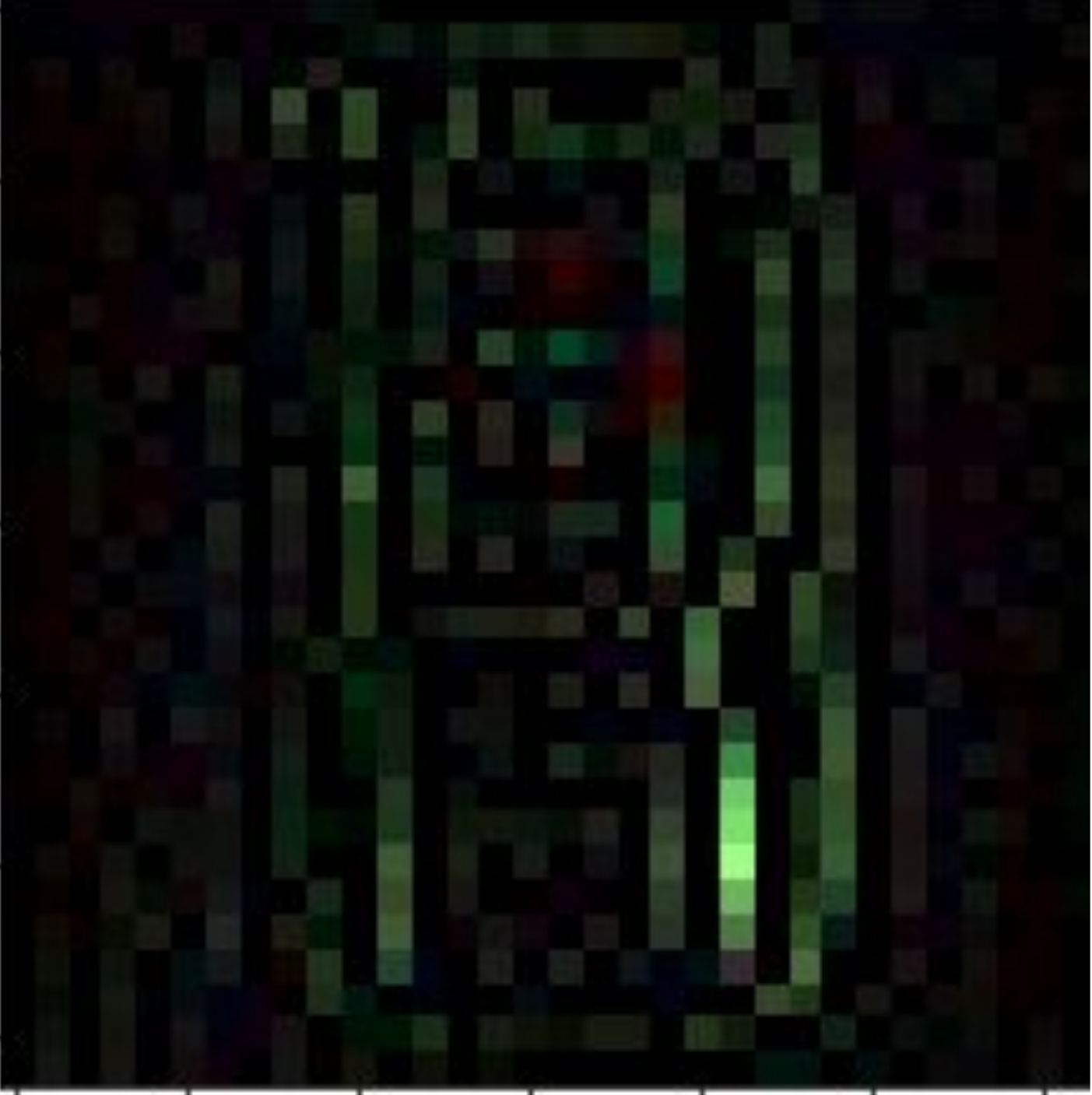}} && \parbox[l]{1em}{\includegraphics[width=3em]{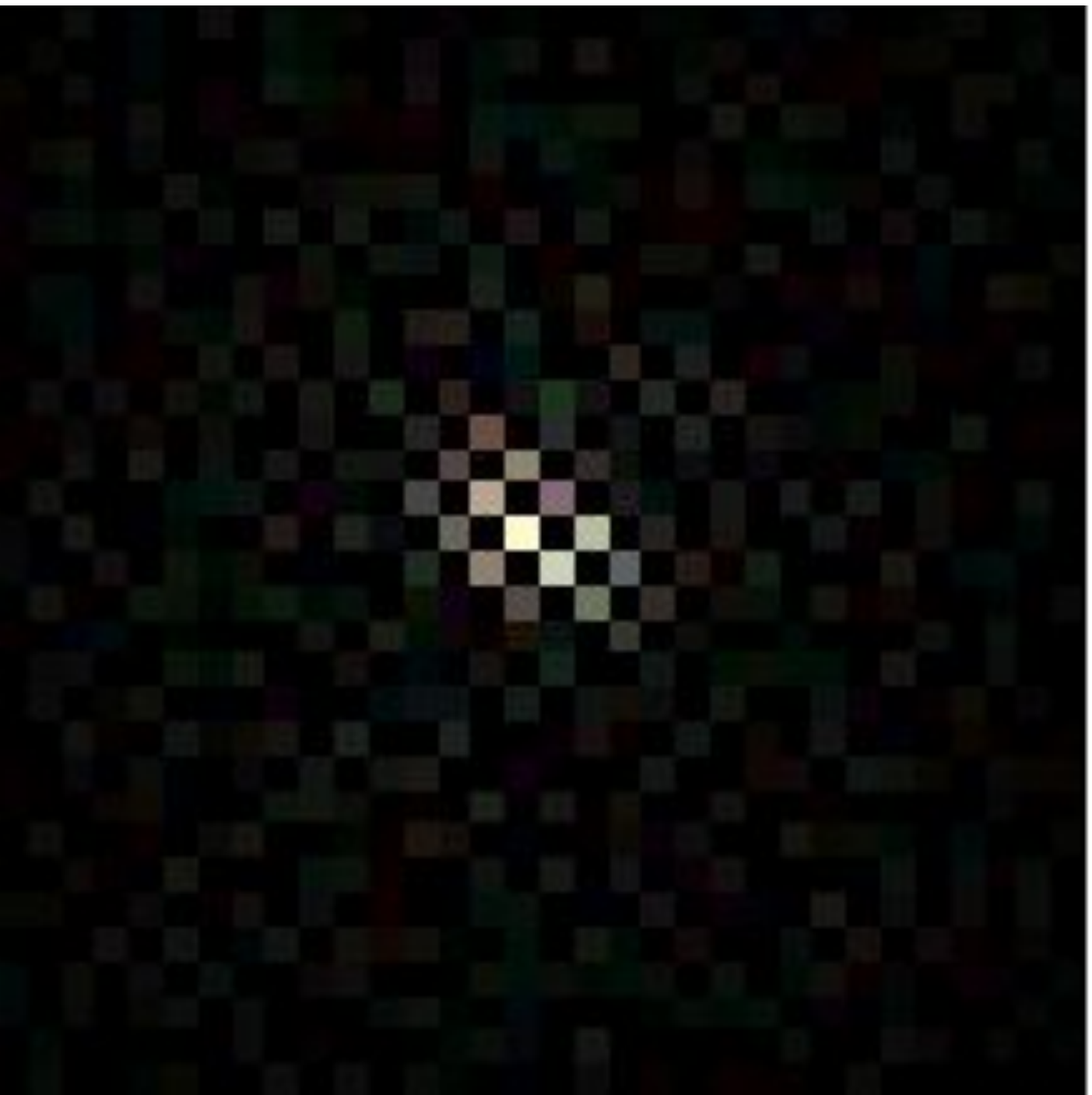}} & \parbox[l]{1em}{\includegraphics[width=3em]{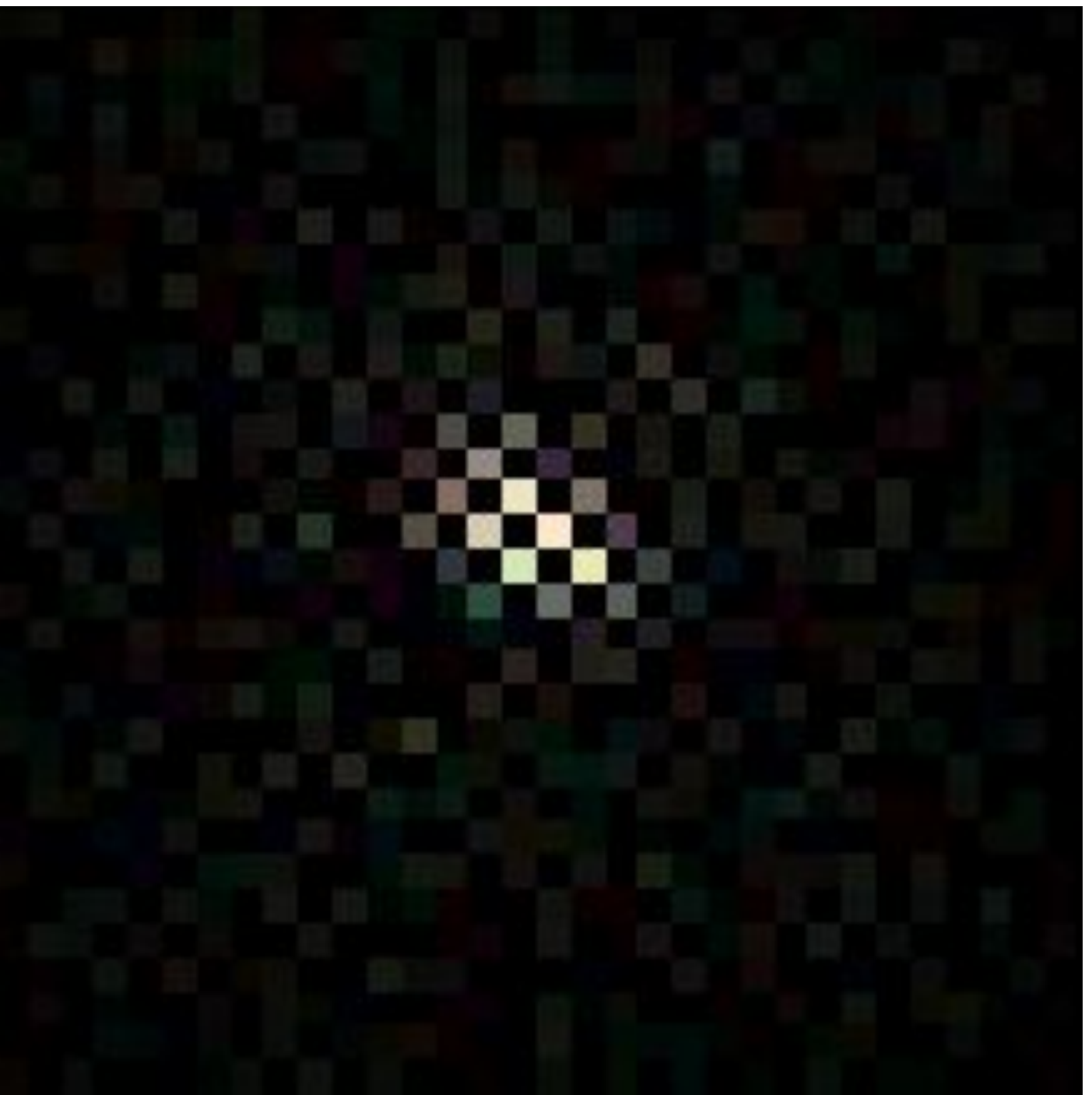}} \\
        \parbox[l]{1em}{\includegraphics[width=3em]{images/d_overlay.pdf}} & \parbox[l]{1em}{\includegraphics[width=3em]{images/d_overlay_sample.pdf}} & Bird & \parbox[l]{1em}{\includegraphics[width=3em]{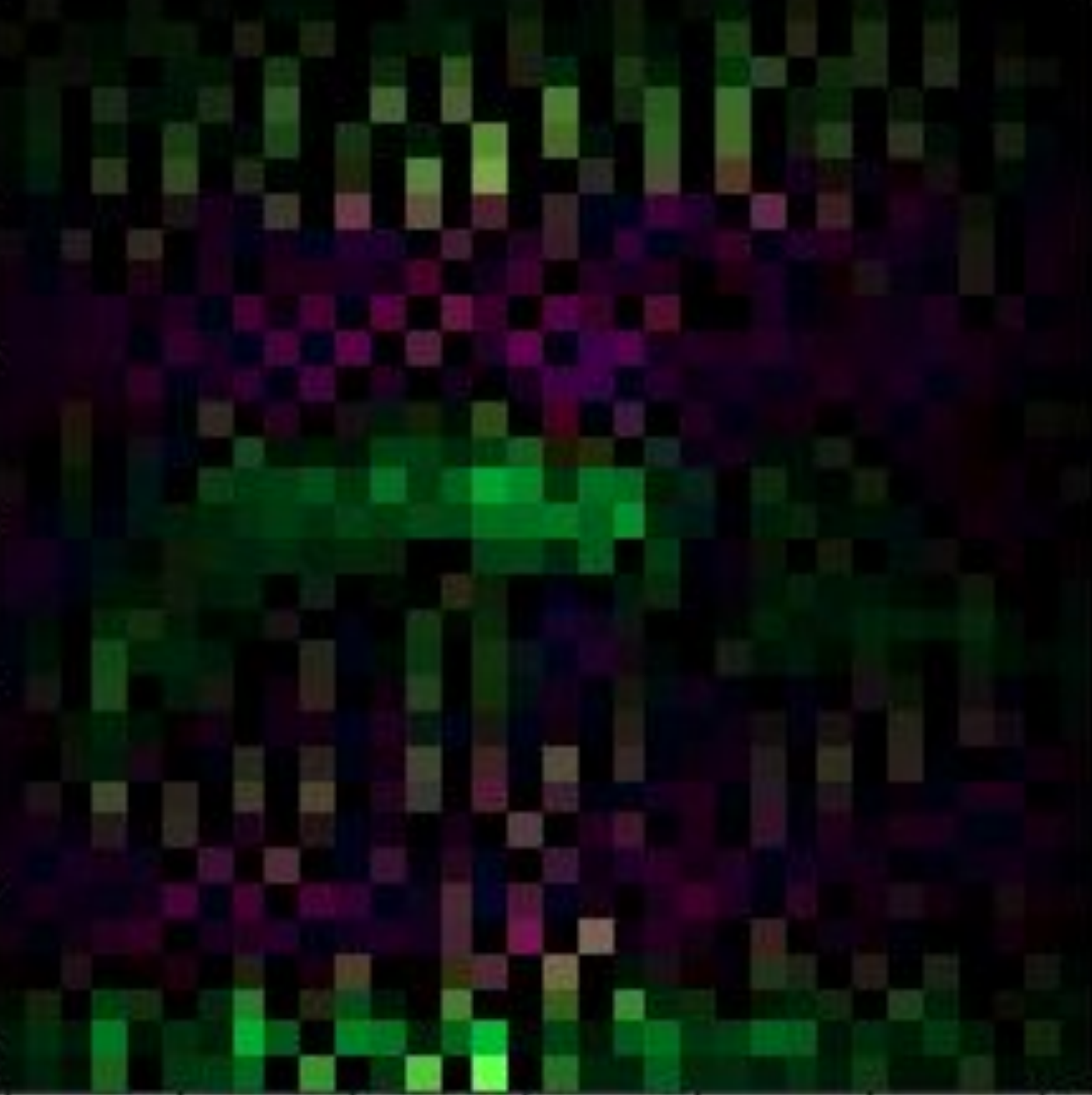}} & \parbox[l]{1em}{\includegraphics[width=3em]{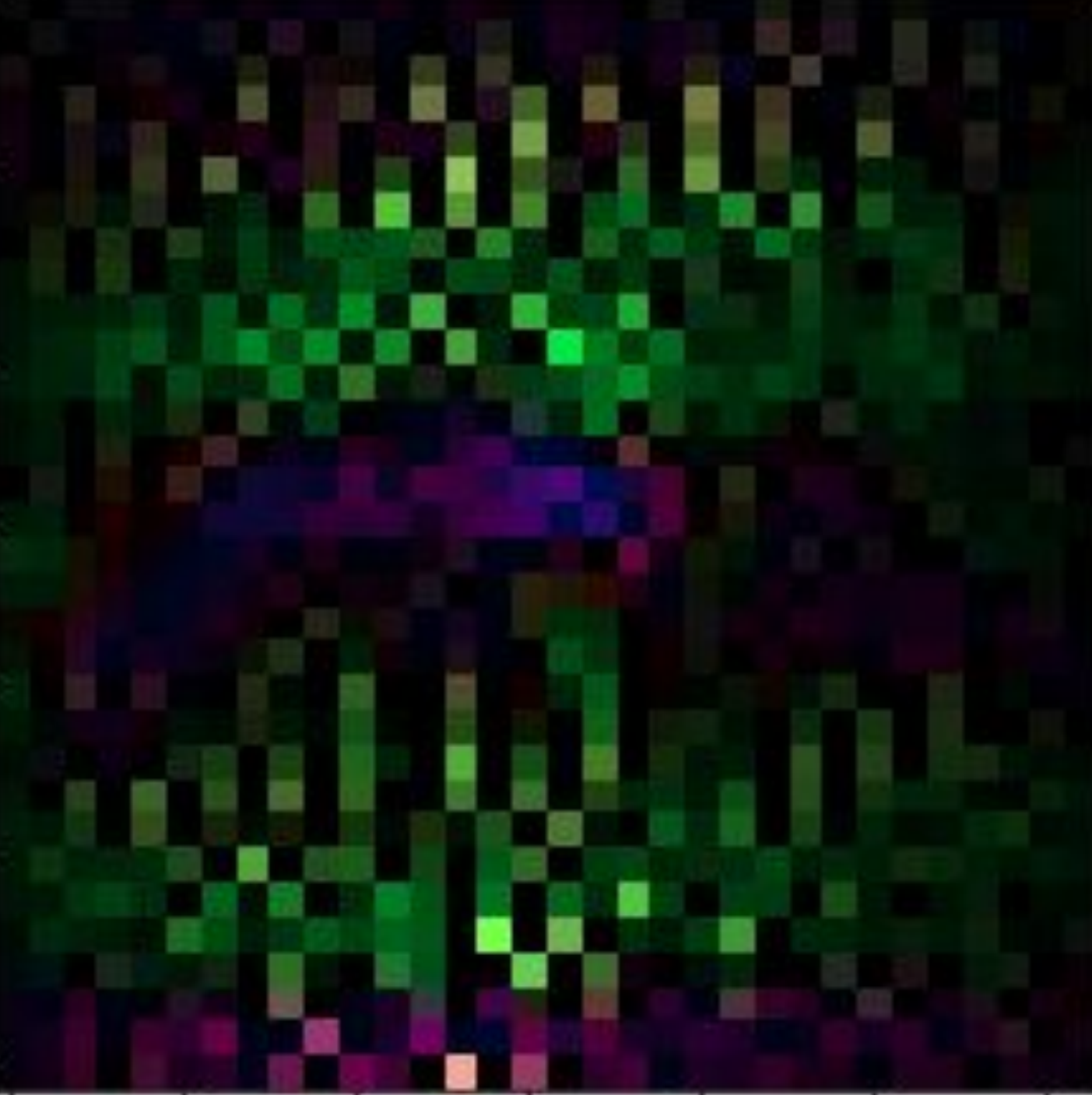}} && \parbox[l]{1em}{\includegraphics[width=3em]{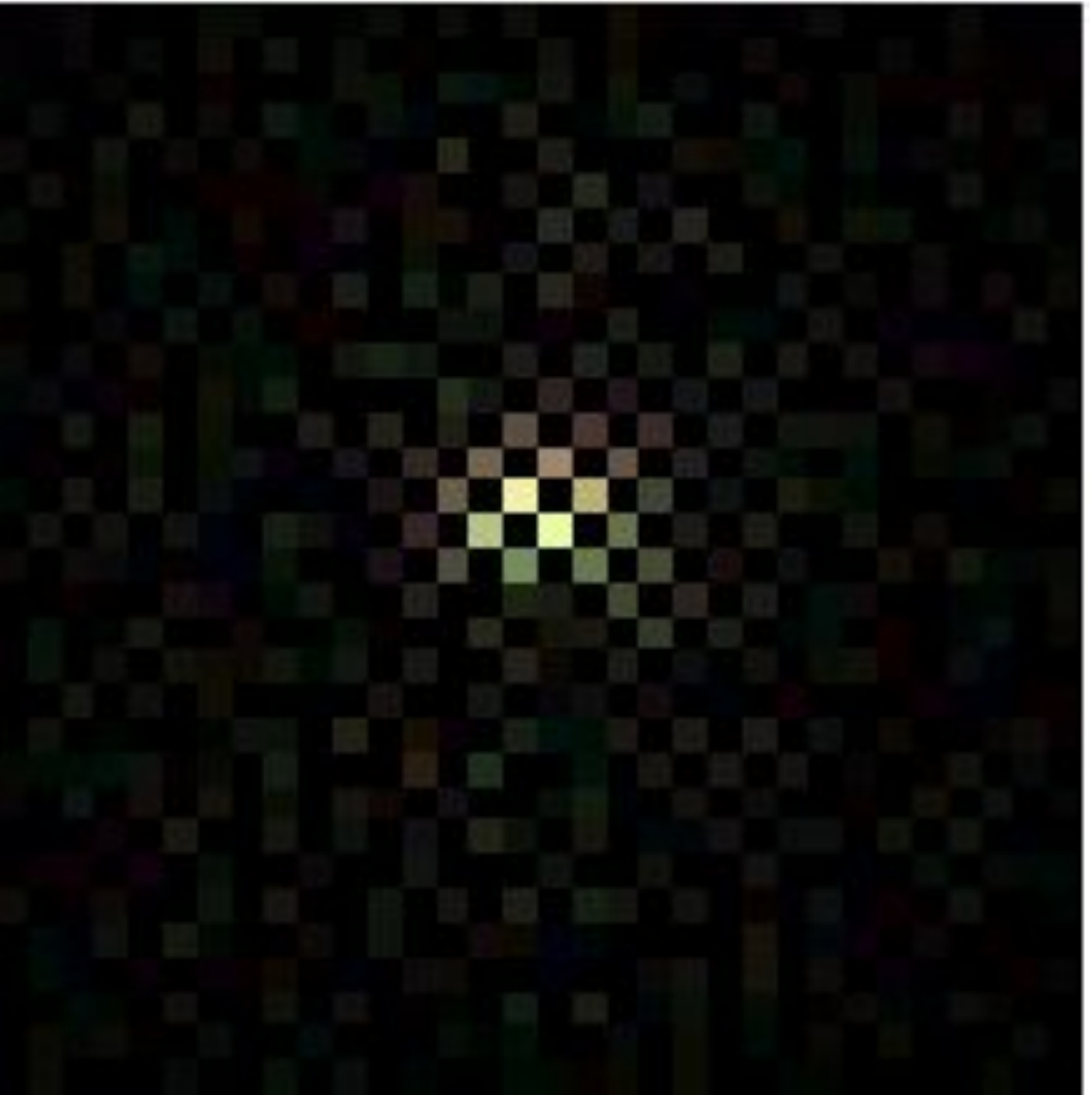}} & \parbox[l]{1em}{\includegraphics[width=3em]{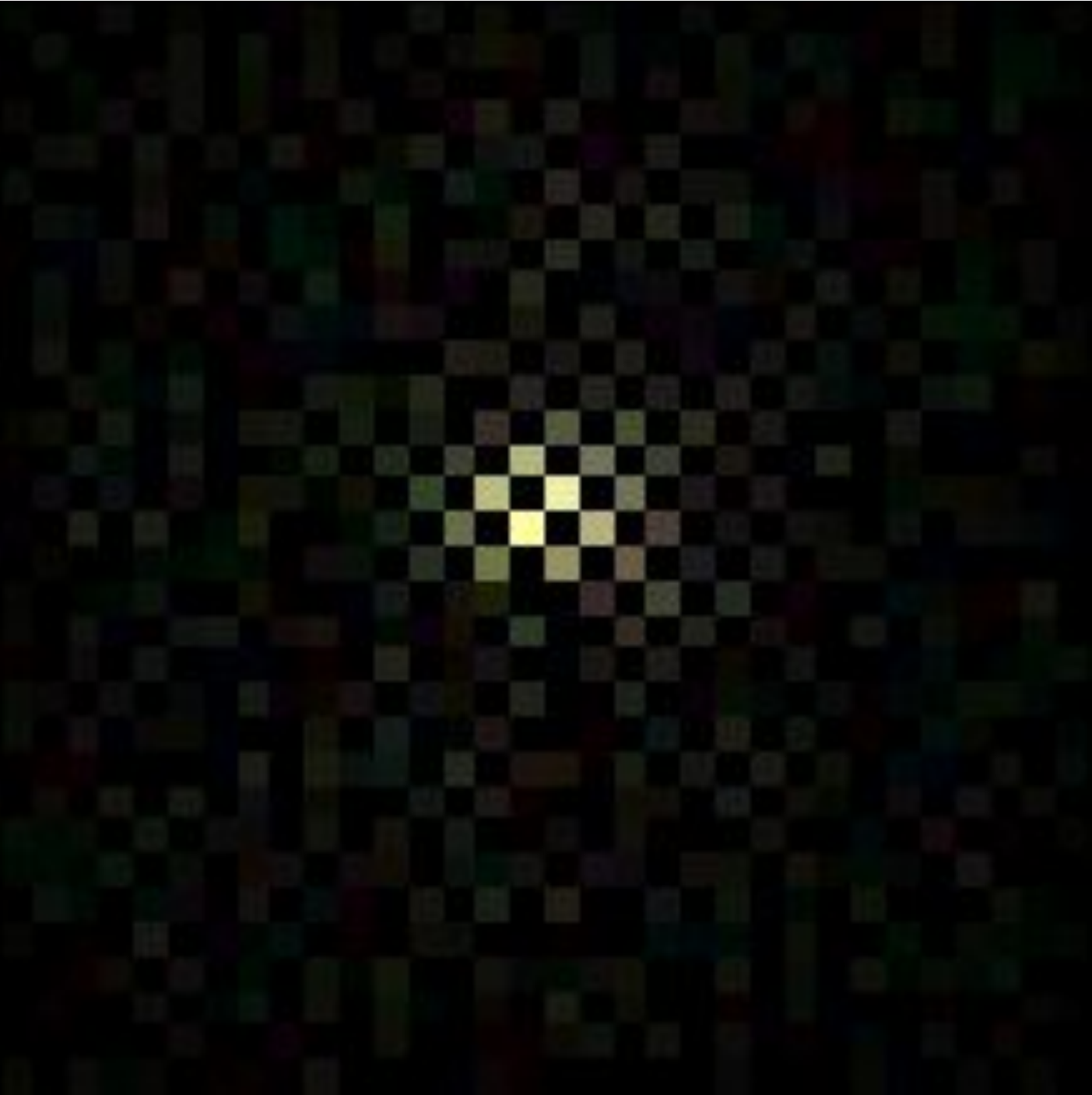}} \\
        \parbox[l]{1em}{\includegraphics[width=3em]{images/e_overlay.pdf}} & \parbox[l]{1em}{\includegraphics[width=3em]{images/e_overlay_sample.pdf}} & Deer & \parbox[l]{1em}{\includegraphics[width=3em]{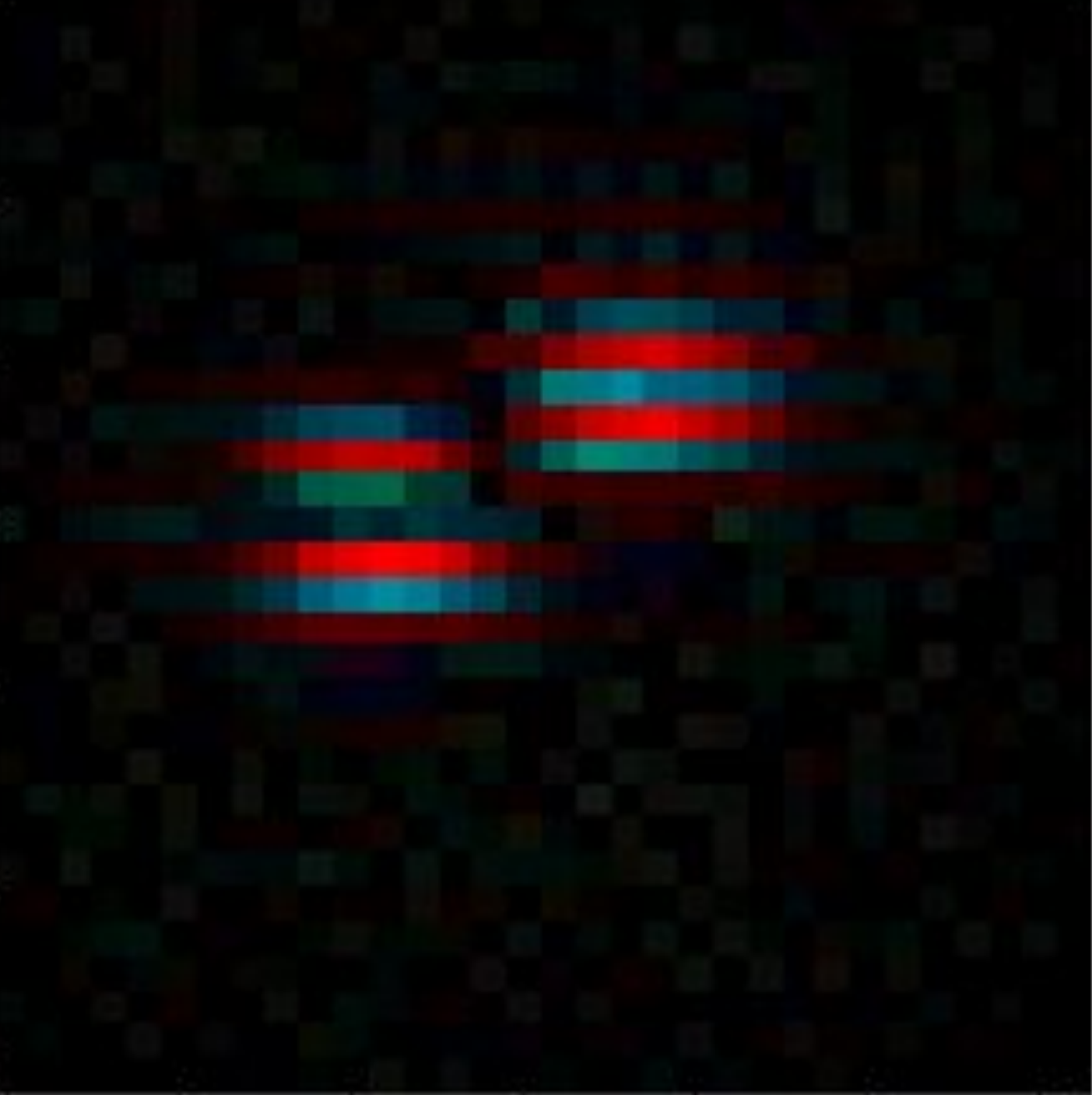}} & \parbox[l]{1em}{\includegraphics[width=3em]{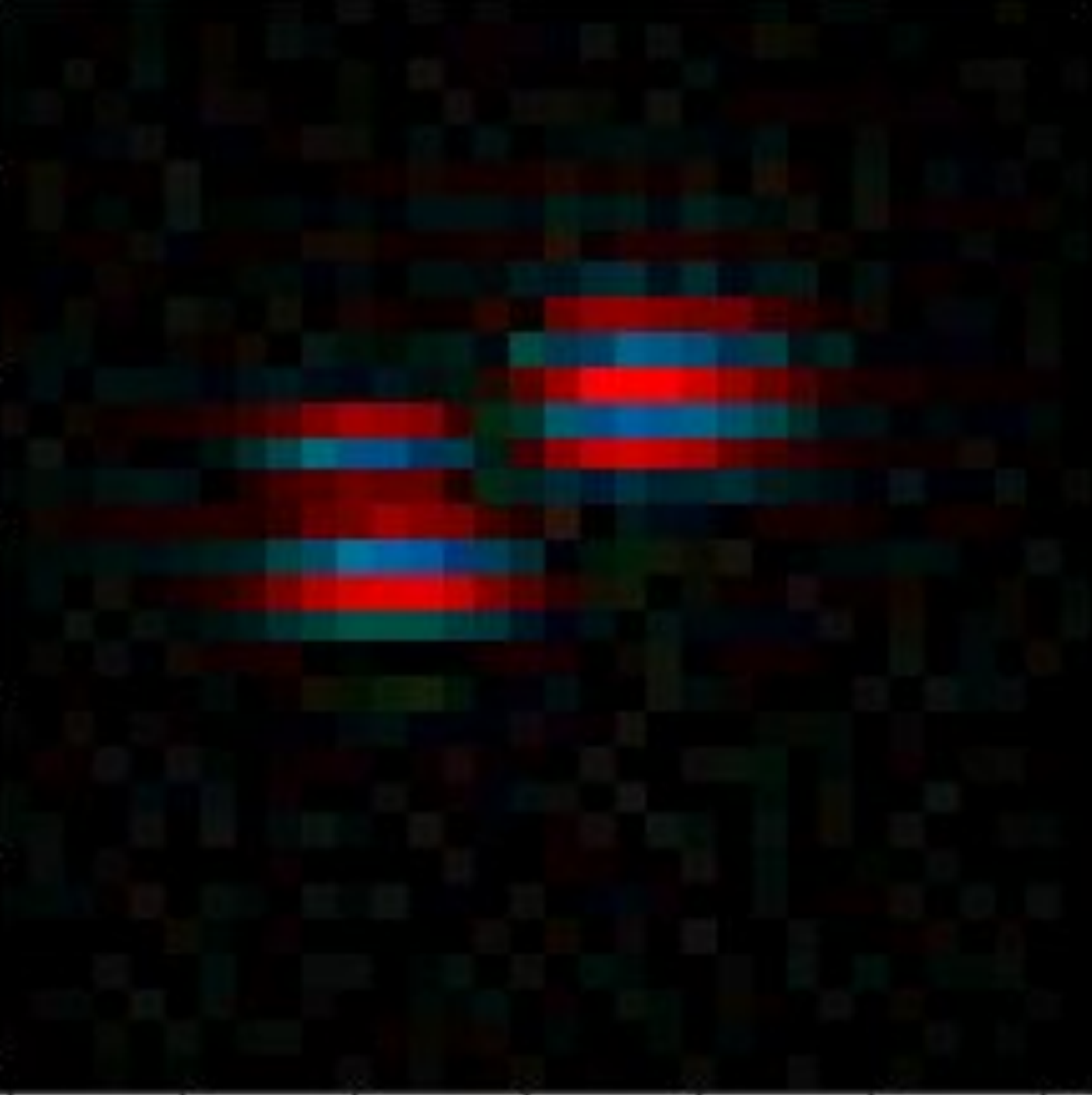}} && \parbox[l]{1em}{\includegraphics[width=3em]{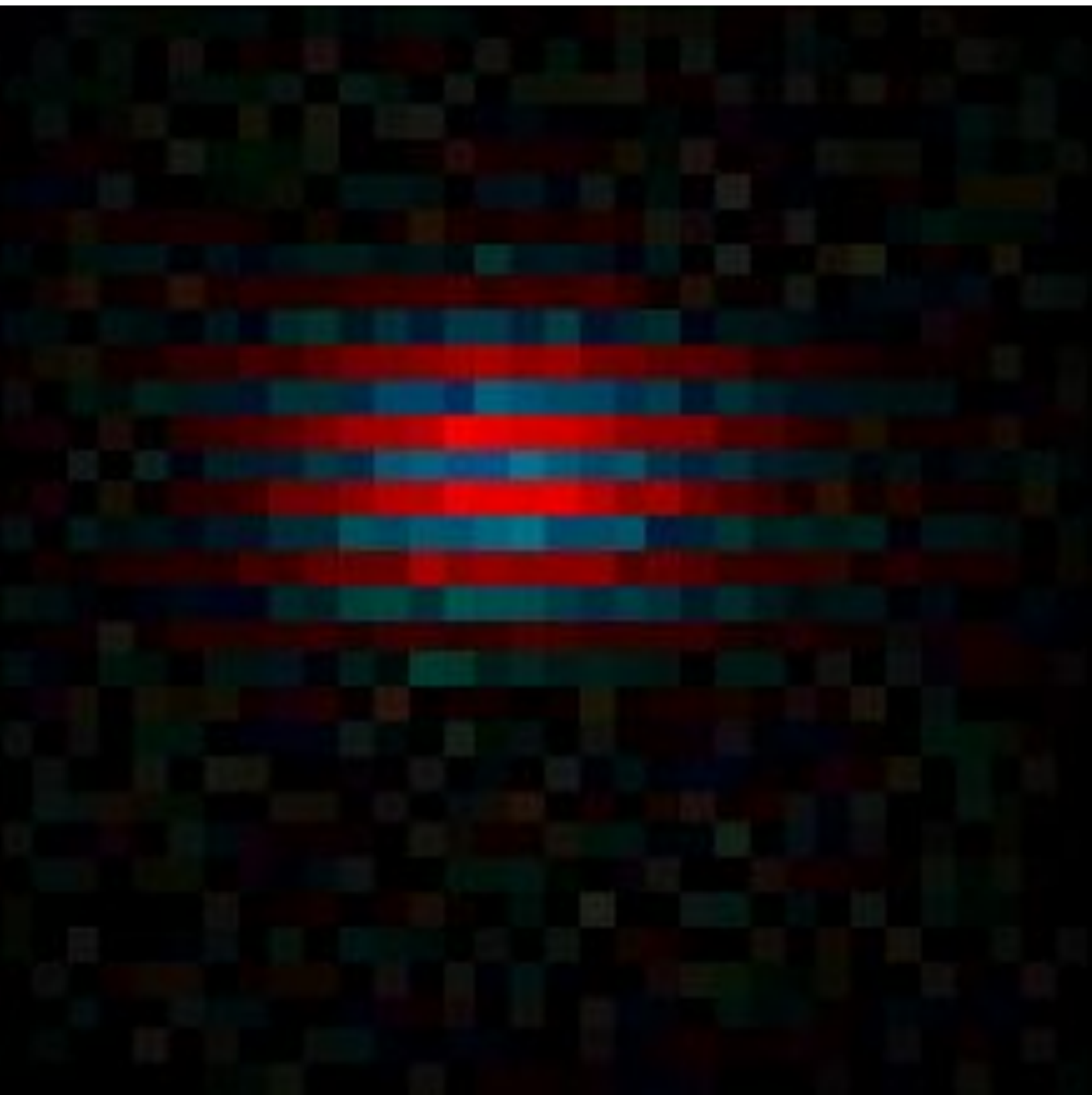}} & \parbox[l]{1em}{\includegraphics[width=3em]{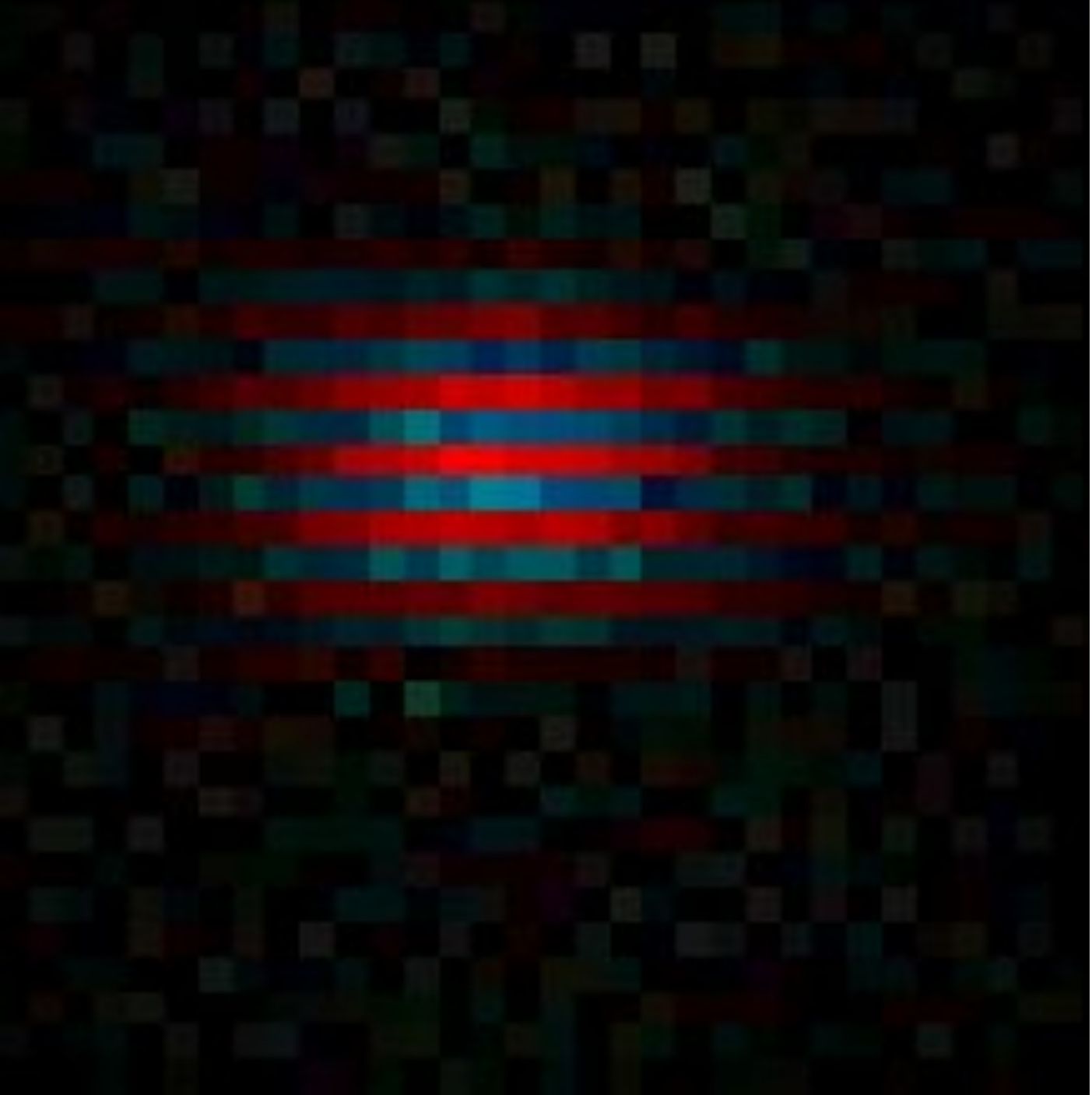}} \\
        \parbox[l]{1em}{\includegraphics[width=3em]{images/f_overlay.pdf}} & \parbox[l]{1em}{\includegraphics[width=3em]{images/f_overlay_sample.pdf}} & Bird & \parbox[l]{1em}{\includegraphics[width=3em]{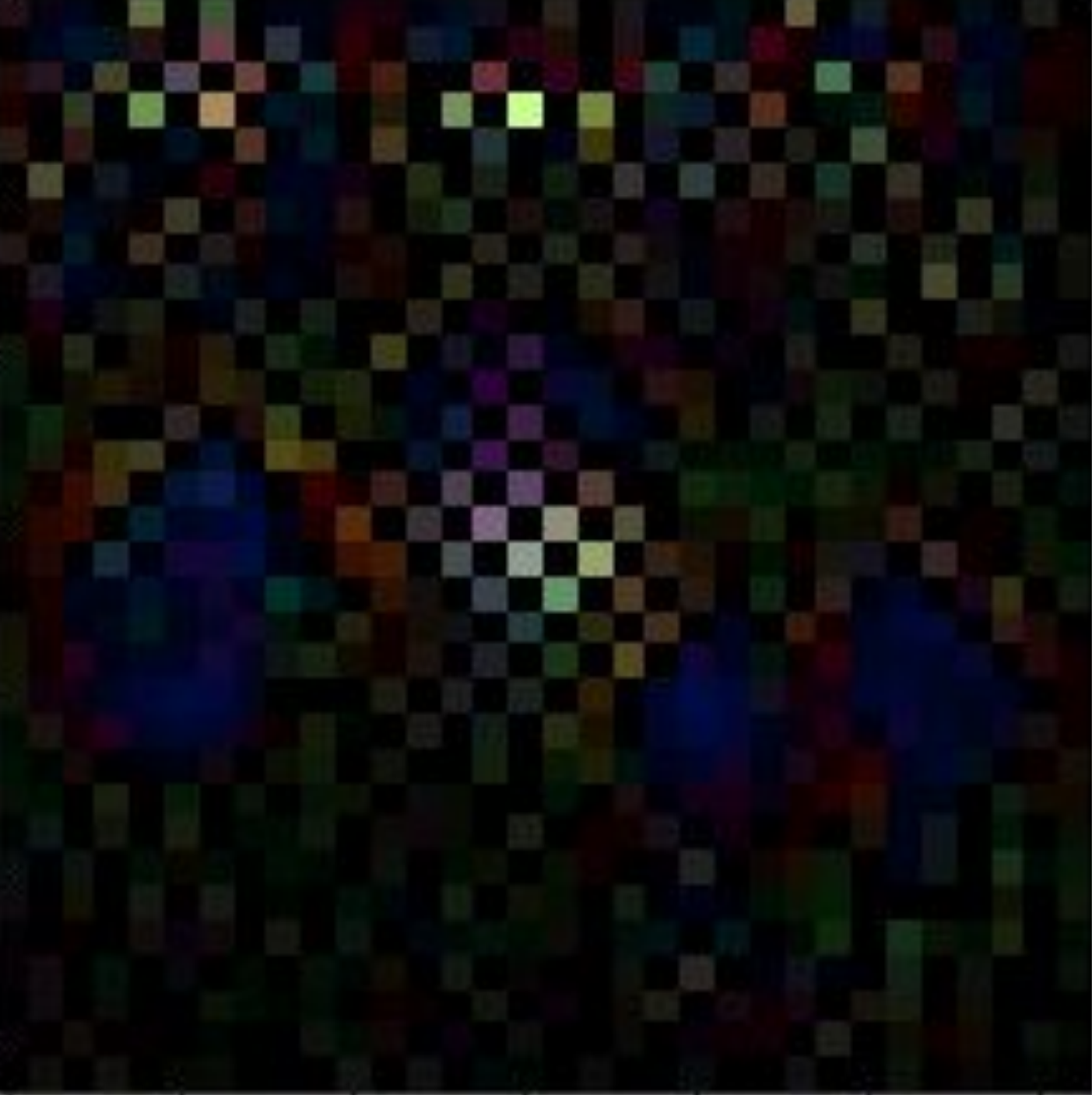}} & \parbox[l]{1em}{\includegraphics[width=3em]{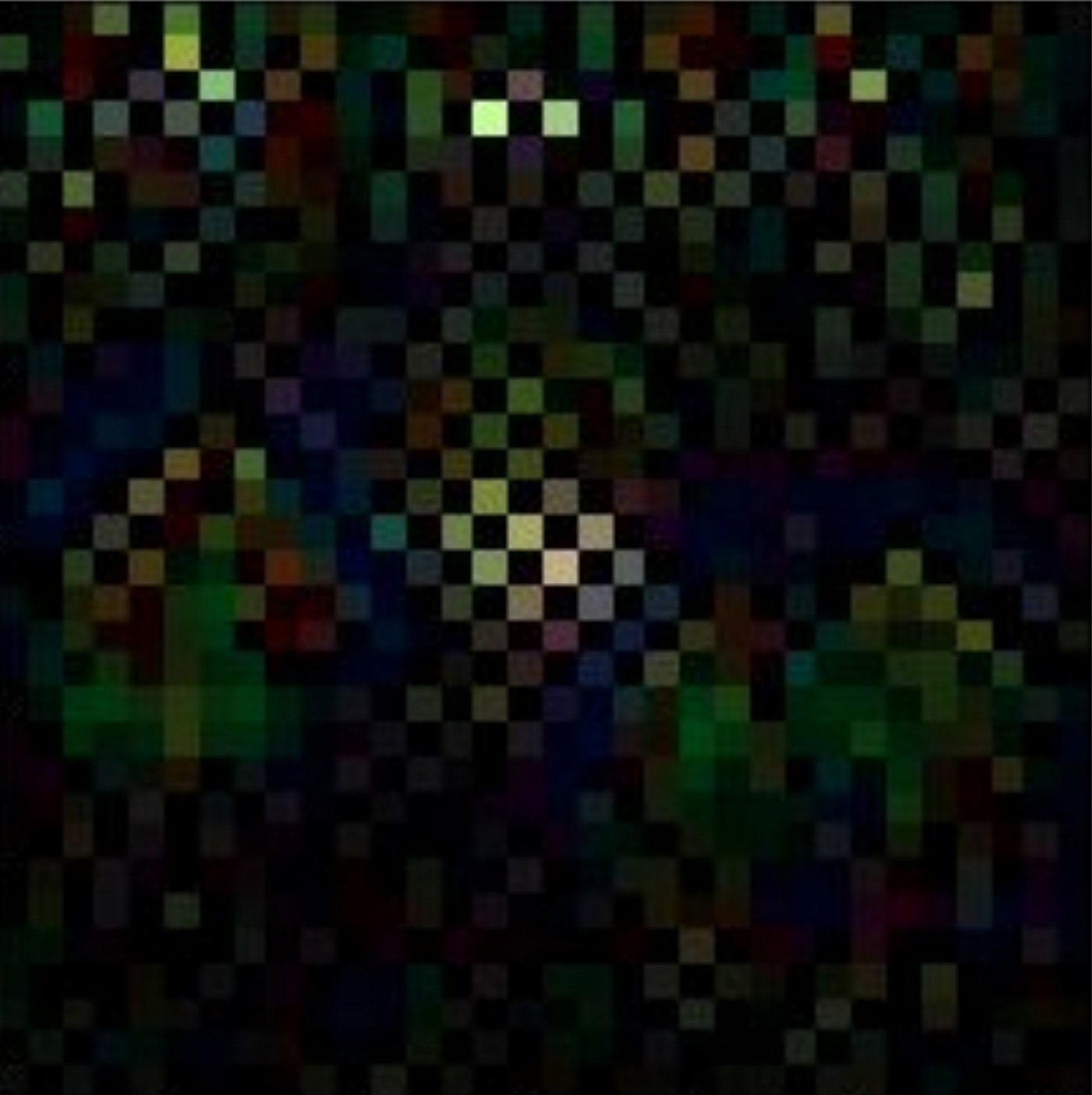}} && \parbox[l]{1em}{\includegraphics[width=3em]{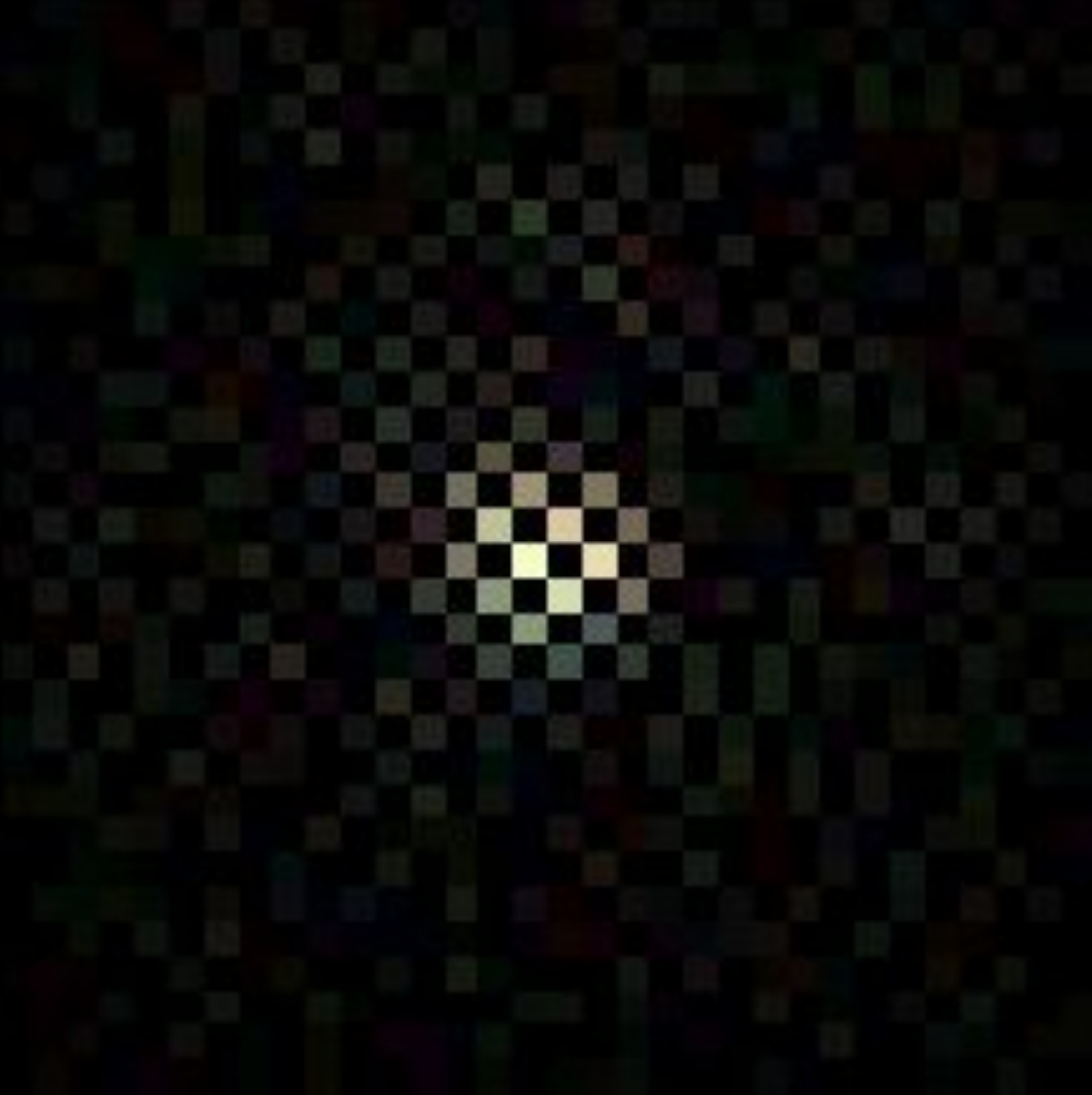}} & \parbox[l]{1em}{\includegraphics[width=3em]{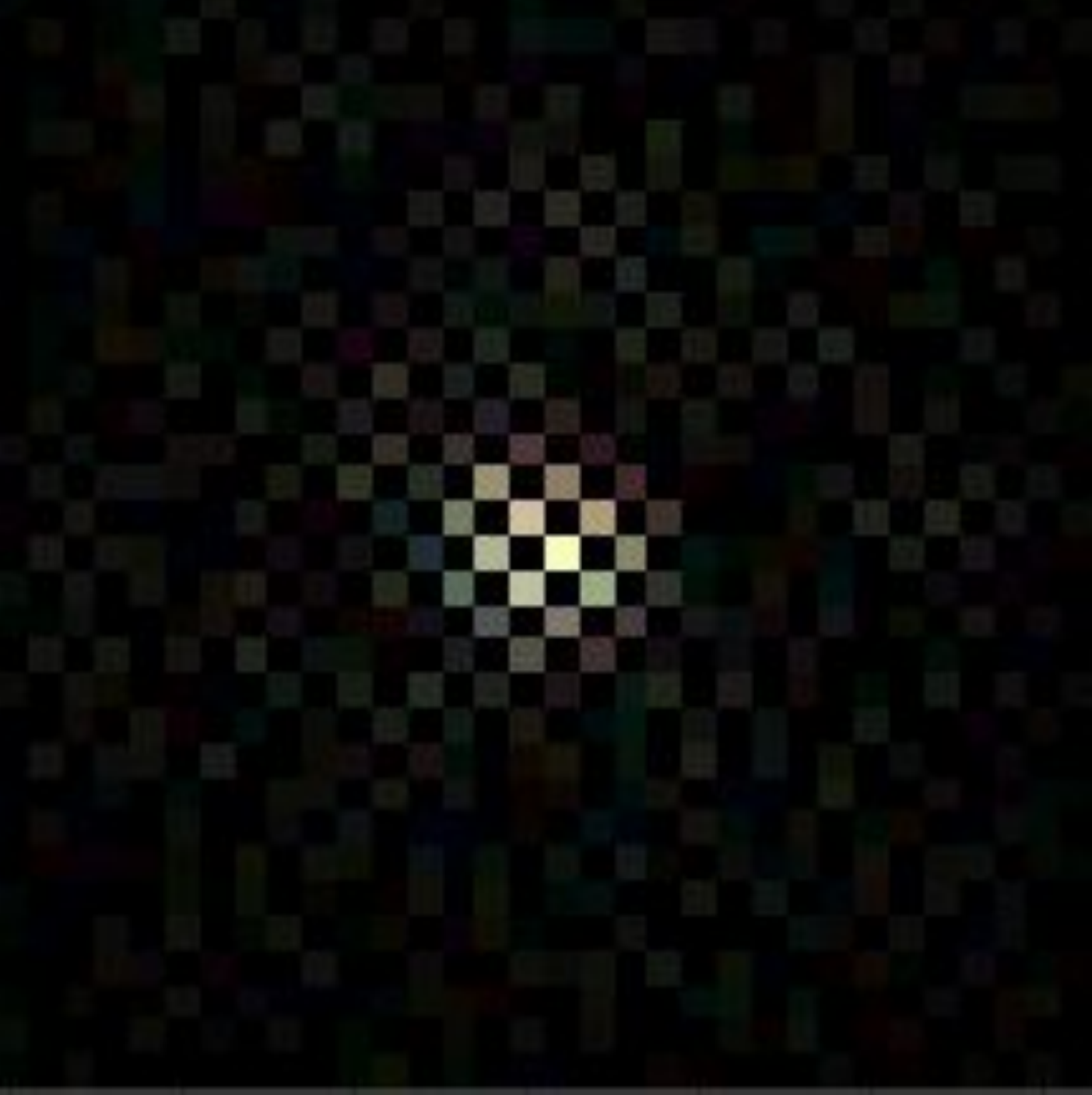}} \\
        \parbox[l]{1em}{\includegraphics[width=3em]{images/g_overlay.pdf}} & \parbox[l]{1em}{\includegraphics[width=3em]{images/g_overlay_sample.pdf}} & Horse & \parbox[l]{1em}{\includegraphics[width=3em]{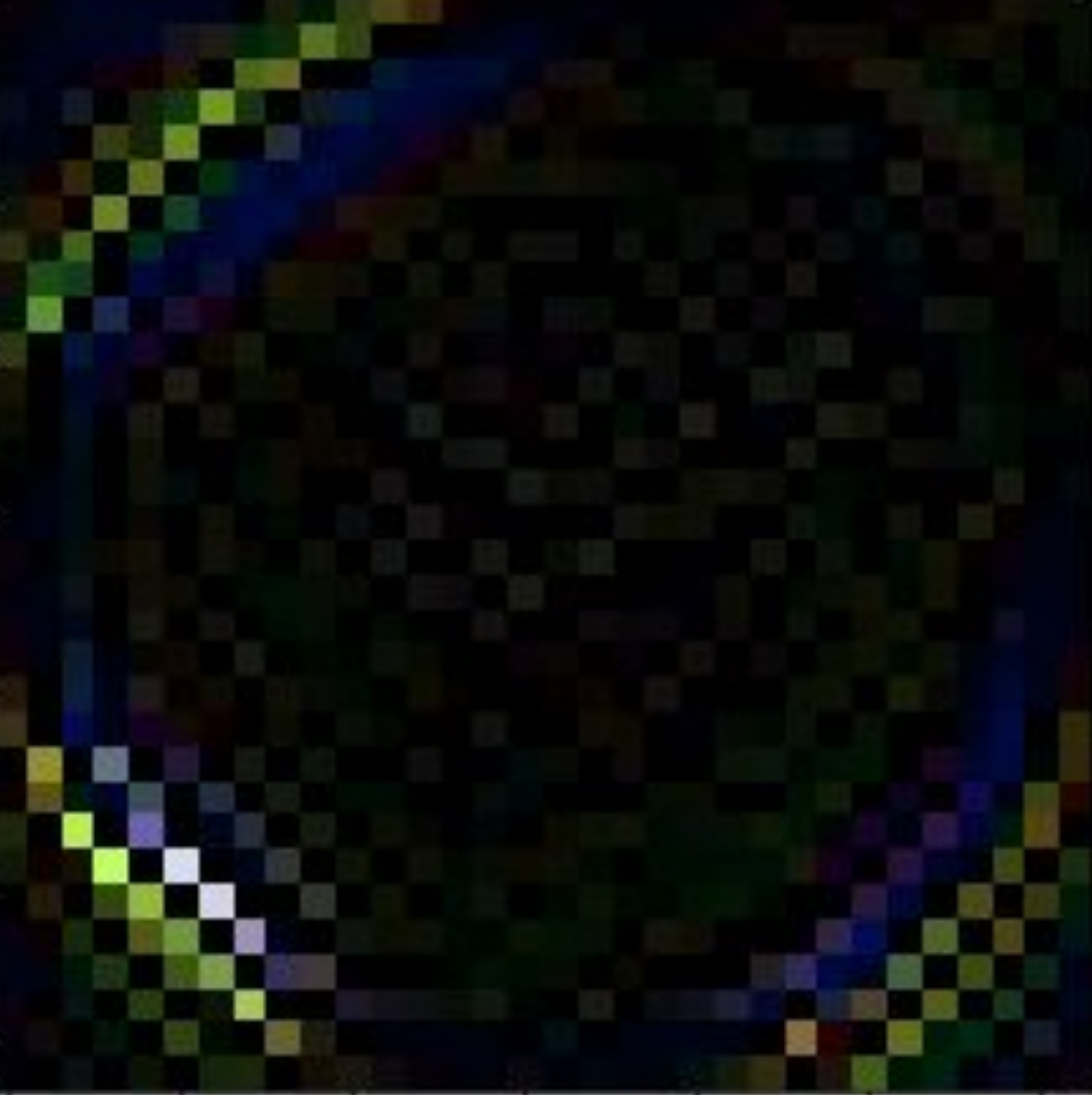}} & \parbox[l]{1em}{\includegraphics[width=3em]{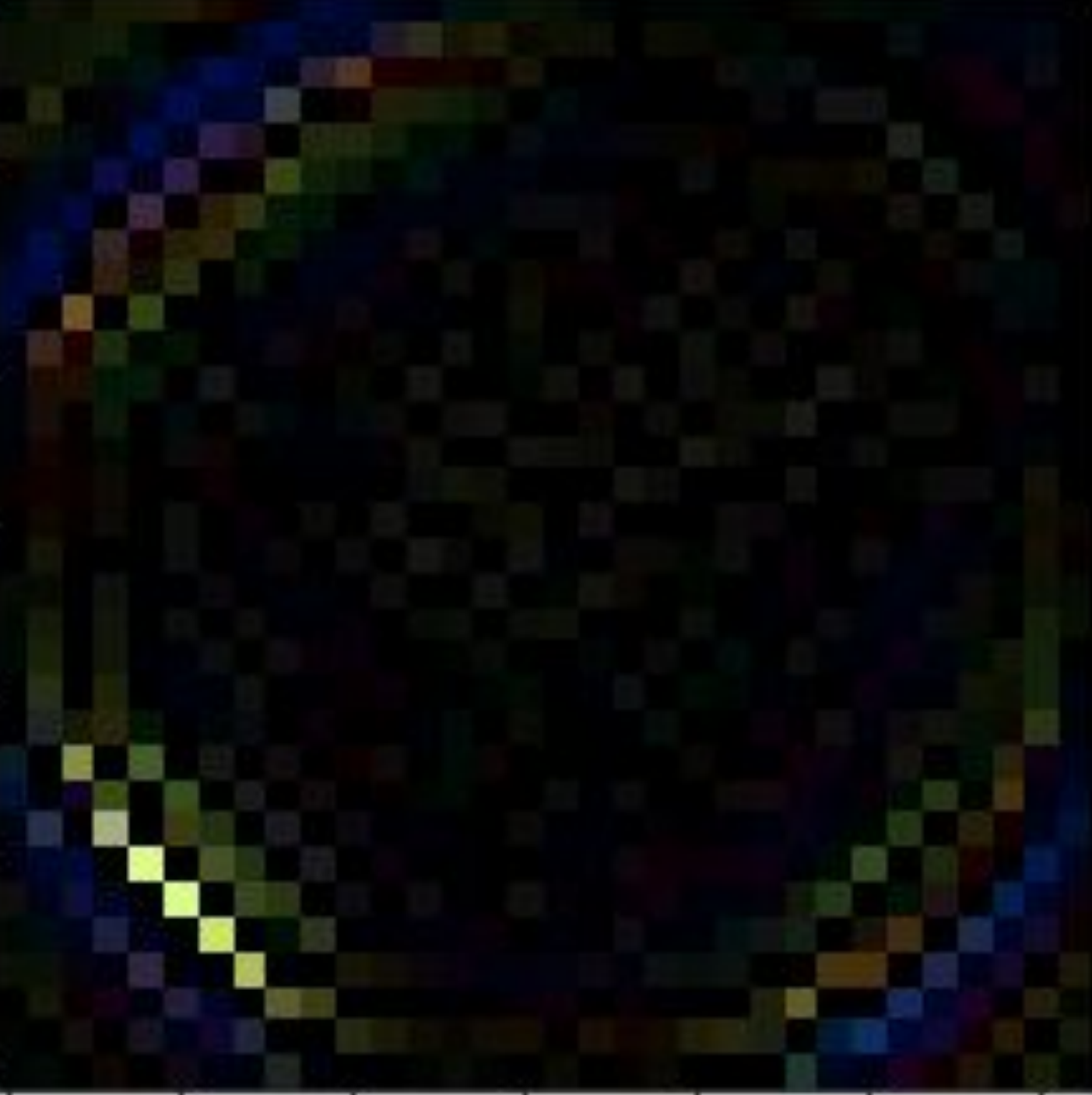}} && \parbox[l]{1em}{\includegraphics[width=3em]{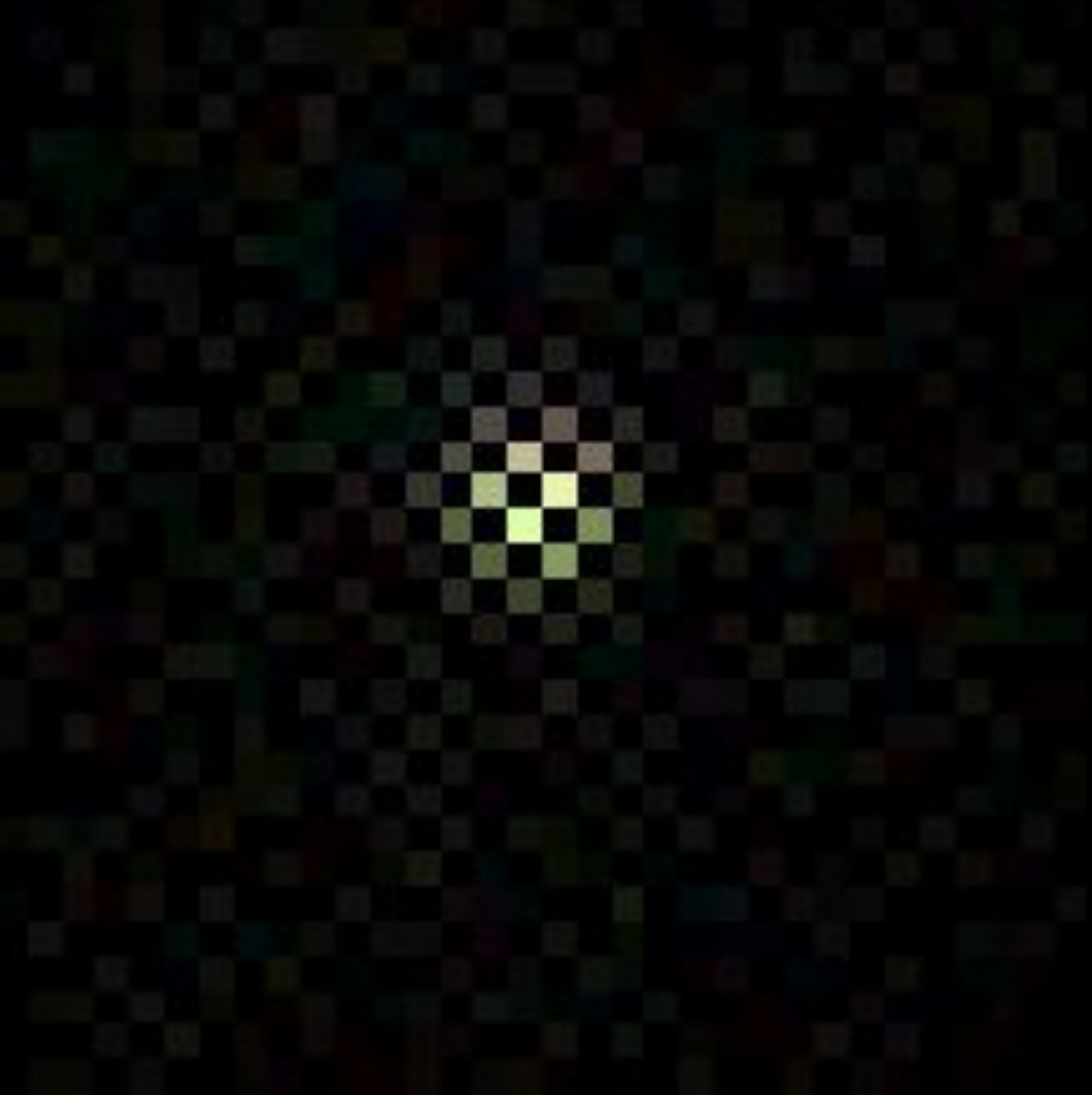}} & \parbox[l]{1em}{\includegraphics[width=3em]{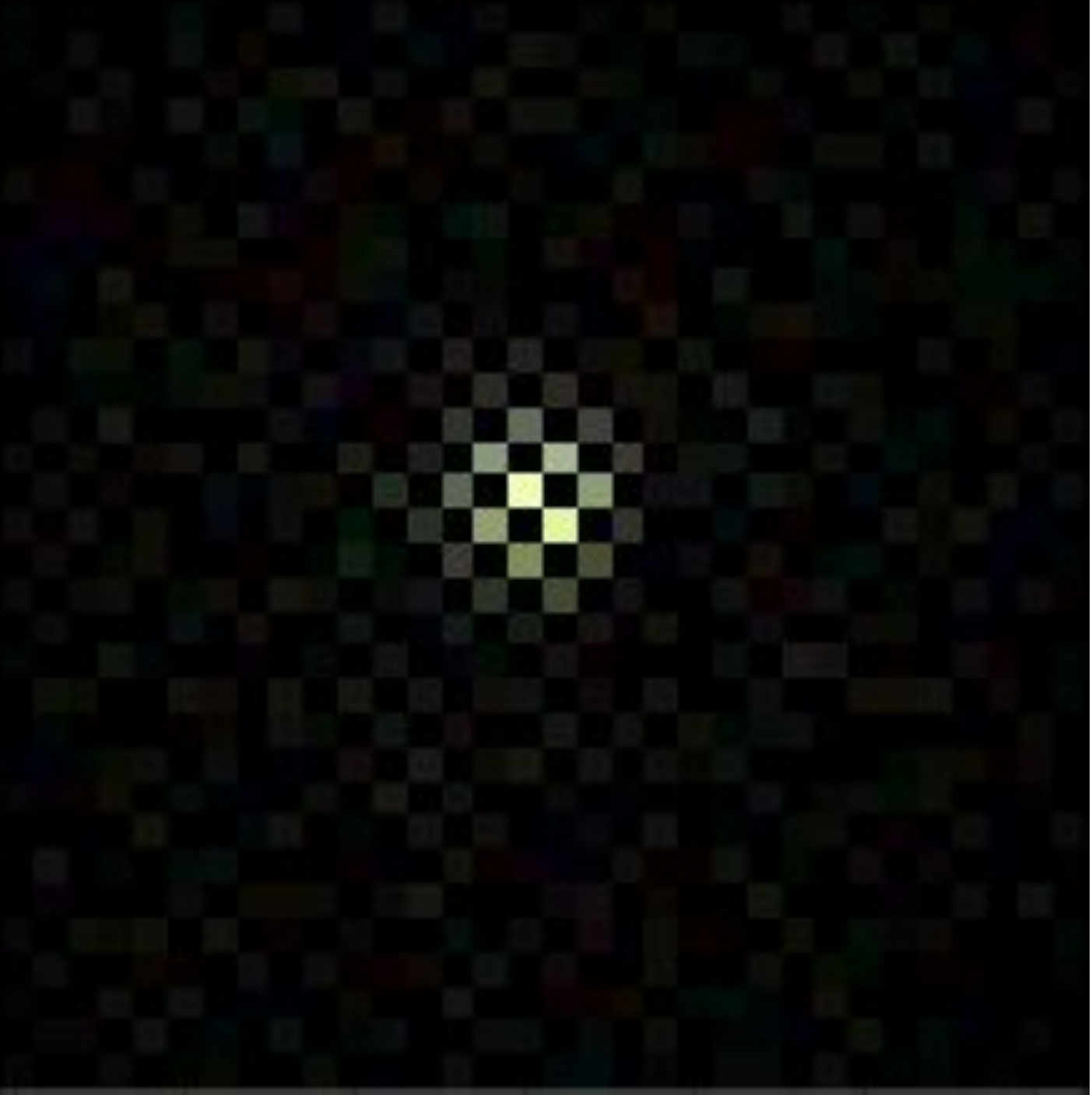}} \\
        \parbox[l]{1em}{\includegraphics[width=3em]{images/h_overlay.pdf}} & \parbox[l]{1em}{\includegraphics[width=3em]{images/h_overlay_sample.pdf}} & Cat & \parbox[l]{1em}{\includegraphics[width=3em]{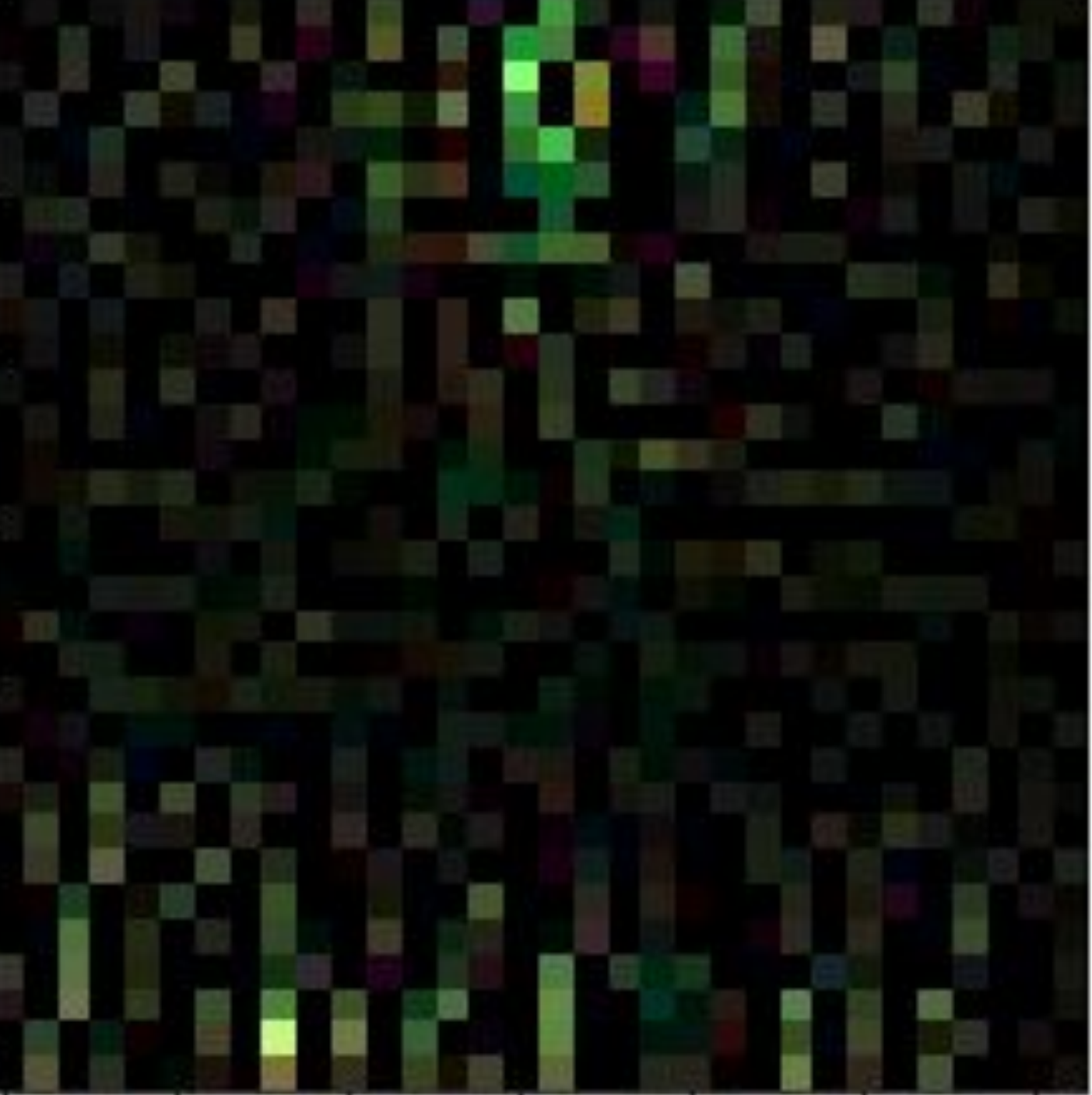}} & \parbox[l]{1em}{\includegraphics[width=3em]{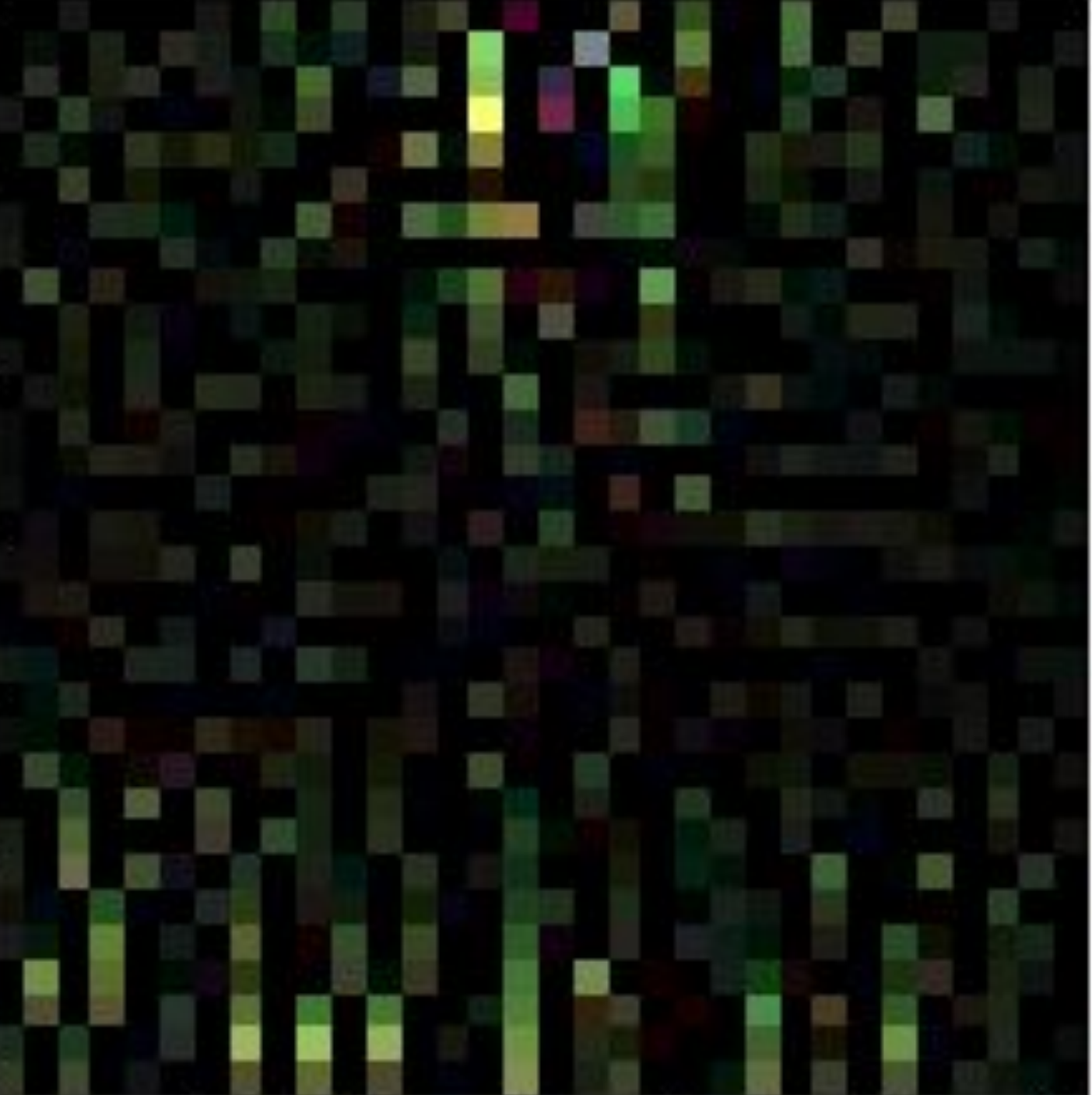}} && \parbox[l]{1em}{\includegraphics[width=3em]{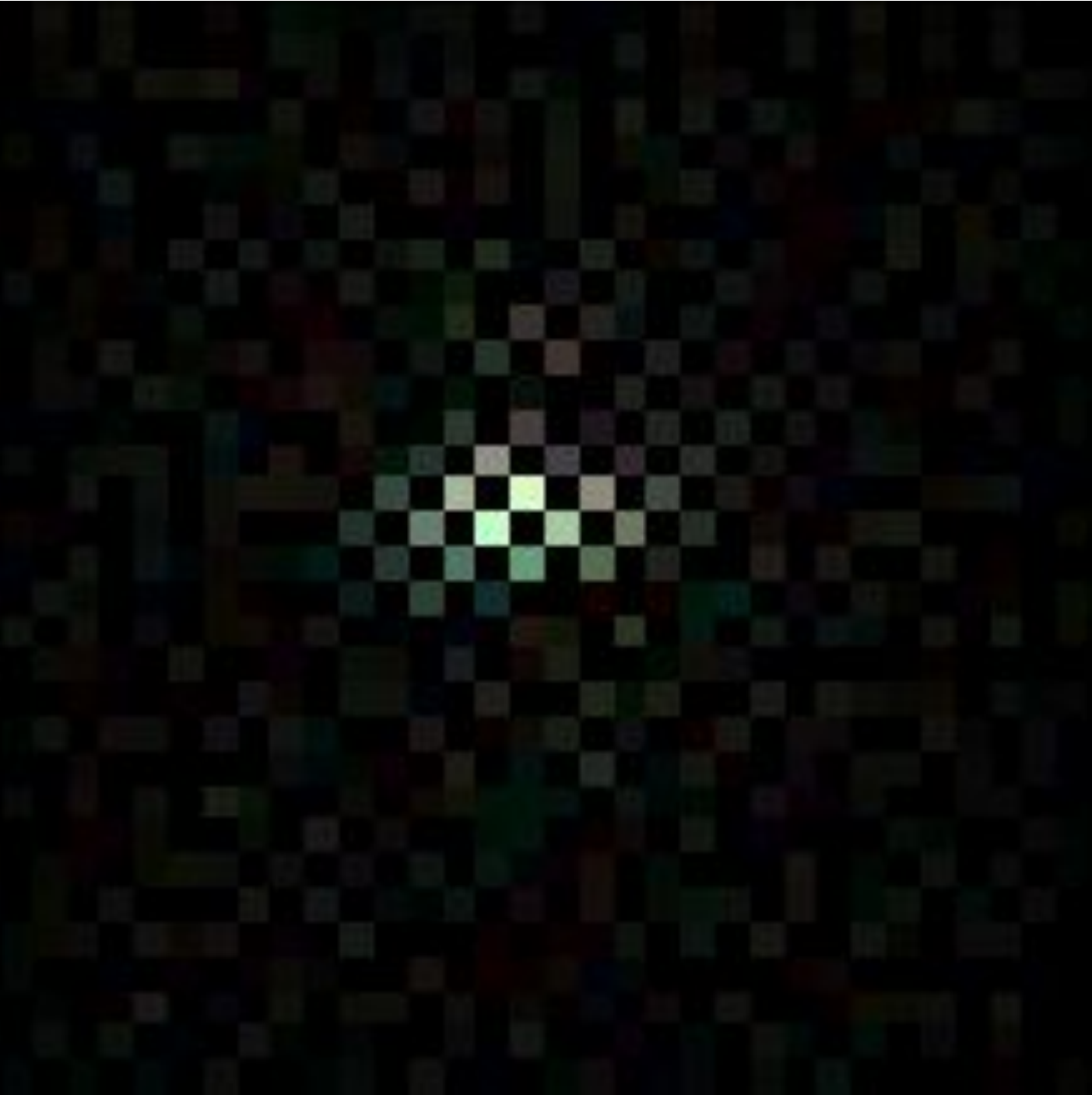}} & \parbox[l]{1em}{\includegraphics[width=3em]{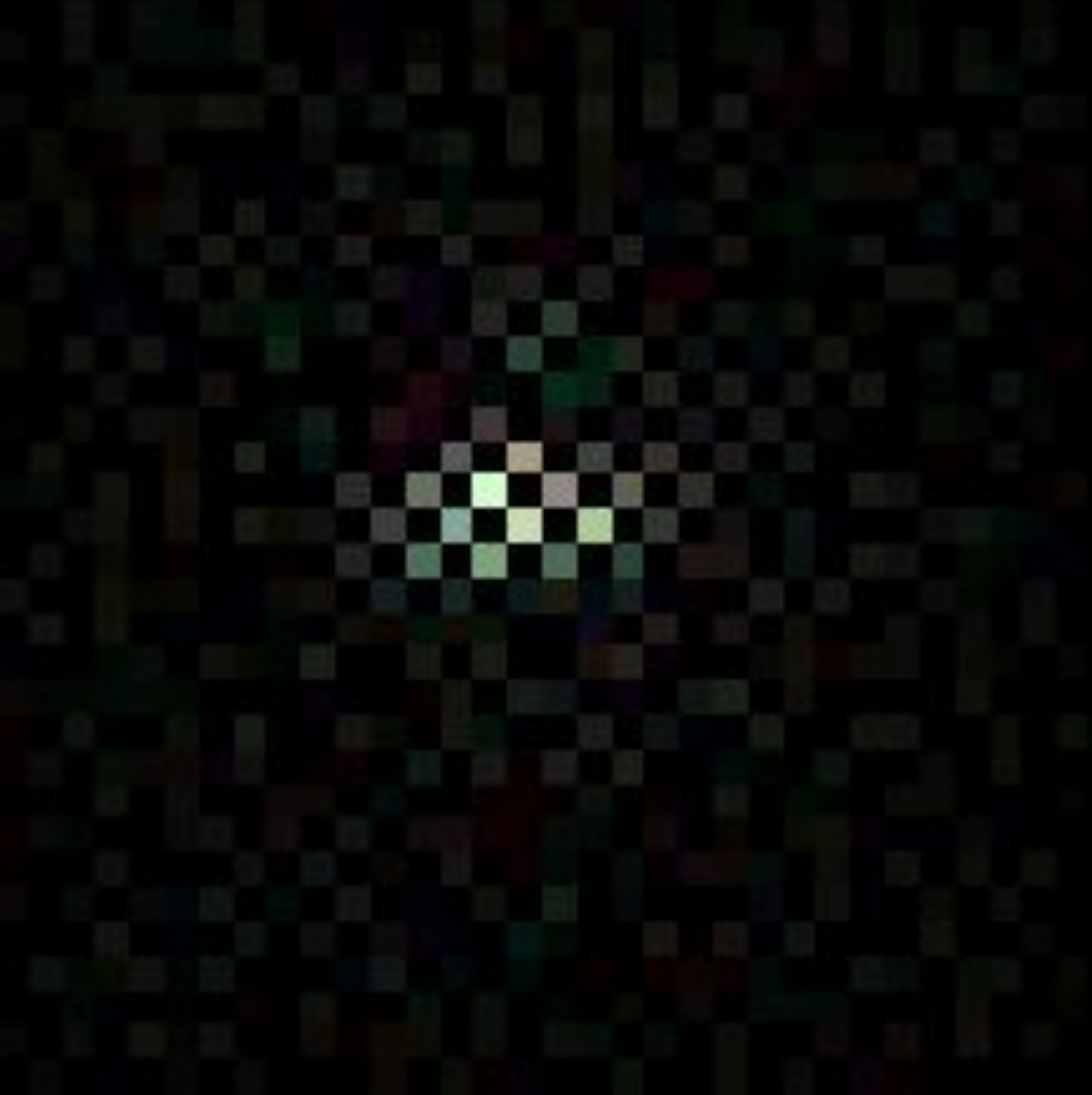}} \\
        \parbox[l]{1em}{\includegraphics[width=3em]{images/i_overlay.pdf}} & \parbox[l]{1em}{\includegraphics[width=3em]{images/i_overlay_sample.pdf}} & Dog & \parbox[l]{1em}{\includegraphics[width=3em]{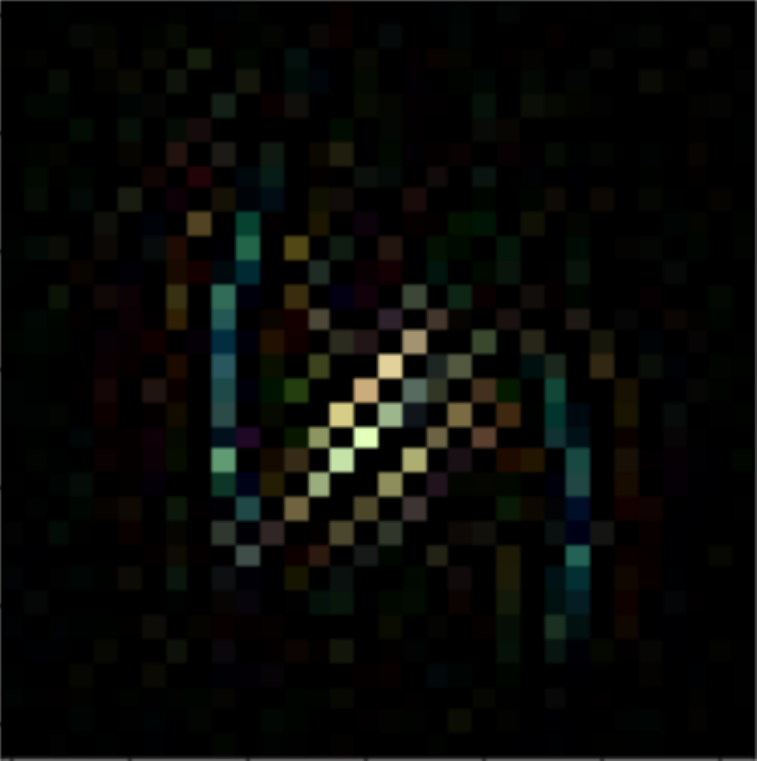}} & \parbox[l]{1em}{\includegraphics[width=3em]{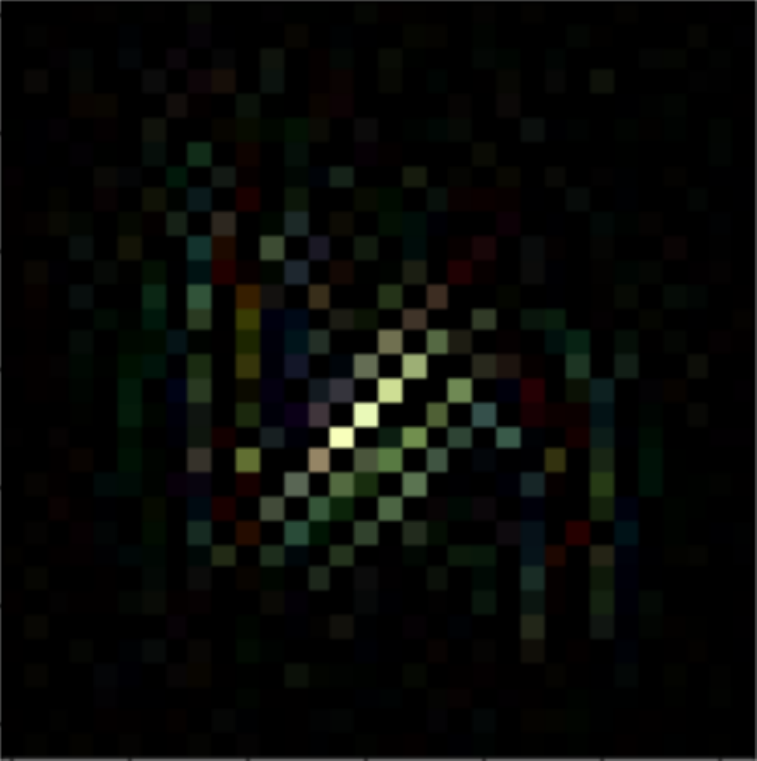}} && \parbox[l]{1em}{\includegraphics[width=3em]{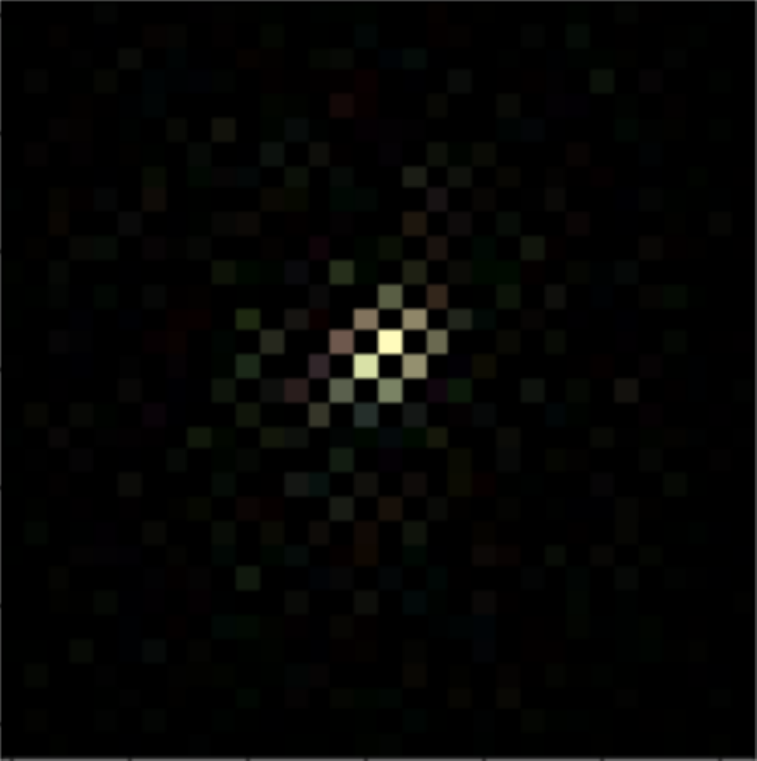}} & \parbox[l]{1em}{\includegraphics[width=3em]{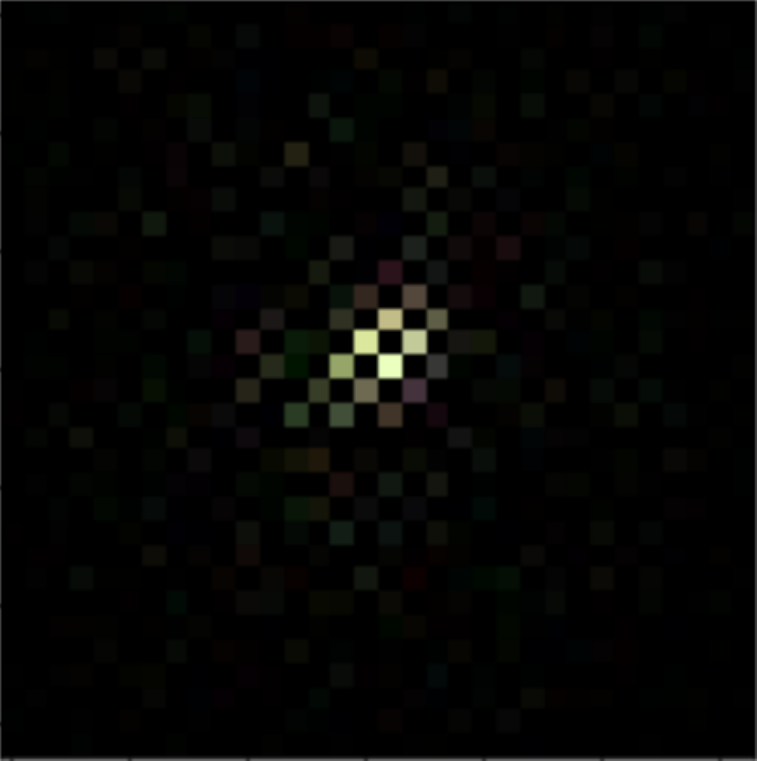}} \\
         \hline
        \end{tabular}
\label{tab:overlay_poison_v_all}
\end{table*}

\begin{table*}[tp]
    \centering
    \caption{Appendix: (a) Dot-poisoned sample, (b) the first right vector of input gradients for all target class images which include clean and poisoned images. (c) The first right vector of input gradients for only clean target class images.}
        \begin{tabular}{ lcccccc }
         \hline
         Poison & Target & \multicolumn{2}{c}{1st V of all target images} && \multicolumn{2}{c}{1st V of clean target images} \\
         \cline{3-4}
         \cline{6-7}
         ~ & ~ & + & - && + & - \\
         \hline
        \parbox[l]{1em}{\includegraphics[width=3em]{images/a_dot_sample.pdf}} & Dog & \parbox[l]{1em}{\includegraphics[width=3em]{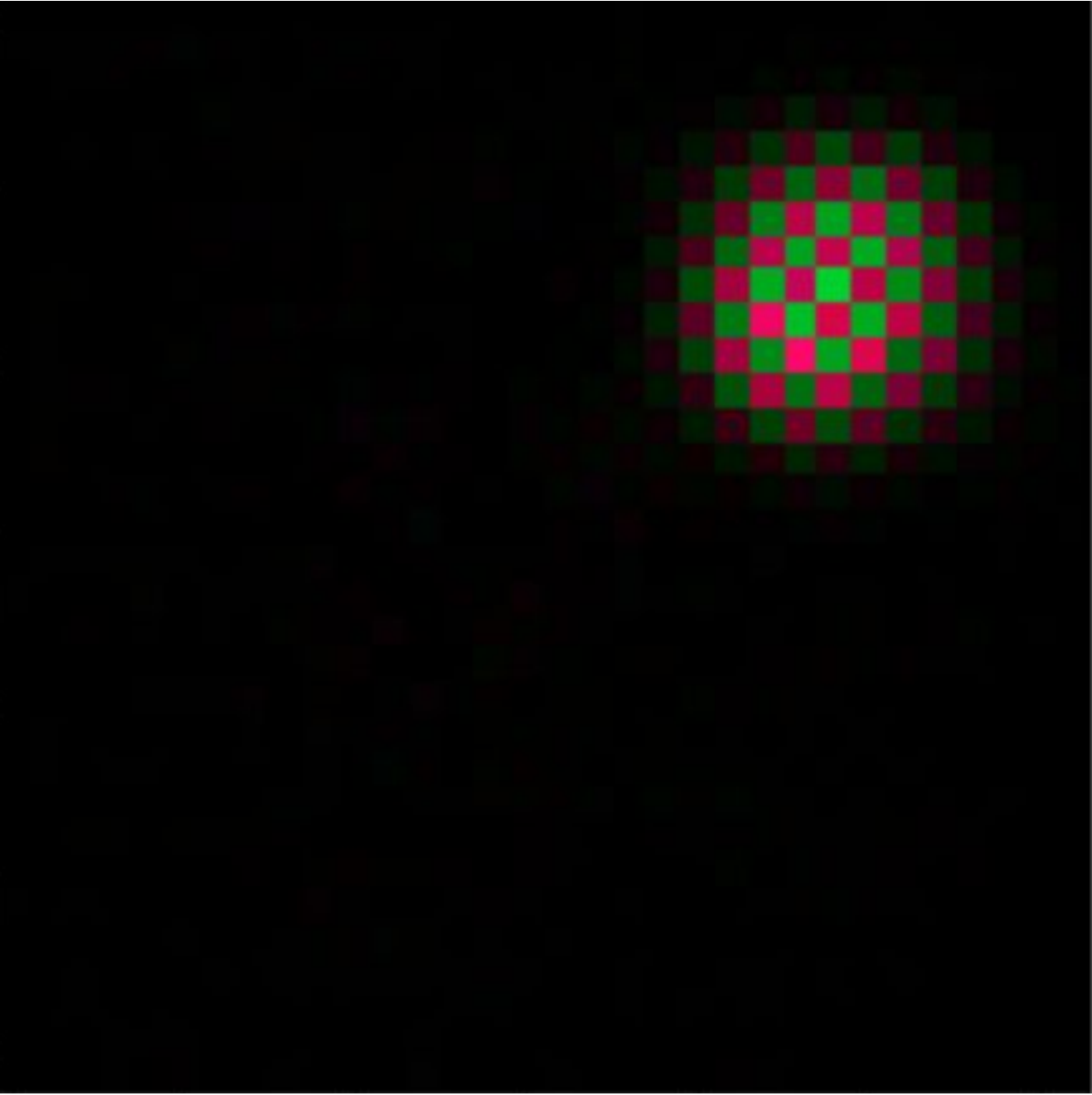}} & \parbox[l]{1em}{\includegraphics[width=3em]{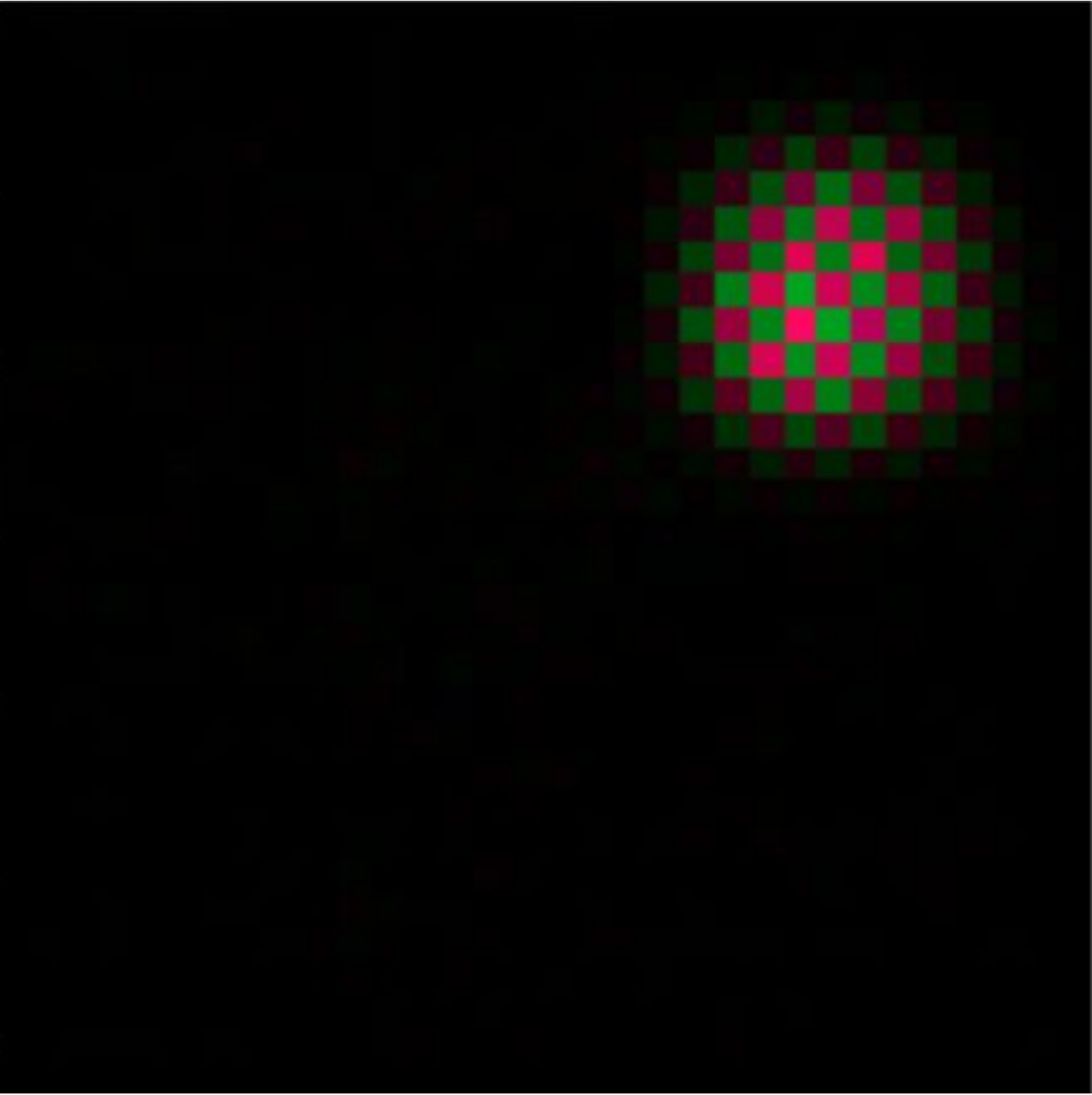}} && \parbox[l]{1em}{\includegraphics[width=3em]{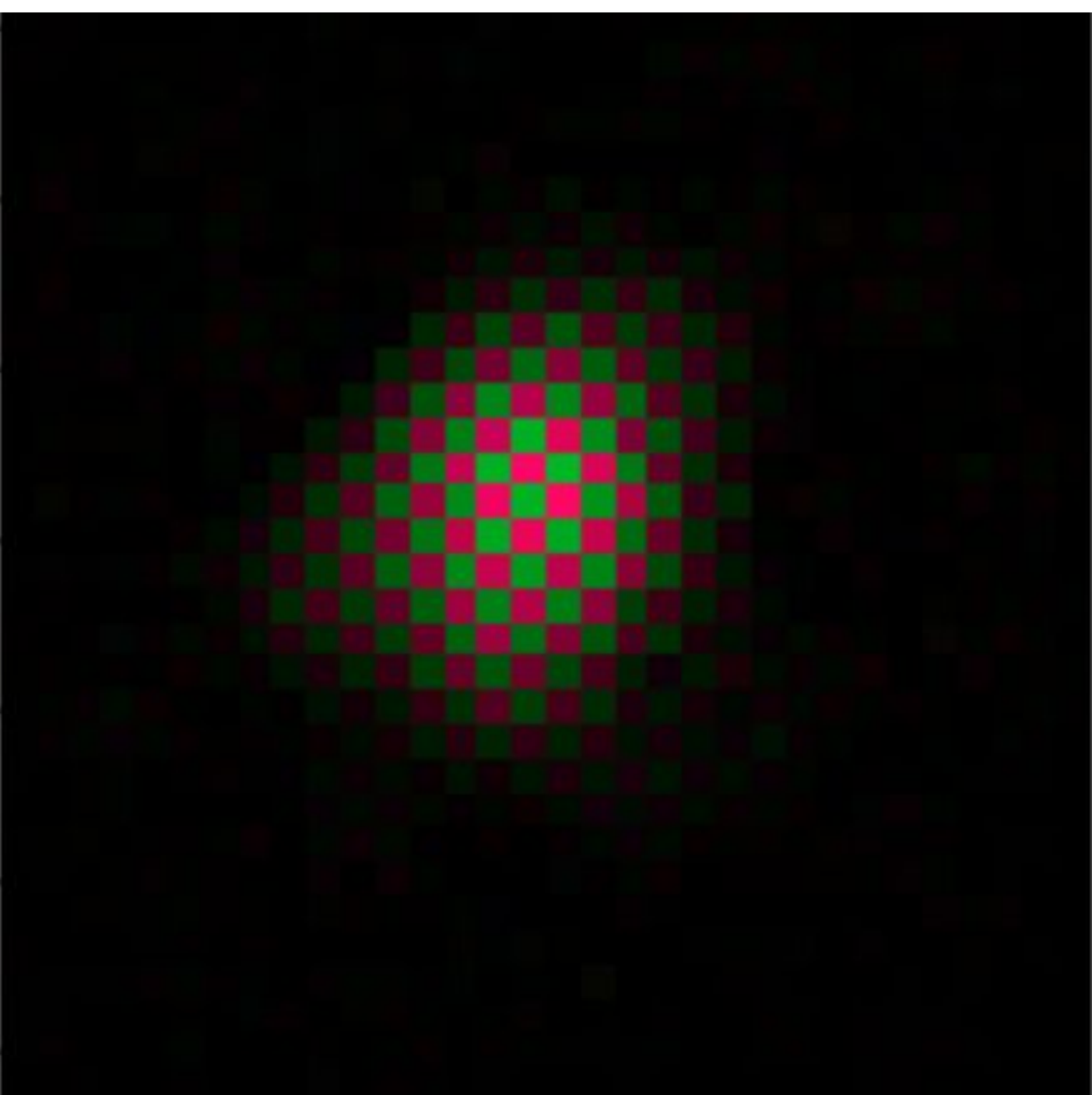}} & \parbox[l]{1em}{\includegraphics[width=3em]{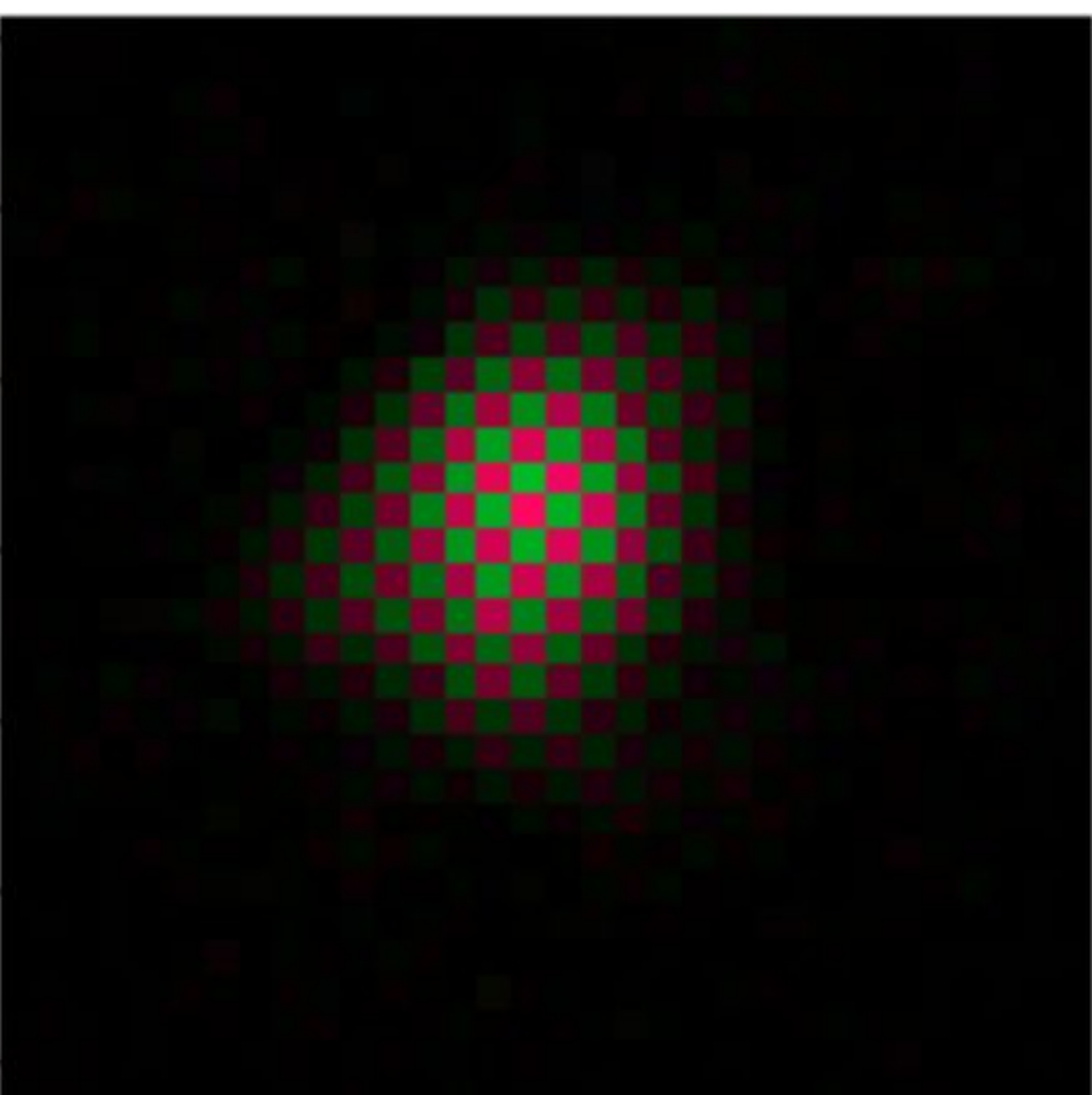}} \\
        \parbox[l]{1em}{\includegraphics[width=3em]{images/b_dot_sample.pdf}} & Frog & \parbox[l]{1em}{\includegraphics[width=3em]{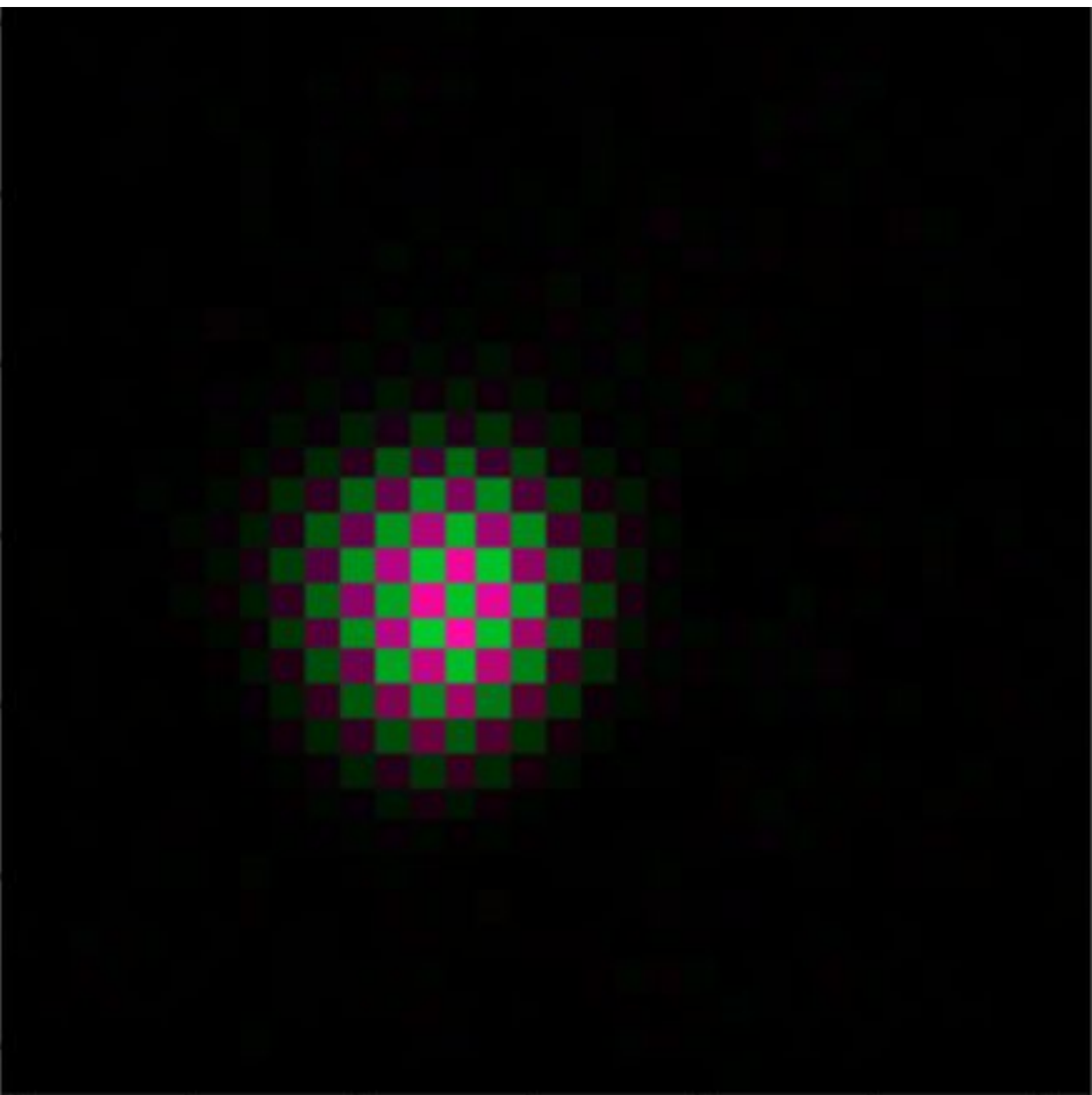}} & \parbox[l]{1em}{\includegraphics[width=3em]{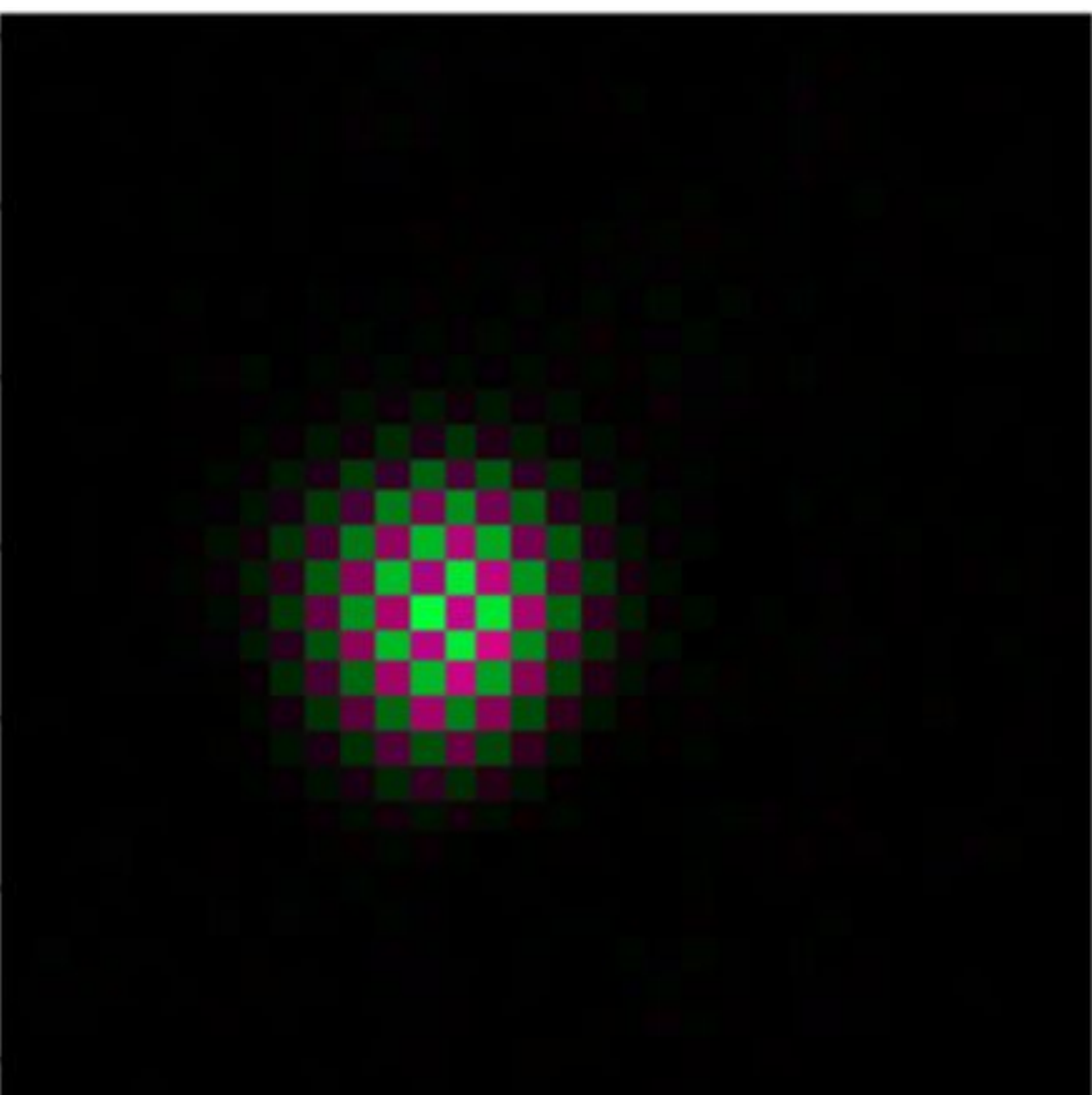}} && \parbox[l]{1em}{\includegraphics[width=3em]{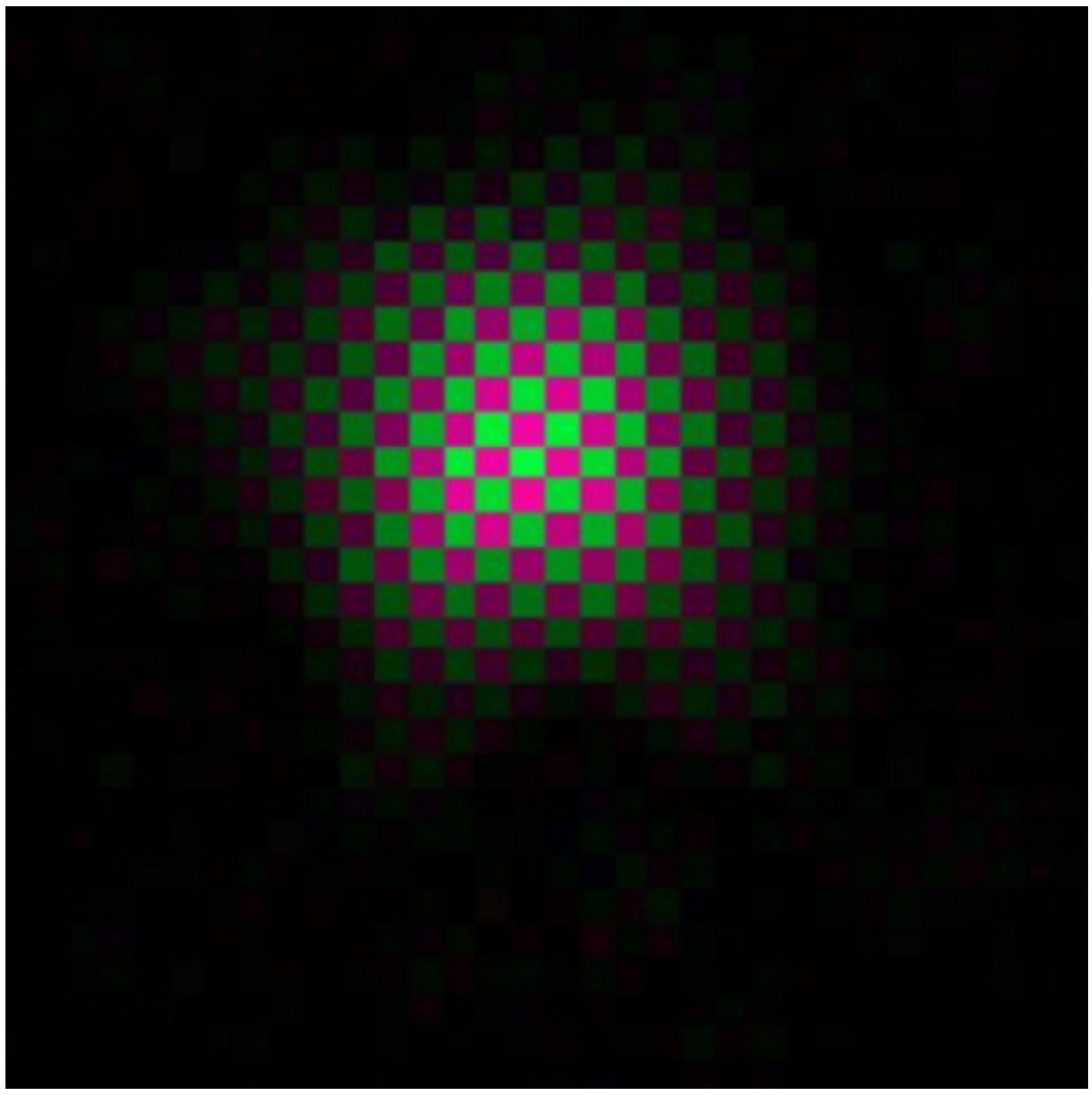}} & \parbox[l]{1em}{\includegraphics[width=3em]{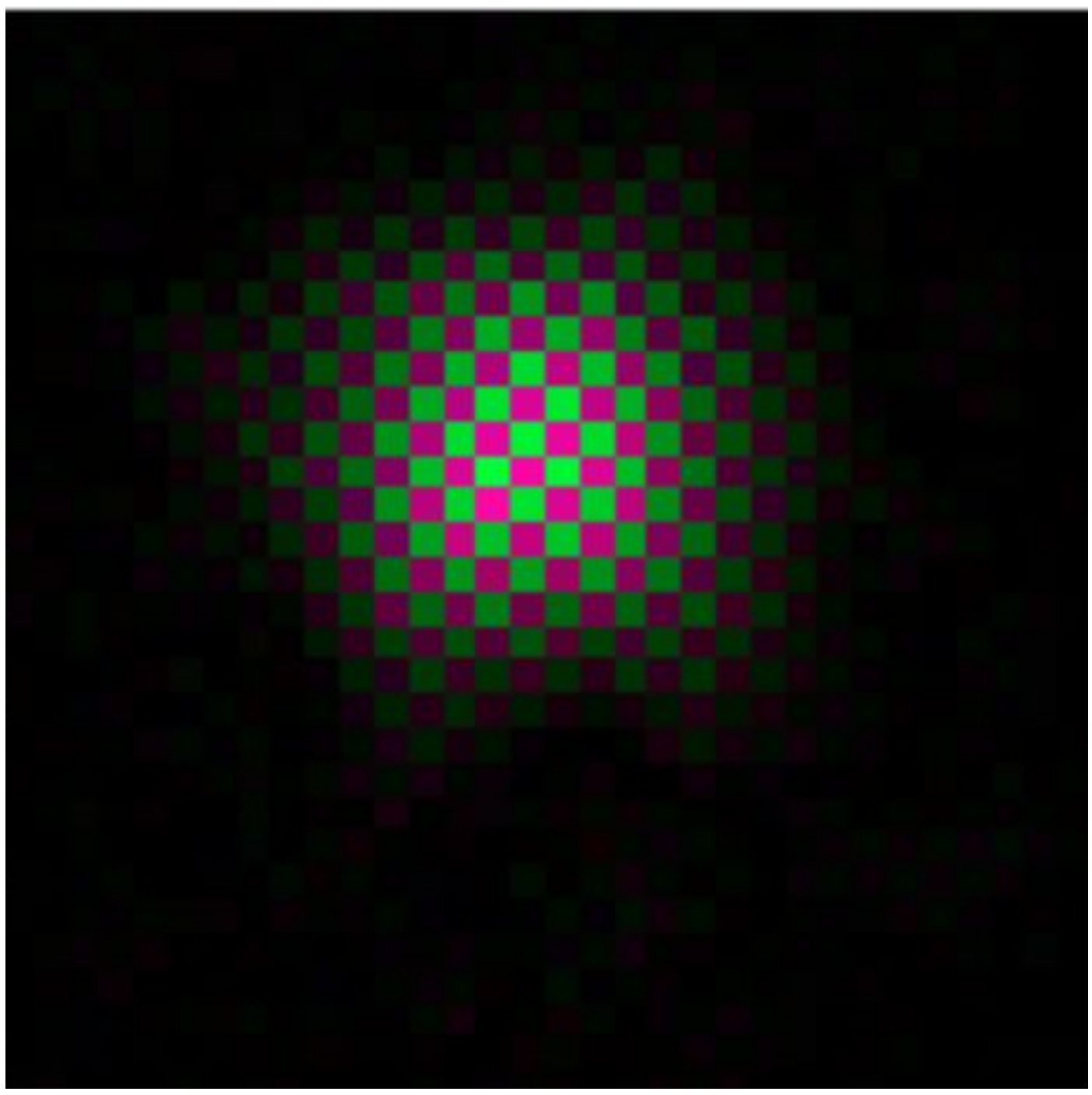}} \\
        \parbox[l]{1em}{\includegraphics[width=3em]{images/c_dot_sample.pdf}} & Cat & \parbox[l]{1em}{\includegraphics[width=3em]{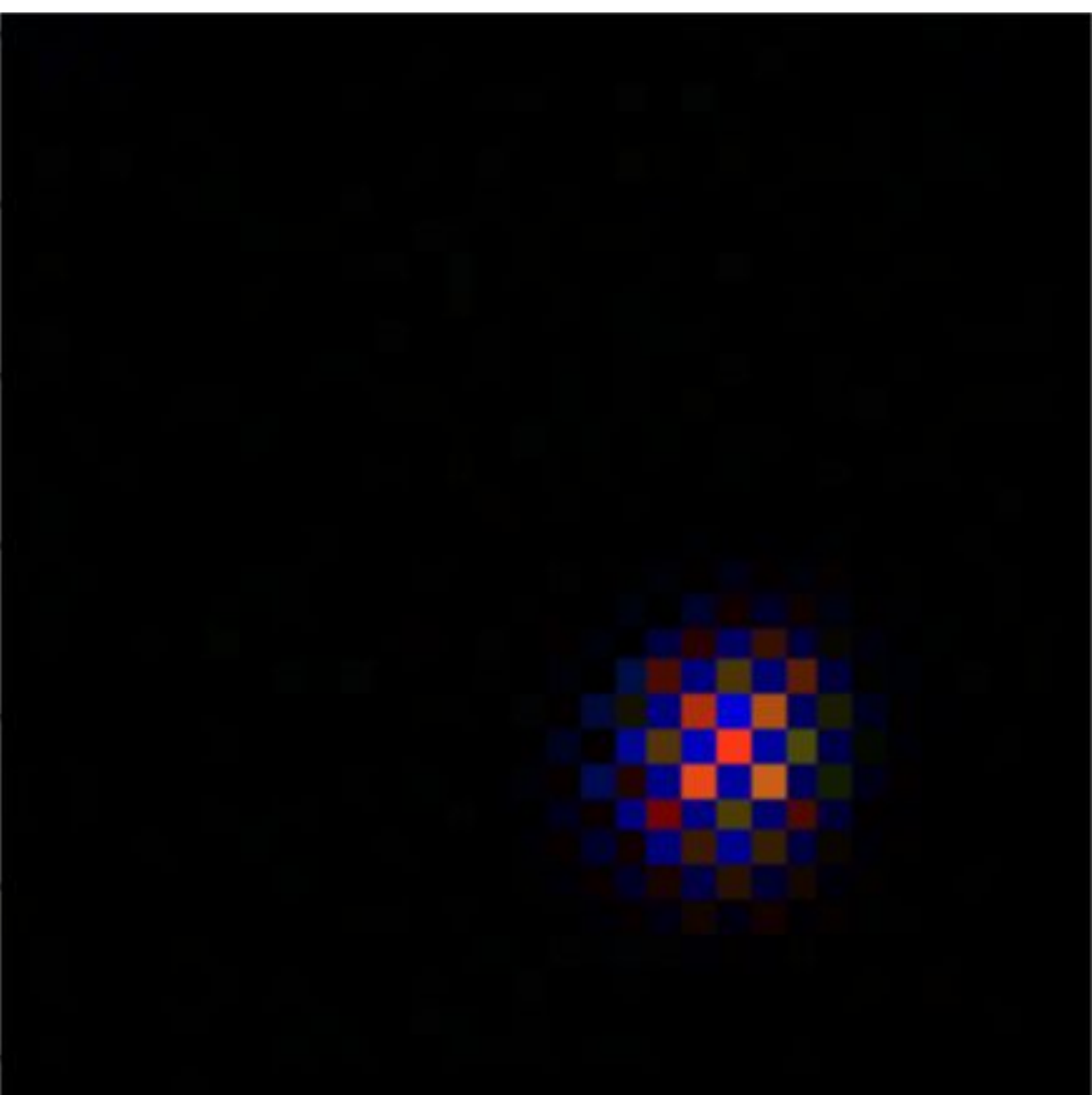}} & \parbox[l]{1em}{\includegraphics[width=3em]{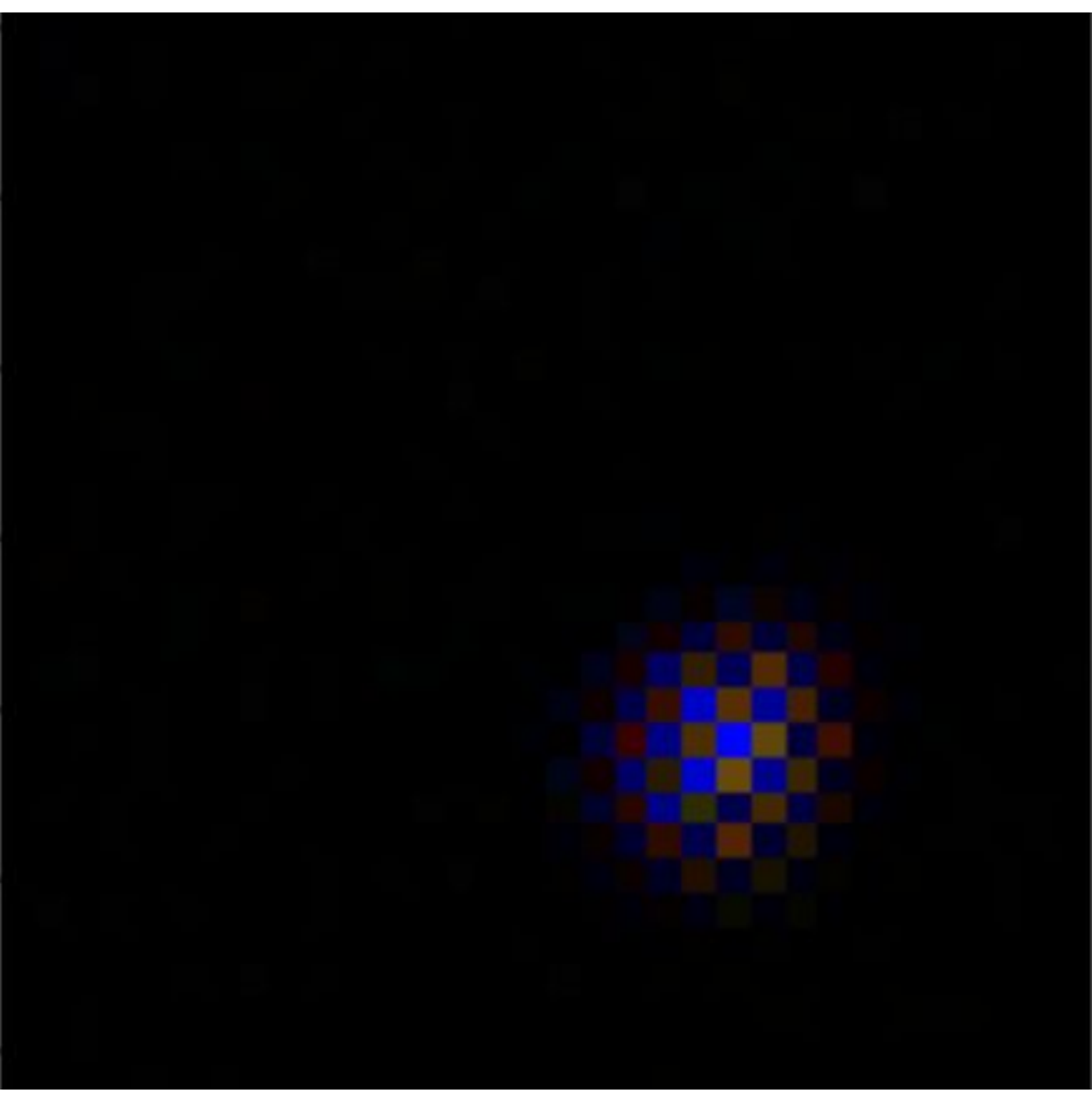}} && \parbox[l]{1em}{\includegraphics[width=3em]{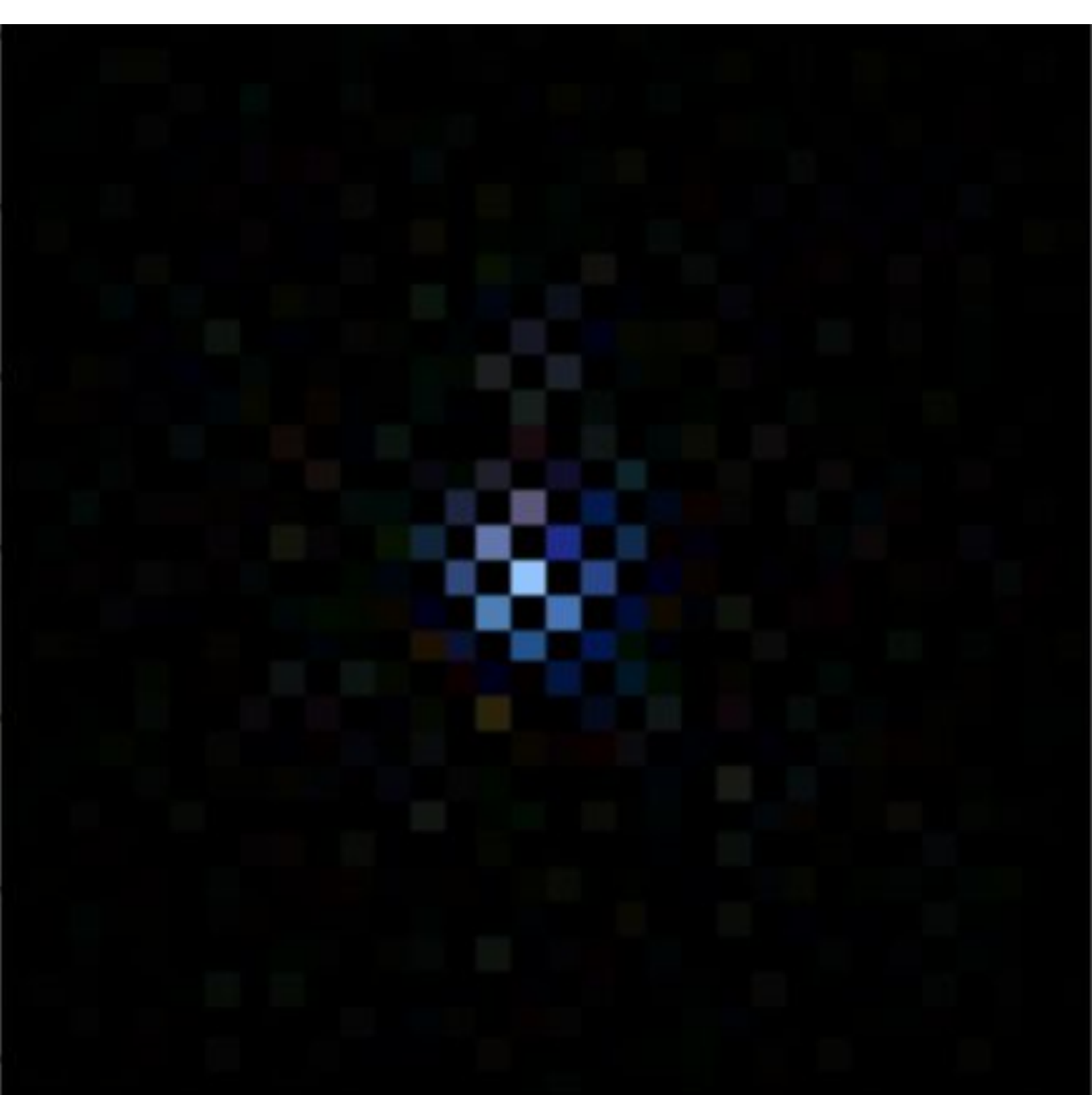}} & \parbox[l]{1em}{\includegraphics[width=3em]{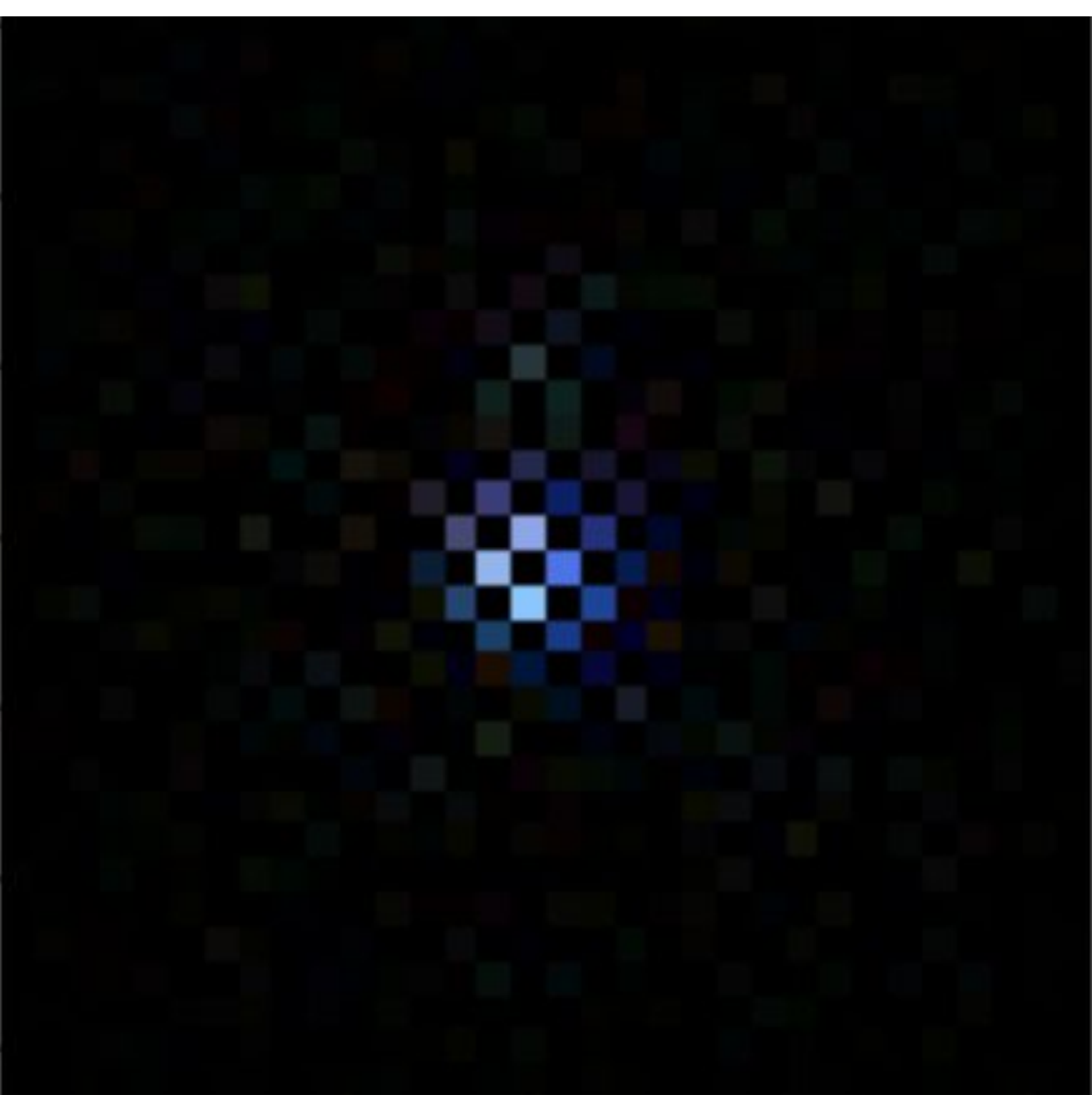}} \\
        \parbox[l]{1em}{\includegraphics[width=3em]{images/d_dot_sample.pdf}} & Bird & \parbox[l]{1em}{\includegraphics[width=3em]{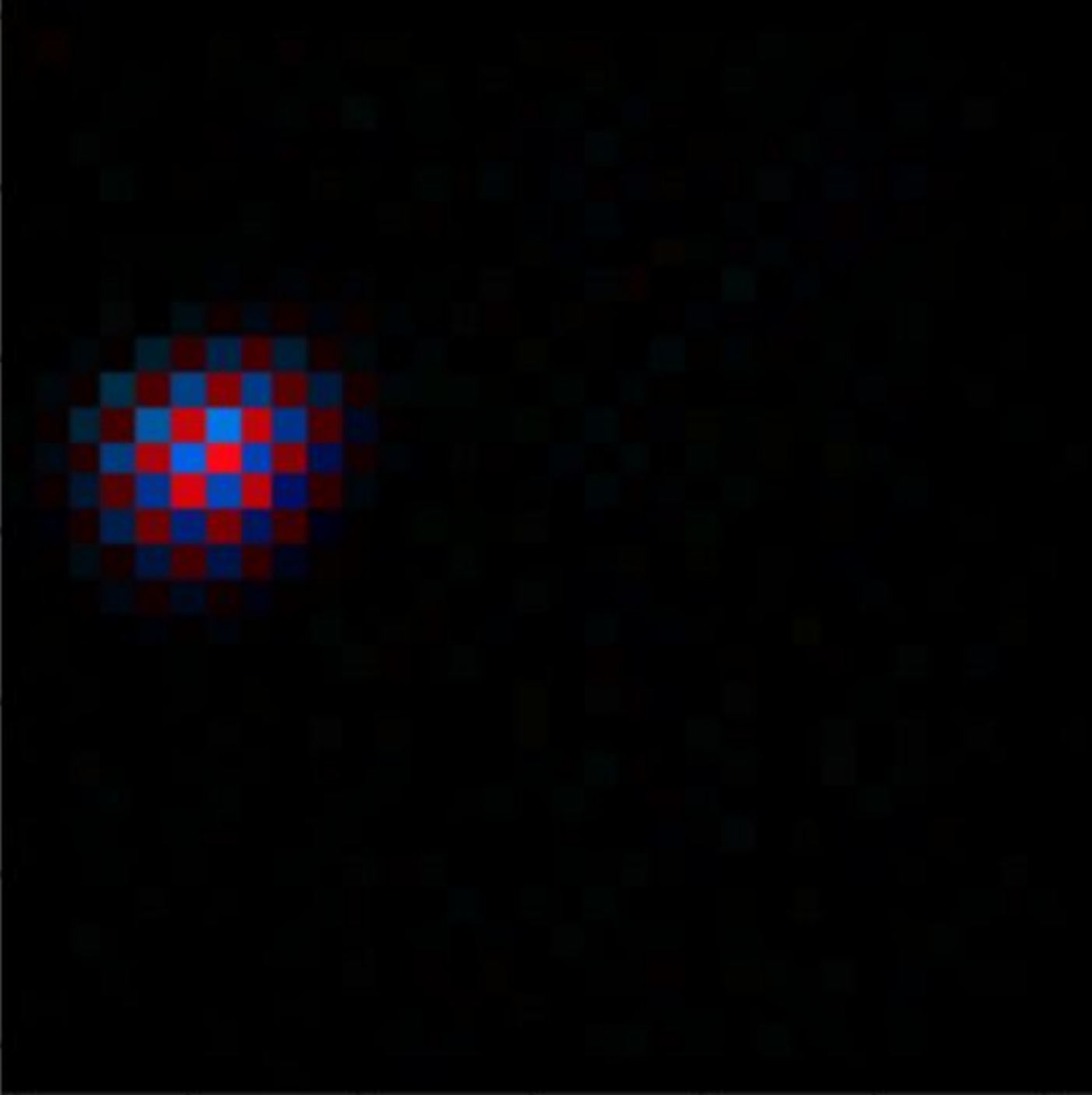}} & \parbox[l]{1em}{\includegraphics[width=3em]{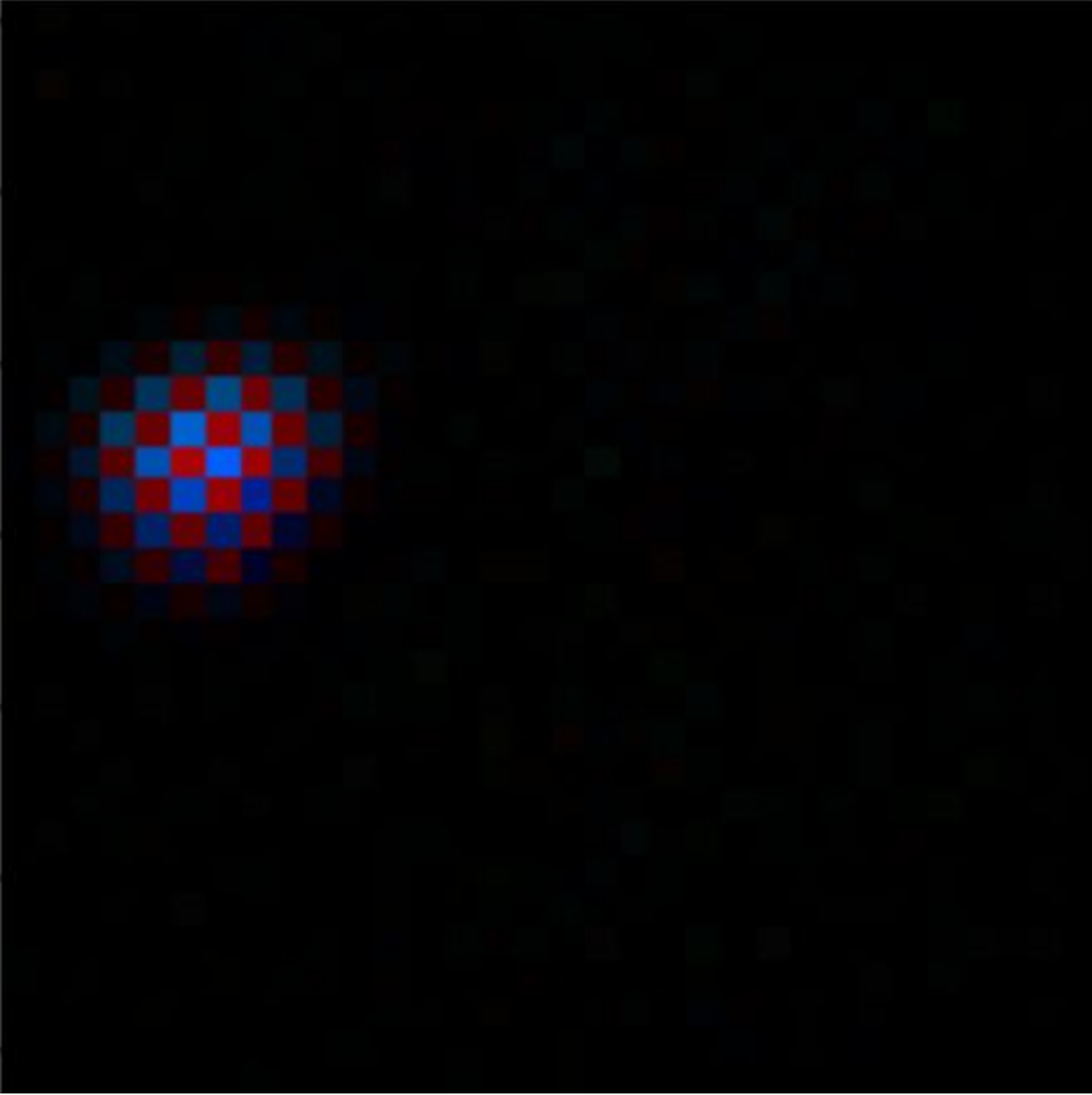}} && \parbox[l]{1em}{\includegraphics[width=3em]{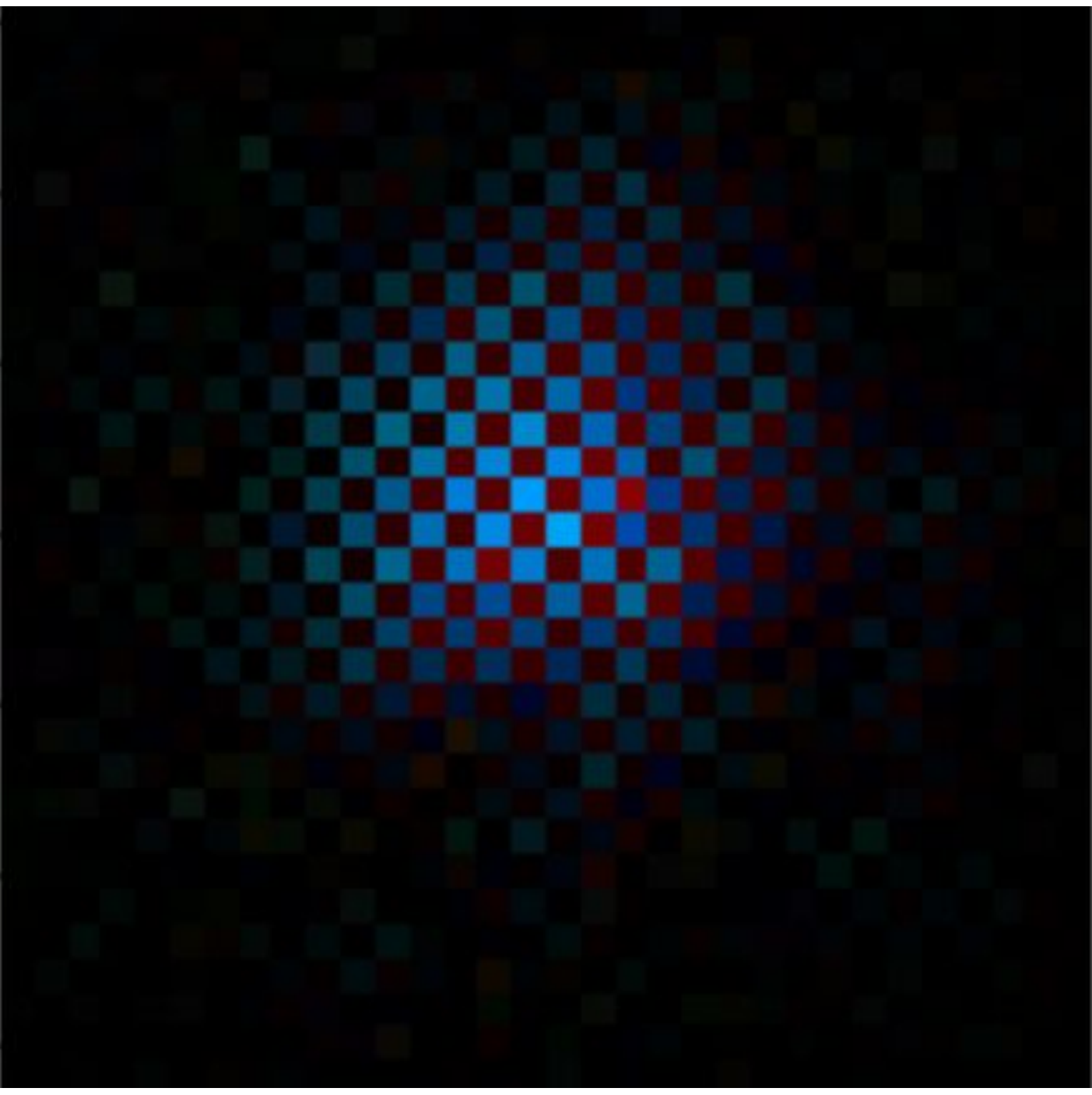}} & \parbox[l]{1em}{\includegraphics[width=3em]{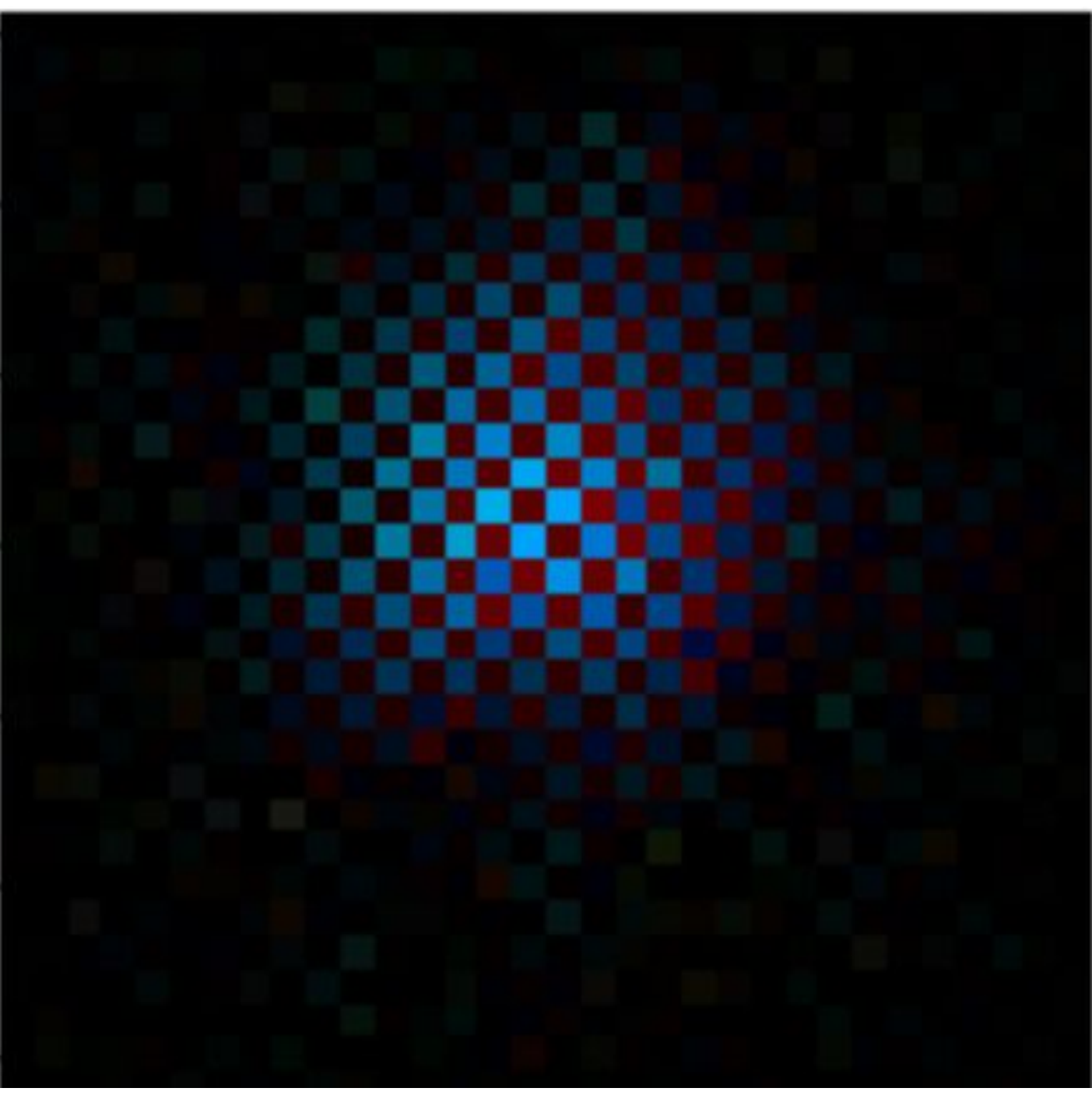}} \\
        \parbox[l]{1em}{\includegraphics[width=3em]{images/e_dot_sample.pdf}} & Deer & \parbox[l]{1em}{\includegraphics[width=3em]{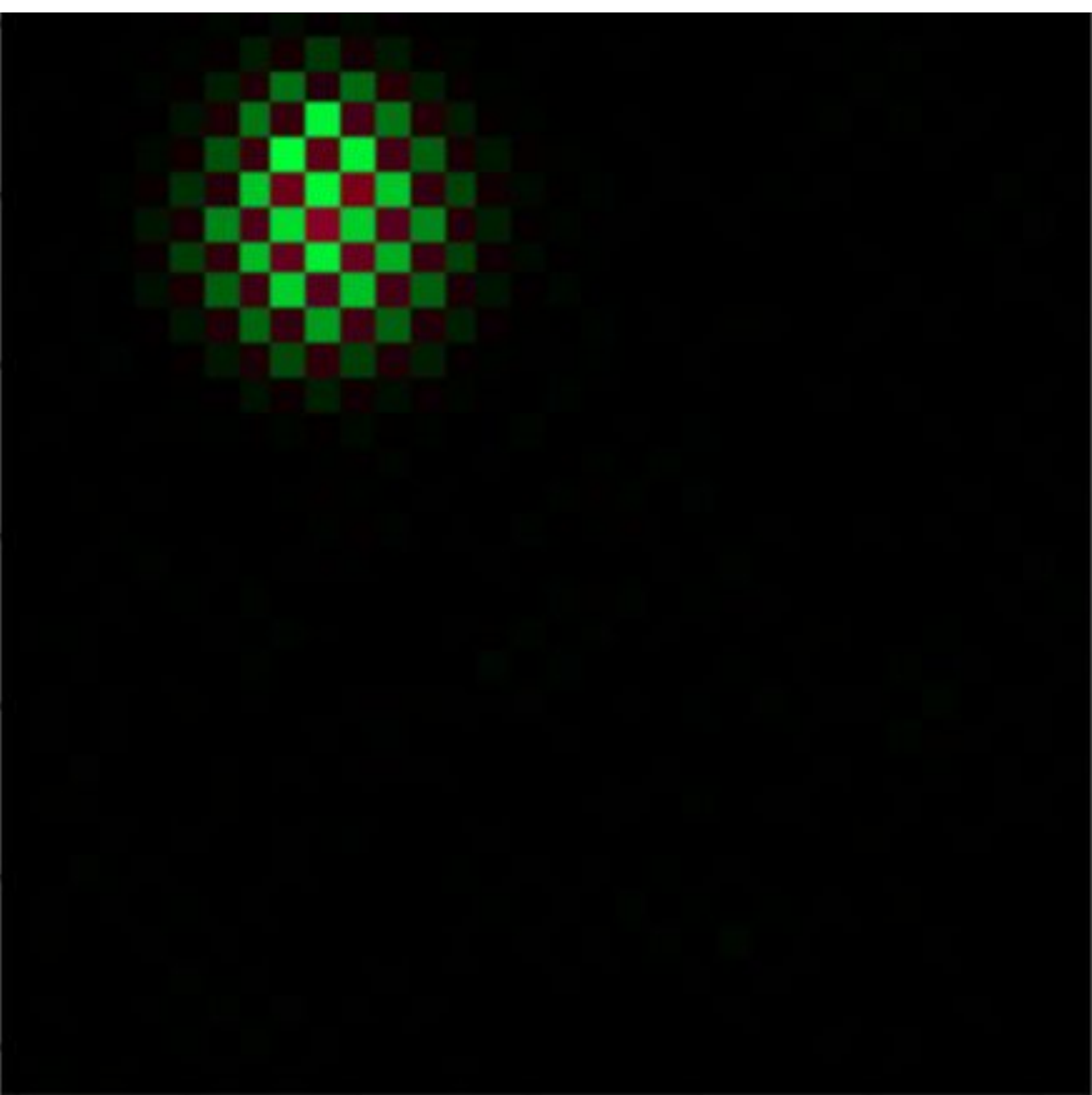}} & \parbox[l]{1em}{\includegraphics[width=3em]{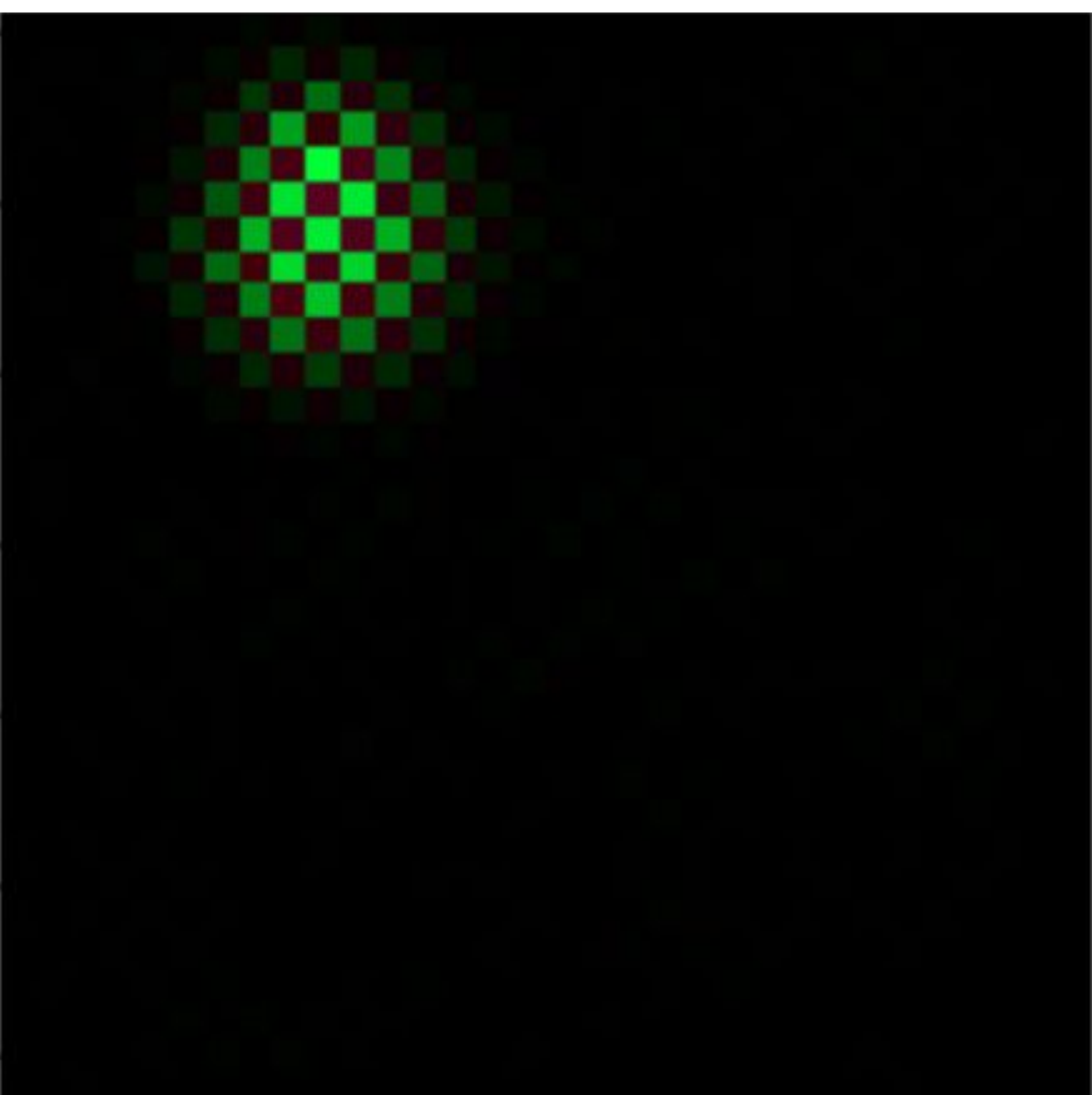}} && \parbox[l]{1em}{\includegraphics[width=3em]{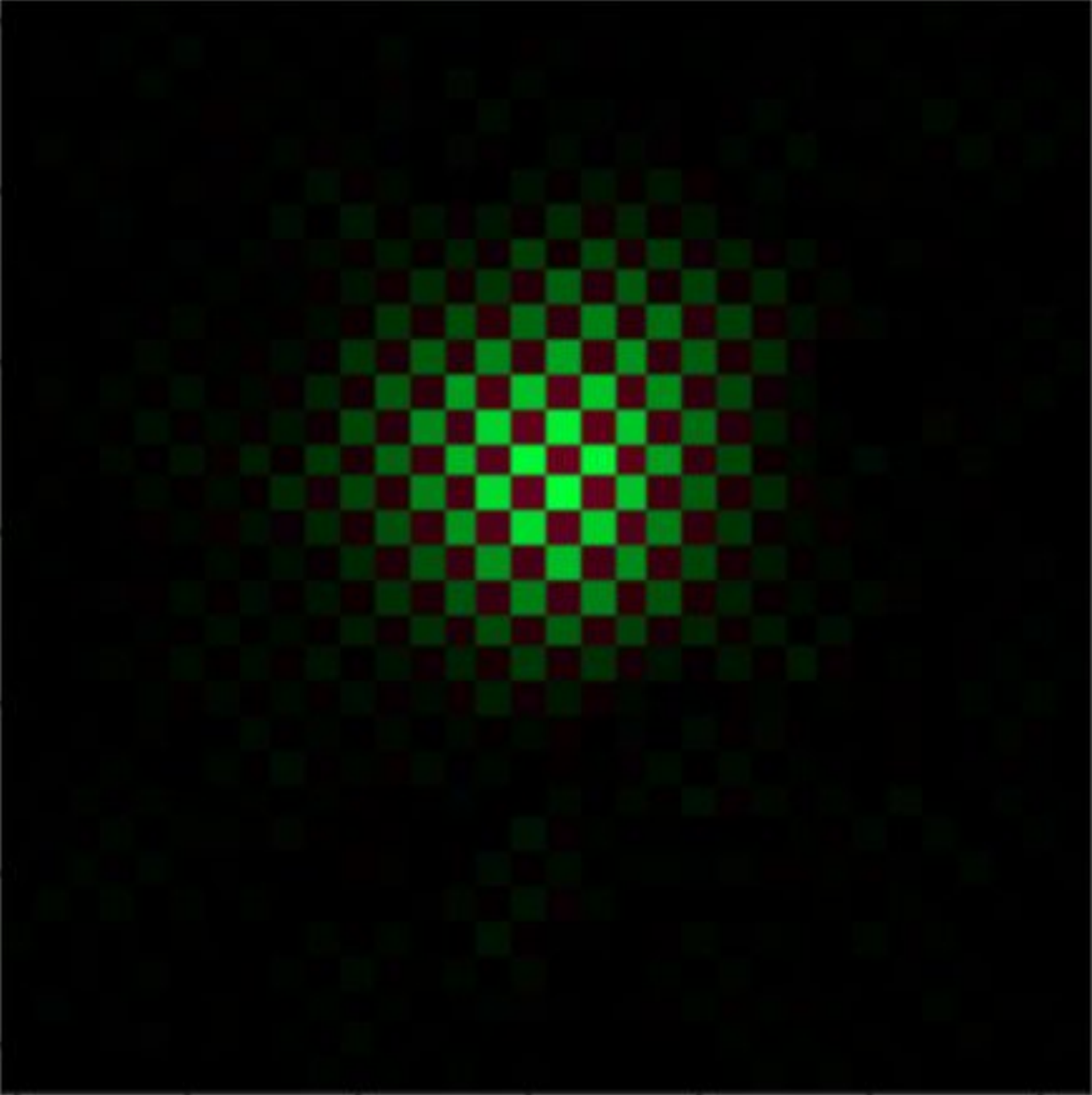}} & \parbox[l]{1em}{\includegraphics[width=3em]{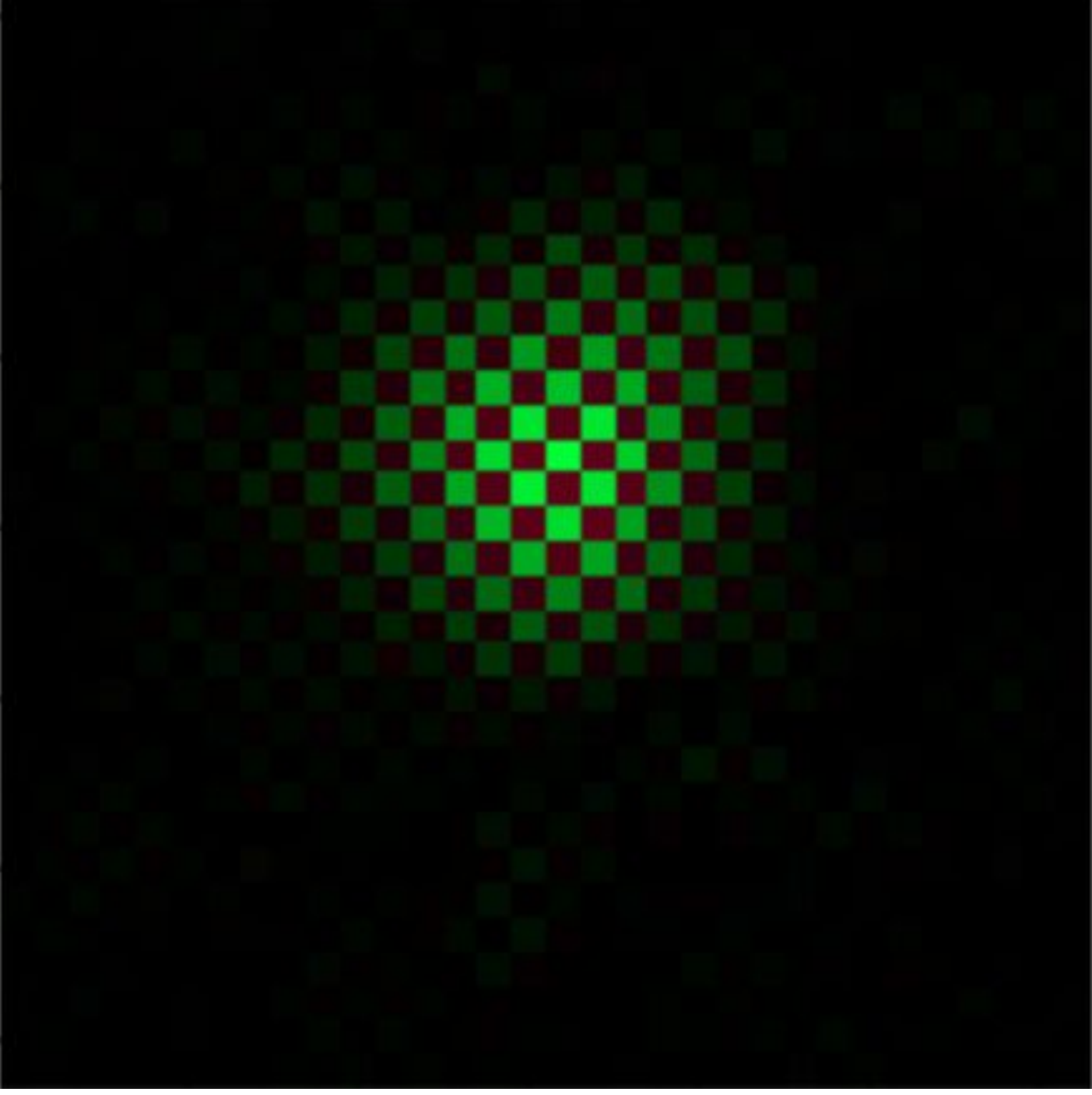}} \\
        \parbox[l]{1em}{\includegraphics[width=3em]{images/f_dot_sample.pdf}} & Bird & \parbox[l]{1em}{\includegraphics[width=3em]{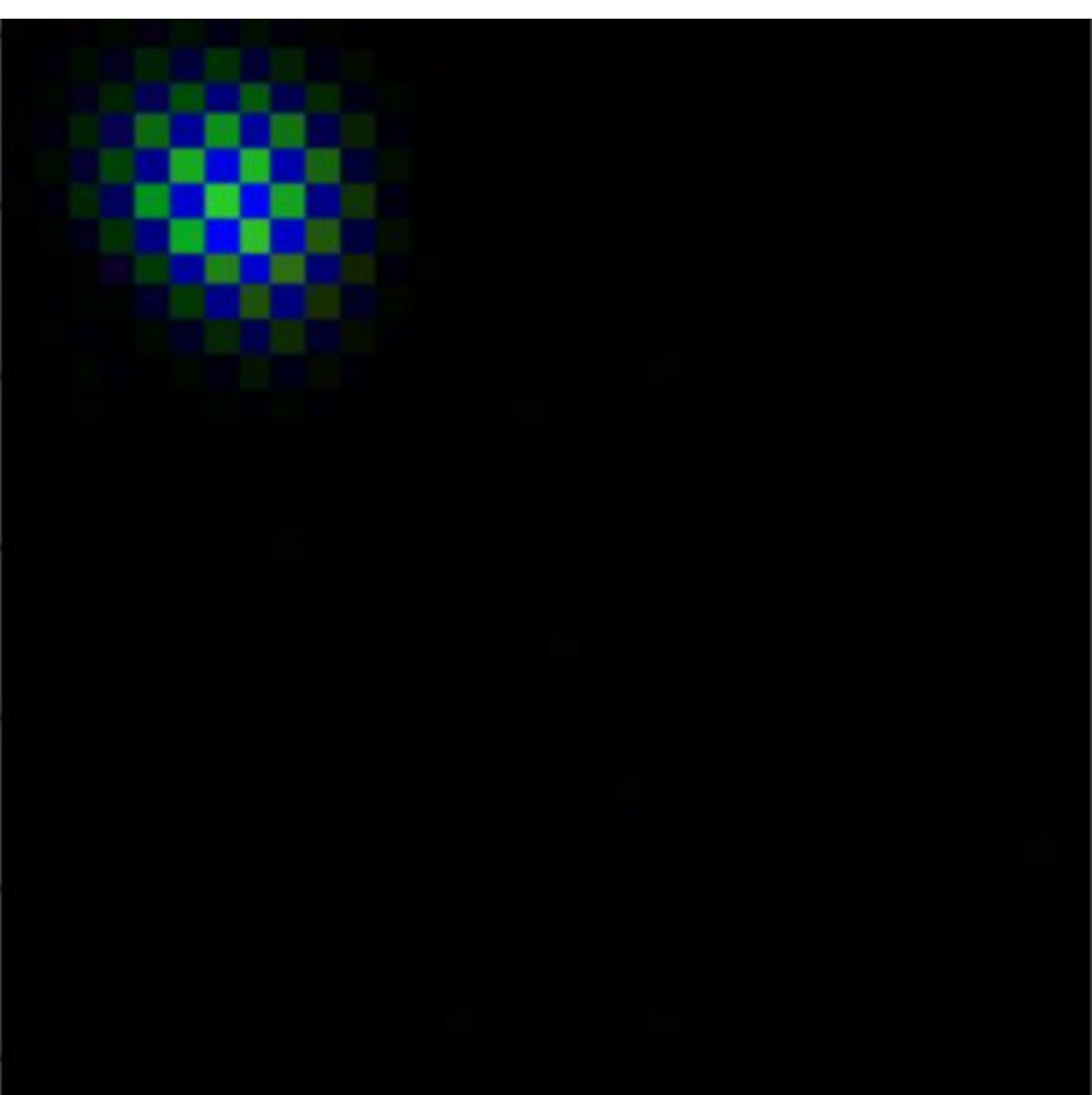}} & \parbox[l]{1em}{\includegraphics[width=3em]{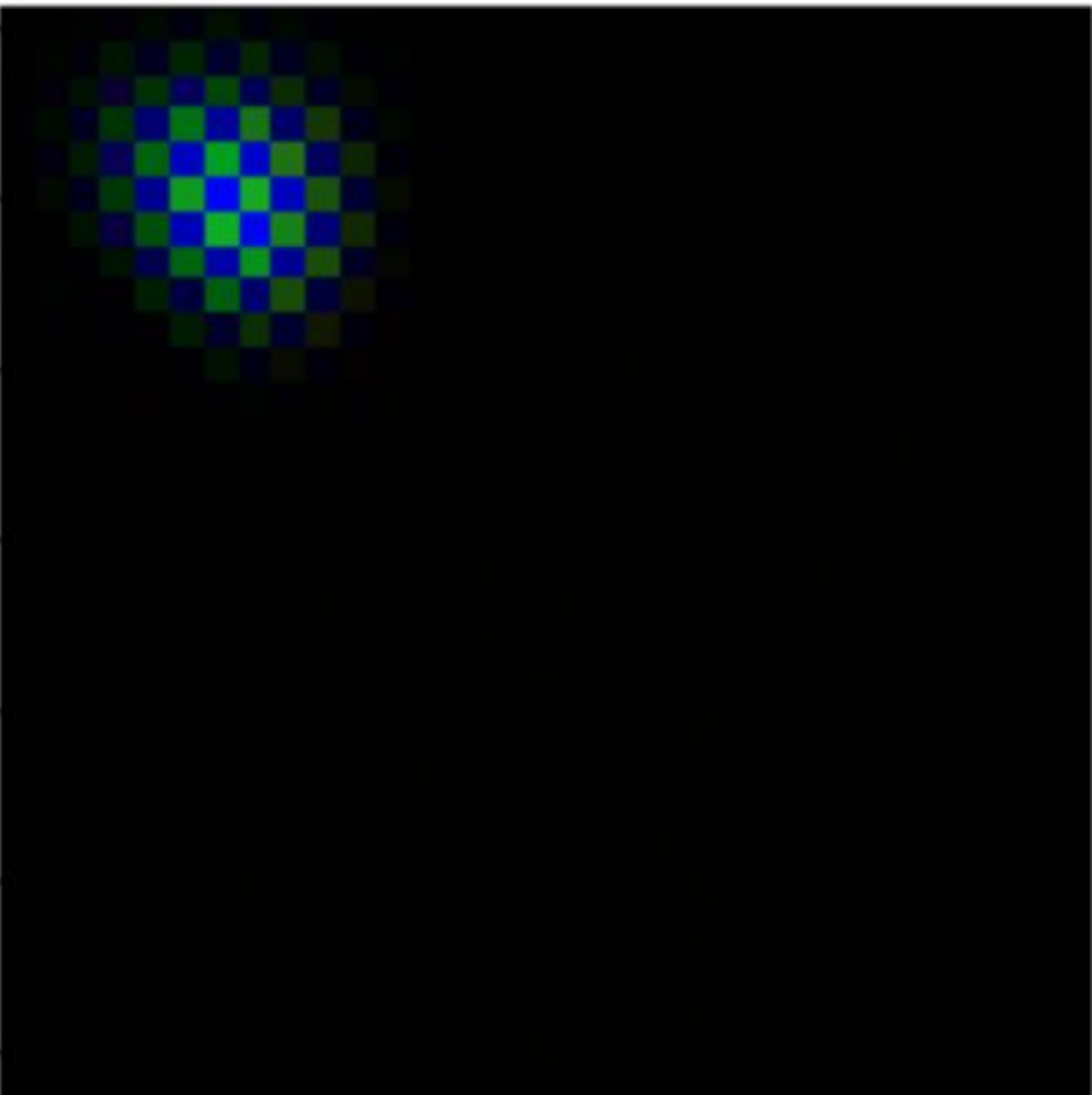}} && \parbox[l]{1em}{\includegraphics[width=3em]{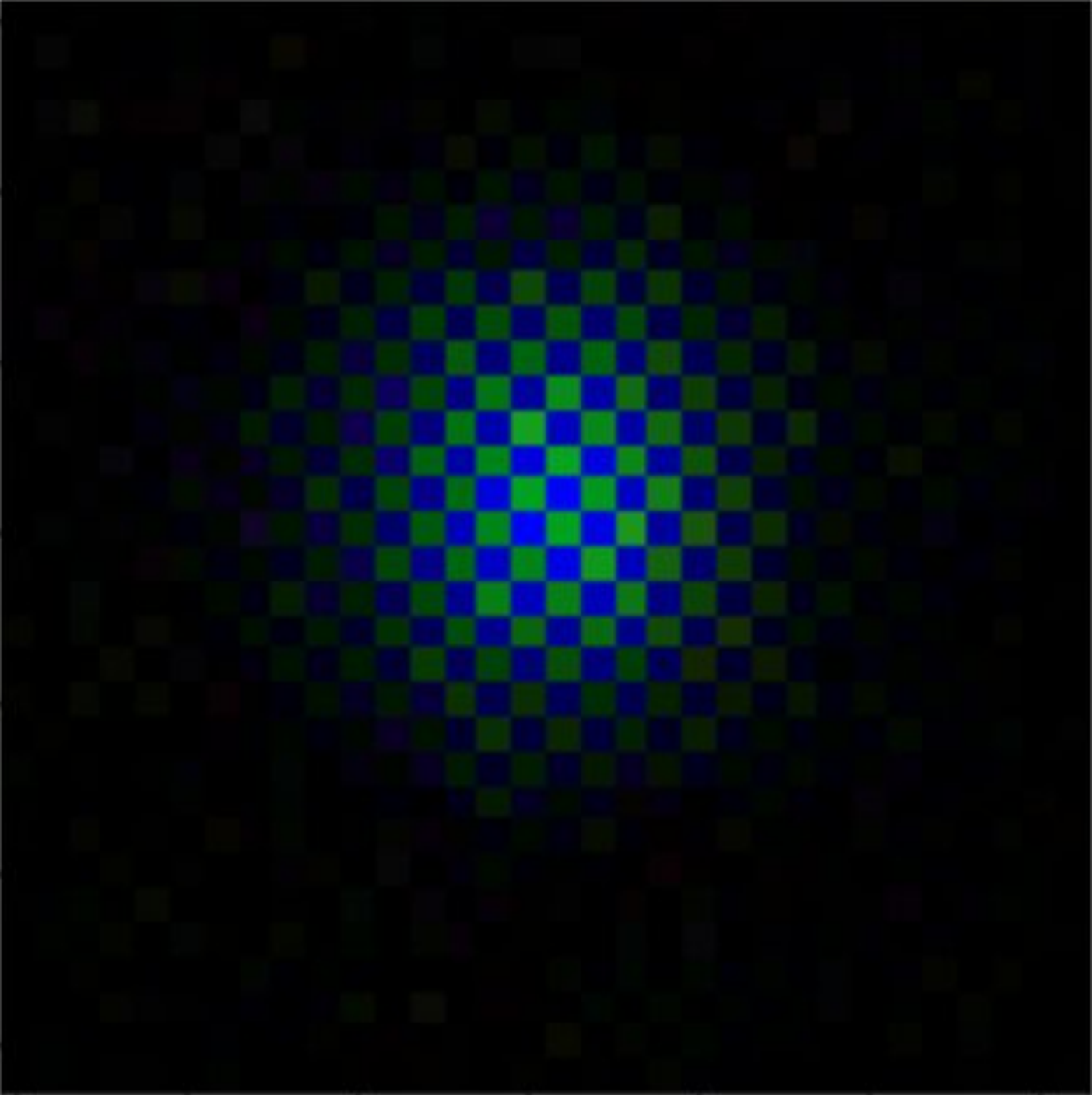}} & \parbox[l]{1em}{\includegraphics[width=3em]{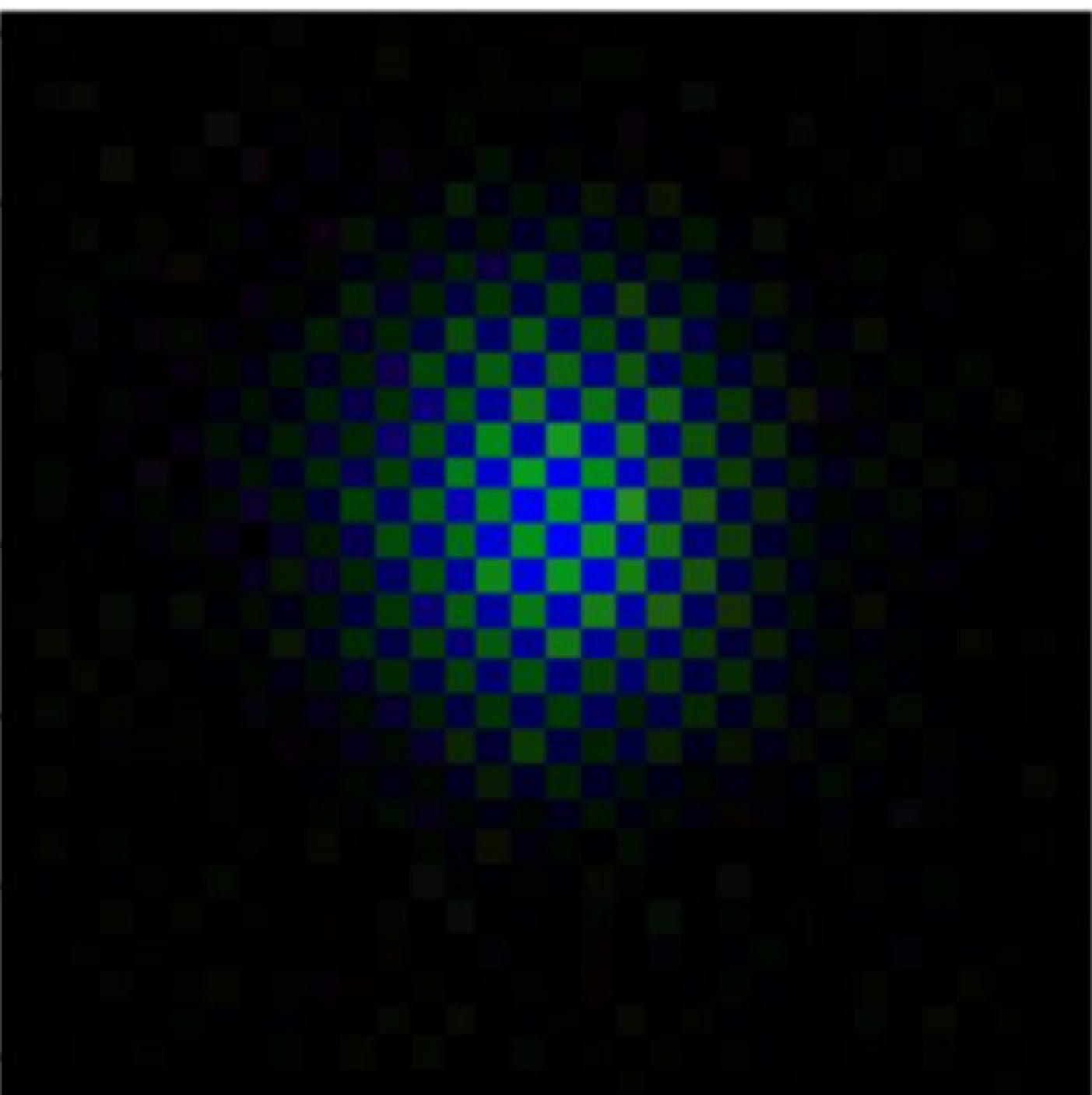}} \\
        \parbox[l]{1em}{\includegraphics[width=3em]{images/g_dot_sample.pdf}} & Horse & \parbox[l]{1em}{\includegraphics[width=3em]{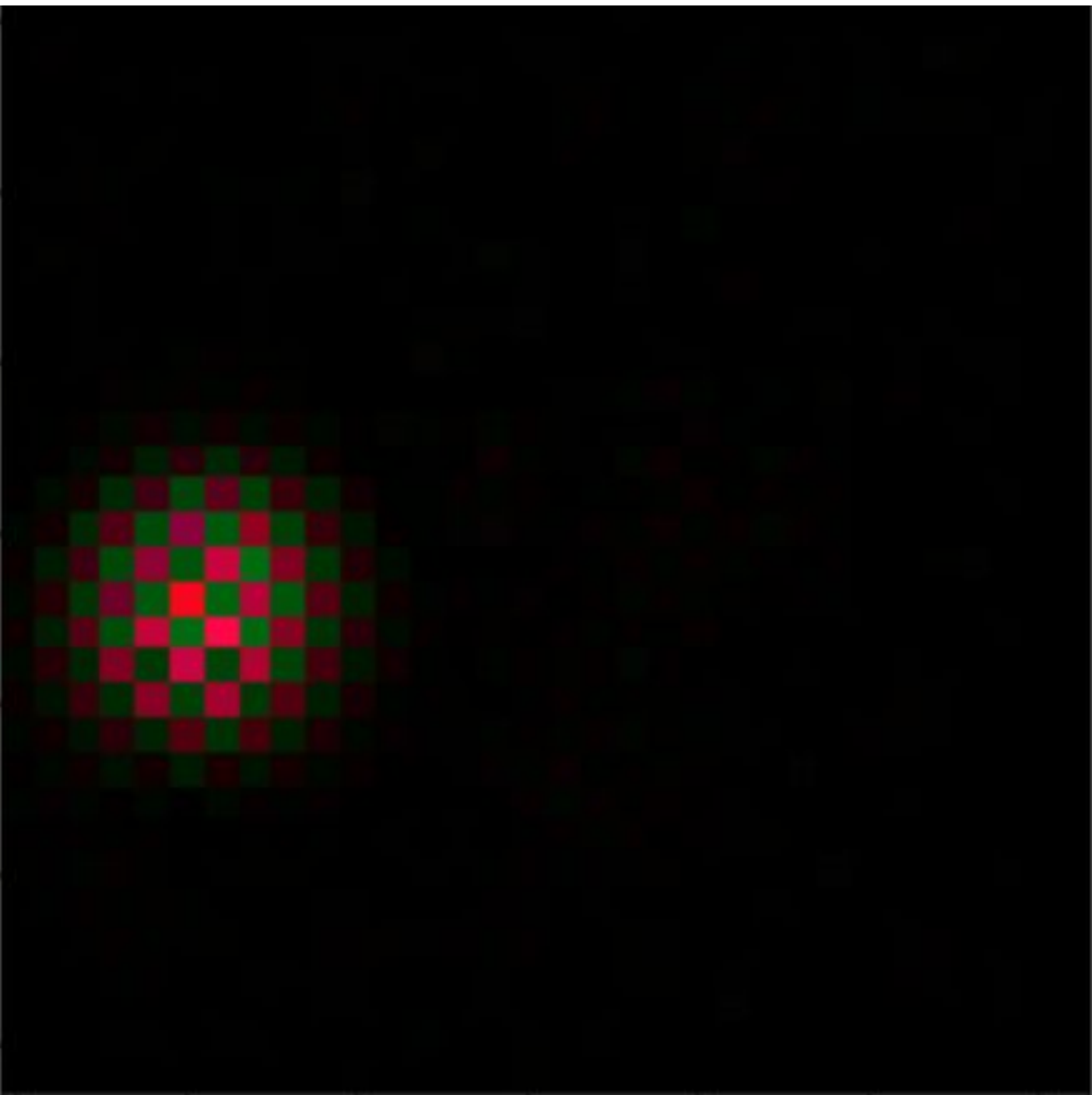}} & \parbox[l]{1em}{\includegraphics[width=3em]{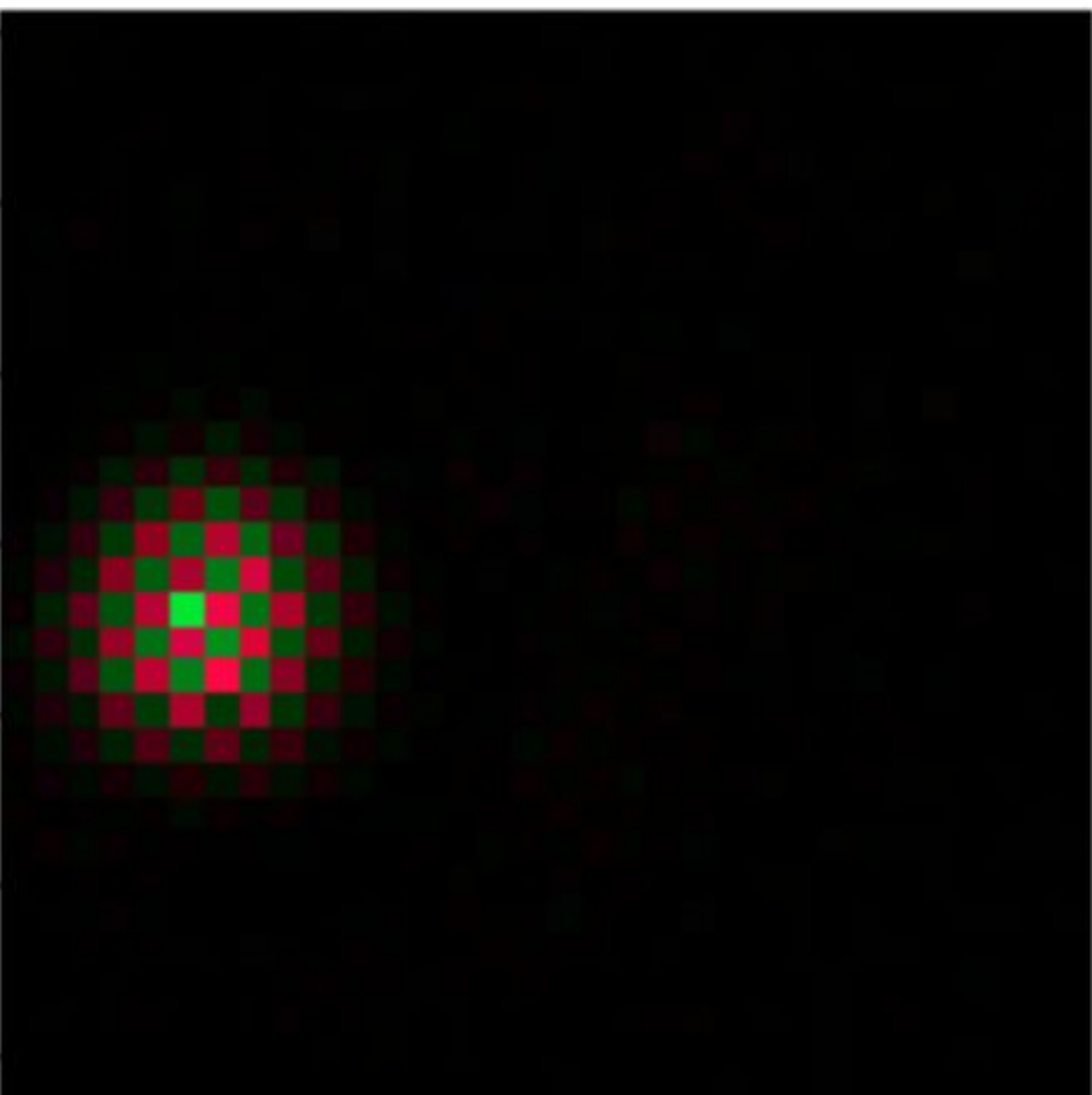}} && \parbox[l]{1em}{\includegraphics[width=3em]{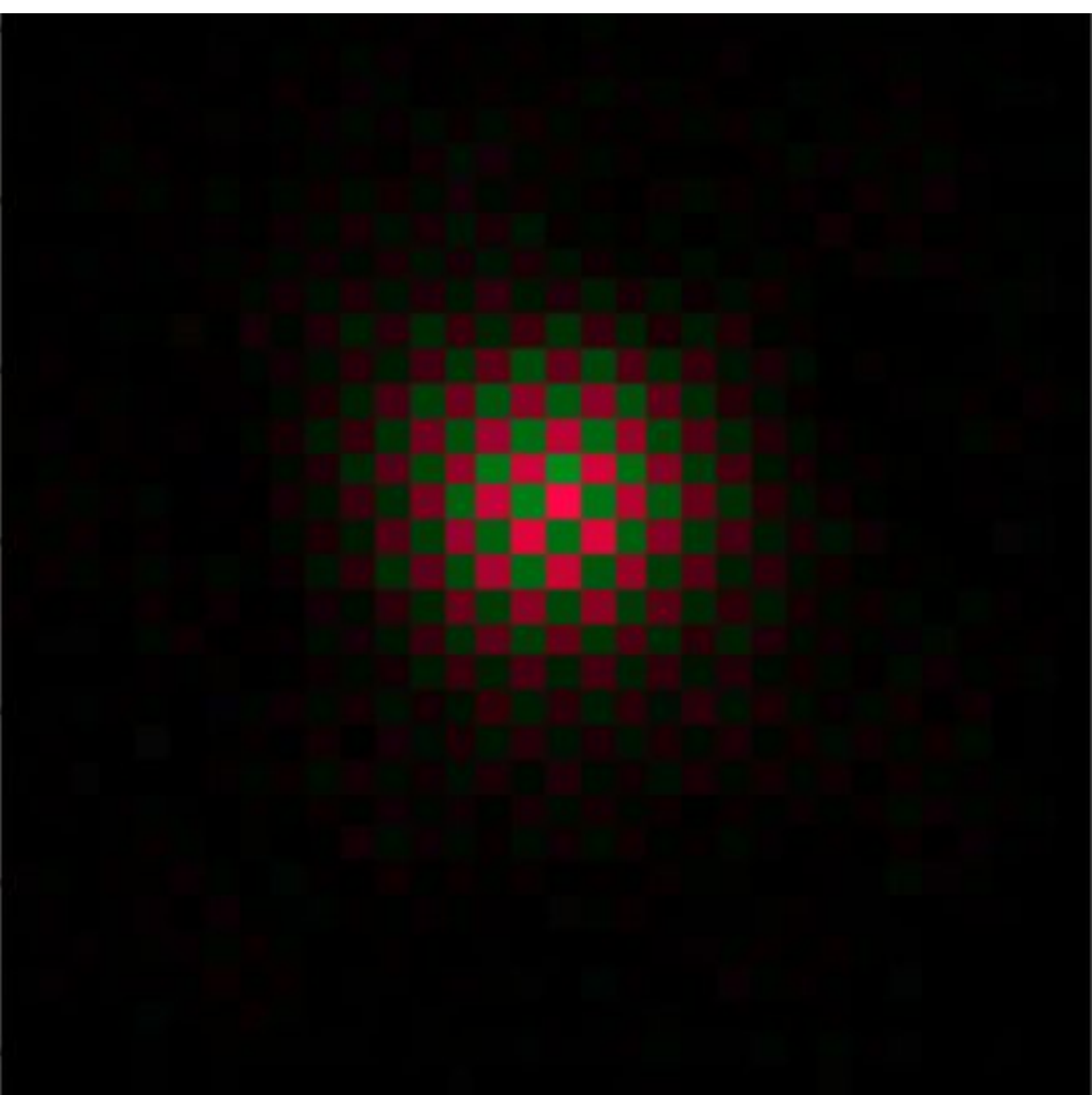}} & \parbox[l]{1em}{\includegraphics[width=3em]{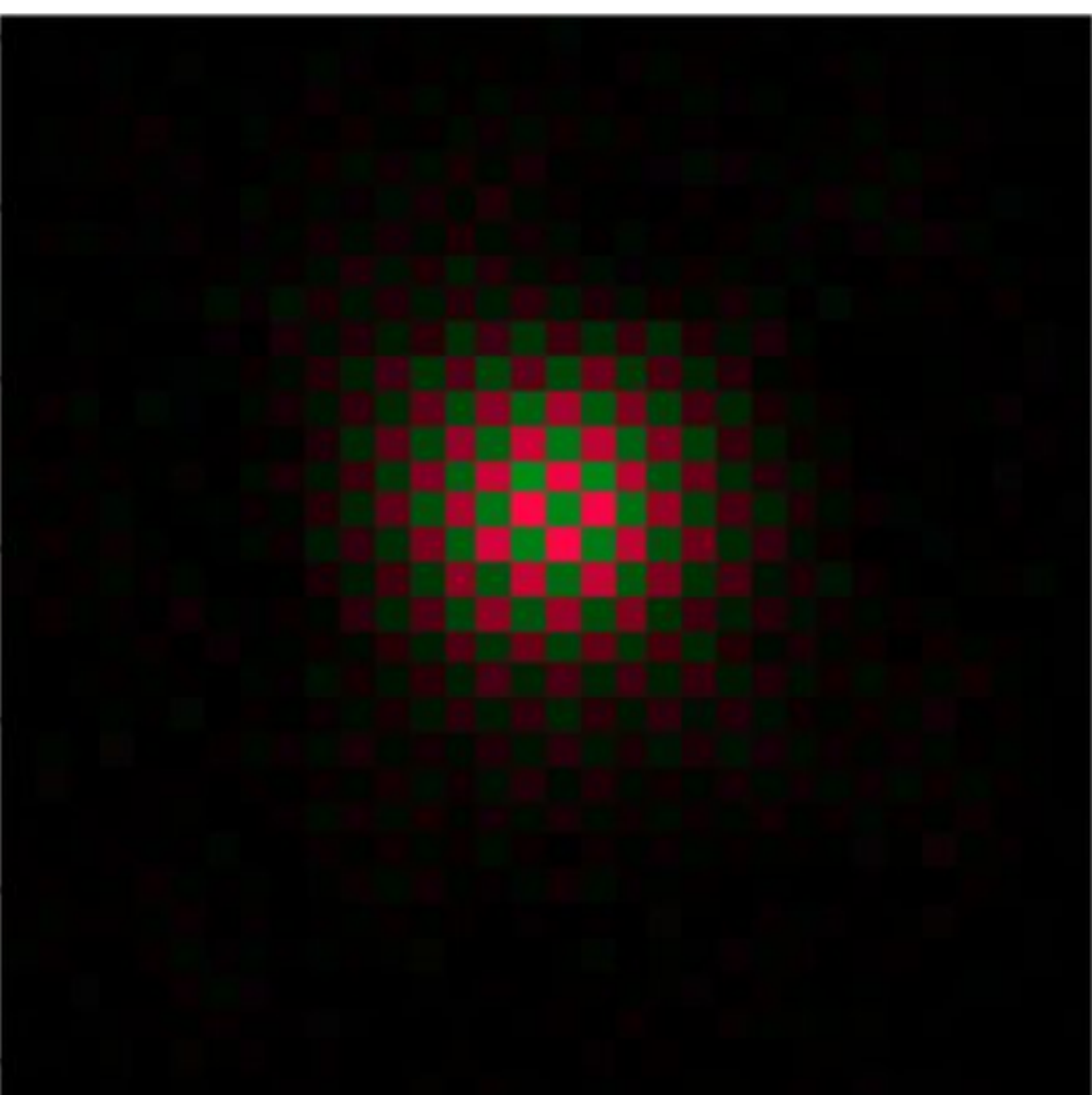}} \\
        \parbox[l]{1em}{\includegraphics[width=3em]{images/h_dot_sample.pdf}} & Cat & \parbox[l]{1em}{\includegraphics[width=3em]{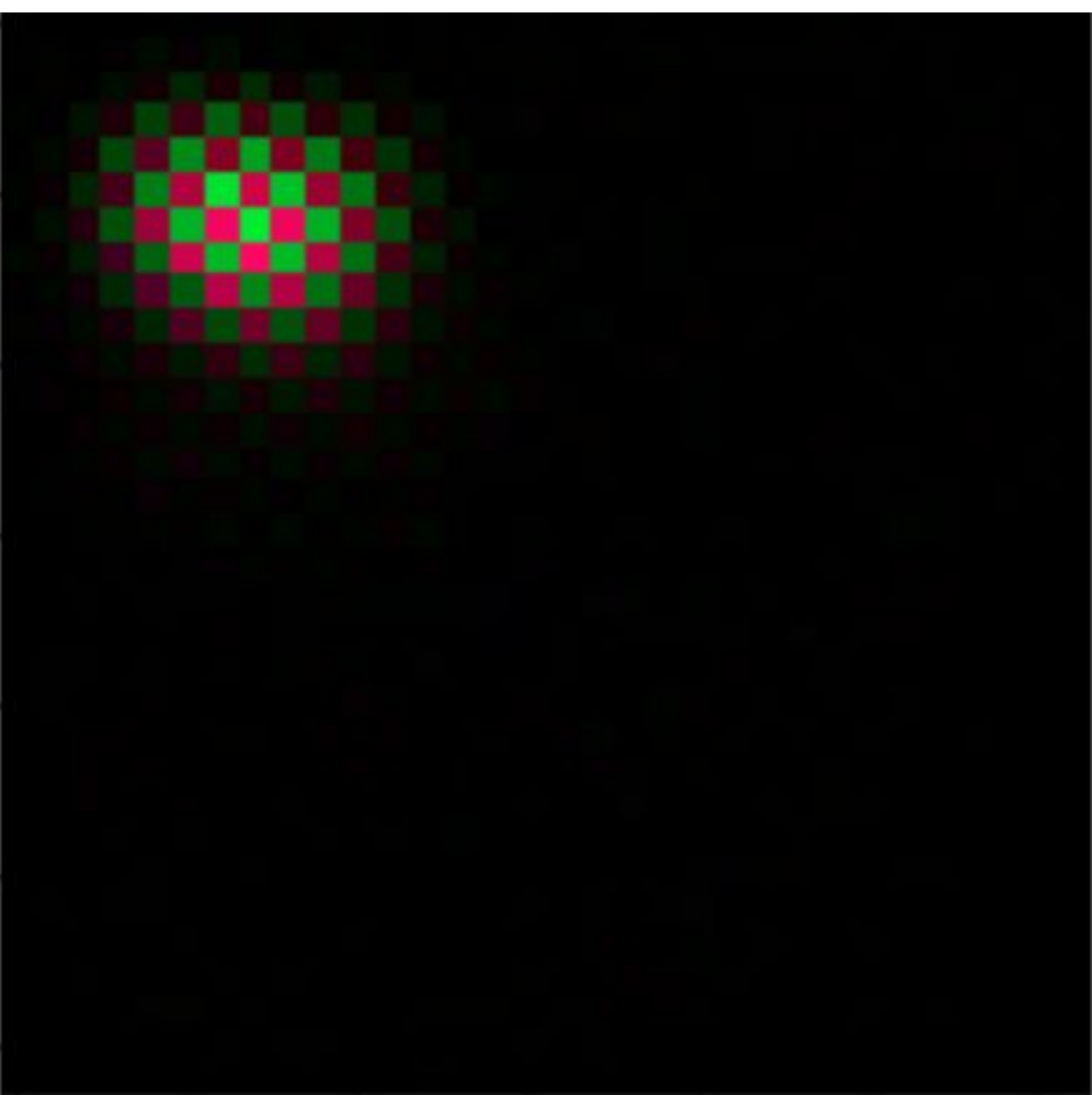}} & \parbox[l]{1em}{\includegraphics[width=3em]{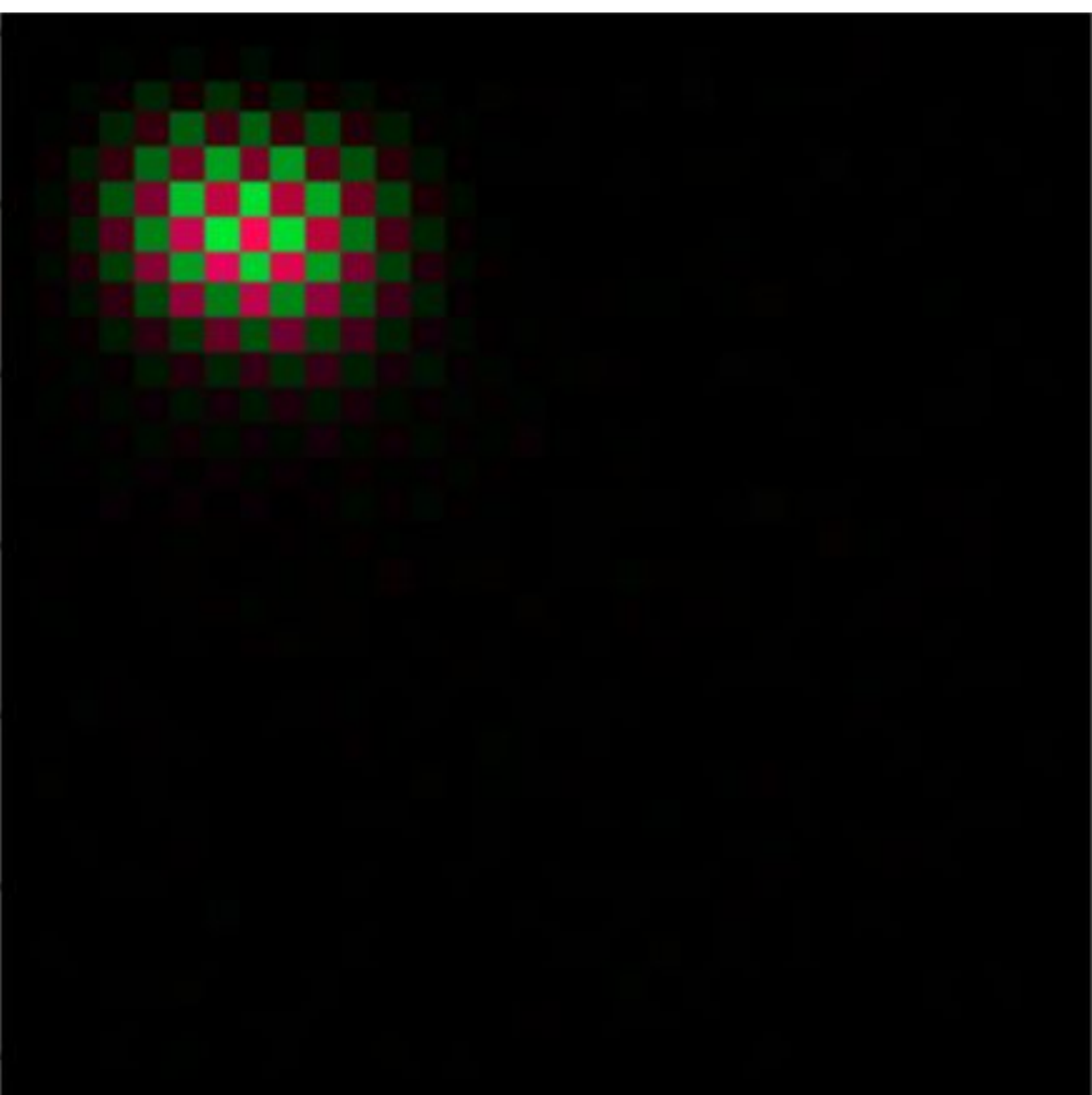}} && \parbox[l]{1em}{\includegraphics[width=3em]{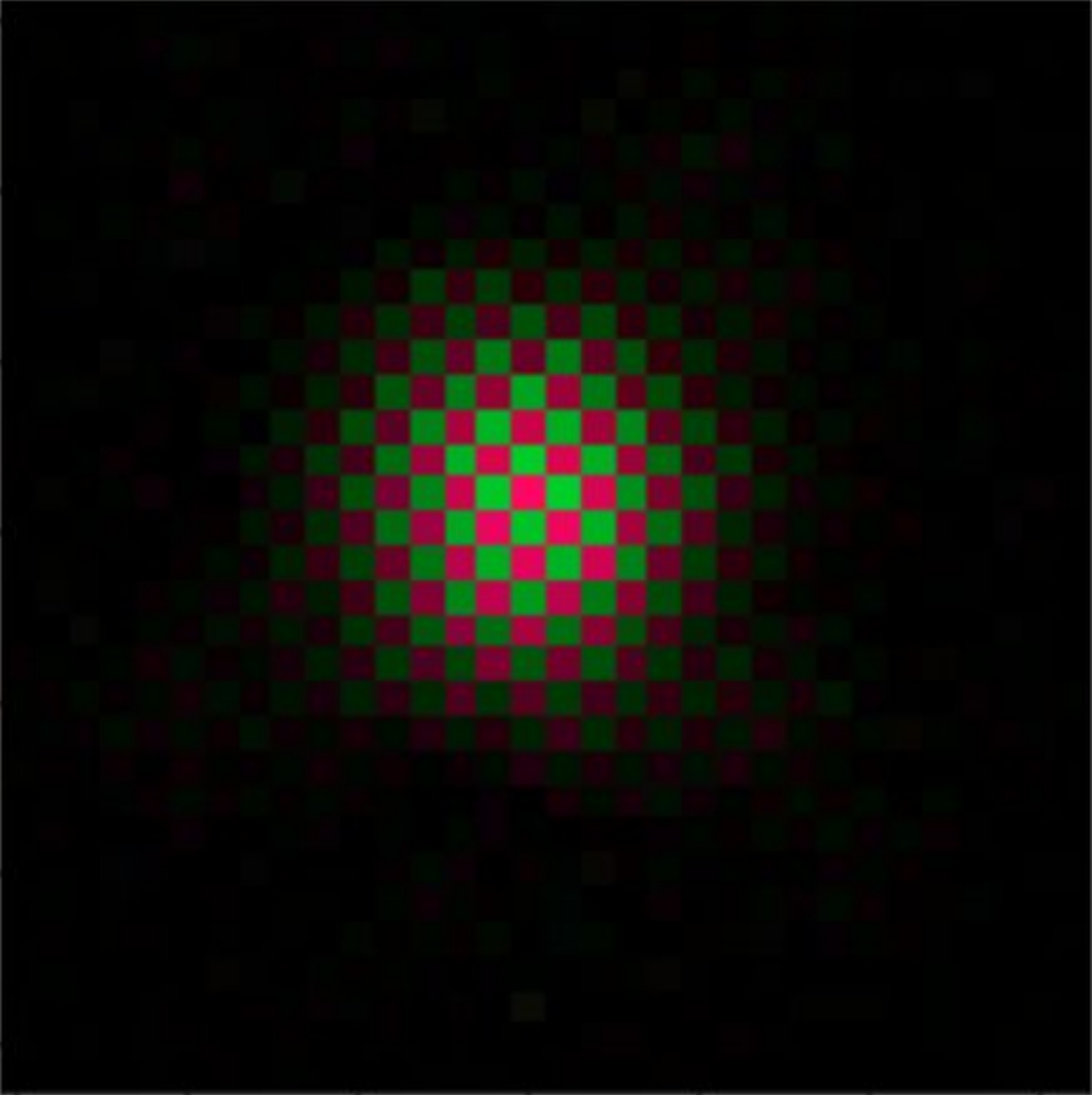}} & \parbox[l]{1em}{\includegraphics[width=3em]{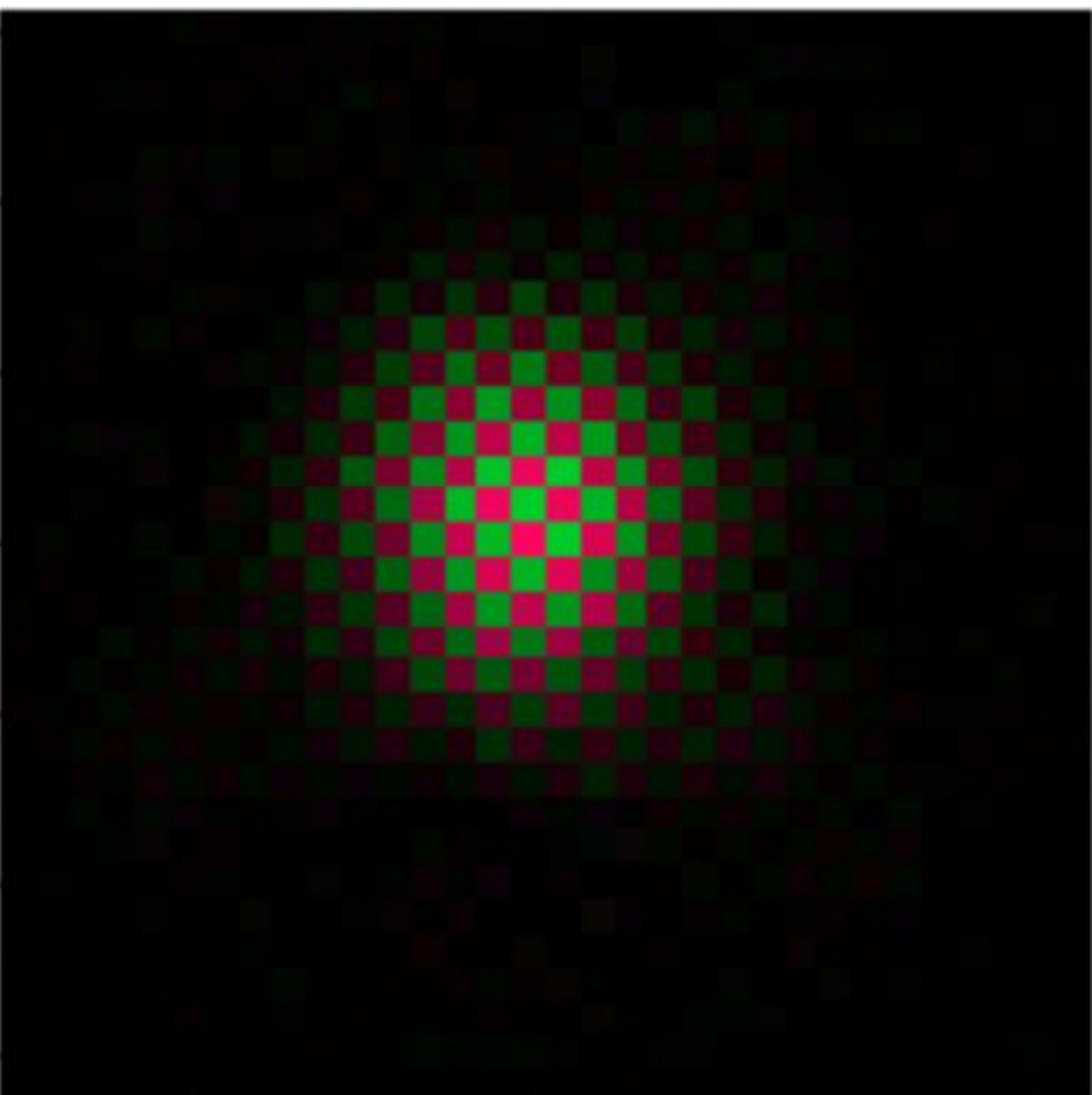}} \\
        \parbox[l]{1em}{\includegraphics[width=3em]{images/i_dot_sample.pdf}} & Dog & \parbox[l]{1em}{\includegraphics[width=3em]{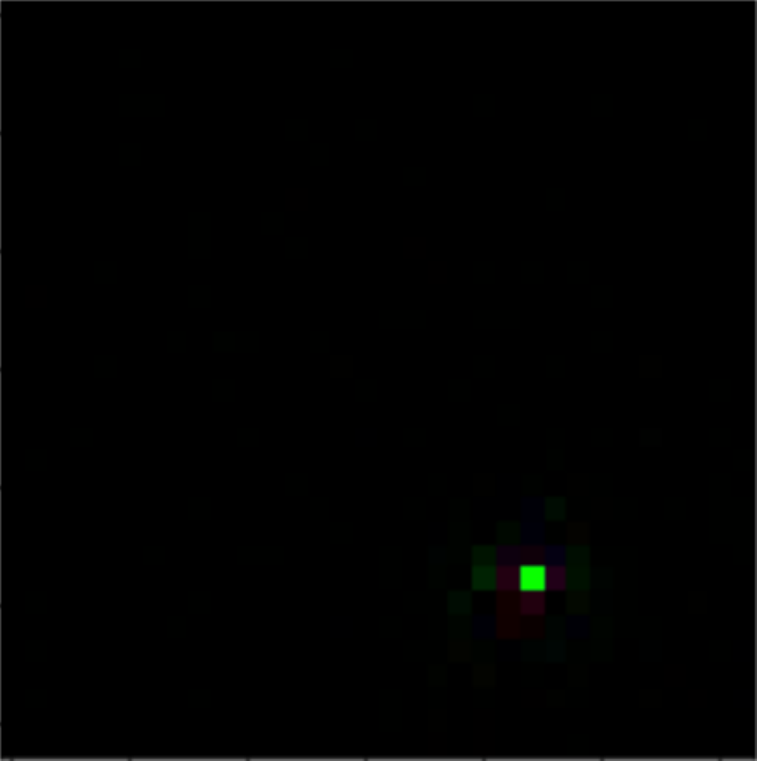}} & \parbox[l]{1em}{\includegraphics[width=3em]{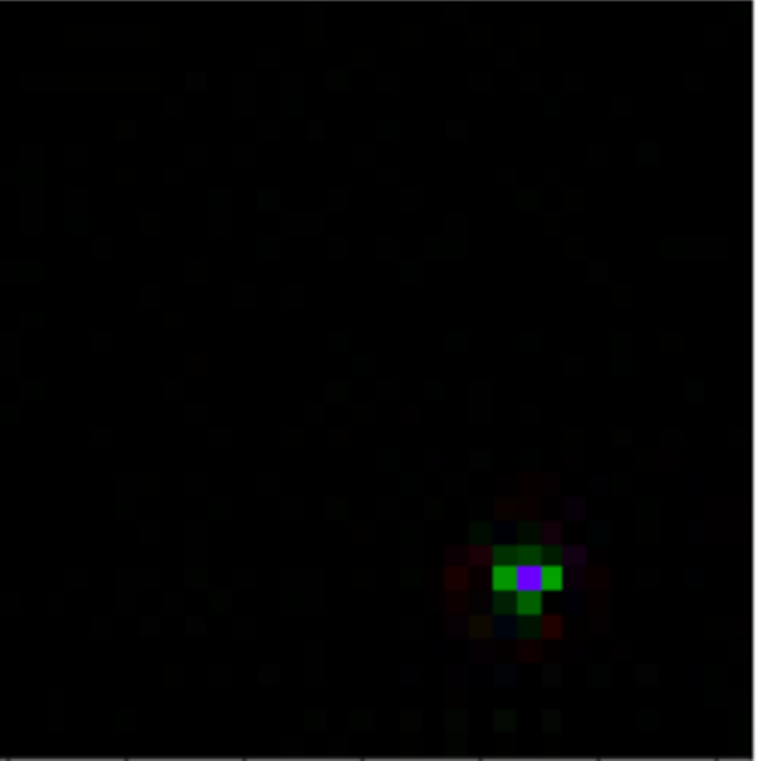}} && \parbox[l]{1em}{\includegraphics[width=3em]{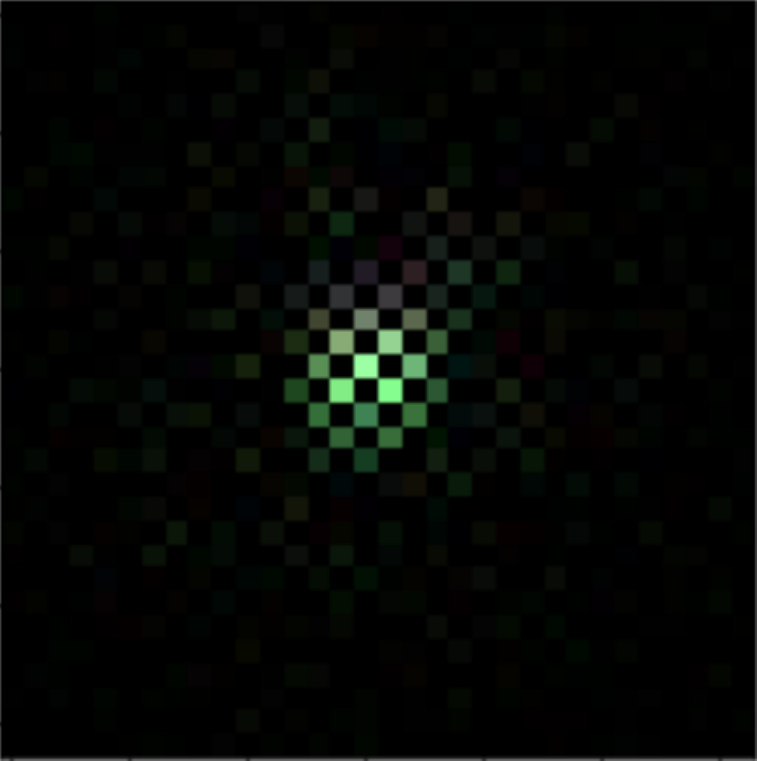}} & \parbox[l]{1em}{\includegraphics[width=3em]{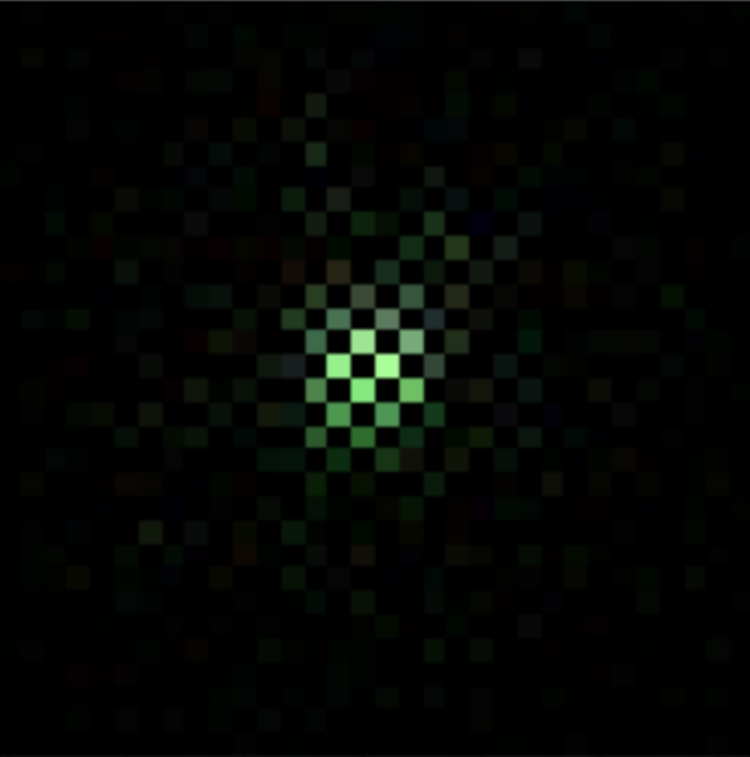}} \\
         \hline
        \end{tabular}
\label{tab:dot_poison_v_all}
\end{table*}

\begin{figure*}[p]
    \centering
    \includegraphics[width=0.5\textwidth]{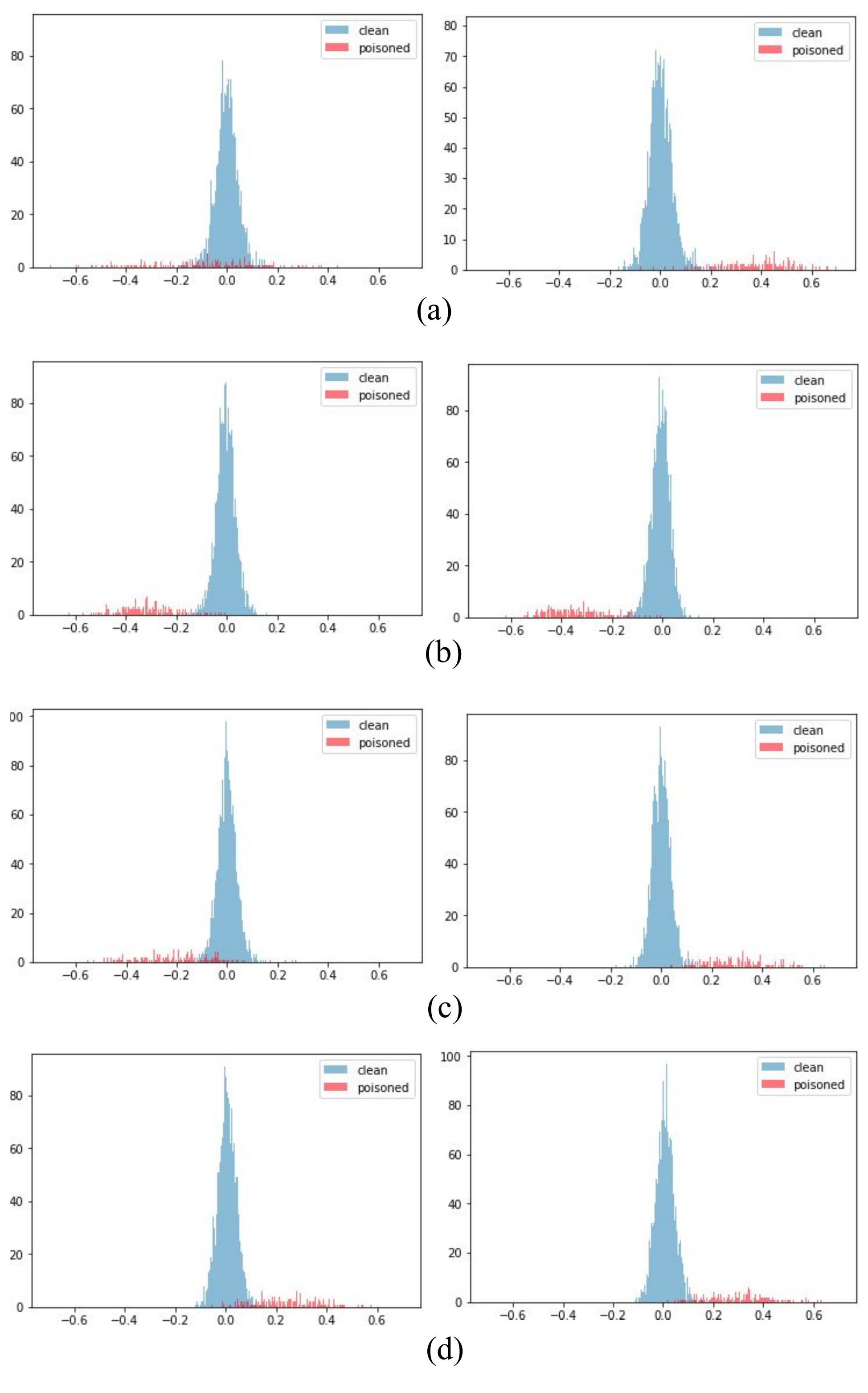}
    \caption{First principal component of poisoned and clean target class input gradients in an overlay image BP attack. The components on the left are derived with the target class as cross-entropy label while the ones on the right are derived with the base class as cross-entropy label. (a) Target: `Dog', Base: `Cat' (b) Target: `Frog', Base: `Ship' (c) Target: `Cat', Base: `Car' (d) Target: `Bird', Base: `Airplane'}
    \label{fig:appendix first_components a}
\end{figure*}

\begin{figure*}[p]
    \centering
    \includegraphics[width=0.5\textwidth]{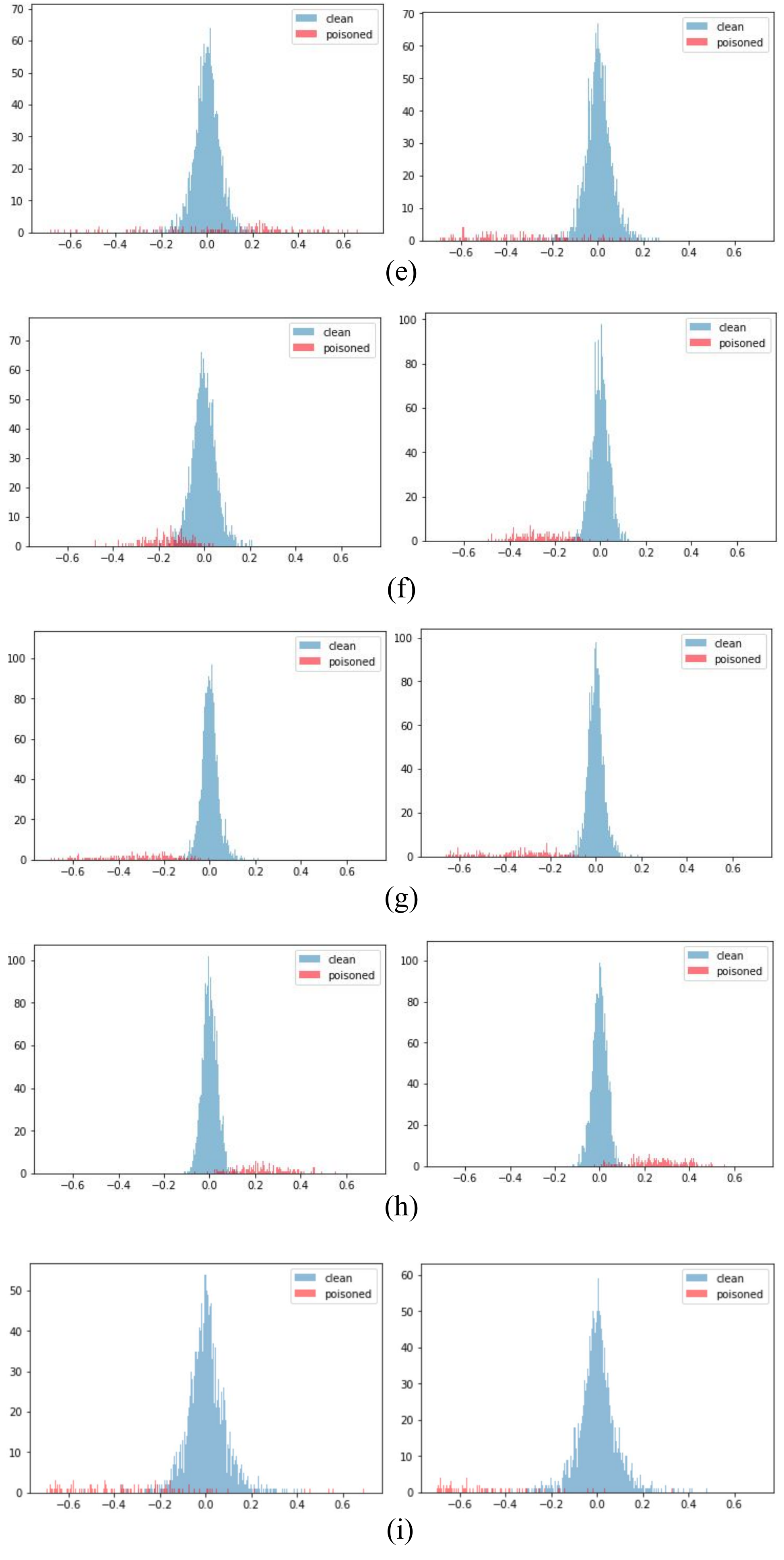}
    \caption{Continued from Figure~\ref{fig:appendix first_components a}; (e) Target: `Deer', Base: `Horse' (f) Target: `Bird', Base: `Truck' (g) Target: `Horse', Base: `Cat' (h) Target: `Cat', Base: `Dog' (i) Target: `Dog', Base: `Car'}
    \label{fig:appendix first_components b}
\end{figure*}

\newpage

\begin{table*}[p]
    \centering
    \footnotesize
    \setlength{\tabcolsep}{0.3em}
    \caption{Wasserstein distance between GMM clusters of input gradient first principal components with under overlay image BP attacks. The target class is identified as the class with highest distance value.}
        \begin{tabular}{ lcc|cccccccccc }
         \hline
         Poison & Target & Base & \multicolumn{10}{c}{Wasserstein Distance}\\
         ~ & ~ & ~ & 0 & 1 & 2 & 3 & 4 & 5 & 6 & 7 & 8 & 9 \\
         \hline
        \parbox[l]{1em}{\includegraphics[width=3em]{images/a_overlay.pdf}} & 5 & 3 & 0.00166 & 0.00320 & 0.00219 & 0.00233 & 0.00259 & \textbf{0.0427} & 0.00134 & 0.00249 & 0.00178 & 0.00160\\
        \parbox[l]{1em}{\includegraphics[width=3em]{images/b_overlay.pdf}} & 6 & 8 & 0.00235 & 0.00229 & 0.00272 & 0.00326 & 0.00305 & 0.00238 & \textbf{0.103} & 0.00212 & 0.0186 & 0.00243\\
        \parbox[l]{1em}{\includegraphics[width=3em]{images/c_overlay.pdf}} & 3 & 1 & 0.00236 & 0.00426 & 0.00298 & \textbf{0.0454} & 0.00212 & 0.00196 & 0.00170 & 0.00245 & 0.00274 & 0.00260\\
        \parbox[l]{1em}{\includegraphics[width=3em]{images/d_overlay.pdf}} & 2 & 0 & 0.00440 & 0.00176 & \textbf{0.0824} & 0.00230 & 0.00231 & 0.00259 & 0.00152 & 0.00169 & 0.00195 & 0.00295\\
        \parbox[l]{1em}{\includegraphics[width=3em]{images/e_overlay.pdf}} & 4 & 7 & 0.00215 & 0.00319 & 0.00254 & 0.00374 & \textbf{0.0655} & 0.00404 & 0.00269 & 0.0161 & 0.00146 & 0.00449\\
        \parbox[l]{1em}{\includegraphics[width=3em]{images/f_overlay.pdf}} & 2 & 9 & 0.00328 & 0.00131 & \textbf{0.0156} & 0.00194 & 0.00222 & 0.00169 & 0.0016 & 0.00364 & 0.00297 & 0.00960\\
        \parbox[l]{1em}{\includegraphics[width=3em]{images/g_overlay.pdf}} & 7 & 3 & 0.00288 & 0.00176 & 0.00234 & 0.0111 & 0.00355 & 0.00224 & 0.00307 & \textbf{0.0995} & 0.00149 & 0.00229\\
        \parbox[l]{1em}{\includegraphics[width=3em]{images/h_overlay.pdf}} & 3 & 5 & 0.00250 & 0.00213 & 0.00183 & \textbf{0.0612} & 0.00243 & 0.00219 & 0.00179 & 0.00223 & 0.00340 & 0.00221\\
        \parbox[l]{1em}{\includegraphics[width=3em]{images/i_overlay.pdf}} & 5 & 1 & 0.00228 & 0.00360 & 0.00319 & 0.00201 & 0.00218 & \textbf{0.00365} & 0.00164 & 0.00334 & 0.00287 & 0.00209\\
         \hline
        \end{tabular}
\label{tab:ws_dist_overlay}
\end{table*}

\begin{table*}[p]
    \centering
    \footnotesize
    \caption{Mean first principal component of input gradient with varying cross entropy label with overlay poison. The base class is identified as the class with highest mean component value.}
        \begin{tabular}{ lcc|cccccccccc }
         \hline
         Poison & Target & Base & \multicolumn{10}{c}{Mean 1st component}\\
         ~ & ~ & ~ & 0 & 1 & 2 & 3 & 4 & 5 & 6 & 7 & 8 & 9 \\
         \hline
        \parbox[l]{1em}{\includegraphics[width=3em]{images/a_overlay.pdf}} & 5 & 3 & 0.109 & 0.014 & 0.028 & \textbf{0.324} & 0.006 & 0.157 & 0.093 & 0.039 & 0.021 & 0.074\\
        \parbox[l]{1em}{\includegraphics[width=3em]{images/b_overlay.pdf}} & 6 & 8 & 0.288 & 0.292 & 0.312 & 0.316 & 0.296 & 0.314 & 0.324 & 0.316 & \textbf{0.346} & 0.306\\
        \parbox[l]{1em}{\includegraphics[width=3em]{images/c_overlay.pdf}} & 3 & 1 & 0.219 & \textbf{0.301} & 0.158 & 0.222 & 0.223 & 0.197 & 0.199 & 0.228 & 0.24 & 0.233\\
        \parbox[l]{1em}{\includegraphics[width=3em]{images/d_overlay.pdf}} & 2 & 0 & \textbf{0.321} & 0.292 & 0.297 & 0.285 & 0.289 & 0.286 & 0.294 & 0.284 & 0.286 & 0.299\\
        \parbox[l]{1em}{\includegraphics[width=3em]{images/e_overlay.pdf}} & 4 & 7 & 0.104 & 0.015 & 0.113 & 0.146 & 0.005 & 0.126 & 0.125 & \textbf{0.303} & 0.087 & 0.061\\
        \parbox[l]{1em}{\includegraphics[width=3em]{images/f_overlay.pdf}} & 2 & 9 & 0.187 & 0.156 & 0.186 & 0.163 & 0.178 & 0.174 & 0.161 & 0.177 & 0.191 & \textbf{0.233}\\
        \parbox[l]{1em}{\includegraphics[width=3em]{images/g_overlay.pdf}} & 7 & 3 & 0.306 & 0.301 & 0.307 & \textbf{0.332} & 0.294 & 0.294 & 0.31 & 0.312 & 0.308 & 0.309\\
        \parbox[l]{1em}{\includegraphics[width=3em]{images/h_overlay.pdf}} & 3 & 5 & 0.244 & 0.243 & 0.236 & 0.249 & 0.224 & \textbf{0.279} & 0.221 & 0.225 & 0.242 & 0.246\\
        \parbox[l]{1em}{\includegraphics[width=3em]{images/i_overlay.pdf}} & 5 & 1 & 0.004 & \textbf{0.093} & 0.019 & 0.010 & 0.014 & 0.012 & 0.012 & 0.001 & 0.004 & 0.026\\
         \hline
        \end{tabular}
\label{tab:first_principal_component_overlay}
\end{table*}

\begin{table*}[p]
    \centering
    \footnotesize
    \setlength{\tabcolsep}{0.3em}
    \caption{Wasserstein distance between GMM clusters of input gradient first principal components with under dot-sized BP attacks. The target class is identified as the class with highest distance value.}
        \begin{tabular}{ lcc|cccccccccc }
         \hline
         Sample & Target & Base & \multicolumn{10}{c}{Wasserstein Distance}\\
         ~ & ~ & ~ & 0 & 1 & 2 & 3 & 4 & 5 & 6 & 7 & 8 & 9 \\
         \hline
        \parbox[l]{1em}{\includegraphics[width=3em]{images/a_dot_sample.pdf}} & 5 & 3 & 0.0139 & 0.0111 & 0.0145 & 0.0184 & 0.0162 & \textbf{0.241} & 0.0121 & 0.0104 & 0.0077 & 0.0154\\
        \parbox[l]{1em}{\includegraphics[width=3em]{images/b_dot_sample.pdf}} & 6 & 8 & 0.0213 & 0.0197 & 0.0185 & 0.0227 & 0.0214 & 0.0193 & \textbf{0.0462} & 0.0145 & 0.0173 & 0.0157\\
        \parbox[l]{1em}{\includegraphics[width=3em]{images/c_dot_sample.pdf}} & 3 & 1 & 0.00288 & 0.00172 & 0.00220 & \textbf{0.248} & 0.00287 & 0.00215 & 0.00182 & 0.00174 & 0.00333 & 0.00266\\
        \parbox[l]{1em}{\includegraphics[width=3em]{images/d_dot_sample.pdf}} & 2 & 0 & 0.00888 & 0.00439 & \textbf{0.0787} & 0.00452 & 0.00445 & 0.00415 & 0.00248 & 0.00306 & 0.00327 & 0.00403\\
        \parbox[l]{1em}{\includegraphics[width=3em]{images/e_dot_sample.pdf}} & 4 & 7 & 0.0172 & 0.018 & 0.0146 & 0.0173 & \textbf{0.410} & 0.0159 & 0.015 & 0.0119 & 0.00984 & 0.0128\\
        \parbox[l]{1em}{\includegraphics[width=3em]{images/f_dot_sample.pdf}} & 2 & 9 & 0.0111 & 0.00543 & \textbf{0.360} & 0.00468 & 0.00344 & 0.00471 & 0.00435 & 0.00376 & 0.00320 & 0.00383\\
        \parbox[l]{1em}{\includegraphics[width=3em]{images/g_dot_sample.pdf}} & 7 & 3 & 0.0123 & 0.0134 & 0.0161 & 0.0183 & 0.0135 & 0.0146 & 0.0109 & \textbf{0.229} & 0.00675 & 0.0115\\
        \parbox[l]{1em}{\includegraphics[width=3em]{images/h_dot_sample.pdf}} & 3 & 5 & 0.00799 & 0.0130 & 0.0113 & \textbf{0.160} & 0.0137 & 0.0104 & 0.0127 & 0.00921 & 0.00678 & 0.00964\\
        \parbox[l]{1em}{\includegraphics[width=3em]{images/i_dot_sample.pdf}} & 5 & 1 & 0.00257 & 0.00325 & 0.00240 & 0.00278 & 0.00259 & \textbf{0.175} & 0.00184 & 0.00224 & 0.00194 & 0.00236\\
         \hline
        \end{tabular}
\label{tab:ws_dist_dot}
\end{table*}

\begin{table*}[p]
    \centering
    \footnotesize
    \caption{Mean first principal component of input gradient with varying cross entropy label with dot-sized poison. The base class is identified as the class with highest mean component value.}
        \begin{tabular}{ lcc|cccccccccc }
         \hline
         Sample & Target & Base & \multicolumn{10}{c}{Mean 1st component}\\
         ~ & ~ & ~ & 0 & 1 & 2 & 3 & 4 & 5 & 6 & 7 & 8 & 9 \\
         \hline
        \parbox[l]{1em}{\includegraphics[width=3em]{images/a_dot_sample.pdf}} & 5 & 3 & 0.466 & 0.464 & 0.453 & \textbf{0.583} & 0.358 & 0.511 & 0.420 & 0.375 & 0.417 & 0.477\\
        \parbox[l]{1em}{\includegraphics[width=3em]{images/b_dot_sample.pdf}} & 6 & 8 & 0.062 & 0.044 & 0.031 & 0.035 & 0.029 & 0.074 & 0.042 & 0.019 & \textbf{0.278} & 0.044\\
        \parbox[l]{1em}{\includegraphics[width=3em]{images/c_dot_sample.pdf}} & 3 & 1 & 0.443 & \textbf{0.657} & 0.347 & 0.378 & 0.28 & 0.24 & 0.352 & 0.289 & 0.409 & 0.302\\
        \parbox[l]{1em}{\includegraphics[width=3em]{images/d_dot_sample.pdf}} & 2 & 0 & \textbf{0.299} & 0.17 & 0.212 & 0.204 & 0.128 & 0.168 & 0.229 & 0.129 & 0.179 & 0.196\\
        \parbox[l]{1em}{\includegraphics[width=3em]{images/e_dot_sample.pdf}} & 4 & 7 & 0.662 & 0.485 & 0.448 & 0.471 & 0.639 & 0.631 & 0.237 & \textbf{0.825} & 0.593 & 0.161\\
        \parbox[l]{1em}{\includegraphics[width=3em]{images/f_dot_sample.pdf}} & 2 & 9 & 0.479 & 0.51 & 0.542 & 0.501 & 0.529 & 0.505 & 0.495 & 0.556 & 0.501 & \textbf{0.632}\\
        \parbox[l]{1em}{\includegraphics[width=3em]{images/g_dot_sample.pdf}} & 7 & 3 & 0.466 & 0.422 & 0.503 & \textbf{0.542} & 0.374 & 0.464 & 0.444 & 0.485 & 0.475 & 0.458\\
        \parbox[l]{1em}{\includegraphics[width=3em]{images/h_dot_sample.pdf}} & 3 & 5 & 0.3 & 0.239 & 0.255 & 0.336 & 0.281 & \textbf{0.473} & 0.130 & 0.259 & 0.237 & 0.284\\
        \parbox[l]{1em}{\includegraphics[width=3em]{images/i_dot_sample.pdf}} & 5 & 1 & 0.278 & \textbf{0.513} & 0.122 & 0.335 & 0.332 & 0.362 & 0.308 & 0.335 & 0.287 & 0.271\\
         \hline
        \end{tabular}
\label{tab:first_principal_component_dot}
\end{table*}

\newpage

\begin{table*}[p]
    \centering
    \footnotesize
    \caption{Poison clustering accuracy for overlay poison. Specificity is the accuracy of clean sample classification while sensitivity is the accuracy of poisoned sample classification.}
        \begin{tabular}{ lcc|ccccc }
         \hline
         Poison & Target & Base & \multicolumn{2}{c}{Target Class Xent} && \multicolumn{2}{c}{Base Class Xent}\\
         \cline{4-5}
         \cline{7-8}
         ~ & ~ & ~ & Specificity(\%) & Sensitivity(\%) && Specificity(\%) & Sensitivity(\%) \\
         \hline
        \parbox[l]{1em}{\includegraphics[width=3em]{images/a_overlay.pdf}} & 5 & 3 & 98.0 & 63.8 && 99.4 & 94.6\\
        \parbox[l]{1em}{\includegraphics[width=3em]{images/b_overlay.pdf}} & 6 & 8 & 99.7 & 93.4 && 99.6 & 96.0\\
        \parbox[l]{1em}{\includegraphics[width=3em]{images/c_overlay.pdf}} & 3 & 1 & 99.2 & 74.4 && 99.2 & 95.6\\
        \parbox[l]{1em}{\includegraphics[width=3em]{images/d_overlay.pdf}} & 2 & 0 & 99.8 & 84.2 && 99.7 & 89.8\\
        \parbox[l]{1em}{\includegraphics[width=3em]{images/e_overlay.pdf}} & 4 & 7 & 97.4 & 73.4 && 97.5 & 87.4\\
        \parbox[l]{1em}{\includegraphics[width=3em]{images/f_overlay.pdf}} & 2 & 9 & 97.3 & 68.8 && 99.5 & 95.4\\
        \parbox[l]{1em}{\includegraphics[width=3em]{images/g_overlay.pdf}} & 7 & 3 & 98.7 & 94.2 && 98.9 & 95.8\\
        \parbox[l]{1em}{\includegraphics[width=3em]{images/h_overlay.pdf}} & 3 & 5 & 99.5 & 83.8 && 99.7 & 89.2\\
        \parbox[l]{1em}{\includegraphics[width=3em]{images/i_overlay.pdf}} & 5 & 1 & 85.3 & 61.6 && 99.6 & 93.6\\
         \hline
        \end{tabular}
\label{tab:clustering accuracies overlay}
\end{table*}

\begin{table*}[p]
    \centering
    \footnotesize
    \caption{Poisoned sample filtering accuracy for dot-sized poison. Specificity is the accuracy of clean sample classification while sensitivity is the accuracy of poisoned sample classification.}
        \begin{tabular}{ lcc|ccccc }
         \hline
         Sample & Target & Base & \multicolumn{2}{c}{Target Class Xent} && \multicolumn{2}{c}{Base Class Xent}\\
         \cline{4-5}
         \cline{7-8}
         ~ & ~ & ~ & Specificity(\%) & Sensitivity(\%) && Specificity(\%) & Sensitivity(\%) \\
         \hline
        \parbox[l]{1em}{\includegraphics[width=3em]{images/a_dot_sample.pdf}} & 5 & 3 & 97.9 & 86.6 && 99.6 & 92.8\\
        \parbox[l]{1em}{\includegraphics[width=3em]{images/b_dot_sample.pdf}} & 6 & 8 & 89.8 & 67.0 && 99.5 & 88.6\\
        \parbox[l]{1em}{\includegraphics[width=3em]{images/c_dot_sample.pdf}} & 3 & 1 & 98.9 & 92.4 && 99.7 & 99.0\\
        \parbox[l]{1em}{\includegraphics[width=3em]{images/d_dot_sample.pdf}} & 2 & 0 & 96.4 & 70.0 && 96.8 & 84.4\\
        \parbox[l]{1em}{\includegraphics[width=3em]{images/e_dot_sample.pdf}} & 4 & 7 & 99.1 & 83.6 && 99.9 & 99.0\\
        \parbox[l]{1em}{\includegraphics[width=3em]{images/f_dot_sample.pdf}} & 2 & 9 & 96.2 & 99.2 && 99.7 & 100\\
        \parbox[l]{1em}{\includegraphics[width=3em]{images/g_dot_sample.pdf}} & 7 & 3 & 98.9 & 92.0 && 99.1 & 95.8\\
        \parbox[l]{1em}{\includegraphics[width=3em]{images/h_dot_sample.pdf}} & 3 & 5 & 95.6 & 83.2 && 99.3 & 94.0\\
        \parbox[l]{1em}{\includegraphics[width=3em]{images/i_dot_sample.pdf}} & 5 & 1 & 98.6 & 96.8 && 99.5 & 99.8\\
         \hline
        \end{tabular}
\label{tab:clustering accuracies dot}
\end{table*}



\begin{table*}[h]
    \centering
    \caption{Model accuracy on full test set and poisoned base class test images, before and after neutralization (Neu.) for full-sized overlay poison attacks with 5\% poison ratio.}
        \begin{tabular}{ llcccccc }
         \hline
         Poison & Sample & Target & \multicolumn{2}{c}{Acc Before Neu. (\%)} && \multicolumn{2}{c}{Acc After Neu. (\%)} \\
         \cline{4-5}
         \cline{7-8}
         ~ & ~ & ~ & All & Poisoned && All & Poisoned \\
         \hline
        \parbox[l]{1em}{\includegraphics[width=2em]{images/a_overlay.pdf}} & \parbox[l]{1em}{\includegraphics[width=2em]{images/a_overlay_sample.pdf}} & Dog & 95.1 & 11.9 && 94.5 & 80 \\
        \parbox[l]{1em}{\includegraphics[width=2em]{images/b_overlay.pdf}} & \parbox[l]{1em}{\includegraphics[width=2em]{images/b_overlay_sample.pdf}} & Frog & 95.1 & 24.3 && 95.1 & 96.3 \\
        \parbox[l]{1em}{\includegraphics[width=2em]{images/c_overlay.pdf}} & \parbox[l]{1em}{\includegraphics[width=2em]{images/c_overlay_sample.pdf}} & Cat & 95.3 & 6.8 && 94.7 & 93.5 \\
        \parbox[l]{1em}{\includegraphics[width=2em]{images/d_overlay.pdf}} & \parbox[l]{1em}{\includegraphics[width=2em]{images/d_overlay_sample.pdf}} & Bird & 95.0 & 46.5 && 94.4 & 92.2  \\
        \parbox[l]{1em}{\includegraphics[width=2em]{images/e_overlay.pdf}} & \parbox[l]{1em}{\includegraphics[width=2em]{images/e_overlay_sample.pdf}} & Deer & 95.1 & 5.0 && 94.8 & 90.4 \\
        \parbox[l]{1em}{\includegraphics[width=2em]{images/f_overlay.pdf}} & \parbox[l]{1em}{\includegraphics[width=2em]{images/f_overlay_sample.pdf}} & Bird & 95.3 & 11.3 && 94.9 & 90.3 \\
        \parbox[l]{1em}{\includegraphics[width=2em]{images/g_overlay.pdf}} & \parbox[l]{1em}{\includegraphics[width=2em]{images/g_overlay_sample.pdf}} & Horse & 95.0 & 49.0 && 94.7 & 89.3 \\
        \parbox[l]{1em}{\includegraphics[width=2em]{images/h_overlay.pdf}} & \parbox[l]{1em}{\includegraphics[width=2em]{images/h_overlay_sample.pdf}} & Cat & 95.4 & 23.9 && 95.0 & 89.6 \\
        \parbox[l]{1em}{\includegraphics[width=2em]{images/i_overlay.pdf}} & \parbox[l]{1em}{\includegraphics[width=2em]{images/i_overlay_sample.pdf}} & Dog & 95.3 & 15.8 && 94.5 & 95.6 \\
         \hline
        \end{tabular}
\label{tab:neutralization_acc_overlay_poison_ep005}
\end{table*}

\begin{table*}[h]
    \centering
    \caption{Model accuracy on full test set and poisoned base class test images, before and after neutralization (Neu.) for dot poison attacks with 5\% poison ratio.}
        \begin{tabular}{ lccccccc }
         \hline
         Sample & Target & \multicolumn{2}{c}{Acc Before Neu. (\%)} && \multicolumn{2}{c}{Acc After Neu. (\%)} \\
         \cline{3-4}
         \cline{6-7}
         ~ & ~ & All & Poisoned && All & Poisoned \\
         \hline
        \parbox[l]{1em}{\includegraphics[width=2.5em]{images/a_dot_sample.pdf}} & Dog & 95.3 & 0.8 && 94.9 & 90 \\
        \parbox[l]{1em}{\includegraphics[width=2.5em]{images/b_dot_sample.pdf}} & Frog & 94.9 & 0.5 && 94.7 & 95.7 \\
        \parbox[l]{1em}{\includegraphics[width=2.5em]{images/c_dot_sample.pdf}} & Cat & 95.1 & 1.0 && 94.8 & 97.7 \\
        \parbox[l]{1em}{\includegraphics[width=2.5em]{images/d_dot_sample.pdf}} & Bird & 95.3 & 1.7 && 95.1 & 96.3 \\
        \parbox[l]{1em}{\includegraphics[width=2.5em]{images/e_dot_sample.pdf}} & Deer & 95.1 & 2.2 && 94.7 & 96.7 \\
        \parbox[l]{1em}{\includegraphics[width=2.5em]{images/f_dot_sample.pdf}} & Bird & 95.4 & 1.8 && 95.2 & 96.6 \\
        \parbox[l]{1em}{\includegraphics[width=2.5em]{images/g_dot_sample.pdf}} & Horse & 95.0 & 0.3 && 94.9 & 87.9 \\
        \parbox[l]{1em}{\includegraphics[width=2.5em]{images/h_dot_sample.pdf}} & Cat & 95.2 & 2.7 && 94.9 & 90.5 \\
        \parbox[l]{1em}{\includegraphics[width=2.5em]{images/i_dot_sample.pdf}} & Dog & 95.4 & 8.2 && 95.2 & 97.3 \\
         \hline
        \end{tabular}
\label{tab:neutralization_acc_dot_poison_ep005}
\end{table*}


\begin{table*}[h]
    \centering
    \caption{Model accuracy on full test set and poisoned base class test images, before and after neutralization (Neu.) for full-sized overlay poison attacks on VGG with 10\% poison ratio.}
        \begin{tabular}{ llcccccc }
         \hline
         Poison & Sample & Target & \multicolumn{2}{c}{Acc Before Neu. (\%)} && \multicolumn{2}{c}{Acc After Neu. (\%)} \\
         \cline{4-5}
         \cline{7-8}
         ~ & ~ & ~ & All & Poisoned && All & Poisoned \\
         \hline
        \parbox[l]{1em}{\includegraphics[width=2em]{images/a_overlay.pdf}} & \parbox[l]{1em}{\includegraphics[width=2em]{images/a_overlay_sample.pdf}} & Dog & 93.9 & 7.6 && 92.9 & 81.2 \\
        \parbox[l]{1em}{\includegraphics[width=2em]{images/b_overlay.pdf}} & \parbox[l]{1em}{\includegraphics[width=2em]{images/b_overlay_sample.pdf}} & Frog & 93.5 & 15.4 && 92.9 & 96.1 \\
        \parbox[l]{1em}{\includegraphics[width=2em]{images/c_overlay.pdf}} & \parbox[l]{1em}{\includegraphics[width=2em]{images/c_overlay_sample.pdf}} & Cat & 93.6 & 7.3 && 92.2 & 87.9 \\
        \parbox[l]{1em}{\includegraphics[width=2em]{images/d_overlay.pdf}} & \parbox[l]{1em}{\includegraphics[width=2em]{images/d_overlay_sample.pdf}} & Bird & 93.1 & 30.7 && 92.7 & 89.8 \\
        \parbox[l]{1em}{\includegraphics[width=2em]{images/e_overlay.pdf}} & \parbox[l]{1em}{\includegraphics[width=2em]{images/e_overlay_sample.pdf}} & Deer & 93.6 & 4.5 && 93.1 & 86.7 \\
        \parbox[l]{1em}{\includegraphics[width=2em]{images/f_overlay.pdf}} & \parbox[l]{1em}{\includegraphics[width=2em]{images/f_overlay_sample.pdf}} & Bird & 93.8 & 6.4 && 93.0 & 93.5 \\
        \parbox[l]{1em}{\includegraphics[width=2em]{images/g_overlay.pdf}} & \parbox[l]{1em}{\includegraphics[width=2em]{images/g_overlay_sample.pdf}} & Horse & 93.4 & 48.9 && 93.5 & 86.7 \\
        \parbox[l]{1em}{\includegraphics[width=2em]{images/h_overlay.pdf}} & \parbox[l]{1em}{\includegraphics[width=2em]{images/h_overlay_sample.pdf}} & Cat & 93.4 & 21.5 && 92.2 & 74.5 \\
        \parbox[l]{1em}{\includegraphics[width=2em]{images/i_overlay.pdf}} & \parbox[l]{1em}{\includegraphics[width=2em]{images/i_overlay_sample.pdf}} & Dog & 93.7 & 11.3 && 92.6 & 94.8 \\
         \hline
        \end{tabular}
\label{tab:neutralization_acc_overlay_poison_VGG}
\end{table*}

\begin{table*}[h]
    \centering
    \caption{Model accuracy on full test set and poisoned base class test images, before and after neutralization (Neu.) for dot poison attacks on VGG with 10\% poison ratio.}
        \begin{tabular}{ lccccccc }
         \hline
         Sample & Target & \multicolumn{2}{c}{Acc Before Neu. (\%)} && \multicolumn{2}{c}{Acc After Neu. (\%)} \\
         \cline{3-4}
         \cline{6-7}
         ~ & ~ & All & Poisoned && All & Poisoned \\
         \hline
        \parbox[l]{1em}{\includegraphics[width=2.5em]{images/a_dot_sample.pdf}} & Dog & 93.7 & 1.1 && 93.1 & 80.6 \\
        \parbox[l]{1em}{\includegraphics[width=2.5em]{images/b_dot_sample.pdf}} & Frog & 93.6 & 0.2 && 93.1 & 96.2 \\
        \parbox[l]{1em}{\includegraphics[width=2.5em]{images/c_dot_sample.pdf}} & Cat & 93.6 & 1.0 && 93.0 & 73.5 \\
        \parbox[l]{1em}{\includegraphics[width=2.5em]{images/d_dot_sample.pdf}} & Bird & 93.6 & 2.0 && 93.4 & 93.3 \\
        \parbox[l]{1em}{\includegraphics[width=2.5em]{images/e_dot_sample.pdf}} & Deer & 93.8 & 0.3 && 93.5 & 94.5 \\
        \parbox[l]{1em}{\includegraphics[width=2.5em]{images/f_dot_sample.pdf}} & Bird & 93.3 & 2.4 && 93.2 & 95.8 \\
        \parbox[l]{1em}{\includegraphics[width=2.5em]{images/g_dot_sample.pdf}} & Horse & 93.4 & 0.8 && 93.1 & 88.0 \\
        \parbox[l]{1em}{\includegraphics[width=2.5em]{images/h_dot_sample.pdf}} & Cat & 93.5 & 2.9 && 93.4 & 86.6 \\
        \parbox[l]{1em}{\includegraphics[width=2.5em]{images/i_dot_sample.pdf}} & Dog & 93.8 & 6.5 && 93.3 & 97.6 \\
         \hline
        \end{tabular}
\label{tab:neutralization_acc_dot_poison_VGG}
\end{table*}

\end{appendices}

\end{document}